\providecommand{\norm}[1]{\lVert#1\rVert}
\newcommand\numberthis{\addtocounter{equation}{1}\tag{\theequation}}
\newcommand{\refgen}[1]{~\ref*{general-#1} in \cite{schiffer2024stronger}}
\newcommand{\eqrefgen}[1]{~(\ref*{general-#1}) in \cite{schiffer2024stronger}}
\newtheorem{thm}{Theorem}
\newtheorem{problem}{Problem}
\newtheorem{assum}{Assumption}
\newtheorem{lemma}{Lemma}
\newtheorem{definition}{Definition}
\newtheorem{proposition}[thm]{Proposition}
\newtheorem{remark}[thm]{Remark}
\DeclareMathOperator*{\argmax}{arg\,max}
\DeclareMathOperator*{\argmin}{arg\,min}
\renewcommand{\P}{\mathbb{P}} 
\DeclareMathOperator*{\E}{\mathbb{E}}
\newcommand{\cond}{\; \middle| \;}
\title{Foundations of Safe Online Reinforcement Learning in the Linear Quadratic Regulator: $\sqrt{T}$-Regret}
\author{
	Benjamin Schiffer 
 and
    Lucas Janson 
}
\date{Department of Statistics, Harvard University}
\begin{document}

\maketitle
\begin{abstract}

Understanding how to efficiently learn while adhering to safety constraints is essential for using online reinforcement learning in practical applications. However, proving rigorous regret bounds for safety-constrained reinforcement learning is difficult due to the complex interaction between safety, exploration, and exploitation. In this work, we seek to establish foundations for safety-constrained reinforcement learning by studying the canonical problem of controlling a one-dimensional linear dynamical system with unknown dynamics. We study the safety-constrained version of this problem, where the state must with high probability stay within a safe region, and we provide the first safe algorithm that achieves regret of $\tilde{O}_T(\sqrt{T})$. Furthermore, the regret is with respect to the baseline of truncated linear controllers, a natural baseline of non-linear controllers that are well-suited for safety-constrained linear systems. In addition to introducing this new baseline, we also prove several desirable continuity properties of the optimal controller in this baseline. In showing our main result, we prove that whenever the constraints impact the optimal controller, the non-linearity of our controller class leads to a faster rate of learning than in the unconstrained setting.
\end{abstract}

\section{Introduction}

\subsection{Background and Motivation}
Online reinforcement learning (RL) algorithms are powerful tools for interacting with and learning about unknown environments \citep{levine2016end, lillicrap2015continuous,tewari2017ads}. The core idea behind many successful RL algorithms is carefully balancing exploration and exploitation. However, in many real world applications, online RL algorithms must satisfy a set of safety constraints. Importantly, these safety constraints must be satisfied even while the algorithm learns, leading to a complex interaction between safety and learning. Safety constraints reduce an algorithm's ability to explore because the algorithm must take actions that are known to be safe. Similarly, safety constraints reduce an algorithm's ability to exploit because actions that exploit known information may lead to unsafe states. As an example, consider a self-driving car that uses online RL to learn how to navigate a new environment in real time. To do this, an RL algorithm must make adjustments to speed and acceleration that account for unknown environmental factors such as wind speed and friction. However, the algorithm controlling a car in the real world must keep the car in safe states and avoid crashing into other objects. Therefore, it is critical that the algorithm learns while being safe. A better understanding of the relationship between learning and safety constraints is crucial for deploying online reinforcement learning algorithms in the real world. In this paper, we focus on understanding how safety and learning interact for a canonical learning problem in control theory known as \emph{online linear quadratic regulator (LQR) learning}. While online LQR learning is one of the simplest learning problems with a continuous action space, this problem highlights the inherent differences between learning without safety constraints and learning with safety constraints.

\subsection{Setting and Motivation}

In this paper, we study the problem of learning and controlling a discrete-time linear dynamical system when the dynamics of the system are unknown and safety must be maintained during online learning. At each time step, the algorithm observes the current state and chooses a control (action). The state at the next time step then depends on the current state, the chosen control, and random noise. The way in which the next state depends on the current state and chosen control is referred to as the \emph{dynamics of the system}. The goal of the problem is to choose actions that minimize a quadratic cost by keeping the state close to the origin while using minimal control. This model is used, e.g., in the field of robotics when a robot (drone, submarine, rocket, etc.) attempts to stay close to a single point while being subject to random environmental forces \citep{rubio2016optimal}. In practice, the dynamics of the system (such as air resistance) are not known a priori. Therefore, we study this problem when the dynamics are unknown, and the algorithm must minimize cost while learning the unknown dynamics. To model safety in this setting, we assume that the state must stay within a predefined `safe region.' For example, the robot described above cannot move to states that make the robot crash into other objects.

When the dynamics are known and there are no safety constraints, the optimal algorithm is the Linear Quadratic Regulator, which is well-studied in the field of control theory \citep{rawlings2012postface}. However, the addition of state constraints significantly complicates even this simple problem, and there no longer exists a closed-form solution for the constrained version of this problem with known dynamics \citep{rawlings2012postface}. In order to make the problem more tractable, we study this problem when both states and controls are one-dimensional; \citet{schiffer2024stronger} take the same approach in analyzing the one-dimensional constrained linear systems. One-dimensional linear systems have been frequently studied as a first step toward understanding other complex aspects of control theory, see e.g. \citet{fefferman2021optimal, abeille2017thompson}. Furthermore, some real-world problems can be represented as one-dimensional LQR problems. As an example, consider the simple setting of controlling the temperature of a room, a common problem in control \citep{oldewurtel2008tractable}. The possible actions include adding different amounts of hot air or cold air to the room, and a natural goal is to minimize costs (the amount of energy used) while also keeping the room close to a specific temperature. In this setting, state constraints would consist of constraining the temperature to stay within a `safe' region of temperatures that are not too hot and not too cold.

\subsection{Our Contribution}
The overarching goal of this paper is to provide foundations for analyzing safety-constrained LQR learning using non-linear baselines of controllers that are better suited for the constrained problem. Our main result is the first algorithm for safety-constrained one-dimensional LQR with unknown dynamics that with high probability guarantees $\tilde{O}_T(\sqrt{T})$ regret. In this setting, our work improves upon the previous best regret results, in particular \citet{li2021safe, dean2019safely} prove $\tilde{O}_T(T^{2/3})$ regret bounds and only for bounded noise distributions.

The rate of $\tilde{O}_T(\sqrt{T})$ matches the optimal rate of regret in the unconstrained LQR learning problem. Note that unconstrained LQR learning is a special case of constrained LQR learning with sufficiently loose constraints. Therefore, because the lower bound for unconstrained LQR learning is $\tilde{\Omega}_T(\sqrt{T})$ regret \citep{ziemann2024regret}, it is impossible to in general do better than $\tilde{\Omega}_T(\sqrt{T})$ regret for the constrained problem. In addition to improving the rate of regret, the $\tilde{O}_T(\sqrt{T})$ regret is also with respect to a stronger baseline than previous works. More specifically, the regret is defined with respect to the best controller from the baseline class of truncated linear controllers, which consists of linear controllers corrected to obey the safety constraints.  This baseline is naturally well-suited for safe control and is a significantly stronger baseline than studied in previous works (see Section \ref{sec:baseline_class} for more details). Because the controllers in this class are frequently non-linear, we also introduce new theoretical tools for analyzing this type of non-linear controller. Therefore, our $\tilde{O}_T(\sqrt{T})$ regret result is strictly stronger than the previous $\tilde{O}_T(T^{2/3})$ regret results of \citet{li2021safe, dean2019safely} applied to our setting, in both the regret baseline and the rate of regret. Note that these previous works also assume bounded noise distribution, while our results hold for any sub-gaussian distribution.

Informally, our main theorem can be stated as follows:

\begin{thm}[Informal]\label{thm:informal}
    For safety-constrained one-dimensional LQR with unknown dynamics and any sub-gaussian noise distribution, there exists an algorithm that with high probability is safe and has regret of $\tilde{O}_T(\sqrt{T})$ compared to the best truncated linear controller with known dynamics.
\end{thm}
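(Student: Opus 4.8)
The plan is to run a certainty-equivalence controller in doubling epochs, started from a conservative known-safe controller, with a safety margin pulled in by exactly the current estimation error. Since the admissible set of $(a,b)$ is compact with $b$ bounded away from zero, there is a fixed truncated linear controller $\pi_0$ that is contractive and keeps the state in the safe region for every admissible $(a,b)$; we run $\pi_0$ with a small constant-variance exploration perturbation for a burn-in of length $O_T(1)$, enough that the least-squares confidence radius $\varepsilon$ for $(a,b)$ becomes a negligible fraction of the half-width $D$ of the safe region. Thereafter, in epoch $k$ (lengths doubling, $O(\log T)$ epochs, start time $t_k$, confidence radius $\varepsilon_k=\tilde O_T(1/\sqrt{t_k})$), we recompute the least-squares estimate $\hat\theta_k$, form the shrunk safe region by pulling each boundary in by $\beta_k=\Theta\!\big(\varepsilon_k R+\sigma\sqrt{\log T}\big)$ where $R=\tilde O_T(1)$ is the (automatic) bound on $\sup_s\abs{x_s}$ coming from truncation, play the optimal truncated linear controller $\widehat\pi_k$ for $\hat\theta_k$ on that region, and --- only if the optimal controller for $\hat\theta_k$ does not activate its truncation --- add an i.i.d.\ perturbation of standard deviation $\Theta(\varepsilon_k)$.

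First I would prove safety. By construction $\widehat\pi_k$ forces the noiseless one-step prediction $\hat a_k x_t+\hat b_k u_t$ into the shrunk region; the model error $\abs{(a-\hat a_k)x_t+(b-\hat b_k)u_t}$ is $O(\varepsilon_k R)$, and the noise plus any injected perturbation stays in a band of width $O(\sigma\sqrt{\log T})$ on a high-probability event. As $\beta_k$ dominates their sum, an induction on $t$ gives $\abs{x_t}\le D$ throughout on an event of probability $1-O(1/\mathrm{poly}(T))$ after a union bound over the horizon; the burn-in is handled the same way via $\pi_0$.

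Next I would control the estimation error, which is the crux. Self-normalized martingale concentration for least squares gives $\|\hat\theta_k-\theta\|^2=\tilde O_T\!\big(1/\lambda_{\min}(V_{t_k})\big)$ with $V_{t_k}=\sum_{s<t_k}(x_s,u_s)(x_s,u_s)^{\!\top}$, reducing everything to $\lambda_{\min}(V_{t_k})=\tilde\Omega_T(t_k)$. Here the non-linearity of the baseline is essential: a purely linear closed loop $u_s=-Kx_s$ makes the regressors rank-deficient, but whenever the optimal truncated controller activates its truncation a non-vanishing fraction of the time --- exactly the regime where ``the constraints impact the optimal controller'' --- the clipped controls lie on a line of different slope, so $(x_s,u_s)$ becomes genuinely two-dimensional and the Gram matrix is well-conditioned without any injected exploration; in the remaining (effectively unconstrained) regime the constant-variance perturbation supplies the excitation. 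Making the former precise requires quantifying how the stationary distribution of the \emph{non-linear} closed-loop Markov chain charges both sides of the kink; I expect this --- together with geometric ergodicity of that chain, needed both to bound the $O(\log T)$ epoch-boundary burn-in terms and to pass between empirical and stationary averages --- to be the main obstacle.

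Finally I would decompose the regret of epoch $k$ into (1) the suboptimality of $\widehat\pi_k$ against the true optimal truncated linear controller on the \emph{true} safe region, which by the continuity and (at the optimum) second-order flatness of the optimal-controller and optimal-cost maps from \cite{schiffer2024stronger} is $\tilde O_T(\varepsilon_k^2)$ per step; (2) the cost of shrinking the safe region by $\beta_k$, which is $O(\beta_k)=\tilde O_T(\varepsilon_k)$ per step when the constraint binds and second-order $\tilde O_T(\varepsilon_k^2)$ (indeed often zero) when it does not; (3) the cost of the injected perturbation, $O(\varepsilon_k^2)$ per step; and (4) $\tilde O_T(1)$ mixing/burn-in cost per epoch from switching controllers. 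With $O(\log T)$ epochs and $\sum_k\ell_k\varepsilon_k=\tilde O_T\!\big(\sum_k\sqrt{t_k}\big)=\tilde O_T(\sqrt T)$ dominating the other terms, summing over epochs gives total regret $\tilde O_T(\sqrt T)$, which with the safety guarantee proves the theorem.
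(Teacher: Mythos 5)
Your overall architecture (doubling-epoch certainty equivalence with safety margins, plus a dichotomy between a constraint-binding regime that supplies free excitation and a nearly-unconstrained regime handled by flatness of the cost) matches the paper's conceptual idea, but several steps have genuine gaps. First, your margin $\beta_k=\Theta(\varepsilon_k R+\sigma\sqrt{\log T})$ enforces a realized-state constraint, which is not the constraint here: safety is on the conditional expectation $a^*x_t+b^*u_t\in[D_{\mathrm{L}},D_{\mathrm{U}}]$, so only the model-error margin $O(\varepsilon_k R)$ is needed, and a $\sigma\sqrt{\log T}$ shrinkage is infeasible (the region width is $O_T(1)$ and may be as small as $1/\log T$) and inconsistent with your later claim $\beta_k=\tilde O_T(\varepsilon_k)$; without that claim the ``binding'' term of your decomposition is $\tilde\Theta_T(\sqrt{\log T})$ per step, i.e.\ linear regret. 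Second, in the unconstrained branch a truly constant-variance perturbation costs $\Theta(T)$ regret, so it must shrink (as your $\Theta(\varepsilon_k)$ standard deviation elsewhere says); but then the estimation rate there is only $t_k^{-1/4}$ and the argument must lean on second-order flatness. That flatness is classical for the unconstrained LQR cost, but it is \emph{not} available for the truncated class: the paper proves and uses only first-order Lipschitz continuity (Lemma \ref{parameterization_assum2}), so your claimed $\tilde O_T(\varepsilon_k^2)$ per-step certainty-equivalence error in the constrained regime is unsupported; it is only rescued there if $\varepsilon_k=\tilde O_T(1/\sqrt{t_k})$, which is exactly the point at issue.

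The main gap is the excitation dichotomy itself, which you flag but leave open. Your two regimes --- truncation active a non-vanishing fraction of the time versus effectively unconstrained --- are neither exhaustive nor checkable by the algorithm: the delicate case is truncation active a vanishing but non-negligible fraction of the time, and nothing in your proposal decides which analysis applies or whether exploration should be injected. The paper resolves precisely this with an explicit switch computed from a $\sqrt T$-length warm-up estimate (comparing $\bar w+D_{\mathrm{U}}-D_{\mathrm{U}}/(\hat a_{\mathrm{wu}}-\hat b_{\mathrm{wu}}F_{\mathrm{opt}}(\hat\theta_{\mathrm{wu}}))$ to a $T^{-1/4}$ threshold): below the threshold the clipping triggers with probability $\tilde O_T(T^{-1/4})$ and costs $\tilde O_T(T^{-1/4})$ per trigger, so the unconstrained analysis transfers (Propositions \ref{close_J2} and \ref{safety_is_cheap}); above it, a hitting-time/drift argument --- not a stationary-distribution argument --- shows the boundary control $u_t^{\mathrm{safeU}}$ is used with constant conditional probability every $O_T(1)$ steps (Lemmas \ref{eventually_go_to_boundary} and \ref{sufficiently_many_boundaries}), and a self-normalized bound restricted to those times (Lemma \ref{boundary_uncertainty_cont_b}) yields $\varepsilon_s=\tilde O_T(1/\sqrt{T_s})$. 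Without an argument of this kind, your reduction to $\lambda_{\min}(V_{t_k})=\tilde\Omega_T(t_k)$ in the binding regime and your treatment of the intermediate regime remain unproven, and this is the heart of the theorem rather than a routine technicality.
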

To prove Theorem \ref{thm:informal}, we show that either the constraints are tight enough to give faster learning rates or loose enough that the problem is approximately unconstrained. This dichotomy is the main conceptual idea behind our algorithm being able to achieve $\tilde{O}_T(\sqrt{T})$ regret for all possible noise distributions. We also show that the class of truncated linear controllers satisfies multiple desirable continuity properties, which may be of independent interest.

\subsection{Related Work}\label{sec:related_work}

Safe reinforcement learning has been studied in many different contexts with  various definitions of safety, including reachability of safe sets and long term stability \citep{ganai2024iterative, garg2024learning, gu2022review, moldovan2012safe, wachi2018safe, wachi2024survey, yao2024constraint}. Specifically in control theory, there exist many methods that satisfy different notions of safety for specific control tasks \citep{fulton2018safe, cheng2019end, marvi2021safe, fisac2018general}. The work on safe RL in control has primarily focused on feasibility of safety, i.e. providing an algorithm that satisfies notions of safety such as returning to safe sets or remaining stable around the origin. However, these works do not study the theoretical regret analysis of their algorithms, and therefore do not provide bounds on how much worse the algorithm is when compared to the optimal baseline. Another general line of work that is related but less directly comparable to our results is the area of model predictive control and system identification \citep{bemporad2007robust, kohler2019linear, lu2021robust, oldewurtel2008tractable, mesbah2016stochastic, bemporad2002explicit, muthirayan2022online,lorenzen2019robust, simchowitz2018learning, zhao2022adaptive, mania2019certainty, li2023non}. However, the results in these areas tend to focus more on feasibility and empirical performance rather than theoretical regret bounds, and therefore are less directly related to our work.

The LQR learning problem has recently gained significant attention after \citet{abbasi2011regret} showed that $\tilde{O}_T(\sqrt{T})$ regret is possible in the unconstrained LQR learning problem. Subsequent works have built on these results with many variations and more efficient algorithms \citep{dean2018regret, mania2019certainty, mania2020active, simchowitz2018learning,cohen2019learning,wang2021exact,wang2022rate,mania2019certainty,abeille2017thompson,zheng2020non,sun2020finite,khosravi2020nonlinear,sattar2022non,faradonbeh2018input,faradonbeh2017finite,oymak2019non,ye2024online,athrey2024regret,ziemann2024regret,lee2024nonasymptotic}. One particular result from this line of work that we want to highlight is that certainty equivalent estimation gives the asymptotically best rate of regret for the LQR learning problem \citep{simchowitz2020naive,faradonbeh2018optimality,mania2019certainty,wang2022rate}. Certainty equivalence algorithms consist of estimating the true dynamics and finding the optimal controller for these estimated dynamics. Our main algorithm uses a certainty equivalence approach to achieve the same rate of regret in the safety-constrained LQR setting.

Less closely related to this paper, there is also a line of work studying optimal control with adversarial disturbances, where the goal is still to minimize regret but the system dynamics are known (see e.g. \citep{agarwal2019online, hazan2022introduction}). \cite{li2021online} also study optimal constrained control but again assume that the dynamics are known. The techniques and results of these lines of work with known dynamics are substantially different from our paper. This is because the key difficulty of our problem is that we do not know how to be safe apriori and must be safe while learning, which is not an issue with known dynamics.

Two previous works have focused on regret bounds for variants of the constrained LQR learning problem. \citet{dean2019safely} and \citet{li2021safe} both consider the problem of constrained LQR learning specifically with bounded noise distributions. These works both give algorithms that achieve $\tilde{O}_T(T^{2/3})$ regret for this problem, and their regret results are with respect to the baseline of the best safe linear controller. While the results in these works hold in higher dimensions, our work improves on these results in two ways. The first is that the regret rate we achieve is with respect to the baseline of the best truncated linear controller, which is a strictly stronger (and often significantly stronger) baseline than the best safe linear controller. Furthermore, the regret of our algorithm is $\tilde{O}_T(T^{1/2})$.

This paper is the second part of a two part series of papers on safe LQR learning. The first part of this series \citep{schiffer2024stronger} provides more general results but with weaker regret bounds that apply to any baseline class of controllers satisfying a set of assumptions. Specifically, \citet{schiffer2024stronger} shows that for any baseline class of controllers satisfying certain natural but abstract assumptions, it is possible to achieve $\tilde{O}_T(T^{2/3})$ regret with respect to that baseline. That paper also shows that $\tilde{O}_T(\sqrt{T})$ is possible for such a baseline in the special case when the noise distribution has sufficiently large support. Importantly, however, that paper does not provide any concrete examples of baselines satisfying its assumptions as doing so would have rendered its appendix unreadably long. This paper establishes just such a concrete example of a baseline class that satisfies the assumptions of \citet{schiffer2024stronger}, namely, the class of truncated linear controllers. This class of controllers is well-adapted to safe LQR yet, due to its nonlinearity, presents a number of significant technical challenges (see Appendices \ref{app:parameterization_assum3} and \ref{app:paramterization_assum2}). Furthermore, Theorem \ref{trunc_thm} is a strictly stronger result than those in \citet{schiffer2024stronger} for truncated linear controllers, and Algorithm \ref{alg:cap3} requires a number of new technical ideas and tools that are specific to the class of truncated linear controllers (see Section \ref{sec:theoretical_results} and Appendix \ref{sec:trunc_lin_sqrtt}).

\section{Preliminaries}

\subsection{Dynamics and Cost}

Let $T$ be the number of steps. For $t \in [T]$, we denote the state at time $t$ as $x_t \in \mathbb{R}$ and the control at time $t$ as $u_t \in \mathbb{R}$. Unless otherwise stated, we let $x_0 = 0$. Denote the (unknown) dynamics of the system as $\theta^* = (a^*,b^*) \in \mathbb{R}^2$. Then the state at time $t+1$ is $x_{t+1} = a^*x_t + b^*u_t + w_t$, where $w_t \stackrel{\text{i.i.d.}}{\sim} \mathcal{D}$ and $\mathcal{D}$ is a known continuous distribution with mean $0$, variance $\sigma_{\mathcal{D}}^2 = 1$, cumulative distribution function $F_{\mathcal{D}}$, and bounded probability density function $f_{\mathcal{D}}$ (bounded by constant $B_P$). Note that the assumptions that the noise distribution is mean $0$ and sub-gaussian are standard in LQR learning \citep{abbasi2011regret,li2021safe,dean2019safely}. The assumption of unit variance is made only for expositional simplicity, and our main results still hold for noise distributions with arbitrary variances. Define $W = \{w_t\}_{t=0}^{T-1}$ as the set of noise random variables for the $T$ steps. The goal of the algorithm is to minimize the total cost over all $T$ steps, where the cost at time $t$ is $qx_t^2 + ru_t^2$ for known $q, r \in \mathbb{R}_{>0}$. 

A controller $C$ at time $t$ chooses a control $u_t = C(H_t)$, where $H_t$ is the history up to time $t$ and is defined as $H_t := (x_0,u_0,...,u_{t-1}, x_t)$. The average cost over $T$ steps for controller $C$ starting at state $x_0$ under dynamics $\theta$ is defined as
\begin{equation}\label{eq:cost_def}
J(\theta, C, T, x_0, W)  := \frac{1}{T}\left(qx_T^2 +  \sum_{t=0}^{T-1} qx_t^2 + ru_t^2\right),
\end{equation}
\[
 \text{where }  u_t = C(H_t),\;\,  x_{t+1} = ax_t + bu_t + w_t,\;\, w_t \stackrel{i.i.d.}{\sim} \mathcal{D}.
\]
$J(\cdot)$ is an average cost, and therefore the total cost over $T$ steps of controller $C$ is $T \cdot J(\theta, C, T, x_0, W)$. We also define the expected cost of controller $C$ as $J^*(\theta, C, T, x_0) = \E[J(\theta,C,T,x_0,W) \mid \theta,C,T,x_0]$. Finally, for ease of notation we define $J^*(\theta,C,T) = J^*(\theta,C,T,0)$.

\subsection{Safety Constraints}\label{sec:constraints}

As described in the introduction, the key difficulty of our problem is learning the unknown dynamics efficiently while maintaining safety. In this paper, we formulate safety as constraints on the expected state. In this section, we formally introduce our safety definition and show that our definition is \emph{strictly more general} than the safety definitions studied in previous works \citep{li2021safe, dean2019safely}. More specifically, when the noise distribution is bounded, our safety definition is equivalent to the safety definition in \cite{li2021safe,dean2019safely}. However, our safety definition can generalize to unbounded noise distributions unlike the safety definitions in \citep{li2021online,dean2019safely}.

Because $w_t$ is a mean-$0$ random variable, we know that the conditional expectation of the next state given the current state and control is $\E[x_{t+1} \mid x_t, u_t] = a^*x_t + b^*u_t$. The safety constraints as defined in Definition \ref{def:safe} constrain this expected state to always stay within a known safe region between $D_{\mathrm{L}}^{\E[x]}$ and $D_{\mathrm{U}}^{\E[x]})$.  

\begin{definition}\label{def:safe}
    A series of controls $\{u_t\}_{t=0}^{T-1}$ are \textit{safe} for dynamics $\theta^*$ and boundaries $(D_{\mathrm{L}}^{\E[x]},D_{\mathrm{U}}^{\E[x]})$  if for all $t$, 
    \begin{equation}\label{eq:intro_safety}
    D_{\mathrm{L}}^{\E[x]} \le a^*x_t + b^*u_t \le D_{\mathrm{U}}^{\E[x]}.
\end{equation}
    Similarly, a controller $C$ is \textit{safe} for dynamics $\theta^*$ and boundaries $(D_{\mathrm{L}}^{\E[x]},D_{\mathrm{U}}^{\E[x]})$ if the resulting controls $\{C(H_t)\}_{t=0}^{T-1}$ under true dynamics $\theta^*$ are safe for dynamics $\theta^*$.
\end{definition}
\begin{assum}\label{assum:constraints}
    The safety constraint boundaries satisfy that $D_{\mathrm{L}}^{\E[x]} < 0 < D_{\mathrm{U}}^{\E[x]}$, that $D_{\mathrm{L}}^{\E[x]}, D_{\mathrm{U}}^{E[x]} = O_T(1)$, and that $D^{\E[x]}_{\mathrm{U}} - D_{\mathrm{L}}^{\E[x]} \ge \frac{1}{\log(T)}$. 
\end{assum}
The assumptions that the origin is in the safe set and that the boundaries are bounded above by constants are standard for safety-constrained LQR learning \citep{li2021safe, dean2019safely}.

Other works such as \citet{li2021safe,dean2019safely} consider a similar constrained LQR problem but require that the controller satisfies strict constraints on the state. In these works, the algorithm must choose controls such that for all $t$, $D_{\mathrm{L}}^x \le x_t \le D_{\mathrm{U}}^x$ for some $D_{\mathrm{L}}^x < 0 < D_{\mathrm{U}}^x$. However, these works also require that the noise distribution is bounded. When the noise distribution $\mathcal{D}$ is a bounded distribution (i.e. $\mathcal{D}$ satisfies $\bar{w}_{\mathrm{L}} := \inf_{w \sim \mathcal{D}} w > -\infty$ and $\bar{w}_{\mathrm{U}} := \sup_{w \sim \mathcal{D}} w < \infty$), then there exists a one-to-one mapping between Definition \ref{def:safe} and strict state constraints. Formally, when $\mathcal{D}$ is a bounded distribution, the expected-state safety constraints in Definition \ref{def:safe} are equivalent to the strict state constraints that $D_{\mathrm{L}}^{\E[x]} - \bar{w}_{\mathrm{L}} \le x_t \le D_{\mathrm{U}}^{\E[x]} - \bar{w}_{\mathrm{U}}$ for all $t \in [T]$. Therefore, the expected-state constraint formulation is \emph{strictly more general} than the safety formulation studied in these previous works.

The reason that we study expected state constraints is that they allow for more general noise distributions. For example, if $\mathcal{D}$ is normally distributed with mean $0$ and variance $1$, then $D_{\mathrm{L}}^x \le x_t \le D_{\mathrm{U}}^x$ is impossible to satisfy with probability $1-o_T(1)$ for any constant $D_{\mathrm{L}}^x, D_{\mathrm{U}}^x$. Therefore, for distributions with unbounded support, the expected state constraints are a natural way to make the problem feasible. For notational simplicity, we will often use $D = (D_{\mathrm{L}}, D_{\mathrm{U}}) = (D^{\E[x]}_{\mathrm{L}}, D^{\E[x]}_{\mathrm{U}})$ to represent the bounds for the expected-state constraints.

\subsection{Baseline Class}\label{sec:baseline_class}
In both \citet{li2021safe} and \citet{dean2019safely}, the regret baseline for the $\tilde{O}_T(T^{2/3})$ results is the total cost of the best stationary linear controller of the form $u_t = -Kx_t$ that is safe for $\theta^*$ with probability $1$. We will refer to the class of stationary linear controllers that are safe for $\theta^*$ with probability $1$ as the class of safe linear controllers. Since not all linear controllers are safe for dynamics $\theta^*$, this is restricted to $K$ that will maintain safety for $\theta^*$ for any realization of the noise, and therefore can be a very weak baseline. For example, when $D_{\mathrm{U}}$ and $D_{\mathrm{L}}$ are not symmetric, the best linear controller must still behave symmetrically. Symmetric behavior may be far from optimal for $D_{\mathrm{U}}$ and $D_{\mathrm{L}}$ that are not symmetric, yet linear controllers lack the flexibility to behave asymmetrically. As another example, when the noise distribution is unbounded, there only exists a single safe linear controller (the $K = \frac{a^*}{b^*}$ controller).

To evaluate our algorithm, we instead use the baseline of the class of \emph{truncated linear controllers}. The class of truncated linear controllers for dynamics $\theta=(a,b) \in \Theta$ is defined as $\mathcal{C}_{\mathrm{tr}}^{\theta} = \{C_K^\theta\}_{K \in [\frac{a-1}{b}, \frac{a}{b}]}$, where $C^{\theta}_K$ is defined as
\begin{equation}\label{eq:def_of_trunc_linear}
C^\theta_K(x)=
    \begin{cases}
        -Kx & \text{if } D^{\E[x]}_{\mathrm{L}} \le (a-bK)x \le D^{\E[x]}_{\mathrm{U}}\\
        \frac{D^{\E[x]}_{\mathrm{U}}-ax}{b} & \text{if } (a-bK)x > D^{\E[x]}_{\mathrm{U}} \\
        \frac{D^{\E[x]}_{\mathrm{L}} - ax}{b} & \text{if } (a-bK)x < D^{\E[x]}_{\mathrm{L}}.
    \end{cases}
\end{equation}
Note that every controller in the class of truncated linear controllers for dynamics $\theta$ is safe with probability $1$ for dynamics $\theta$. Furthermore, the class of truncated linear controllers for dynamics $\theta$ contains every linear controller that is with probability $1$ safe for dynamics $\theta$. Therefore, the class of truncated linear controllers is a strict superset of the class of safe linear controllers.
We use the class of truncated linear controllers as a baseline because these controllers are computationally tractable while also being better suited for constrained LQR than standard linear controllers. For example, truncated linear controllers can effectively handle asymmetric constraints. As noted above, every controller in the baseline class $\mathcal{C}_{\mathrm{tr}}^{\theta^*}$ is safe, and 
therefore this is a fair baseline for our safe algorithm.

To evaluate our algorithm, we compare the total cost of the algorithm to the expected total cost of the best truncated linear controller when the dynamics of the system are known. Define 
\[
     K_{\mathrm{opt}}(\theta,T) := \arg\min_{K \in [\frac{a-1}{b}, \frac{a}{b}]} J^*(\theta, C^{\theta}_K, T) .
\]
Then the expected total cost of the best truncated linear controller for dynamics $\theta^*$ is 
\begin{equation}\label{eq:fullformula}
\begin{array}{ll@{}ll}
\displaystyle\min_{C \in \mathcal{C}_{\mathrm{tr}}^{\theta^*}}  & T \cdot J^*(\theta^*, C, T)= T \cdot J^*(\theta^*, C^{\theta^*}_{K_{\mathrm{opt}}(\theta^*,T)}, T).
\end{array}
\end{equation}
Therefore, the regret of an algorithm with controller $C_{\mathrm{alg}}$ is
\begin{equation}\label{eq:regret_rv}
  \text{Regret}(C_{\text{alg}}) := T \cdot  J(\theta, C_{\text{alg}}, T, 0, W) -  T \cdot J^*(\theta^*,  C^{\theta^*}_{K_{\mathrm{opt}}(\theta^*,T)}, T).
\end{equation}
Note that as is typical in LQR learning problems \citep{abbasi2011regret,li2021safe}, the regret as defined above is a random variable that depends on $\{w_t\}_{t=0}^T$ and any randomness in $C_{\text{alg}}$. Therefore, in our results we will bound regret with high probability. Note that these bounds also imply the same bounds on the expected regret due to standard concentration inequalities and the subgaussian assumption on the noise random variables. 

\subsection{Initial Uncertainty}
Without any prior knowledge about the unknown dynamics $\theta^*$, it is impossible for any algorithm to satisfy Definition \ref{def:safe} for all possible $\theta^* \in \mathbb{R}^2$ for any non-trivial noise distribution. For example, if the noise is normally distributed, then with probability $1$ any choice of control at time $t=1$ will violate Definition \ref{def:safe} for some $\theta^* \in \mathbb{R}^2$. Therefore, we must make some assumptions about the initial uncertainty in $\theta^*$ in order for the problem to be feasible. As is standard in LQR learning problems \citep{abbasi2011regret, li2021safe}, we will assume that there exists some known initial uncertainty set $\Theta \subseteq \mathbb{R}^2$ such that $\theta^* \in \Theta$.
\begin{assum}\label{assum_init}
    There exists known $\Theta = \Theta_a \times \Theta_b =  [\underline{a}, \bar{a}] \times [\underline{b}, \bar{b}]$ such that $\theta^* \in \Theta$ and $\bar{b} \ge \underline{b} > 0$ and $\bar{a} \ge \underline{a} > 0$.
\end{assum}
We define the size of the initial uncertainty set $\Theta$ as $\mathrm{size}(\Theta) = \max(\bar{a} - \underline{a}, \bar{b} - \underline{b})$. Note that the assumption that $a^*,b^* > 0$ is made only to simplify the proofs, and the same results hold for general $\theta^*$ such that $b^* \ne 0$ ($b^*=0$ corresponds to a degenerate case). In addition to assuming knowledge of $\Theta$, we also assume access to a controller $C^{\mathrm{init}}$ that allows for some amount of initial safe exploration. As shown in \citet{schiffer2024stronger}, this assumption is asymptotically only slightly stronger than assuming that the problem is feasible. Furthermore, if the noise distribution $\mathcal{D}$ is bounded (with bound $\bar{w}$), then Assumption \ref{assum:initial} holds for a simple linear controller $C^{\mathrm{init}}$ as long as $\Theta$ satisfies $\mathrm{size}(\Theta) \le \frac{\min(D_{\mathrm{U}}^{\E[x]}, D_{\mathrm{L}}^{E[x]})}{2(1+\frac{\bar{a}}{\bar{b}} \left(\norm{D^{\E[x]}}_{\infty} + \bar{w}\right)}$.

\begin{assum}\label{assum:initial}
    There exists a known controller $C^{\mathrm{init}}$ such that \\ $\forall x \in \left[ D_{\mathrm{L}}^{\E[x]} + F_{\mathcal{D}}^{-1}(\frac{1}{T^4}), D_{\mathrm{U}}^{\E[x]} + F_{\mathcal{D}}^{-1}(1-\frac{1}{T^4})\right] $,
    \begin{equation}\label{eq:assum_initial}
        D_{\mathrm{L}}^{\E[x]} + \frac{b^*}{\log(T)} \le  a^*x + b^*C^{\mathrm{init}}(x) \le D_{\mathrm{U}}^{\E[x]} - \frac{b^*}{\log(T)}.
    \end{equation}
\end{assum}

\subsection{Problem Statement}
Putting everything together, the formal problem statement is the following:

\begin{problem}[Safe LQR Learning]\label{problem_main}
    Find an algorithm $C^{\mathrm{alg}}$ that takes as input $D, \mathcal{D}, \Theta$, and $T$ that satisfy Assumptions \ref{assum:constraints}--\ref{assum:initial}, and achieves regret under linear dynamics with respect to baseline $\mathcal{C}_{\mathrm{tr}}^{\theta^*}$ that is as low as possible, while also satisfying 
    \[
    \sup_{\theta \in \Theta} \P\left(C^{\mathrm{alg}} \text{ is safe with respect to $\theta$}\right) = 1-o_T(1/T).
    \]
\end{problem}
Informally, $\sup_{\theta \in \Theta} \P\left(C^{\mathrm{alg}} \text{ is safe with respect to $\theta$}\right)= 1-o_T(1/T)$ is equivalent to saying that for any $\theta \in \Theta$, if $\theta^* = \theta$ then using $C^{\mathrm{alg}}$ will result in a series of controls that satisfy Definition \ref{def:safe} with high probability. Note that in Problem \ref{problem_main}, we only require that $C_{\mathrm{alg}}$ is safe with high probability rather than safe with probability $1$. The reason for this is that requiring safety with probability $1$ would mean that $C^{\mathrm{alg}}$ is unable to use any conclusions about $\theta^*$ learned from the history that do not hold with probability $1$. For example in the case of unbounded noise distributions, making any statement about $\theta^*$ from historical data that holds with probability $1$ is impossible. Therefore, we allow a vanishing $o_T(1/T)$ probability of the algorithm not being safe to allow the algorithm to use historical information when choosing safe controls. Note that the choice of $o_T(1/T)$ is made for expositional purposes, and an equivalent result holds when $1-o_T(1/T)$ is replaced with $1-\delta$ for $\delta < 1$. Throughout this paper, we use $O_T(\cdot)$ and other big-O notation to represent equations that hold for sufficiently large $T$, where equations with $O_T(\cdot)$ hold for sufficiently large $T$ and contain unwritten constants that are independent of $T$ and any other variables included in the parentheses. For expositional purposes in the proofs, we will also assume that $\log(T^{1/12})$ is an integer, however simple modifications to the algorithm allow the same result to hold for all $T$. More discussion of notation and definitions can be found in Appendix \ref{app:notation}.

\section{Theoretical Results}\label{sec:theoretical_results}
We now formally state our main result on truncated linear controllers and provide some general intuition for the proof and algorithm. We present a more detailed proof sketch of Theorem \ref{trunc_thm} in Section \ref{sec:trunc_lin_cont} and the full proof in Appendix \ref{sec:trunc_lin_sqrtt}.

\begin{thm}\label{trunc_thm}
    In the setting of Problem \ref{problem_main}, there exists an algorithm $C^{\mathrm{alg}}$ (Algorithm \ref{alg:cap3}) that with probability $1-o_T(1/T)$ achieves  $\tilde{O}_T(\sqrt{T})$ regret with respect to baseline $\mathcal{C}_{\mathrm{tr}}^{\theta^*}$ while also satisfying $\sup_{\theta \in \Theta} \P\left(C^{\mathrm{alg}} \text{ is safe with respect to $\theta$}\right) = 1-o_T(1/T)$.
\end{thm}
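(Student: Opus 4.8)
The plan is an epoch-based certainty-equivalence algorithm (this is Algorithm \ref{alg:cap3}), analyzed through a case split on how strongly the safety constraints bind. First I would run $C^{\mathrm{init}}$ from Assumption \ref{assum:initial} for a short burn-in, then proceed in geometrically growing epochs; at the start of epoch $i$ form the regularized least-squares estimate $\hat\theta_i=(\hat a_i,\hat b_i)$ of $\theta^*$ from all past $(x_t,u_t,x_{t+1})$ triples, and for the rest of the epoch play $C^{\hat\theta_i}_{\hat K_i}$ with $\hat K_i$ a near-minimizer of $J^*(\hat\theta_i,C^{\hat\theta_i}_{\cdot},T)$, but using a \emph{shrunken} safe region $[D_{\mathrm L}+\gamma_i,\,D_{\mathrm U}-\gamma_i]$ whose margin $\gamma_i$ is of order (size of the current confidence set)$\times$(high-probability bound on $\abs{x_t}$). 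A preliminary lemma establishes that under any $C^{\theta}_K$ with $K\in[\tfrac{a-1}{b},\tfrac{a}{b}]$ the closed loop has contraction factor $a-bK\in[0,1]$ and the truncation keeps the expected state bounded, so $\abs{x_t}=\tilde O_T(1)$ with probability $1-o_T(1/T)$ by the sub-Gaussian tails of $\mathcal D$; this bound is the workhorse used throughout. Safety then follows by combining it with concentration of $\hat\theta_i$ around $\theta^*$ (so that $\abs{(\hat a_i-a^*)x+(\hat b_i-b^*)u}\le\gamma_i$ on the relevant range) and the margin built into the shrunken region, giving $D_{\mathrm L}\le a^*x_t+b^*u_t\le D_{\mathrm U}$ for all $t$ on the good event; the room for the margin is exactly what Assumptions \ref{assum:constraints} ($D_{\mathrm U}-D_{\mathrm L}\ge 1/\log T$) and \ref{assum:initial} (the $b^*/\log T$ slack) supply, and a union bound over $T$ steps keeps the failure probability $o_T(1/T)$.

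For the regret I would first prove the continuity properties the paper advertises: $\theta\mapsto K_{\mathrm{opt}}(\theta,T)$ and $\theta\mapsto J^*(\theta,C^{\theta}_{K_{\mathrm{opt}}(\theta,T)},T)$ are suitably Lipschitz, and $K\mapsto J^*(\theta^*,C^{\theta^*}_K,T)$ is smooth with a quadratic minimum near $K_{\mathrm{opt}}(\theta^*,T)$ in the generic case (only first-order at an endpoint of $[\tfrac{a-1}{b},\tfrac{a}{b}]$, handled separately). These convert an estimation error $\norm{\hat\theta_i-\theta^*}=\epsilon$ into per-step regret $\tilde O(\epsilon^2)$. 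Next comes the dichotomy. If the constraints are \emph{loose} relative to the support of $\mathcal D$, the optimal controller essentially never truncates and the instance behaves like unconstrained LQR, where $(x_t,u_t)$ are nearly collinear; here I inject a vanishing amount of exploration (total cost $\tilde O(\sqrt T)$), obtain $\epsilon\le\tilde O(1/\sqrt n)$ after $n$ steps, and recover $\tilde O(\sqrt T)$ regret — this is also precisely the large-support regime already handled by \citet{schiffer2024stronger}, which one may cite instead. If the constraints are \emph{tight}, the running controller spends a constant fraction of steps in a truncation branch, where $x_{t+1}=(a^*-\tfrac{b^*\hat a_i}{\hat b_i})x_t+\tfrac{b^*}{\hat b_i}D_{\mathrm U}+w_t$ is an \emph{affine, not linear,} function of $x_t$ with a constant term depending on $b^*$; this breaks the collinearity of $(x_t,u_t)$ and lets least squares identify both coordinates of $\theta^*$ with \emph{no exploration cost}. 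The crux is a persistence-of-excitation lemma: on any window of $n$ truncating steps the Gram matrix $\sum_t (1,x_t,u_t)^{\!\top}(1,x_t,u_t)$ has least eigenvalue $\Omega(n\cdot\mathrm{poly}(1/\log T))$, because the noise injects variation into $x_t$ around the boundary value. Feeding $\epsilon\le\tilde O(1/\sqrt t)$ into the quadratic cost-sensitivity gives exploitation regret $\sum_t\tilde O(1/t)=\tilde O(\mathrm{polylog}\,T)$, well inside the budget. Choosing the loose/tight threshold so that the loose case forces no truncation at the optimum and the tight case forces a constant truncation fraction makes the two cases exhaustive, and all $\tilde O(\cdot)$ terms combine to $\tilde O_T(\sqrt T)$. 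Verifying that $\mathcal{C}_{\mathrm{tr}}^{\theta}$ meets the abstract assumptions of \citet{schiffer2024stronger} — which already yields $\tilde O_T(T^{2/3})$ in general — is laborious but, given the continuity properties, largely bookkeeping.

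The main obstacle is the tight-constraint analysis: making the dichotomy quantitative and \emph{uniform}, since the optimal controller may truncate only at one boundary, only for states of one sign, or with a truncation fraction that itself shrinks as the support of $\mathcal D$ shrinks; proving the persistence-of-excitation lemma with an eigenvalue bound that degrades gracefully rather than collapsing as the support shrinks; and handling the cost sensitivity when $K_{\mathrm{opt}}(\theta^*,T)$ sits at an endpoint of $[\tfrac{a-1}{b},\tfrac{a}{b}]$, where the minimum is only first-order. A secondary difficulty is that the shrunken safe region used by the algorithm changes which truncation branch is active relative to the true optimum, so the Lipschitz bounds must be strong enough to show the margin $\gamma_i\to 0$ distorts neither the active regime nor the regret.
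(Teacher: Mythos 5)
Your overall architecture matches the paper's: a warm-up phase, certainty equivalence in doubling epochs, a quantitative loose/tight dichotomy in which the loose case is analyzed as near-unconstrained LQR with cheap safety corrections and the tight case extracts free excitation from truncation events, plus the two continuity lemmas for the truncated class. However, there is a genuine gap at exactly the point you flag as your "main obstacle," and flagging it is not the same as closing it. Your tight-case analysis is conditioned on the played controller actually spending a constant fraction of steps on a truncation branch, but nothing in your algorithm guarantees this. With a plain certainty-equivalent choice $\hat K_i \approx K_{\mathrm{opt}}(\hat\theta_i,T)$, the map $\theta \mapsto K_{\mathrm{opt}}(\theta)$ is not known to be continuous (the paper never proves continuity of the argmin, only Lipschitz continuity of the optimal \emph{cost}), so even when the true instance is tight (in the paper's notation, $\epsilon^* = \bar w - (D_{\mathrm U}/(a^*-b^*K_{\mathrm{opt}}(\theta^*)) - D_{\mathrm U}) > 0$), the estimated gain can satisfy $a^* - b^*\hat K_i$ small enough that the truncation threshold $D_{\mathrm U}/(a^*-b^*\hat K_i)$ exceeds $D_{\mathrm U}+\bar w$ and is never reached; your shrunken safe region does not repair this, since the binding question is reachability of the played controller's truncation point under the true dynamics. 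If truncation never occurs, your persistence-of-excitation lemma is vacuous, the estimation error stalls at the warm-up accuracy $\tilde O_T(T^{-1/4})$, and the first-order cost-sensitivity bound then only yields regret of order $T\cdot T^{-1/4} = T^{3/4}$. The paper closes this hole with an algorithmic ingredient your plan lacks: in each epoch it sets $\hat\theta_s = \argmax_{\norm{\theta-\hat\theta_s^{\mathrm{pre}}}\le\epsilon_s}\,(a-bK_{\mathrm{opt}}(\theta))$, which forces $\hat a_s-\hat b_sK_{\mathrm{opt}}(\hat\theta_s)\ge a^*-b^*K_{\mathrm{opt}}(\theta^*)$ (Lemma \ref{good_chosen_K}), and then proves a drift/hitting argument (Lemmas \ref{eventually_go_to_boundary} and \ref{sufficiently_many_boundaries}) showing the state reaches the boundary with constant probability every $O_T(1)$ steps, which feeds the $\epsilon_s=\tilde O_T(1/\sqrt{T_s})$ rate via Lemma \ref{boundary_uncertainty_cont_b}. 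You need either this tilted estimate or some other explicit mechanism (and a proof) guaranteeing a constant truncation frequency; as written, the central quantitative step of the tight case is missing.

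Two smaller points. Your claim that estimation error $\epsilon$ converts into per-step regret $\tilde O(\epsilon^2)$ for the truncated class is unsupported: truncation introduces non-smoothness, the argmin may sit at an endpoint of $[\tfrac{a-1}{b},\tfrac{a}{b}]$, and the paper only establishes (and only needs) the first-order bound of Lemma \ref{parameterization_assum2}; this overclaim is not fatal, since linear sensitivity with $\epsilon_s=\tilde O(1/\sqrt{T_s})$ already gives $\sum_s T_s\epsilon_s=\tilde O(\sqrt T)$, but your "polylog exploitation regret" conclusion should be retracted. Finally, several components you dismiss as bookkeeping (realized-versus-expected cost concentration, the cost of starting states and of the safety overrides, the finite-versus-infinite-horizon correction) all route through the high-probability state-coupling property of Lemma \ref{parameterization_assum3}, which together with Lemma \ref{parameterization_assum2} constitutes a substantial part of the paper's technical work rather than routine verification.
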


The intuition of Algorithm \ref{alg:cap3} is outlined in Algorithm \ref{alg:cap3_intuit}. The algorithm first explores for $\tilde{\Theta}_T(\sqrt{T})$ steps using $C^{\mathrm{init}}$ from Assumption \ref{assum:initial}. Using the data from this exploration, the algorithm calculates a regularized least-squares estimate of $\theta^*$ (denoted $\hat{\theta}_{\mathrm{wu}}$) that is accurate up to $\tilde{O}_T(T^{-1/4})$. Based on this least-squares estimate, the algorithm then decides if the support of the noise distribution $\mathcal{D}$ is small or large relative to the constraint boundary $D$.
In the small noise case, the algorithm uses the best unconstrained controller for dynamics $\hat{\theta}_{\mathrm{wu}}$ with small modifications to the control as needed to guarantee constraint satisfaction with high probability. Because the noise is small in this case, the modification is only needed a small fraction of the time. Therefore, in this case the regret of the algorithm is only slightly more than the regret of the optimal unconstrained controller for $\hat{\theta}_{\mathrm{wu}}$, which can be shown to be $\tilde{O}_T(\sqrt{T})$ using standard certainty equivalence results. In the large noise case, the algorithm takes inspiration from \citet{schiffer2024stronger}, and uses a truncated certainty equivalence approach that guarantees $\tilde{O}_T(\sqrt{T})$ regret with high probability. Intuitively, in this case the noise is large enough to force the algorithm to a constant fraction of the time be non-linear by a constant amount. This non-linearity allows the algorithm to learn the unknown dynamics at a faster rate of $1/\sqrt{t}$, which in turn leads to regret of $\tilde{O}_T(\sqrt{T})$.

\begin{algorithm}[ht]
\caption{Outline of Algorithm \ref{alg:cap3} for proof of Theorem \ref{trunc_thm}}\label{alg:cap3_intuit}

Explore for $\tilde{\Theta}_T(\sqrt{T})$ steps using controller $C^{\mathrm{init}}$ from Assumption \ref{assum:initial}.

 $\hat{\theta}_{\mathrm{wu}} \gets $ regularized least-squares estimate of $\theta^*$.

Using $\hat{\theta}_{\mathrm{wu}}$, determine if support of noise distribution $\mathcal{D}$ is large or small relative to boundary $D$.

\If{support of $\mathcal{D}$ is \textbf{small} relative to $D$}{
For the rest of the steps, use the optimal unconstrained linear controller for dynamics $\hat{\theta}_{\mathrm{wu}}$ with small modifications to the control as necessary to enforce constraint satisfaction w.h.p.
}
\If{support of $\mathcal{D}$ is \textbf{large} relative to $D$}{

\nl  \label{line:intuit_start} \For{$s \in [0:\log(\sqrt{T})-1]$}{ 

 $\hat{\theta}_s \gets $ regularized least-squares estimate of $\theta^*$ using data seen so far

 $\epsilon_s \gets$ high probability bound on $\norm{\theta^* - \hat{\theta}_s}_{\infty}$

 $C_s^{\mathrm{alg}} \gets$ optimal truncated linear controller for dynamics $\hat{\theta}_s$

  \nl  \label{line:intuit_end} For next $\sqrt{T}2^s$ steps, use controller $C_s^{\mathrm{alg}}$ modified at each step to be safe for all dynamics $\theta$ satisfying $\norm{\theta - \hat{\theta}_s}_{\infty} \le \epsilon_s$
}
}
\end{algorithm}

In proving Theorem \ref{trunc_thm}, we also show that the class of truncated linear controllers satisfies two natural assumptions of continuity first proposed in \citet{schiffer2024stronger}, formalized in the following two lemmas. Informally, Lemma \ref{parameterization_assum2} says that the cost of the optimal truncated linear controller is Lipschitz continuous in the dynamics. Therefore, using the optimal controller for dynamics $\theta$ that are close to the true dynamics $\theta^*$ does not incur significantly higher cost.

\begin{lemma}\label{parameterization_assum2}
    There exists $\epsilon_{\mathrm{L}\ref{parameterization_assum2}} = \tilde{\Omega}_T(1)$ such that for any $\norm{\theta - \theta^*}_{\infty}  \le \epsilon_{\mathrm{L}\ref{parameterization_assum2}}$ and $t \le T$,
    \[
        |J^*(\theta^*,C_{K_{\mathrm{opt}}(\theta, t)}^\theta,t) - J^*(\theta^*,C_{K_{\mathrm{opt}}(\theta^*, t)}^{\theta^*},t)| \le \tilde{O}_T\left(\norm{\theta - \theta^*}_{\infty} + \frac{1}{T^2}\right).
    \]
   
\end{lemma}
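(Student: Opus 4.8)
The plan is to establish the Lipschitz bound in two stages: first show that the optimal gain $K_{\mathrm{opt}}(\theta,t)$ does not jump under small perturbations of $\theta$, and then show that plugging a nearby gain and nearby dynamics into the cost functional $J^*$ changes it by only $\tilde O_T(\norm{\theta-\theta^*}_\infty)$. For the first stage I would argue that $K \mapsto J^*(\theta, C^{\theta}_K, t)$ is, on the admissible interval $K \in [\frac{a-1}{b},\frac{a}{b}]$, a well-behaved (essentially strongly convex near its minimizer, or at least with a non-degenerate minimum up to lower order terms) function whose derivative in $\theta$ is bounded; combining this with the optimality conditions gives $|K_{\mathrm{opt}}(\theta,t) - K_{\mathrm{opt}}(\theta^*,t)| \le \tilde O_T(\norm{\theta-\theta^*}_\infty)$. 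One has to be a little careful here because the minimizer could lie at the boundary of the interval, but the boundary endpoints themselves move Lipschitz-continuously in $\theta$ (they are $\frac{a-1}{b}$ and $\frac{a}{b}$), so in all cases the optimal gain is Lipschitz in $\theta$.

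For the second stage I would use the triangle inequality to split
\[
|J^*(\theta^*,C^{\theta}_{K_{\mathrm{opt}}(\theta,t)},t) - J^*(\theta^*,C^{\theta^*}_{K_{\mathrm{opt}}(\theta^*,t)},t)|
\le A + B,
\]
where $A = |J^*(\theta^*,C^{\theta}_{K_{\mathrm{opt}}(\theta,t)},t) - J^*(\theta^*,C^{\theta^*}_{K_{\mathrm{opt}}(\theta,t)},t)|$ measures the effect of using the ``wrong'' truncation formula (the $\theta$ in the superscript versus $\theta^*$) at a fixed gain, and $B = |J^*(\theta^*,C^{\theta^*}_{K_{\mathrm{opt}}(\theta,t)},t) - J^*(\theta^*,C^{\theta^*}_{K_{\mathrm{opt}}(\theta^*,t)},t)|$ measures the effect of using a slightly suboptimal gain with the correct dynamics. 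Term $B$ is controlled by combining the stage-one Lipschitz bound on the gain with a Lipschitz (in $K$) bound on $J^*(\theta^*,C^{\theta^*}_K,t)$; this is the easier of the two. Term $A$ requires comparing the trajectories of $C^{\theta}_{K}$ and $C^{\theta^*}_{K}$ under the same dynamics $\theta^*$ and same noise realization: from the piecewise definition \eqref{eq:def_of_trunc_linear} one checks that $|C^{\theta}_K(x) - C^{\theta^*}_K(x)| = \tilde O_T(\norm{\theta-\theta^*}_\infty \cdot (1 + |x|))$ pointwise (the linear branch agrees; on the truncated branches the controls $\frac{D_{\mathrm{U}} - ax}{b}$ differ only through $a,b$, and one must also account for the mismatch in where the branch switches occur, which happens on an $x$-interval of length $\tilde O_T(\norm{\theta-\theta^*}_\infty)$), and then a standard trajectory-coupling / perturbation argument together with the sub-gaussian tail bounds on $x_t$ propagates this into an $\tilde O_T(\norm{\theta-\theta^*}_\infty)$ bound on the average quadratic cost, with the $\frac{1}{T^2}$ slack absorbing the rare event that the state is atypically large (this is where the $T^{-4}$-level truncation in Assumption \ref{assum:initial} and the sub-gaussian concentration enter, contributing the additive $\frac{1}{T^2}$).

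The main obstacle I expect is term $A$, and specifically the non-smoothness of the truncated linear controller at the two kink points: because $C^{\theta}_K$ is only piecewise linear, naive differentiation in $\theta$ fails at the kinks, and the locations of the kinks themselves depend on $\theta$. The way I would handle this is to observe that the controller is globally Lipschitz in $x$ with a constant that is $\tilde O_T(1)$ (uniformly over the relevant $\theta$, using $\underline{b} > 0$ from Assumption \ref{assum_init} and the boundedness of $D$ from Assumption \ref{assum:constraints}), and is jointly Lipschitz in $(\theta, x)$ away from a measure-zero set, so the pointwise bound $|C^{\theta}_K(x) - C^{\theta^*}_K(x)| = \tilde O_T(\norm{\theta-\theta^*}_\infty(1+|x|))$ holds everywhere by a direct case analysis on which branch each of the two controllers is in. A second, more technical obstacle is ensuring the Lipschitz constants are genuinely $\tilde O_T(1)$ and not growing with $t$ or $T$ — this needs a uniform-in-$t$ bound on the stationary-ish behavior of the closed-loop state, which should follow from the fact that every truncated linear controller keeps the expected state in the bounded safe region $[D_{\mathrm{L}}, D_{\mathrm{U}}]$ and hence the state is a bounded-variance sub-gaussian process, so its moments and tail bounds are uniform in $t$; I would invoke this (it should already be established, or is routine) rather than re-derive it.
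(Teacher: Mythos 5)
Your decomposition into the two terms $A$ and $B$ is reasonable and loosely parallels the paper's split (Lemmas \ref{ptimal_off_theta} and \ref{diff_in_theta}), and your pointwise bound $|C^\theta_K(x)-C^{\theta^*}_K(x)| = O_T(\norm{\theta-\theta^*}_\infty(1+|x|))$ is correct. But your stage one — Lipschitz continuity of $K_{\mathrm{opt}}(\theta,t)$ in $\theta$ — is a genuine gap. For the \emph{finite-horizon, truncated} controller class there is no known convexity, strong convexity, or even differentiability of $K \mapsto J^*(\theta, C^\theta_K, t)$ (the paper's convexity result, Lemma \ref{lemma:convexity}, is only for the unconstrained infinite-horizon cost), so "essentially strongly convex near its minimizer" plus "optimality conditions" is not something you can invoke; the minimizer need not be unique, may sit at the boundary, and even granting a non-degenerate minimum, the standard argmin-perturbation bound gives only $O(\sqrt{\epsilon})$ movement unless you additionally control the $\theta$-derivative of $\partial_K J^*$, which the kinks preclude. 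The paper avoids this entirely: given $K_{\mathrm{opt}}(\theta^*,t)$ it chooses $K'$ with $a-bK' = a^*-b^*K_{\mathrm{opt}}(\theta^*,t)$, observes that $C^\theta_{K'}$ under dynamics $\theta$ produces \emph{exactly} the same state sequence as $C^{\theta^*}_{K_{\mathrm{opt}}(\theta^*,t)}$ under $\theta^*$ (so only the control costs differ, by $\tilde{O}_T(\epsilon)$), and then uses optimality of $K_{\mathrm{opt}}(\theta,t)$ to compare — no regularity of the argmin is ever needed.

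The second gap is in your handling of term $A$ by a "standard trajectory-coupling / perturbation argument." The closed-loop factor $|a^*-b^*K|$ is \emph{not} uniformly below $1$: for $K$ near $\frac{a-1}{b}$ it can equal or slightly exceed $1$, so the per-step control-difference $O_T(\epsilon(1+|x_i|))$ does not propagate to a bounded coupled-trajectory difference by geometric decay of the linear part — naively the discrepancy compounds like $(1+O(\epsilon))^t$ over a horizon $t\le T$, which is unbounded. The only source of contraction in that regime is the truncation at the boundary, and quantifying how often the state visits the truncation region and how much contraction that yields is precisely the heavy machinery of Appendix \ref{app:parameterization_assum3} (the events $\mathcal{W}_i$, the random-walk and Azuma--Hoeffding arguments). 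The paper's Lemma \ref{diff_in_theta} therefore does not couple the two trajectories directly over the whole horizon; it swaps the dynamics from $\theta$ to $\theta^*$ one time step at a time and controls each swap via Lemma \ref{offbyepsilon_exp}, which is itself built on Lemma \ref{parameterization_assum3}. Your proposed fixes (controller Lipschitz in $x$ with $\tilde{O}_T(1)$ constant, bounded sub-gaussian state) do not supply this contraction, so as written the argument for term $A$ would fail for gains near the edge of the admissible interval; it can be repaired by either importing the paper's hybrid argument or explicitly invoking the state-coupling result proved for Lemma \ref{parameterization_assum3}, but that is where the real work lies, and the $1/T^2$ slack in the statement comes from those low-probability tail events rather than from Assumption \ref{assum:initial}.
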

The proof of Lemma \ref{parameterization_assum2} can be found in Appendix \ref{app:paramterization_assum2}. Next, informally, Lemma \ref{parameterization_assum3} says that the cost of using a truncated linear controller is Lipschitz continuous in the starting state. Therefore, if $|x-y|$ is sufficiently small, then the difference in total cost of starting at $x$ versus $y$ is linear in $|x-y|$.
\begin{lemma}\label{parameterization_assum3}
     There exist $\epsilon_{\mathrm{L}\ref{parameterization_assum3}}, \delta_{\mathrm{L}\ref{parameterization_assum3}} = \tilde{\Omega}_T(1)$ such that for any $\theta$ satisfying $\norm{\theta - \theta^*}_{\infty} \le \epsilon_{\mathrm{L}\ref{parameterization_assum3}}$ the following holds. For $t < T$, let $W' = \{w_i\}_{i=0}^{t-1}$. Then for any $K \in [\frac{a-1}{b}, \frac{a}{b}]$, there exists a set $\mathcal{Y}_{\mathrm{L}\ref{parameterization_assum3}} \in \mathbb{R}^{t}$ that depends only on $C_K^\theta$ such that the following holds. Define $E_{\mathrm{L}\ref{parameterization_assum3}}\left(C_K^\theta, W'\right)$ as the event that $W' \in \mathcal{Y}_{\mathrm{L}\ref{parameterization_assum3}}$. Then $\P(E_{\mathrm{L}\ref{parameterization_assum3}}\left(C_K^\theta, W'\right)) \ge 1-o_T(1/T^{10})$ and for any  $|x|, |y| \le 4\log^2(T)$ such that $|x-y| \le \delta_{\mathrm{L}\ref{parameterization_assum3}}$, conditional on event $E_{\mathrm{L}\ref{parameterization_assum3}}\left(C_K^\theta, W'\right)$,
    \begin{equation}\label{eq:param_assum3}
        \left|t \cdot J(\theta^*,C_{K}^\theta,t,x, W') - t \cdot J(\theta^*,C^{\theta}_{K},t, y, W')\right| \le \tilde{O}_T(|x-y| + \norm{\theta - \theta^*}_{\infty}).
    \end{equation}

\end{lemma}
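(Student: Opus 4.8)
The plan is to track the two trajectories $\{x_i\}$ and $\{y_i\}$ produced by running the \emph{same} controller $C_K^\theta$ under the \emph{same} noise sequence $W'$ from the two different starting states $x_0 = x$ and $y_0 = y$, and to show that the gap $|x_i - y_i|$ contracts geometrically in $i$ on a high-probability event. First I would record the key structural fact about truncated linear controllers: from \eqref{eq:def_of_trunc_linear}, whenever $C_K^\theta$ is in its ``linear'' branch, the one-step expected map is $z \mapsto (a^* - b^* K) z + (a^* - a)z = $ (under true dynamics) an affine contraction with slope $a^* - b^*\tfrac{b^*}{b}K \cdot(\dots)$; more precisely, since $K \in [\frac{a-1}{b}, \frac{a}{b}]$ and $\theta$ is within $\epsilon_{\mathrm{L}\ref{parameterization_assum3}}$ of $\theta^*$, the effective closed-loop slope $a^* - \tfrac{b^*}{b}(a - bK) = a^*-\tfrac{b^*}{b}a + b^* K$ is bounded strictly inside $(-1,1)$ in absolute value (this is exactly the stability range the truncated class is built around), with a bound of the form $1 - \tilde\Omega_T(1)$; and whenever $C_K^\theta$ is in either ``truncated'' branch, the next expected state is pinned to $D_{\mathrm{U}}$ or $D_{\mathrm{L}}$ regardless of the current state, so the contribution to $|x_{i+1}-y_{i+1}|$ from that coordinate is $0$. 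The subtle point is when the two trajectories fall in \emph{different} branches at the same time step; here I would use the fact that both branch boundaries are crossed at the same threshold value of $(a-bK)\cdot(\text{state})$, so if $x_i$ is in the linear branch and $y_i$ is in (say) the upper-truncation branch, then $(a-bK)x_i \le D_{\mathrm{U}} < (a-bK)y_i$, and a short case analysis shows $|x_{i+1} - y_{i+1}|$ is still at most $\rho|x_i - y_i|$ for the same contraction factor $\rho = 1 - \tilde\Omega_T(1)$, because the ``clipped'' endpoint is the nearest point to where the linear branch would have sent it. Iterating, $|x_i - y_i| \le \rho^i |x-y|$ for all $i \le t$.

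The event $E_{\mathrm{L}\ref{parameterization_assum3}}$ should be defined as the event that all states $x_i$ (equivalently, by the contraction and the bound $|x|,|y|\le 4\log^2 T$, all $y_i$ as well) stay inside a window of size $O(\log^2 T)$ — this is where we need $\mathcal Y_{\mathrm{L}\ref{parameterization_assum3}}$ to depend only on $C_K^\theta$: take $\mathcal Y_{\mathrm{L}\ref{parameterization_assum3}}$ to be the set of noise sequences for which the trajectory of $C_K^\theta$ started at $0$ (say) under $W'$ stays within, e.g., $[-2\log^2 T, 2\log^2 T]$, which holds with probability $1 - o_T(1/T^{10})$ by a standard sub-Gaussian maximal/union bound using the geometric contraction (the truncated controller keeps the expected state bounded by $\|D\|_\infty$, and the deviations from the running sum of noises are a geometric series of sub-Gaussians). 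Then on this event any starting state with $|x|\le 4\log^2 T$ also produces a bounded trajectory with a possibly slightly larger constant, which is all we need.

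Finally, for the cost comparison: $t\cdot J(\theta^*, C_K^\theta, t, x, W') - t\cdot J(\theta^*, C_K^\theta, t, y, W') = \sum_{i=0}^{t-1}\big(q x_i^2 + r u_i^2 - q y_i^2 - r v_i^2\big) + q(x_t^2 - y_t^2)$, where $u_i = C_K^\theta(x_i)$, $v_i = C_K^\theta(y_i)$. On $E_{\mathrm{L}\ref{parameterization_assum3}}$ all states are $O(\log^2 T)$, and $C_K^\theta$ is (piecewise) Lipschitz in its argument with constant $\max(|K|, a/b) = \tilde O_T(1)$ on the relevant compact set — note the three branches of \eqref{eq:def_of_trunc_linear} agree at the breakpoints, so $C_K^\theta$ is globally Lipschitz — hence $|u_i - v_i| \le \tilde O_T(1)\,|x_i - y_i|$. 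Therefore each summand is bounded by $\tilde O_T(1)\cdot(|x_i|+|y_i|)(|x_i-y_i| + |u_i-v_i|) \le \tilde O_T(\log^2 T)\cdot \rho^i |x-y|$, and summing the geometric series in $i$ gives a total of $\tilde O_T(|x-y|)$; the closed-loop dependence on $\theta$ versus $\theta^*$ (the controller is tuned to $\theta$ but run under $\theta^*$) contributes an additional per-step error of $\tilde O_T(\|\theta-\theta^*\|_\infty)$ to the state gap, which likewise sums geometrically to $\tilde O_T(\|\theta-\theta^*\|_\infty)$. This yields \eqref{eq:param_assum3}. The main obstacle I anticipate is the branch-mismatch case analysis for the contraction — verifying that even when $x_i$ and $y_i$ straddle a truncation threshold the gap still contracts at the uniform rate $\rho$, and doing so with constants that are genuinely uniform over all $K \in [\frac{a-1}{b},\frac{a}{b}]$ and all $\theta$ near $\theta^*$; the probabilistic boundedness and the final Lipschitz bookkeeping are routine by comparison.
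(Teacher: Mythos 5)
There is a genuine gap, and it sits exactly where your argument leans hardest: the claim that the closed-loop slope is uniformly a contraction. For $K$ in the allowed range $[\frac{a-1}{b},\frac{a}{b}]$, the linear-branch gap recursion is $d_{i+1}=|a^*-b^*K|\,d_i$, and at (or near) the endpoint $K=\frac{a-1}{b}$ we have $a-bK=1$, so $|a^*-b^*K|=1\pm O(\norm{\theta-\theta^*}_\infty)$ — possibly $\ge 1$. There is therefore no uniform bound of the form $1-\tilde\Omega_T(1)$, your geometric series does not converge in this regime, and the "additional per-step error sums geometrically" step fails with it. The paper's proof is forced to split on exactly this: for $K\ge \frac{a^*-1+1/\log^9(T)}{b^*}$ it runs a contraction argument close in spirit to yours (with rate only $1-\Omega(1/\log^9 T)$, which still yields polylog sums), but for $K$ near $\frac{a-1}{b}$ the closed loop is a near-random walk and the decay of $d_i$ has to come from a different mechanism: a CLT-type event (the paper's $E_{\mathrm{L}\ref{Ht_bound}}$) guaranteeing that over every window of length $\log^8(T)$ the accumulated noise exceeds $7\log^2(T)$ somewhere, which forces the trajectory past a truncation threshold at least once per window; on each such step the gap is multiplied by $O(\norm{\theta-\theta^*}_\infty)$ (both states in a truncated branch), and Lemma \ref{Ht_bound} shows these rare strong contractions dominate the $(1+O(1/\log^{10}T))^{j}$ growth in between. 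Your proposal has no mechanism at all for this regime.

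Two secondary points also need repair. First, the straddle (branch-mismatch) case does not contract cleanly at rate $\rho$ when $\theta\neq\theta^*$: the paper's Lemma \ref{lemma:trunclincontbound} only gives $\min\bigl(2\rho d_i,\ \rho d_i+O(\epsilon)(|x_i|+\norm{D}_\infty)\bigr)$, i.e.\ either a factor-$2$ blow-up or an additive $O(\epsilon\log^2 T)$ term, and the paper must therefore \emph{count} straddle steps (the $X(i,\cdot)$ events, controlled via the bounded density $B_P$ and Azuma--Hoeffding, giving $\kappa(j)\lesssim \log^{23}T + j/\log^{10}T$) so that $2^{\kappa(j)}$ is absorbed by the contraction; your event, which only enforces boundedness of the trajectory started at $0$, does not control this count. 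Second, the event must depend only on $C_K^\theta$ and not on $x,y$, and the counting events above are trajectory-dependent; the paper resolves this by intersecting the per-starting-point events over a $\frac{1}{\log^{10}(T)}$-grid of starting states and using the triangle inequality, a step your "start from $0$ and transfer by contraction" shortcut does not replace (and again relies on contraction that is unavailable for small $K$).
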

The proof of Lemma \ref{parameterization_assum3} can be found in Appendix \ref{app:parameterization_assum3}. 
Lemmas \ref{parameterization_assum2} and \ref{parameterization_assum3} give a concrete instantiation of the results of \citet{schiffer2024stronger}. Because \citet{schiffer2024stronger} does not give any concrete baselines for their framework, these two lemmas are necessary to show the applicability of their framework.  However, also note that Theorem \ref{trunc_thm} is strictly stronger than the theorems of \citet{schiffer2024stronger} would be for truncated linear controllers.

As discussed above, truncated linear controllers are a natural extension of linear controllers better suited for problems with safety constraints. Because truncated linear controllers are not linear, the analysis of this class requires new theoretical tools (see Appendices \ref{app:paramterization_assum2} and \ref{app:parameterization_assum3}). These proofs and results may be independently interesting in that non-linear controllers have not been well-studied in this setting and therefore little was previously known about properties of such controller classes.

\section{Proof Sketch of Theorem \ref{trunc_thm}}\label{sec:trunc_lin_cont}
The full proof of Theorem \ref{trunc_thm} can be found in Appendix \ref{sec:trunc_lin_sqrtt}. Before presenting the algorithm for Theorem \ref{trunc_thm}, we need additional notation. Define $\mathcal{C}^{\mathrm{unc}} = \{C_K^{\mathrm{unc}}\}_{K \in \mathbb{R}}$ as the class of untruncated linear controllers, so $C_K^{\mathrm{unc}}(x) = -Kx$. For any controller $C$ and dynamics $\theta$, define $J^*(\theta, C) =  \lim_{T \longrightarrow \infty} J^*(\theta, C, T)$. Define  $K_{\mathrm{opt}}(\theta) = \arg\max_{K} J^*(\theta, C_K^\theta)$ and $F_{\mathrm{opt}}(\theta) = \arg\max_{K} J^*(\theta, C_K^{\mathrm{unc}})$. Finally, define $C_{\mathrm{switch}} = \frac{c_{\mathrm{E}\ref{eq:Fhat_approx}}D_{\mathrm{U}}}{c_{\mathrm{L}\ref{j_bounded_from_0}}^2}$ where $c_{\mathrm{E}\ref{eq:Fhat_approx}} = \tilde{O}_T(1)$ and is from Equation \eqref{eq:Fhat_approx} and $c_{\mathrm{L}\ref{j_bounded_from_0}} = \Omega(1)$ from Lemma \ref{j_bounded_from_0}. The algorithm that achieves the regret bound of Theorem \ref{trunc_thm} is Algorithm \ref{alg:cap3}.

\begin{algorithm}[ht!]
\caption{Truncated Linear Controller Safe LQR}\label{alg:cap3}
\KwIn{$D, \mathcal{D}, \Theta, C^{\mathrm{init}}, T, \lambda$}
\For{$t \gets 0$ to $\sqrt{T}-1$} {
        $\phi_t \sim \mathrm{Rademacher}(0.5)$
        Use control $u_t = C^{\text{init}}(x_t) + \frac{\phi_t}{\log(T)}$
}
\nl \label{line:thetawu} $\hat{\theta}_{\mathrm{wu}} \gets (Z_{{\sqrt{T}}}^{\top}Z_{{\sqrt{T}}}+\lambda I)^{-1}Z_{{\sqrt{T}}}^{\top}X_{{\sqrt{T}}}$ 

\For{$s \gets 0$ to $\log_2(\sqrt{T}) - 1$}{ 
   \nl \label{line:exploit_start_tl}  $T_s \gets 2^s\sqrt{T}$
     
   \nl \label{line:epsilon_tl} $\epsilon_s \gets B_{T_s}\sqrt{\frac{\max\left(V_{T_s}^{22}, V_{T_s}^{11}\right)}{V_{T_s}^{11}V_{T_s}^{22} - (V_{T_s}^{12})^2}}$  
   
    $\hat{\theta}_s^{\mathrm{pre}} \gets (Z_{{T_s}}^{\top}Z_{{T_s}}+\lambda I)^{-1}Z_{{T_s}}^{\top}X_{{T_s}}$
   
    $\hat{\theta}_s \gets \argmax_{\norm{\theta - \hat{\theta}^{\text{pre}}_s} \le \epsilon_s}  a - bK_{\mathrm{opt}}(\theta)$
   
   \nl   \label{line:split}  $C_s^{\mathrm{alg}} \gets
                \begin{cases}
                    C_{F_{\mathrm{opt}}(\hat{\theta}_{\mathrm{wu}})}^{\mathrm{unc}} & \text{if } \bar{w} +D_{\mathrm{U}} - \frac{D_{\mathrm{U}}}{\hat{a}_{\mathrm{wu}}-\hat{b}_{\mathrm{wu}}F_{\mathrm{opt}}(\hat{\theta}_{\mathrm{wu}})} \le C_{\mathrm{switch}}T^{-1/4}\\
                    C_{ K_{\mathrm{opt}}(\hat{\theta}_s)}^{\hat{\theta}_s} & \text{otherwise}
                \end{cases}$

    \For{$t \gets T_s$ to $2T_s - 1$}{
        \If{$\bar{w} +D_{\mathrm{U}} - \frac{D_{\mathrm{U}}}{\hat{a}_{\mathrm{wu}}-\hat{b}_{\mathrm{wu}}F_{\mathrm{opt}}(\hat{\theta}_{\mathrm{wu}})} \le C_{\mathrm{switch}}T^{-1/4}$}{
       
        $u_t^{\mathrm{safeU}} \gets \max \left\{ u:  \displaystyle\max_{\norm{\theta - \hat{\theta}^{\mathrm{wu}}}_{\infty} \le \epsilon_0} ax_t + bu \le D_{\mathrm{U}}\right\}$
       
       $u_t^{\mathrm{safeL}} \gets \min \left\{ u: \displaystyle\min_{\norm{\theta - \hat{\theta}^{\mathrm{wu}}}_{\infty} \le \epsilon_0} ax_t + bu \ge D_{\mathrm{L}}\right\}$
        }
        \Else{
       
      \nl \label{line:safeU_tl} $u_t^{\mathrm{safeU}} \gets \max \left\{u : \displaystyle\max_{\norm{\theta - \hat{\theta}_s}_{\infty} \le \epsilon_s} ax_t + bu \le D_{\mathrm{U}}\right\}$ 
       
         \nl \label{line:safeL_tl}      $u_t^{\mathrm{safeL}} \gets \min \left\{u :  \displaystyle\min_{\norm{\theta - \hat{\theta}_s}_{\infty} \le \epsilon_s} ax_t + bu \ge D_{\mathrm{L}}\right\}$
       
       \nl \label{line:safety_tl}  Use control $u_t = \max\left(\min\left(C_s^{\mathrm{alg}}(x_t), u^{\mathrm{safeU}}_t\right),u^{\mathrm{safeL}}_t\right)$ 
        }
    }
}
\end{algorithm}

    \paragraph{Algorithm \ref{alg:cap3} Intuition}
    The main intuition behind the proof of Theorem \ref{trunc_thm} is to design an algorithm that combines the faster learning rates under tight constraints from \citet{schiffer2024stronger} with the observation that $\tilde{O}_T(\sqrt{T})$ regret is possible in unconstrained LQR learning with unknown dynamics. Algorithm \ref{alg:cap3} is broken into two phases. The first phase is a warm-up exploration phase that allows the algorithm to learn about the unknown dynamics quickly but potentially incurs high per-step cost. The second phase of the algorithm uses a form of certainty equivalence. The key is to split the choice of $C_s^{\mathrm{alg}}$ into two cases (Line \ref{line:split}) depending on the estimated dynamics ($\hat{\theta}_{\mathrm{wu}}$) at the end of the warm-up period. The first case in Line \ref{line:split} corresponds to when the support of the noise is sufficiently small so that we can bound the regret of the algorithm using the observation that  $\tilde{O}_T(\sqrt{T})$ regret is possible in the unconstrained setting. More specifically, this case is when the boundaries are far enough away from the origin compared to the magnitude of the noise, and therefore the algorithm can use a controller very close to the optimal unconstrained controller.  The second case in Line \ref{line:split} corresponds to when the support of the noise is sufficiently large so that we can use the faster learning rate from \citet{schiffer2024stronger}. More specifically, in this case we argue that the uncertainty bound $\epsilon_s$ will decrease at a rate of $\tilde{O}_T(1/\sqrt{T_s})$ (Proposition \ref{prop_Kcase}). We give more details on the $\tilde{O}_T(\sqrt{T})$ regret of these two cases separately below. The warm-up phase of Algorithm \ref{alg:cap3} satisfies the safety constraints with probability $1-o_T(1/T)$ by Assumption \ref{assum:initial}. The second phase satisfies the safety constraint with probability $1-o_T(1/T)$ because of the final choice of $u_t$ in Line \ref{line:safety_tl}. With high probability, $\norm{\theta^* - \hat{\theta}^{\mathrm{wu}}}_{\infty} \le \epsilon_0$ and $\norm{\theta^* - \hat{\theta}_s}_{\infty} \le \epsilon_s$, and therefore with high probability $u_t^{\mathrm{safeU}}$ and $u_t^{\mathrm{safeL}}$ provide upper and lower bounds on a set of safe controls. Therefore, the choice of $u_t$ is safe with probability $1-o_T(1/T)$ for all steps in the exploration phase.

    \paragraph{Sufficiently small noise case}
     In this case, we let  $C_s^{\mathrm{alg}} = C_{F_{\mathrm{opt}}(\hat{\theta}_{\mathrm{wu}})}^{\mathrm{unc}}$, i.e. the optimal unconstrained controller based on the data in the warm-up period. First, we show that the controller $ C_{F_{\mathrm{opt}}(\hat{\theta}_{\mathrm{wu}})}^{\mathrm{unc}}$ has $\tilde{O}_T(\sqrt{T})$ more expected total cost for $T_s$ steps than the baseline controller $C_{K_{\mathrm{opt}}(\theta^*, T_s)}^{\theta^*}$ (Lemma \ref{close_J2}). Intuitively, this follows from the fact that $ C_{F_{\mathrm{opt}}(\hat{\theta}_{\mathrm{wu}})}^{\mathrm{unc}}$ has similar expected cost to the best infinite-time unconstrained controller for $\theta^*$, and the best infinite-time controller and the best finite-time controller for $T_s$ steps have similar expected cost. Because $C_{F_{\mathrm{opt}}(\hat{\theta}_{\mathrm{wu}})}^{\mathrm{unc}}$ is an unconstrained linear controller, we can also show that the realized total cost of using this controller concentrates to within $\tilde{O}_T(\sqrt{T})$ of the expected total cost with high probability (Lemma \ref{close_safe_J}). 
    
    The last (and most subtle) part of this case is to show that enforcing safety in Line \ref{line:safety_tl} only contributes $\tilde{O}_T(\sqrt{T})$ regret (Lemma \ref{safety_is_cheap}). This is where we use the fact that $\bar{w} +D_{\mathrm{U}} - \frac{D_{\mathrm{U}}}{\hat{a}_{\mathrm{wu}}-\hat{b}_{\mathrm{wu}}F_{\mathrm{opt}}(\hat{\theta}_{\mathrm{wu}})} \le C_{\mathrm{switch}}T^{-1/4}$. When this equation holds, the probability that the algorithm uses control $u_t = u_t^{\mathrm{safeU}}$ or $u_t = u_t^{\mathrm{safeL}}$ is at most $\tilde{O}_T(T^{-1/4})$ for any $t$. Furthermore, each time these controls are used, the extra cost compared to using control $u_t = C^{\mathrm{alg}}_s(x_t)$ is $\tilde{O}_T(T^{-1/4})$. Combining these two facts, the total extra regret from using controls $u_t^{\mathrm{safeU}}$ or $u_t^{\mathrm{safeL}}$ is $\tilde{O}_T(\sqrt{T})$ with probability $1-o_T(1/T)$. The warm-up period has regret of $\tilde{O}_T(\sqrt{T})$ with probability $1-o_T(1/T)$ because the algorithm is safe with high probability and the length of warm-up is $\sqrt{T}$ steps. Putting this all together, we have that with probability $1-o_T(1/T)$, the total regret of the algorithm in this case is $ \tilde{O}_T(\sqrt{T})$.    
 
 \paragraph{Sufficiently large noise case}

 In this case, we have that $C_s^{\mathrm{alg}} = C_{ K_{\mathrm{opt}}(\hat{\theta}_s)}^{\hat{\theta}_s}$. To prove that the regret is $\tilde{O}_T(\sqrt{T})$ in this case, we will show that with probability $1-o_T(1/T)$, the uncertainty bound satisfies $\epsilon_s = \tilde{O}_T(1/\sqrt{T_s})$ for every $s$. To show this, we use Lemma \ref{boundary_uncertainty_cont_b}, an uncertainty bound that is based on Lemma \refgen{boundary_uncertainty}. Lemma \ref{boundary_uncertainty_cont_b} says that $\epsilon_s$ is upper bounded by $\tilde{O}_T(1/\sqrt{|S_{T_s}|})$ with probability $1-o_T(1/T)$, where $|S_{T_s}|$ is the number of times $t < T_s$ that the algorithm uses control $u_t^{\mathrm{safeU}}$ and such that the probability of using the control $u_t^{\mathrm{safeU}}$ conditional on the history up until that point is lower-bounded by a constant. To use this lemma, we show that with probability $1-o_T(1/T)$, we have $|S_{T_s}| \ge \Omega_T(T_s)$ for all $s$ (Lemma \ref{sufficiently_many_boundaries}). 
 
 In this case, the key observation is that when using the controller $ C_{ K_{\mathrm{opt}}(\hat{\theta}_s)}^{\hat{\theta}_s}$, there exist constants $\epsilon, d_\epsilon > 0$ such that at every time step $t$ when the control is not $u_t^{\mathrm{safeU}}$, there is  an $\epsilon$ probability that the state increases by $d_{\epsilon}$ (Lemma \ref{eventually_go_to_boundary}). Informally, this says that at every step, either $u_t = u_t^{\mathrm{safeU}}$ or the state will increase by a constant amount with a constant probability. Therefore, because $D$ is a constant relative to $T$, we have that with high probability, every $\Omega(1)$ steps the state will exceed $P(\theta^*,  K_{\mathrm{opt}}(\hat{\theta}_s), D_{\mathrm{U}})$ or there will be a $t$ such that $u_t = u_t^{\mathrm{safeU}}$. The control at any time $t$ where $x_t \ge P(\theta^*,  K_{\mathrm{opt}}(\hat{\theta}_s), D_{\mathrm{U}})$ is $u_t = u_t^{\mathrm{safeU}}$. Therefore, with high probability every $\Omega(1)$ steps there will exist a $t$ such that the algorithm uses control $u_t = u_t^{\mathrm{safeU}}$, and we further show that this happens with constant probability. This implies that $|S_{T_s}| \ge \Omega(T_s)$ for every $s$ with high probability.  Combining with Lemma \ref{boundary_uncertainty_cont_b} gives that with probability $1-o_T(1/T)$, $\epsilon_s \le \tilde{O}_T(1/\sqrt{T_s})$.

We finish by bounding each source of regret. The first source of regret is the regret from using certainty equivalence, i.e. using $\hat{\theta}_s$ instead of using $\theta^*$ in finding $C^{\mathrm{alg}}_s$. Using Lemma \ref{parameterization_assum2}, the expected cost of using $   C_{ K_{\mathrm{opt}}(\hat{\theta}_s)}^{\hat{\theta}_s} $ instead of $   C_{ K_{\mathrm{opt}}(\theta^*)}^{\theta^*} $ is $\tilde{O}_T(T_s\norm{\hat{\theta}_s - \theta^*}_{\infty} + 1/T)$. Because $\norm{\hat{\theta}_s - \theta^*}_{\infty} \le \epsilon_s \le \tilde{O}_T(1/\sqrt{T_s})$ with high probability, this source of regret is $\tilde{O}_T(\sqrt{T})$ with high probability. The second source of regret is the regret from randomness in the regret random variable, which can be bounded by $\tilde{O}_T(\sqrt{T})$ by a variant of McDiarmids Inequality. The third source of regret is the regret of enforcing safety with $u_t^{\mathrm{safeU}}$ and $u_t^{\mathrm{safeL}}$ in the choice of $u_t$. By construction $u_t$ differs from $C_s^{\mathrm{alg}}$ by $\tilde{O}_T(\epsilon_s) = \tilde{O}_T(1/\sqrt{T_s})$ at every time step. Therefore by Lemma \ref{parameterization_assum3}, the regret of enforcing safety by using $u_t$ is $\tilde{O}_T(\sqrt{T})$ with high probability. The warm-up period has regret $\tilde{O}_T(\sqrt{T})$ as in the small noise case. Finally, there is one additional component of regret in this proof, as we are using the best infinite time controller rather than the best $T_s$-step controller in round $s$. However, we can show that this only adds at most $\tilde{O}_T(\sqrt{T})$ extra cost, and therefore the total regret is with probability $1-o_T(1/T)$ still $\tilde{O}_T(\sqrt{T})$ (Lemma \ref{fin_to_inf}). See Appendix \ref{app:proof_of_lemmas} for a proof sketch of Lemmas \ref{parameterization_assum2} and \ref{parameterization_assum3}.

\section{Discussion}\label{sec:discussion}
In this section we discuss a few limitations of our results and some open questions. In this work, we focus on state constraints rather than constraints on the actions themselves. We expect that very minor modifications to Algorithm \ref{alg:cap3} will naturally extend these results to also apply to the setting where the controls $u_t$ must satisfy constraints. More specifically, we would need to choose $C_s^{\mathrm{alg}}$ in Algorithm \ref{alg:cap3} to only choose controls that satisfy control constraints with an extra buffer of $\tilde{\Theta}_T(\epsilon_s)$. See \citet{schiffer2024stronger} for more details on how results regarding state constraints can generalize to problems with control constraints as well. We leave formal derivations of this to future work.

Another natural extension of our results is to higher dimensional linear control problems. Our results focus on the one-dimensional case, but we expect that similar algorithmic ideas will extend to higher dimensional control problems. While we leave formal study of the higher dimensional case to future work, we highlight here a few interesting open questions regarding safety constrained control in higher dimensions. In higher dimensions, the system may not be one-step controllable, and therefore returning to the safe region in one step may be impossible. Therefore, for unbounded noise distributions there is not a clear definition of safety for these systems in higher dimensions. One simple case in which we do expect the results of this paper to easily generalize is when the system is one-step controllable and the constraints are symmetric around the origin. However, the question of whether $\tilde{O}_T(\sqrt{T})$ regret is possible for all noise distributions in higher dimensions is an open question for future work. In this paper, we also introduced the class of truncated linear controllers and proved some desirable properties of this class of controllers. We expect these properties to still hold in higher dimensions, but we leave formal study of this to future work.

\section*{Acknowledgements}

The authors would like to thank Na Li and Shahriar Talebi for helpful discussions. B.S. and L.J. received funding from NSF grant CBET-2112085 and B.S. received funding from the National Science Foundation Graduate Research Fellowship grant DGE 2140743.

\bibliographystyle{plainnat}

\bibliography{reference_arxiv}

\begin{thebibliography}{57}
\providecommand{\natexlab}[1]{#1}
\providecommand{\url}[1]{\texttt{#1}}
\expandafter\ifx\csname urlstyle\endcsname\relax
  \providecommand{\doi}[1]{doi: #1}\else
  \providecommand{\doi}{doi: \begingroup \urlstyle{rm}\Url}\fi

\bibitem[Abbasi-Yadkori and Szepesv{\'a}ri(2011)]{abbasi2011regret}
Yasin Abbasi-Yadkori and Csaba Szepesv{\'a}ri.
\newblock Regret bounds for the adaptive control of linear quadratic systems.
\newblock In \emph{Proceedings of the 24th Annual Conference on Learning Theory}, pages 1--26. JMLR Workshop and Conference Proceedings, 2011.

\bibitem[Abeille and Lazaric(2017)]{abeille2017thompson}
Marc Abeille and Alessandro Lazaric.
\newblock Thompson sampling for linear-quadratic control problems.
\newblock In \emph{Artificial intelligence and statistics}, pages 1246--1254. PMLR, 2017.

\bibitem[Agarwal et~al.(2019)Agarwal, Bullins, Hazan, Kakade, and Singh]{agarwal2019online}
Naman Agarwal, Brian Bullins, Elad Hazan, Sham Kakade, and Karan Singh.
\newblock Online control with adversarial disturbances.
\newblock In \emph{International Conference on Machine Learning}, pages 111--119. PMLR, 2019.

\bibitem[Anderson and Moore(2007)]{anderson2007optimal}
Brian~DO Anderson and John~B Moore.
\newblock \emph{Optimal control: linear quadratic methods}.
\newblock Courier Corporation, 2007.

\bibitem[Athrey et~al.(2024)Athrey, Mazhar, Guo, De~Schutter, and Shi]{athrey2024regret}
Archith Athrey, Othmane Mazhar, Meichen Guo, Bart De~Schutter, and Shengling Shi.
\newblock Regret analysis of learning-based linear quadratic gaussian control with additive exploration.
\newblock In \emph{2024 European Control Conference (ECC)}, pages 1795--1801. IEEE, 2024.

\bibitem[Bemporad and Morari(2007)]{bemporad2007robust}
Alberto Bemporad and Manfred Morari.
\newblock Robust model predictive control: A survey.
\newblock In \emph{Robustness in identification and control}, pages 207--226. Springer, 2007.

\bibitem[Bemporad et~al.(2002)Bemporad, Morari, Dua, and Pistikopoulos]{bemporad2002explicit}
Alberto Bemporad, Manfred Morari, Vivek Dua, and Efstratios~N Pistikopoulos.
\newblock The explicit linear quadratic regulator for constrained systems.
\newblock \emph{Automatica}, 38\penalty0 (1):\penalty0 3--20, 2002.

\bibitem[Cheng et~al.(2019)Cheng, Orosz, Murray, and Burdick]{cheng2019end}
Richard Cheng, G{\'a}bor Orosz, Richard~M Murray, and Joel~W Burdick.
\newblock End-to-end safe reinforcement learning through barrier functions for safety-critical continuous control tasks.
\newblock In \emph{Proceedings of the AAAI conference on artificial intelligence}, volume~33, pages 3387--3395, 2019.

\bibitem[Cohen et~al.(2019)Cohen, Koren, and Mansour]{cohen2019learning}
Alon Cohen, Tomer Koren, and Yishay Mansour.
\newblock Learning linear-quadratic regulators efficiently with only sqrtt regret.
\newblock pages 1300--1309, 2019.

\bibitem[Dean et~al.(2018)Dean, Mania, Matni, Recht, and Tu]{dean2018regret}
Sarah Dean, Horia Mania, Nikolai Matni, Benjamin Recht, and Stephen Tu.
\newblock Regret bounds for robust adaptive control of the linear quadratic regulator.
\newblock \emph{Advances in Neural Information Processing Systems}, 31, 2018.

\bibitem[Dean et~al.(2019)Dean, Tu, Matni, and Recht]{dean2019safely}
Sarah Dean, Stephen Tu, Nikolai Matni, and Benjamin Recht.
\newblock Safely learning to control the constrained linear quadratic regulator.
\newblock In \emph{2019 American Control Conference (ACC)}, pages 5582--5588. IEEE, 2019.

\bibitem[Faradonbeh et~al.(2017)Faradonbeh, Tewari, and Michailidis]{faradonbeh2017finite}
Mohamad Kazem~Shirani Faradonbeh, Ambuj Tewari, and George Michailidis.
\newblock Finite time analysis of optimal adaptive policies for linear-quadratic systems.
\newblock \emph{arXiv preprint arXiv:1711.07230}, 2017.

\bibitem[Faradonbeh et~al.(2018{\natexlab{a}})Faradonbeh, Tewari, and Michailidis]{faradonbeh2018input}
Mohamad Kazem~Shirani Faradonbeh, Ambuj Tewari, and George Michailidis.
\newblock Input perturbations for adaptive regulation and learning.
\newblock \emph{arXiv preprint arXiv:1811.04258}, 2018{\natexlab{a}}.

\bibitem[Faradonbeh et~al.(2018{\natexlab{b}})Faradonbeh, Tewari, and Michailidis]{faradonbeh2018optimality}
Mohamad Kazem~Shirani Faradonbeh, Ambuj Tewari, and George Michailidis.
\newblock On optimality of adaptive linear-quadratic regulators.
\newblock \emph{arXiv preprint arXiv:1806.10749}, 2018{\natexlab{b}}.

\bibitem[Fefferman et~al.(2021)Fefferman, Pegueroles, Rowley, and Weber]{fefferman2021optimal}
Charles Fefferman, Bernat~Guill{\'e}n Pegueroles, Clarence~W Rowley, and Melanie Weber.
\newblock Optimal control with learning on the fly: a toy problem.
\newblock \emph{Revista matem{\'a}tica iberoamericana}, 38\penalty0 (1):\penalty0 175--187, 2021.

\bibitem[Fisac et~al.(2018)Fisac, Akametalu, Zeilinger, Kaynama, Gillula, and Tomlin]{fisac2018general}
Jaime~F Fisac, Anayo~K Akametalu, Melanie~N Zeilinger, Shahab Kaynama, Jeremy Gillula, and Claire~J Tomlin.
\newblock A general safety framework for learning-based control in uncertain robotic systems.
\newblock \emph{IEEE Transactions on Automatic Control}, 64\penalty0 (7):\penalty0 2737--2752, 2018.

\bibitem[Fulton and Platzer(2018)]{fulton2018safe}
Nathan Fulton and Andr{\'e} Platzer.
\newblock Safe reinforcement learning via formal methods: Toward safe control through proof and learning.
\newblock In \emph{Proceedings of the AAAI Conference on Artificial Intelligence}, volume~32, 2018.

\bibitem[Ganai et~al.(2024)Ganai, Gong, Yu, Herbert, and Gao]{ganai2024iterative}
Milan Ganai, Zheng Gong, Chenning Yu, Sylvia Herbert, and Sicun Gao.
\newblock Iterative reachability estimation for safe reinforcement learning.
\newblock \emph{Advances in Neural Information Processing Systems}, 36, 2024.

\bibitem[Garg et~al.(2024)Garg, Zhang, So, Dawson, and Fan]{garg2024learning}
Kunal Garg, Songyuan Zhang, Oswin So, Charles Dawson, and Chuchu Fan.
\newblock Learning safe control for multi-robot systems: Methods, verification, and open challenges.
\newblock \emph{Annual Reviews in Control}, 57:\penalty0 100948, 2024.

\bibitem[Gu et~al.(2022)Gu, Yang, Du, Chen, Walter, Wang, and Knoll]{gu2022review}
Shangding Gu, Long Yang, Yali Du, Guang Chen, Florian Walter, Jun Wang, and Alois Knoll.
\newblock A review of safe reinforcement learning: Methods, theory and applications.
\newblock \emph{arXiv preprint arXiv:2205.10330}, 2022.

\bibitem[Hazan and Singh(2022)]{hazan2022introduction}
Elad Hazan and Karan Singh.
\newblock Introduction to online nonstochastic control.
\newblock \emph{arXiv preprint arXiv:2211.09619}, 2022.

\bibitem[Khosravi and Smith(2020)]{khosravi2020nonlinear}
Mohammad Khosravi and Roy~S Smith.
\newblock Nonlinear system identification with prior knowledge on the region of attraction.
\newblock \emph{IEEE Control Systems Letters}, 5\penalty0 (3):\penalty0 1091--1096, 2020.

\bibitem[K{\"o}hler et~al.(2019)K{\"o}hler, Andina, Soloperto, M{\"u}ller, and Allg{\"o}wer]{kohler2019linear}
Johannes K{\"o}hler, Elisa Andina, Raffaele Soloperto, Matthias~A M{\"u}ller, and Frank Allg{\"o}wer.
\newblock Linear robust adaptive model predictive control: Computational complexity and conservatism.
\newblock In \emph{2019 IEEE 58th Conference on Decision and Control (CDC)}, pages 1383--1388. IEEE, 2019.

\bibitem[Lee et~al.(2024)Lee, Rantzer, and Matni]{lee2024nonasymptotic}
Bruce Lee, Anders Rantzer, and Nikolai Matni.
\newblock Nonasymptotic regret analysis of adaptive linear quadratic control with model misspecification.
\newblock In \emph{6th Annual Learning for Dynamics \& Control Conference}, pages 980--992. PMLR, 2024.

\bibitem[Levine et~al.(2016)Levine, Finn, Darrell, and Abbeel]{levine2016end}
Sergey Levine, Chelsea Finn, Trevor Darrell, and Pieter Abbeel.
\newblock End-to-end training of deep visuomotor policies.
\newblock \emph{The Journal of Machine Learning Research}, 17\penalty0 (1):\penalty0 1334--1373, 2016.

\bibitem[Li et~al.(2021{\natexlab{a}})Li, Das, and Li]{li2021online}
Yingying Li, Subhro Das, and Na~Li.
\newblock Online optimal control with affine constraints.
\newblock In \emph{Proceedings of the AAAI Conference on Artificial Intelligence}, volume~35, pages 8527--8537, 2021{\natexlab{a}}.

\bibitem[Li et~al.(2021{\natexlab{b}})Li, Das, Shamma, and Li]{li2021safe}
Yingying Li, Subhro Das, Jeff Shamma, and Na~Li.
\newblock Safe adaptive learning-based control for constrained linear quadratic regulators with regret guarantees.
\newblock \emph{arXiv preprint arXiv:2111.00411}, 2021{\natexlab{b}}.

\bibitem[Li et~al.(2023)Li, Zhang, Das, Shamma, and Li]{li2023non}
Yingying Li, Tianpeng Zhang, Subhro Das, Jeff Shamma, and Na~Li.
\newblock Non-asymptotic system identification for linear systems with nonlinear policies.
\newblock \emph{arXiv preprint arXiv:2306.10369}, 2023.

\bibitem[Lillicrap et~al.(2015)Lillicrap, Hunt, Pritzel, Heess, Erez, Tassa, Silver, and Wierstra]{lillicrap2015continuous}
Timothy~P Lillicrap, Jonathan~J Hunt, Alexander Pritzel, Nicolas Heess, Tom Erez, Yuval Tassa, David Silver, and Daan Wierstra.
\newblock Continuous control with deep reinforcement learning.
\newblock \emph{arXiv preprint arXiv:1509.02971}, 2015.

\bibitem[Lorenzen et~al.(2019)Lorenzen, Cannon, and Allg{\"o}wer]{lorenzen2019robust}
Matthias Lorenzen, Mark Cannon, and Frank Allg{\"o}wer.
\newblock Robust mpc with recursive model update.
\newblock \emph{Automatica}, 103:\penalty0 461--471, 2019.

\bibitem[Lu et~al.(2021)Lu, Cannon, and Koksal-Rivet]{lu2021robust}
Xiaonan Lu, Mark Cannon, and Denis Koksal-Rivet.
\newblock Robust adaptive model predictive control: Performance and parameter estimation.
\newblock \emph{International Journal of Robust and Nonlinear Control}, 31\penalty0 (18):\penalty0 8703--8724, 2021.

\bibitem[Mania et~al.(2019)Mania, Tu, and Recht]{mania2019certainty}
Horia Mania, Stephen Tu, and Benjamin Recht.
\newblock Certainty equivalence is efficient for linear quadratic control.
\newblock \emph{Advances in Neural Information Processing Systems}, 32, 2019.

\bibitem[Mania et~al.(2020)Mania, Jordan, and Recht]{mania2020active}
Horia Mania, Michael~I Jordan, and Benjamin Recht.
\newblock Active learning for nonlinear system identification with guarantees.
\newblock \emph{arXiv preprint arXiv:2006.10277}, 2020.

\bibitem[Marvi and Kiumarsi(2021)]{marvi2021safe}
Zahra Marvi and Bahare Kiumarsi.
\newblock Safe reinforcement learning: A control barrier function optimization approach.
\newblock \emph{International Journal of Robust and Nonlinear Control}, 31\penalty0 (6):\penalty0 1923--1940, 2021.

\bibitem[McDiarmid et~al.(1989)]{mcdiarmid1989method}
Colin McDiarmid et~al.
\newblock On the method of bounded differences.
\newblock \emph{Surveys in combinatorics}, 141\penalty0 (1):\penalty0 148--188, 1989.

\bibitem[Mesbah(2016)]{mesbah2016stochastic}
Ali Mesbah.
\newblock Stochastic model predictive control: An overview and perspectives for future research.
\newblock \emph{IEEE Control Systems Magazine}, 36\penalty0 (6):\penalty0 30--44, 2016.

\bibitem[Moldovan and Abbeel(2012)]{moldovan2012safe}
Teodor~Mihai Moldovan and Pieter Abbeel.
\newblock Safe exploration in markov decision processes.
\newblock \emph{arXiv preprint arXiv:1205.4810}, 2012.

\bibitem[Muthirayan et~al.(2022)Muthirayan, Yuan, Kalathil, and Khargonekar]{muthirayan2022online}
Deepan Muthirayan, Jianjun Yuan, Dileep Kalathil, and Pramod~P Khargonekar.
\newblock Online learning for predictive control with provable regret guarantees.
\newblock In \emph{2022 IEEE 61st Conference on Decision and Control (CDC)}, pages 6666--6671. IEEE, 2022.

\bibitem[Oldewurtel et~al.(2008)Oldewurtel, Jones, and Morari]{oldewurtel2008tractable}
Frauke Oldewurtel, Colin~N Jones, and Manfred Morari.
\newblock A tractable approximation of chance constrained stochastic mpc based on affine disturbance feedback.
\newblock In \emph{2008 47th IEEE conference on decision and control}, pages 4731--4736. IEEE, 2008.

\bibitem[Oymak and Ozay(2019)]{oymak2019non}
Samet Oymak and Necmiye Ozay.
\newblock Non-asymptotic identification of lti systems from a single trajectory.
\newblock In \emph{2019 American control conference (ACC)}, pages 5655--5661. IEEE, 2019.

\bibitem[Rawlings and Mayne(2009)]{rawlings2012postface}
J.B. Rawlings and D.Q. Mayne.
\newblock \emph{Model Predictive Control: Theory and Design}.
\newblock Nob Hill Pub., 2009.
\newblock ISBN 9780975937709.

\bibitem[Rubio et~al.(2016)Rubio, Seuret, Ariba, and Mannisi]{rubio2016optimal}
Alicia~Arce Rubio, Alexandre Seuret, Yassine Ariba, and Alessio Mannisi.
\newblock Optimal control strategies for load carrying drones.
\newblock \emph{Delays and Networked Control Systems}, pages 183--197, 2016.

\bibitem[Sattar and Oymak(2022)]{sattar2022non}
Yahya Sattar and Samet Oymak.
\newblock Non-asymptotic and accurate learning of nonlinear dynamical systems.
\newblock \emph{The Journal of Machine Learning Research}, 23\penalty0 (1):\penalty0 6248--6296, 2022.

\bibitem[Schiffer and Janson(2024)]{schiffer2024stronger}
Benjamin Schiffer and Lucas Janson.
\newblock Foundations of safe online reinforcement learning in the linear quadratic regulator: Generalized baselines.
\newblock \emph{arXiv preprint arXiv:2410.21081v2}, 2024.

\bibitem[Simchowitz and Foster(2020)]{simchowitz2020naive}
Max Simchowitz and Dylan Foster.
\newblock Naive exploration is optimal for online lqr.
\newblock In \emph{International Conference on Machine Learning}, pages 8937--8948. PMLR, 2020.

\bibitem[Simchowitz et~al.(2018)Simchowitz, Mania, Tu, Jordan, and Recht]{simchowitz2018learning}
Max Simchowitz, Horia Mania, Stephen Tu, Michael~I Jordan, and Benjamin Recht.
\newblock Learning without mixing: Towards a sharp analysis of linear system identification.
\newblock In \emph{Conference On Learning Theory}, pages 439--473. PMLR, 2018.

\bibitem[Sun et~al.(2020)Sun, Oymak, and Fazel]{sun2020finite}
Yue Sun, Samet Oymak, and Maryam Fazel.
\newblock Finite sample system identification: Optimal rates and the role of regularization.
\newblock In \emph{Learning for dynamics and control}, pages 16--25. PMLR, 2020.

\bibitem[Tewari and Murphy(2017)]{tewari2017ads}
Ambuj Tewari and Susan~A Murphy.
\newblock From ads to interventions: Contextual bandits in mobile health.
\newblock \emph{Mobile health: sensors, analytic methods, and applications}, pages 495--517, 2017.

\bibitem[Wachi et~al.(2018)Wachi, Sui, Yue, and Ono]{wachi2018safe}
Akifumi Wachi, Yanan Sui, Yisong Yue, and Masahiro Ono.
\newblock Safe exploration and optimization of constrained mdps using gaussian processes.
\newblock In \emph{Proceedings of the AAAI Conference on Artificial Intelligence}, volume~32, 2018.

\bibitem[Wachi et~al.(2024)Wachi, Shen, and Sui]{wachi2024survey}
Akifumi Wachi, Xun Shen, and Yanan Sui.
\newblock A survey of constraint formulations in safe reinforcement learning.
\newblock \emph{arXiv preprint arXiv:2402.02025}, 2024.

\bibitem[Wang and Janson(2021)]{wang2021exact}
Feicheng Wang and Lucas Janson.
\newblock Exact asymptotics for linear quadratic adaptive control.
\newblock \emph{The Journal of Machine Learning Research}, 22\penalty0 (1):\penalty0 12136--12247, 2021.

\bibitem[Wang and Janson(2022)]{wang2022rate}
Feicheng Wang and Lucas Janson.
\newblock Rate-matching the regret lower-bound in the linear quadratic regulator with unknown dynamics.
\newblock \emph{arXiv preprint arXiv:2202.05799}, 2022.

\bibitem[Yao et~al.(2024)Yao, Liu, Cen, Zhu, Yu, Zhang, and Zhao]{yao2024constraint}
Yihang Yao, Zuxin Liu, Zhepeng Cen, Jiacheng Zhu, Wenhao Yu, Tingnan Zhang, and Ding Zhao.
\newblock Constraint-conditioned policy optimization for versatile safe reinforcement learning.
\newblock \emph{Advances in Neural Information Processing Systems}, 36, 2024.

\bibitem[Ye et~al.(2024)Ye, Chi, Liu, and Gupta]{ye2024online}
Lintao Ye, Ming Chi, Zhi-Wei Liu, and Vijay Gupta.
\newblock Online actuator selection and controller design for linear quadratic regulation with unknown system model.
\newblock \emph{IEEE Transactions on Automatic Control}, 2024.

\bibitem[Zhao and Li(2022)]{zhao2022adaptive}
Zichen Zhao and Qianxiao Li.
\newblock Adaptive sampling methods for learning dynamical systems.
\newblock In \emph{Mathematical and Scientific Machine Learning}, pages 335--350. PMLR, 2022.

\bibitem[Zheng and Li(2020)]{zheng2020non}
Yang Zheng and Na~Li.
\newblock Non-asymptotic identification of linear dynamical systems using multiple trajectories.
\newblock \emph{IEEE Control Systems Letters}, 5\penalty0 (5):\penalty0 1693--1698, 2020.

\bibitem[Ziemann and Sandberg(2024)]{ziemann2024regret}
Ingvar Ziemann and Henrik Sandberg.
\newblock Regret lower bounds for learning linear quadratic gaussian systems.
\newblock \emph{IEEE Transactions on Automatic Control}, 2024.

\end{thebibliography}

\newpage

\appendix

\section{Notation}\label{app:notation}
We use the same general notation as in \cite{schiffer2024stronger}.

\subsection{Equation Notation}
Throughout this paper, we use notation such as $o_T(\cdot)$, $O_T(\cdot)$, $\omega_T(\cdot)$, $\Omega_T(\cdot)$, where the subscript $T$ highlights that these equations hold for sufficiently large $T$. The following ways we use $O$-notation are relatively standard, and we include them here for completeness. We also use $\Omega$-notation that is defined equivalently in the other direction. When using this notation, the functions $f(T)$ and $g(t)$ will always be non-negative.

\begin{itemize}
    \item $f(T) = O_T(g(T))$ if there exists $T_0$ and $M \in \mathbb{R}$ such that for $T \ge T_0$, $f(T) \le M\cdot g(T)$.
    \item $f(T) = \Omega_T(g(T))$ if there exists $T_0$ and $M \in \mathbb{R}$ such that for $T \ge T_0$, $f(T) \ge M\cdot g(T)$.
    \item $f(T) = o_T(g(T))$ if for every constant $\epsilon > 0$ there exists $T_0$ such that  for all $T \ge T_0$,  $f(T) \le \epsilon \cdot g(T)$.
    \item $f(T) = \omega_T(g(T))$ if for every constant $\epsilon > 0$ there exists $T_0$ such that  for all $T \ge T_0$,  $f(T) \ge \epsilon \cdot g(T)$.
    \item $f(T) = \tilde{O}_T(g(T))$ if there exists $T_0$ and $k, M \in \mathbb{R}$ such that for $T \ge T_0$, $f(T) \le M\cdot g(T) \cdot \log^k(T)$.
\end{itemize}

Whenever equations or inequalities involve random variables, the results hold with almost surely unless specified otherwise.

\subsection{Problem Specifications}
Below is a (non-exhaustive) list of notation used throughout the appendix.
\begin{itemize}
    \item $q,r$ : coefficients for the cost at time $t$ of $qx_t^2 + ru_t^2$.
    \item $W = \{w_t\}_{t=0}^{T-1}$ : The noise random variables for the $T$-length trajectory.
    \item $\mathcal{D}$ : Distribution of $w_t$
    \begin{itemize}
        \item $B_P$ : Upper bound on the density of $\mathcal{D}$
        \item $F_{\mathcal{D}}$ : Cumulative Density Function (CDF) of $\mathcal{D}$
        \item $\bar{w}$: the bound of $\mathcal{D}$ when the distribution is bounded.
    \end{itemize}
    \item $\Theta = [\underline{a}, \bar{a}] \times [\underline{b}, \bar{b}]$ : The given initial set of dynamics such that $\theta^* \in \Theta$ and $\mathrm{size}(\Theta) = \min(\bar{a} - \underline{a}, \bar{b} - \underline{b})$ 
    \item $\theta^* = (a^*,b^*)$ : The true (unknown) dynamics.
    \item $C^{\mathrm{init}}$ : The initial safe controller satisfying Assumption \ref{assum_init}.
    \item $D = (D_{\mathrm{L}}, D_{\mathrm{U}})$ : the expected-state boundary for the safety constraint.
    \item A set of controls $\{u_t\}$ are safe for dynamics $\{\theta_t\}$ if for all $t$, $D_{\mathrm{L}} \le a_tx_t + b_tu_t \le D_{\mathrm{U}}$.
    \item $H_t = (x_0,u_0,x_1,u_1,...,u_{t-1}, x_t)$ and $\mathcal{F}_t = \sigma(H_t)$.
    \item $J(\theta, C,T,x,W)$ : The random variable cost of using controller $C$ starting at state $x_0 = x$ for $T$ time steps under dynamics $\theta$ with noise random variables $W$.
    \item $J^*(\theta, C,T) = J^*(\theta, C,T,0) = \E[J(\theta,C,T,x,W) \mid \theta, C, T, x]$ and $J^*(\theta, C,T) = J^*(\theta, C,T,0)$.
    \item $J^*(\theta, C) = J^*(\theta, C, 0) = \lim_{T \rightarrow \infty} J^*(\theta, C,T,0)$.
    \item $\mathcal{C}^{\theta} = \{C_K^{\theta}\}_{K \in [K^\theta_{\mathrm{L}}, K^\theta_{\mathrm{U}}]}$ : a class of controllers that are safe for dynamics $\theta$ that are parameterized by $K \in [K^\theta_{\mathrm{L}}, K^\theta_{\mathrm{U}}]$
    \item $K_{\mathrm{opt}}(\theta, T)$ : The $K$ that maximizes $J^*(\theta, C_K^\theta, T, 0)$ for $K \in [K^\theta_{\mathrm{L}}, K^\theta_{\mathrm{U}}]$.
    \item $K_{\mathrm{opt}}(\theta)$ : The $K$ that maximizes $J^*(\theta, C_K^\theta)$ for $K \in [K^\theta_{\mathrm{L}}, K^\theta_{\mathrm{U}}]$.
    \item $C_K^{\mathrm{unc}}$ : The unconstrained linear controller with parameter $K$, i.e. such that $C_K^{\mathrm{unc}}(x) = -Kx$.
    \item $F_{\mathrm{opt}}(\theta)$ : The $K$ that maximizes $J^*(\theta, C_K^{\mathrm{unc}})$.
\end{itemize}

\subsection{Algorithm Notation}
\begin{itemize}
    \item $s_e$ : The number of rounds of the safe exploitation loop.
    \item $T_s = 2^s\sqrt{T}$ : The length \textbf{and} starting time of round $s$ of the safe exploitation phase. Note that $T_0 = \sqrt{T}$.
    \item $\epsilon_s$ : Uncertainty bound for $\theta^*$ in round $s$ of the for loop.
    \item $\hat{\theta}_s$ : An estimate of $\theta^*$ that is with high probability within $\epsilon_s$ distance of $\theta^*$
    \item $C_s^{\mathrm{alg}}(x_t)$ : the controller that the algorithm uses in round $s$ of the safe exploitation phase before safety adjustments
    \item $u_t^{\mathrm{safeL}}, u_t^{\mathrm{safeU}}$: bounds to enforce safety on the chosen control, which \\ is $u_t = \max\left(\min\left(C_s^{\mathrm{alg}}(x_t), u^{\mathrm{safeU}}_t\right),u^{\mathrm{safeL}}_t\right)$.
    \item $C^{\mathrm{alg}}$ : The controller of the algorithm.
\end{itemize}

\subsection{Proof Notation}
\begin{itemize}
    \item $ W_s = \{w_i\}_{i=   T_s}^{   T_{s+1}-1}$ : Noise random variables in the round $s$ of the safe exploitation phase.
    \item $(x'_0,x'_1,...)$ and $(u'_0,u'_1,...)$: Unless otherwise specified, these are the states and controls of the algorithm $C^{\mathrm{alg}}$.
    \item $(\hat{x}_{T_0}, \hat{x}_{T_0+1},...)$ : Unless otherwise defined in the theorem/lemma statement, $\hat{x}_{   T_0},\hat{x}_{   T_0+1},...$ is the sequence of states if the control at each time $t \ge T_0$ is $C^{\hat{\theta}_s}_{K_{\mathrm{opt}}(\hat{\theta}_s, T_s)}(x_t)$ for $s = \lfloor\log_2\left(\sqrt{T}\right)\rfloor$ and starting at $\hat{x}_{   T_0} = x'_{   T_0}$.
    \item $ E_{\mathrm{safe}} = \left\{ \forall t < T: D_{\mathrm{L}} \le a^*x'_t + b^*u'_t \le D_{\mathrm{U}}\right\}$ : The event that all of the controls satisfy the safety constraints.
    \item $E_1 = \left\{\forall t < T : |w_t| \le \log^2(T) \right\}$ : Event that all noise values have magnitude less than $\log^2(T)$ 
    \item $E_0 = \left\{\forall s \le s_e : \norm{\theta^* - \hat{\theta}_s}_{\infty} \le \epsilon_s \right\}$ : The event that all of the estimates of $\theta^*$ are within $\epsilon_s$ of $\theta^*$.
    \item $E_2 = E_0 \bigcap \left\{ \max_{s \in [0:s_e]} \epsilon_s \le \tilde{O}_T(T^{-1/4})\right\}$.
    \item $E_2^s = \left\{\norm{\hat{\theta}_{s} - \theta^*}_\infty \le \epsilon_s \le c_T \cdot T^{-1/4} \right\}$, where $c_T$ is the coefficient in the $\tilde{O}_T(T^{-1/4})$ of the definition of event $E_2$.
    \item $E = E_{\mathrm{safe}} \cap E_1 \cap E_2$
    \item $B_x = \log^3(T)$ : Used throughout the appendix to simplify notation.
    \item $K_{D_{\mathrm{U}}}^\theta$ : the value of $K$ that satisfies the equation $\frac{D_{\mathrm{U}}}{a-bK_{D_{\mathrm{U}}}^\theta} - D_{\mathrm{U}} = \bar{w}$.
\end{itemize}

\section{Proof sketch of Lemmas \ref{parameterization_assum2} and \ref{parameterization_assum3}}\label{app:proof_of_lemmas}

In order to use the results from \citep{schiffer2024stronger}, we must show Lemmas \ref{parameterization_assum2} and \ref{parameterization_assum3}. While both of these properties are relatively easy to show for the class of linear controllers, proving them for the class of truncated linear controllers is significantly more complicated. We first outline the proof of Lemma \ref{parameterization_assum3}. Lemma \ref{parameterization_assum3} compares the cost of two trajectories when using truncated linear controller $C_{K_{\mathrm{opt}}(\theta, t)}^{\theta}$, one trajectory starting at state $x$ and the other trajectory starting at state $x+\delta$. In the proof of Lemma \ref{parameterization_assum3}, we  show that the difference in states of the two trajectories will decrease at most (but not all) time steps. The difference does not decrease at all time steps because the difference between $\hat{\theta}$ and $\theta^*$ leads to low probability events where the difference between the states of the two trajectories increases (Lemma \ref{lemma:trunclincontbound}). We are able to bound the probability of the event that the difference in state increases, and this gives the desired result (Lemma \ref{lemma:bound_on_dt_both}). For Lemma \ref{parameterization_assum2}, we first show that the truncated linear controller $C^{\theta}_{K_{\mathrm{opt}}(\theta, t)}$ under dynamics $\theta$ has only $\tilde{O}_T(\norm{\theta - \theta^*}_{\infty})$ more cost than the truncated linear controller $C^{\theta^*}_{K_{\mathrm{opt}}(\theta^*, t)}$ under dynamics $\theta^*$. We then show that for any $K$, the truncated linear controller $C_K^\theta$ under dynamics $\theta^*$ for $t$ steps has only $\tilde{O}_T(\norm{\theta - \theta^*}_{\infty})$ more cost than $C_K^\theta$ under dynamics $\theta$ for $t$ steps. Combining these two results directly gives the desired result of Lemma \ref{parameterization_assum2}. For more details on these two proofs, see Appendices \ref{app:paramterization_assum2} and \ref{app:parameterization_assum3}.
\section{Proof of Lemma \ref{parameterization_assum3}}\label{app:parameterization_assum3}

\begin{proof}

Let $\delta_{\mathrm{L}\ref{parameterization_assum3}} = \frac{1}{\log^{10}(T)}$ and $\epsilon_{\mathrm{L}\ref{parameterization_assum3}} = \frac{1}{\log^{46}(T)}$

Define $\epsilon = \norm{\theta - \theta^*}_{\infty}$ and $\delta = |x-y|$. In order to bound the cost difference of the two trajectories, we will first bound the differences in states and controls of the two trajectories. We begin with the following lemma bounding the difference in future states when starting at two different initial states.

\begin{lemma}\label{lemma:bound_on_dt_both}
    In the setting of Problem \ref{problem_main}, for any $\theta \in \Theta$ such that $\epsilon := \norm{\theta - \theta^*}_{\infty} \le \frac{1}{\log^{46}(T)}$, $t \le T$, $W' = \{w_i\}_{i=0}^{t-1}$, and any $K \in [\frac{a-1}{b}, \frac{a}{b}]$, there exists $\mathcal{Y}_{\mathrm{L}\ref{lemma:bound_on_dt_both}} \in \mathbb{R}^t$ that only depends on $K$ and $\theta$ such that the event $E_{\mathrm{L}\ref{lemma:bound_on_dt_both}}(K, \theta, W') := \{W' \in \mathcal{Y}_{\mathrm{L}\ref{lemma:bound_on_dt_both}}\}$ satisfies $\P(E_{\mathrm{L}\ref{lemma:bound_on_dt_both}}(K, \theta, W')) = 1- o_T(1/T^{10})$ and the following holds. Suppose that $|x|, |y| \le 4\log^2(T)$ and $d := |x-y| \le \frac{1}{\log^{10}(T)}$. Define $d_i$ as the difference in state at time $i$ when starting at $x_0 = x$ versus starting at $x_0 = y$ and using controller $C_K^\theta \in \mathcal{C}^{\theta}_{\mathrm{tr}}$ with noise variables $W'$.  Then there exists an $L = \tilde{O}_T(1)$ such that for sufficiently large $T$, conditional on $E_{\mathrm{L}\ref{lemma:bound_on_dt_both}}(K, \theta, W')$, 
    \begin{equation}\label{eq:sat_22bb_main}
        d_i  \le 
        \begin{cases}
            2\xi^i \cdot d, & \text{for } L < i \le t\\
            4d + \tilde{O}_T( \epsilon)  & \text{for } 0 \le i \le L,
        \end{cases}
    \end{equation}
    where $\xi :=  \left(1 - \frac{1}{\log^{10}(T)}\right)$.
\end{lemma}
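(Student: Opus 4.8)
The plan is to pass to the closed-loop map under the \emph{true} dynamics, $g(z) := a^{*}z + b^{*}C_K^{\theta}(z)$, and to exploit that, since the two trajectories use the same controller and the same noise $W'$, the noise cancels in the difference recursion: $x_{i+1}-y_{i+1} = g(x_i)-g(y_i)$, so $d_{i+1} = |g(x_i)-g(y_i)|$ and $d_i$ is a deterministic function of $x,y,W'$. The map $g$ is continuous and piecewise linear, with slope $s_{\mathrm{lin}} := a^{*}-b^{*}K$ on the ``linear region'' $I := \{z : D_{\mathrm{L}}\le (a-bK)z\le D_{\mathrm{U}}\}$ and slope $s_{\mathrm{tr}} := a^{*}-ab^{*}/b$ on each of the two truncation regions; a short estimate using $\epsilon := \norm{\theta-\theta^{*}}_{\infty}\le\log^{-46}(T)$, $|K|=O_T(1)$, $\underline b>0$ and $D=O_T(1)$ gives $|s_{\mathrm{lin}}-(a-bK)|\le c_1\epsilon$, $|s_{\mathrm{tr}}|\le c_1\epsilon$, and $g(z)=D_{\mathrm{U}}+O_T(\epsilon|z|+\epsilon)$ (resp.\ $D_{\mathrm{L}}+O_T(\epsilon|z|+\epsilon)$) on the upper (resp.\ lower) truncation region, for an absolute constant $c_1=O_T(1)$. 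Writing $\gamma:=a-bK\in[0,1]$ (the range forced by $K\in[\tfrac{a-1}{b},\tfrac{a}{b}]$), I would split on the size of $\gamma$.

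\textbf{Easy case ($\gamma\le 1-\tfrac{2}{\log^{10}(T)}$, which includes $\gamma=0$).} Here every slope of $g$ has magnitude at most $\max(\gamma+c_1\epsilon,\,c_1\epsilon)\le\gamma+c_1\epsilon\le 1-\tfrac{1}{\log^{10}(T)}=\xi$ once $T$ is large, so $g$ is globally $\xi$-Lipschitz, hence $d_{i+1}\le\xi d_i$ deterministically. This gives $d_i\le\xi^i d\le 2\xi^i d$ for all $i$ and $d_i\le d$ for $i\le L$, and one may take $\mathcal{Y}_{\mathrm{L}\ref{lemma:bound_on_dt_both}}=\mathbb{R}^t$ (or intersect it with $\{|w_i|\le\log^2(T)\ \forall i<t\}$ for uniformity of presentation).

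\textbf{Hard case ($\gamma>1-\tfrac{2}{\log^{10}(T)}$, so $\tfrac12\le\gamma\le1$).} Now $s_{\mathrm{lin}}$ can exceed $1$ by as much as $c_1\epsilon$, so $d$ can grow in the linear region and the truncations must be used. The plan is to take $\mathcal{Y}_{\mathrm{L}\ref{lemma:bound_on_dt_both}}$ (depending only on $K,\theta$ through $\mathrm{width}(I)=(D_{\mathrm{U}}-D_{\mathrm{L}})/\gamma$, and on $W'$) to be the set of noise sequences for which (i) $|w_i|\le\log^2(T)$ for all $i<t$, and (ii) within every window of $L_0:=\Theta_T(\log T)$ consecutive coordinates the noise realizes an excursion that drives the state all the way across $I$ and strictly into one of the truncation regions, with overshoot past the boundary exceeding the a priori bound $d_i\le 4d\le 4\log^{-10}(T)$ on the current gap (concretely, e.g., the running sum of the window's noise first exceeds $\mathrm{width}(I)$, and then a subsequent coordinate exceeds $c\log^{-10}(T)$ for a suitable constant $c$, so that the step after the state returns to within $O_T(\log^{-10}(T))$ of the boundary it is kicked strictly past it). Its probability is $1-o_T(1/T^{10})$: part (i) by the sub-gaussian tail and a union bound over $\le t$ coordinates; part (ii) by splitting each window into $\Theta_T(1)$-length blocks, using anti-concentration of block-sums (here it is crucial that $\mathrm{width}(I)$ is bounded \emph{above and below by absolute constants} in the hard case, so the anti-concentration constant is uniform), and a union bound over $\le t$ windows, so the failure probability is $\le T(1-c')^{\Theta_T(\log T)}=o_T(1/T^{10})$ once $L_0$ is a large enough multiple of $\log T$. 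On this event, an induction using $|x|,|y|\le4\log^2(T)$ and $|g(z)|=O_T(1)$ for $|z|\le4\log^2(T)$ gives the uniform state bound $|x_i|,|y_i|\le4\log^2(T)$ for all $i\le t$.

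\textbf{Collapse mechanism and bookkeeping (the main obstacle).} The heart of the hard case combines: (a) a universal one-step growth bound $d_{i+1}=|g(x_i)-g(y_i)|\le(\gamma+2c_1\epsilon)d_i\le(1+c_2\epsilon)d_i$ with $c_2:=2c_1$ (using $\gamma\le1$); and (b) whenever both $x_i,y_i$ lie strictly inside the \emph{same} truncation region, $d_{i+1}=|s_{\mathrm{tr}}|\,d_i\le c_1\epsilon\,d_i$. By (ii) such ``collapse steps'' recur at most $L_0+O(1)$ apart, so over each inter-collapse window $d$ is multiplied by at most $\rho:=c_1\epsilon(1+c_2\epsilon)^{L_0}\le 2c_1\log^{-46}(T)$ for $T$ large. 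Iterating yields $d_i\le 2d$ for $i\le L$ (before the first collapse only the factor $(1+c_2\epsilon)^{L_0}\le2$ acts) and $d_i\le\tilde O_T(1)\cdot d\cdot\rho^{\,i/(2L_0)}$ for $i>L$; writing $\rho^{\,i/(2L_0)}=\exp(-\tfrac{i}{2L_0}\ln\tfrac1\rho)$ with $\ln\tfrac1\rho\ge40\ln\log(T)$ and comparing with $\xi^i\approx\exp(-i/\log^{10}(T))$, one checks $\tfrac{\ln(1/\rho)}{2L_0}>\tfrac{1}{\log^{10}(T)}$ and that the exponential gain beats the polylog prefactor as soon as $i>L$, provided $L$ is chosen to be a suitable $\Theta_T(\log T)$ multiple of $L_0$; this gives $d_i\le 2\xi^i d$ for $L<i\le t$ and completes the proof. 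The delicate part I expect to be hardest is making the choices of $L_0$, the window threshold, and $L$ mutually consistent so that simultaneously the event has probability $1-o_T(1/T^{10})$, the thresholds stay $O_T(1)$ (for the anti-concentration and tail constants to be uniform), and $\rho$ beats $\xi$ past the burn-in; the hypothesis $d\le\log^{-10}(T)$ is carried along, and is used to control the overshoot in (ii) and for the downstream application in Lemma~\ref{parameterization_assum3}.
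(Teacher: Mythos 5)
Your proposal is essentially sound and takes a genuinely cleaner route than the paper in two places. First, you observe that the closed-loop map $g(z)=a^{*}z+b^{*}C_K^{\theta}(z)$ is continuous and piecewise linear, hence globally Lipschitz with constant $\max(|a^{*}-b^{*}K|,\,|a^{*}-ab^{*}/b|)=\max(|a^{*}-b^{*}K|,O_T(\epsilon))$; this is sharper than the paper's per-step comparison (Lemma \ref{lemma:trunclincontbound}), whose ``mixed'' case $\mathcal{Z}$ only gives a factor $2\rho d$ and therefore forces the paper, in both of its $K$-cases, to count near-boundary times (the events $X(i,\cdot)$, the set $\kappa(j)$, the event $E^{*}$ and an Azuma--Hoeffding bound). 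With your Lipschitz bound the contracting case becomes deterministic and trivial, and in the near-unity case the $2^{\kappa(j)}$ bookkeeping disappears. Second, by using $\norm{D}_\infty=O_T(1)$ (Assumption \ref{assum:constraints}) rather than the paper's generous $\log^2(T)$ bound, your crossing argument can work with $\Theta_T(1)$-length noise blocks and $\Theta_T(\log T)$ windows instead of the paper's $\log^5(T)$ CLT sub-windows inside $\log^{8}(T)$ windows (Lemma \ref{Ht_bound}); and because you aim for a purely noise-based event you can in principle dispense with the paper's two-layer structure (the $x$-dependent event of Lemma \ref{lemma:conditional_version} followed by the grid-and-triangle-inequality step). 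The collapse mechanism itself — gap multiplied by $O_T(\epsilon)$ whenever both states lie in the same truncation region, such steps recurring at controlled frequency because noise excursions force the state out of the (enlarged) linear band, then bookkeeping that beats $\xi^i$ after a polylog burn-in — is the same core idea as the paper's Case 2 ($\mathcal{W}_i$, Lemmas \ref{Ht_bound} and \ref{lemma:all_pairs}), and your quantitative comparison of $\rho^{\,i/(2L_0)}$ against $\xi^i$ checks out.

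Two points need tightening. The main one: the lemma requires $\mathcal{Y}_{\mathrm{L}\ref{lemma:bound_on_dt_both}}$ to depend only on $(K,\theta)$, but your event (ii) is phrased in terms of the state trajectory (``drives the state all the way across $I$ \dots kicked strictly past it''), which depends on the unknown starting point. You must replace it by a genuinely noise-only condition (e.g.\ a run of consecutive noises each exceeding a constant $\delta>0$ with $\P_{w\sim\mathcal D}(w\ge\delta)>0$, of length a constant multiple of the width) and prove, uniformly over where the state sits in the band at the start of the window, that this forces a step at which \emph{both} trajectories are strictly inside the same truncation region; this uses that inside the band the increments are $w_i+O_T(\log^{-8}(T))$ and that the pullback from the truncation region lands near $D_{\mathrm{U}}$, and it must be run as an explicit induction that simultaneously maintains and uses the a priori gap bound $d_i\le 4d+\tilde O_T(\epsilon)$ (the analogue of the paper's $A_j$ induction and Lemmas \ref{lemma:notsatisfy_yi}--\ref{lemma:all_pairs}); you gesture at this but do not spell it out. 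Secondly, your parenthetical that $\mathrm{width}(I)$ is bounded \emph{below} by an absolute constant is false — Assumption \ref{assum:constraints} only gives $D_{\mathrm{U}}-D_{\mathrm{L}}\ge 1/\log(T)$ — but this is harmless, since only the $O_T(1)$ upper bound on the width is needed for the constant-probability crossing.
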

The proof of Lemma \ref{lemma:bound_on_dt_both} can be found in Appendix \ref{sec:proof_of_lemma:bound_on_dt_both}.

We can also bound the difference in control in terms of the difference in state.

\begin{lemma}\label{lemma:trunclincontboundcontroller}
     In the setting of Problem \ref{problem_main}, for any $\theta \in \Theta$ such that $\norm{\theta - \theta^*}_{\infty} \le \frac{1}{\log^{46}(T)}$, any $K \in [\frac{a-1}{b}, \frac{a}{b}]$, and any $x, y$ such that $d := |y-x| \le \frac{1}{\log^{10}(T)}$,
\begin{equation}\label{eq:sat_22a}
    |C^\theta_K(x) - C^\theta_K(y)| = O_T(d).
\end{equation}
\end{lemma}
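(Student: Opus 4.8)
The plan is to prove Lemma \ref{lemma:trunclincontboundcontroller} by a direct case analysis on the piecewise definition of $C_K^\theta$ in \eqref{eq:def_of_trunc_linear}. Writing $z := (a-bK)$, the controller $C_K^\theta$ has three regimes, determined by whether $zx$ lies in $[D_{\mathrm{L}}^{\E[x]}, D_{\mathrm{U}}^{\E[x]}]$, exceeds $D_{\mathrm{U}}^{\E[x]}$, or falls below $D_{\mathrm{L}}^{\E[x]}$. In each regime, $C_K^\theta$ is an \emph{affine} function of its argument: on the middle regime it is $-Kx$ with slope $-K$; on the upper regime it is $(D_{\mathrm{U}}^{\E[x]} - ax)/b$ with slope $-a/b$; and on the lower regime it is $(D_{\mathrm{L}}^{\E[x]} - ax)/b$ with slope $-a/b$. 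Since $K \in [\frac{a-1}{b}, \frac{a}{b}]$ and $\theta \in \Theta$ with $\Theta$ bounded (Assumption \ref{assum_init}) and $\underline b > 0$, all three slopes are bounded in absolute value by a constant depending only on $\Theta$, i.e. $O_T(1)$. Hence within any single regime $|C_K^\theta(x) - C_K^\theta(y)| = O_T(|x-y|) = O_T(d)$ trivially.

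The only remaining case is when $x$ and $y$ lie in different regimes. Here the key observation is that $C_K^\theta$ is \emph{continuous} at the two breakpoints: at the point $x_{\mathrm{U}}$ where $zx_{\mathrm{U}} = D_{\mathrm{U}}^{\E[x]}$, the middle formula gives $-Kx_{\mathrm{U}}$ and the upper formula gives $(D_{\mathrm{U}}^{\E[x]} - ax_{\mathrm{U}})/b = (zx_{\mathrm{U}} - ax_{\mathrm{U}})/b = -Kx_{\mathrm{U}}$, and similarly at the lower breakpoint. So $C_K^\theta$ is a globally continuous, piecewise-affine function whose pieces all have slope bounded by some constant $M = O_T(1)$; such a function is globally $M$-Lipschitz. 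Concretely, if $x < y$ straddle one breakpoint $x_{\mathrm{b}}$, then $|C_K^\theta(x) - C_K^\theta(y)| \le |C_K^\theta(x) - C_K^\theta(x_{\mathrm{b}})| + |C_K^\theta(x_{\mathrm{b}}) - C_K^\theta(y)| \le M|x - x_{\mathrm{b}}| + M|x_{\mathrm{b}} - y| = M|x-y|$; if they straddle both breakpoints, insert both and sum three terms, again telescoping to $M|x-y|$. This gives \eqref{eq:sat_22a} with no need for the hypotheses $|x-y| \le 1/\log^{10}(T)$ or the closeness of $\theta$ to $\theta^*$ — those are presumably stated only for uniformity with the companion lemmas.

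I do not expect a genuine obstacle here; the statement is essentially the observation that a continuous piecewise-affine function with uniformly bounded slopes is Lipschitz with the same constant. The only mild care needed is bookkeeping the constant: one should verify $|{-K}| \le \max(|\underline a - 1|, \bar a)/\underline b$ and $|a/b| \le \bar a/\underline b$, both $O_T(1)$ under Assumption \ref{assum_init}, and take $M$ to be the maximum of these. I would structure the write-up as: (i) identify the three affine pieces and their slopes, bound each slope by $M = O_T(1)$; (ii) verify continuity at the two breakpoints, so that $C_K^\theta$ is globally $M$-Lipschitz; (iii) conclude $|C_K^\theta(x) - C_K^\theta(y)| \le M d = O_T(d)$.
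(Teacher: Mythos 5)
Your proof is correct, and the hypotheses you flag as unnecessary ($d \le 1/\log^{10}(T)$ and $\norm{\theta-\theta^*}_\infty \le 1/\log^{46}(T)$) indeed play no role in the paper's own argument for this lemma either; they are inherited from the joint statement with Lemma \ref{lemma:trunclincontbound}. The underlying idea is the same as the paper's — bound the slope of each of the three affine pieces of $C_K^\theta$ by a constant depending only on $\Theta$ — but the organization differs in the straddle cases. The paper runs an explicit four-case computation: when $x$ and $y$ lie on opposite sides of both breakpoints it uses the inequality $|a-bK|\,d \ge |D_U - D_L|$ to absorb the jump in the constant term, and when they straddle a single breakpoint it uses $|y-x| \ge |y - \tfrac{D_U}{a-bK}|$, which is implicitly your breakpoint-insertion step. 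You instead observe once that $C_K^\theta$ is continuous at both breakpoints (the middle and outer formulas agree there since $\tfrac{D_U - a x_{\mathrm{U}}}{b} = -Kx_{\mathrm{U}}$ when $(a-bK)x_{\mathrm{U}} = D_U$), so it is a continuous piecewise-affine function with slopes bounded by $M = O_T(1)$, hence globally $M$-Lipschitz by telescoping over the breakpoints. This buys a shorter and slightly more robust argument (it yields the Lipschitz bound for all $x,y$, with an explicit constant $M \le \max\bigl(\tfrac{\bar a + 1}{\underline b}, \tfrac{\bar a}{\underline b}\bigr)$), whereas the paper's route keeps the case-by-case algebra aligned with the parallel, more delicate estimates it needs for Lemma \ref{lemma:trunclincontbound}, where the cases genuinely require different bounds. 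One small bookkeeping point if you write this up: handle (or note the triviality of) the degenerate case $a - bK = 0$, where the outer regimes are empty and $C_K^\theta$ is globally linear, so the Lipschitz claim is immediate.
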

The proof of Lemma \ref{lemma:trunclincontboundcontroller} can be found in Appendix \ref{sec:proof_of_lemma:trunclincontboundcontroller}.

We also will need the following event, which is a subset of the event $E_1$ applied only to times $i < t$.
\begin{definition}
    Define the event $E_1^t$ as the event that for all $i \le t-1$, $|w_i| \le \log^2(T)$.
\end{definition}

We can proceed by bounding the difference in total costs conditional on the event $E_{\mathrm{L}\ref{lemma:bound_on_dt_both}}(K, \theta, W') \cap E_1^t$. Let $d_0,d_1,...,d_{t}$ and $d^u_0,...,d^u_{t-1}$ respectively be the absolute difference in states and controls when starting at $x_0 = x$ versus starting at $x_0 = x+\delta$ and using controller $C_K^\theta$ with noise $W'$. Let $x_0,...,x_{t}$ and $u_0,...,u_{t-1}$ be the states and controls when using controller $C_K^\theta$ starting at $x_0 = x$ with noise $W'$. Then we have the following result conditional on $E_{\mathrm{L}\ref{lemma:bound_on_dt_both}}(K, \theta, W') \cap E_1^t$ for sufficiently large $T$:
\begin{align*}
    &\left|t \cdot J(\theta^*, C_K^\theta, t, x, W') - t \cdot J(\theta^*, C_K^\theta, t, x+\delta, W')\right| \\
    &\le  2qd_t|x_t| + qd_t^2 + \sum_{i=0}^{t-1} 2qd_i|x_i| + qd_i^2 + 2r|u_i|d^u_i + r\left(d^u_i\right)^2 \\
    &\le 2qd_t|x_t| + qd_t^2 + \sum_{i=0}^{t-1} 2qd_i|x_i| + qd_i^2 + 2r|u_i|O_T( d_i) + rO_T\left( d_i \right)^2 && \text{Lemma \ref{lemma:trunclincontboundcontroller}}\\
    &= O_T\left( \sum_{i=0}^{t} (d_i + d_i^2)\left(|x| + \norm{D}_{\infty} + \max_{w \in W'} |w|\right) \right) && \text{Lemma \ref{bounded_pos_cont}}\\
    &= \tilde{O}_T\left( \sum_{i=0}^{t} (d_i + d_i^2)\right) \quad \quad \quad \quad   \text{[Event $E_1^t$, $\norm{D}_{\infty} \le \log^2(T)$, $|x| \le 4\log^2(T)$]}\\
    &= \tilde{O}_T\left( \sum_{i=0}^{L} \left((4\delta+\tilde{O}_T(\epsilon)) + (4\delta+\tilde{O}_T(\epsilon ))^2\right)  +  \sum_{i=L+1}^{t} \left( 2\xi^i \delta + 4\xi^{2i} \delta^2 \right)\right) && \text{Eq \eqref{eq:sat_22bb_main}} \\
    &=  \tilde{O}_T\left(\delta+\epsilon + \delta\sum_{i=0}^{t} \xi^i + \delta^2\sum_{i=0}^{t} \xi^{2i}\right) \\
    &= \tilde{O}_T(\delta + \epsilon).
\end{align*}
The last line comes from the fact that $\xi = 1 - \frac{1}{\log^{10}(T)}$ and the formula for the sum of a geometric series. The above result holds conditional on event $E_{\mathrm{L}\ref{parameterization_assum3}}(K, \theta, W') := E_{\mathrm{L}\ref{lemma:bound_on_dt_both}}(K, \theta, W') \cap E_1^t$, and by a union bound and Equation \eqref{eq:e1bound},
\[
    \P(E_{\mathrm{L}\ref{parameterization_assum3}}(K, \theta, W')) = \P(E_{\mathrm{L}\ref{lemma:bound_on_dt_both}}(K, \theta, W') \cap E_1^t) =  1-o_T(1/T^{10}).
\]

\end{proof}

\subsection{Proof of Lemma \ref{lemma:bound_on_dt_both}}\label{sec:proof_of_lemma:bound_on_dt_both}

In order to prove Lemma \ref{lemma:bound_on_dt_both}, we will use the following lemma that has a similar result but holds conditional on an event that depends on $x$.

\begin{lemma}\label{lemma:conditional_version}
    There exists an $L = \tilde{O}_T(1)$ such that the following holds. Suppose that $|x|, |y| \le 4\log^2(T)$ and $d := |x-y| \le \frac{1}{\log^{10}(T)}$. In the setting of Problem \ref{problem_main}, for any $\theta \in \Theta$ such that $\epsilon := \norm{\theta - \theta^*}_{\infty} \le \frac{1}{\log^{46}(T)}$, $t \le T$, $W' = \{w_i\}_{i=0}^{t-1}$, and any $K \in [\frac{a-1}{b}, \frac{a}{b}]$, there exists $\mathcal{Y}_{\mathrm{L}\ref{lemma:conditional_version}} \in \mathbb{R}^t$ that only depends on $x$, $K$ and $\theta$ such that the event $E_{\mathrm{L}\ref{lemma:conditional_version}}(x, K, \theta, W') := \{W' \in \mathcal{Y}_{\mathrm{L}\ref{lemma:conditional_version}}\}$ satisfies $\P(E_{\mathrm{L}\ref{lemma:conditional_version}}(x, K, \theta, W')) = 1- o_T(1/T^{20})$ and the following holds. Define $d_i$ as the difference in state at time $i \le t$ when starting at $x_0 = x$ versus starting at $x_0 = y$ and using controller $C_K^\theta$ with noise variables $W'$.  Then for sufficiently large $T$, conditional on $E_{\mathrm{L}\ref{lemma:conditional_version}}(x, K, \theta, W')$, 
    \begin{equation}\label{eq:sat_22bb}
        d_i  \le 
        \begin{cases}
            \left(1 - \frac{1}{\log^{10}(T)}\right)^i \cdot d, & \text{if } i > L\\
           2d + \tilde{O}_T(\epsilon) & \text{if } i \le L.
        \end{cases}
    \end{equation}
\end{lemma}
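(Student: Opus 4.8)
\textbf{Proof plan for Lemma \ref{lemma:conditional_version}.}

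The plan is to track the recursion satisfied by $d_i$ directly. Since both trajectories use the same controller $C_K^\theta$ and the same noise $W'$, the noise cancels in the difference: $d_{i+1} = |a^* x_i + b^* u_i - (a^* \tilde{x}_i + b^* \tilde{u}_i)|$ where $\tilde{x}_i$ is the state of the second trajectory and $u_i = C_K^\theta(x_i)$, $\tilde{u}_i = C_K^\theta(\tilde x_i)$. The key point is that $a^* x_i + b^* C_K^\theta(x_i)$, as a function of $x_i$, is a contraction \emph{in the region where the controller is not truncated}, because there $a^* x + b^* C_K^\theta(x) = (a^* - b^* K) x$ and $|a^* - b^* K| \le |a^* - b^*\cdot\frac{a-1}{b}|$ or the $K=a/b$ endpoint, which (after accounting for $\|\theta - \theta^*\|_\infty \le \epsilon$) is bounded by $1 - \Omega(1/\log^{10}(T))$ — actually by a constant below $1$ up to an $\epsilon$-perturbation. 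In the truncated region, $a^* x + b^* C_K^\theta(x)$ equals $\frac{a^*}{b}(\tfrac{D_{\mathrm U} b}{b} ...)$; more precisely, using the definition \eqref{eq:def_of_trunc_linear}, $a^* x + b^* C_K^\theta(x) = D_{\mathrm U} + (a^* - \frac{a^* b^*}{b})x = D_{\mathrm U} + O_T(\epsilon) x$, so it is \emph{nearly constant} there, with a slope that is $O_T(\epsilon)$. Thus whenever at least one of $x_i, \tilde x_i$ lies in the truncated region, $d_{i+1} = O_T(\epsilon) \cdot \max(|x_i|, |\tilde x_i|) + (\text{Lipschitz error of } C_K^\theta \text{ across the kink})$, and this is where Lemma \ref{lemma:trunclincontboundcontroller} and the $O_T(\epsilon)$ slope combine to give $d_{i+1} \le \xi d_i + \tilde O_T(\epsilon)$ with $\xi = 1 - \frac{1}{\log^{10}(T)}$ — or even contraction.

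The structure of the argument: first, I would establish that for all $i$, $d_{i+1} \le \xi d_i + \tilde O_T(\epsilon \cdot \max(|x_i|, |\tilde x_i|))$ by the case analysis above (both points untruncated: contraction with no error; at least one truncated: the near-constant slope bound, using Lemma \ref{lemma:trunclincontboundcontroller} to control the contribution near the kink). Second, I would use a high-probability bound — via Lemma \ref{bounded_pos_cont} or the boundedness of states under a truncated linear controller conditional on $E_1^t$ — to ensure $|x_i|, |\tilde x_i| = \tilde O_T(1)$ for all $i \le t$, so the error term is uniformly $\tilde O_T(\epsilon)$. This is the event $E_{\mathrm{L}\ref{lemma:conditional_version}}(x,K,\theta,W')$, which depends on $x$ because the trajectory $x_0 = x, x_1, \dots$ depends on $x$; its probability is $1 - o_T(1/T^{20})$ by the subgaussian tail bound applied to the $\le t \le T$ noise variables with the threshold $\log^2(T)$. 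Third, I would iterate the recursion $d_{i+1} \le \xi d_i + \tilde O_T(\epsilon)$: unrolling gives $d_i \le \xi^i d + \tilde O_T(\epsilon) \sum_{j<i} \xi^j = \xi^i d + \tilde O_T(\epsilon \log^{10}(T)) = \xi^i d + \tilde O_T(\epsilon)$, which already handles both regimes once I show that the geometric factor $\xi^i d$ dominates $d$ for small $i$ and decays for large $i$. To get the sharper two-regime statement of \eqref{eq:sat_22bb} — $d_i \le 2d + \tilde O_T(\epsilon)$ for $i \le L$ and $d_i \le \xi^i d$ for $i > L$ — I would note that the $\tilde O_T(\epsilon)$ accumulated error is itself $\le d$ only once we have waited long enough; choosing $L = \tilde O_T(1)$ so that $\sum_{j \ge L}\xi^j \tilde O_T(\epsilon) $ is negligible relative to $\xi^L d$ requires $\epsilon \ll d / \text{poly}\log$, which is exactly why $\epsilon_{\mathrm{L}\ref{parameterization_assum3}} = \log^{-46}(T)$ is taken so much smaller than $\delta_{\mathrm{L}\ref{parameterization_assum3}} = \log^{-10}(T)$: the gap in the exponents buys the room to absorb the error into a single copy of $d$ after $L$ steps. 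For $i \le L$ one just bounds $d_i \le \xi^i d + (\text{error}) \le d + \tilde O_T(\epsilon) \le 2d + \tilde O_T(\epsilon)$ crudely; for $i > L$ one shows the error has been "paid off" and only the pure geometric term $\xi^i d$ (with a constant factor absorbed by enlarging constants, or shown to be $\le \xi^i d$ outright by a slightly more careful bookkeeping of when the error first appears) survives.

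The main obstacle I expect is the treatment of the truncated ("kink") region and in particular bounding the number of steps, or the cumulative effect, of being near or in the truncated region. The naive recursion $d_{i+1}\le \xi d_i + \tilde O_T(\epsilon)$ is fine, but to get the clean form \eqref{eq:sat_22bb} one must argue that the additive $\tilde O_T(\epsilon)$ terms, which appear only when a trajectory is truncated, do not pile up faster than the contraction can dissipate them — and that the two trajectories do not "straddle" the truncation threshold in a way that makes $C_K^\theta$ behave non-contractively. This is where Lemma \ref{lemma:trunclincontboundcontroller}'s uniform $O_T(d)$ bound on $|C_K^\theta(x) - C_K^\theta(y)|$ is essential: it guarantees that even across the kink the control difference is controlled by the state difference, so the closed-loop map $x \mapsto a^* x + b^* C_K^\theta(x)$ is globally Lipschitz with constant $\le \xi$ up to the $O_T(\epsilon|x|)$ discrepancy coming from $\theta \ne \theta^*$. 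Handling the transition between regimes carefully — and pinning down $L$ so that the statement holds with the stated constants — is the delicate bookkeeping, but conceptually it reduces to "contraction plus a small, geometrically-dissipated additive error."
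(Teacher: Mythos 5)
There are two genuine gaps, and they are exactly the two places where the paper's proof has to work hardest.

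First, your claimed global contraction fails for part of the allowed range of $K$. You assert that in the untruncated region the closed-loop factor $|a^*-b^*K|$ is ``bounded by a constant below $1$ up to an $\epsilon$-perturbation,'' but for $K$ near the endpoint $\frac{a-1}{b}$ one has $a-bK\approx 1$, hence $a^*-b^*K = 1+O(\epsilon+\log^{-10}(T))$, which can even exceed $1$; there is no contraction at all in the linear region there. The paper therefore splits into two cases in $K$: for $K\ge \frac{a^*-1+\log^{-9}(T)}{b^*}$ the linear region does contract, while for $K$ below that threshold the contraction must be harvested from the \emph{truncated} region (case $\mathcal{W}$ of Lemma \ref{lemma:trunclincontbound}, where the closed-loop map has slope $O_T(\epsilon)$), and one must prove the trajectory actually reaches that region often enough. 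This is the content of Lemma \ref{Ht_bound}/Lemma \ref{lemma:all_pairs}: because the dynamics are then nearly a random walk, the accumulated noise over windows of length $\lceil\log^5(T)\rceil$ exceeds $7\log^2(T)$ with high probability, forcing a visit to the truncated region at least once every $\log^8(T)$ steps, and each visit multiplies $d_i$ by $O_T(\epsilon)$, which beats the accumulated $(1+O_T(\log^{-10}(T)))$ expansion. Nothing in your plan supplies this argument, and your event (boundedness of states via subgaussian tails) does not capture it; the paper's event also includes the noise-accumulation event and the near-threshold-sparsity event $E^*$, which is why it genuinely depends on $x$, $K$, $\theta$.

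Second, even in the contractive $K$-range, the recursion you propose, $d_{i+1}\le \xi d_i+\tilde O_T(\epsilon)$, cannot produce the pure geometric bound $d_i\le \xi^i d$ for $i>L$ that the lemma (and its downstream use, where $\sum_i d_i$ must be $\tilde O_T(d+\epsilon)$ rather than $\tilde O_T(d)+t\,\tilde O_T(\epsilon)$) requires. Straddle events, where the two trajectories sit on opposite sides of the truncation threshold, do not stop occurring after time $L$; each late one injects an additive error of order $\epsilon\log^2(T)$, which for large $i$ dwarfs $\xi^i d$, so the error is never ``paid off.'' The paper avoids this by exploiting the \emph{other} branch of the min in Lemma \ref{lemma:trunclincontbound}: at a straddle event $d$ at worst doubles ($d_{j+1}\le 2\rho d_j$), and the bounded density of $\mathcal{D}$ plus Azuma--Hoeffding (event $E^*$) shows straddle events have density $O(1/\log^{10}(T))$, so the factor $2^{\kappa(j)}=e^{O(j/\log^{10}(T))}$ is dominated by the contraction $(1-\tfrac{1}{2\log^9(T)})^{j}$. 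Using the multiplicative bound for the late regime and the additive bound only for $i\le L$ is the mechanism that preserves pure geometric decay; your single additive recursion, and the induction on the event $A_j$ that the paper runs to keep $d_i$ small enough for the straddle set to coincide with the near-threshold set, are the missing ideas.
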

The proof of Lemma \ref{lemma:conditional_version} can be found in Appendix \ref{sec:proof_of_lemma:conditional_version}.

Now we need to find a single event $E_{\mathrm{L}\ref{lemma:bound_on_dt_both}}(K, \theta, W')$ such that Equation \eqref{eq:sat_22bb_main} holds for all $|x|, |y| \le 4\log^2(T)$ under this event. Define the set
\[
G := \left\{-4\log^2(T) + \frac{i}{\log^{10}(T)}\right\}_{i \in [0:8\log^{12}(T)]},
\]
i.e. $G$ is a grid of points evenly spaced $\frac{1}{\log^{10}(T)}$ apart. Note that $|G| = \tilde{O}_T(1)$. Now, take 
\[
E_{\mathrm{L}\ref{lemma:bound_on_dt_both}}(K, \theta, W') = \bigcap_{g \in G} E_{\mathrm{L}\ref{lemma:conditional_version}}(g, K, \theta, W').
\]
First, we note that because $\P(E_{\mathrm{L}\ref{lemma:conditional_version}}(g, K, \theta, W')) = 1 - o_T(1/T^{20})$ for all $g$ and because $|G| = \tilde{O}_T(1)$, by a union bound $\P(E_{\mathrm{L}\ref{lemma:bound_on_dt_both}}) = 1 - o_T(1/T^{10})$.

Now, consider any $|x|, |y| \le 4\log^2(T)$ such that $|x-y| \le \frac{1}{\log^{10}(T)}$. Then there must exist some $g \in G$ such that $\max\left(|x-g|, |y-g|\right) \le \frac{1}{\log^{10}(T)}$. For this $g$, let $d_0^x, d_1^x,...$ be the sequence of differences of states when starting at state $g$ versus $x$ and using controller $C_K^\theta$ with noise $W'$, and likewise let $d_0^y, d_1^y,...,$ be the sequence of absolute differences of states when starting at state $g$ versus $y$ and using controller $C_K^\theta$ with noise $W'$. Conditional on event $E_{\mathrm{L}\ref{lemma:bound_on_dt_both}}(K, \theta, W')$, we have by Lemma \ref{lemma:conditional_version} that $\{d^x_i\}$ and $\{d^y_i\}$ both satisfy Equation \eqref{eq:sat_22bb}. Since $\{d^x_i\}$ and $\{d^y_i\}$ are both distances comparing to the same set of states starting at state $g$, we have by the triangle inequality that
\[
    d_i \le d_i^x + d_i^y.
\]
Therefore, for $i \le t$ we have the following, where $L$ is from Lemma \ref{lemma:conditional_version}:
    \begin{equation}\label{eq:sat_22bbb}
        d_i  \le 
        \begin{cases}
            2\left(1 - \frac{1}{\log^{10}(T)}\right)^i \cdot d, & \text{if } i > L\\
           4d +\tilde{O}_T(\epsilon) & \text{if } i \le L.
        \end{cases}
    \end{equation}
    This is exactly the desired result, and therefore we are done.

\subsection{Proof of Lemma \ref{lemma:conditional_version}}\label{sec:proof_of_lemma:conditional_version}
\begin{proof}

The main tool we will use for this proof is the following lemma that bounds the difference in future states in three different cases.

\begin{lemma}\label{lemma:trunclincontbound}
    For any $x, y$, define $d = |y-x|$. In the setting of Problem \ref{problem_main} and for sufficiently large $T$, suppose $\theta \in \Theta$, $K \in \left[\frac{a-1}{b}, \frac{a}{b}\right]$, and $\norm{\theta - \theta^*}_{\infty} = \epsilon \le \frac{1}{\log^{46}(T)}$. Then for some $\rho := |a^*-b^*K| + O_T(\epsilon)$,
    \begin{equation}
    |a^*x + b^*C_K^\theta(x) - a^*y - b^*C_K^\theta(y)| \le    \begin{cases}
            \min\left(2\rho d, \rho d + O_T(\epsilon)(|x|+\norm{D}_{\infty})\right) & \text{if $\mathcal{Z}$} \\
           O_T(\epsilon) d & \text{if $\mathcal{W}$} \\
            \rho d & \text{otherwise }
        \end{cases}
    \end{equation}
    \[
    \mathcal{Z} := \left\{\min(x,y) \le \frac{D_L}{a-bK} \le \max(x, y) \le \frac{D_U}{a-bK} \text{ or } \frac{D_L}{a-bK} \le \min(x,y) \le \frac{D_U}{a-bK} \le \max(x, y)\right\}
    \]
    \[
        \mathcal{W} :=  \left\{\max(x,y) \le \frac{D_L}{a-bK} \text{ or } \frac{D_U}{a-bK} \le \min(x, y)\right\}.
    \]
\end{lemma}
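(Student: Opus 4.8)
The plan is to reduce the claim to an explicit description of the map $h(x) := a^*x + b^*C_K^\theta(x)$, which is exactly the one-step expected next state under the true dynamics $\theta^*$ when the (mismatched) truncated controller $C_K^\theta$ from \eqref{eq:def_of_trunc_linear} is applied. Writing $m := a - bK$, the hypothesis $K \in [\frac{a-1}{b}, \frac{a}{b}]$ forces $m \in [0,1]$, and the three branches of $C_K^\theta$ are selected according to whether $mx$ is below $D_{\mathrm L}$, in $[D_{\mathrm L}, D_{\mathrm U}]$, or above $D_{\mathrm U}$. Hence $h$ is piecewise linear in $x$ with at most the two breakpoints $\frac{D_{\mathrm L}}{m}$ and $\frac{D_{\mathrm U}}{m}$ (when $m = 0$ there are no breakpoints and $h(x) = (a^*-b^*K)x$ is globally linear, so the statement is immediate). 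On the middle piece one gets $h(x) = (a^*-b^*K)x$, of slope $a^*-b^*K$; on each outer piece one gets $h(x) = \big(a^* - \tfrac{b^*a}{b}\big)x + \tfrac{b^*}{b}D_{\bullet}$. Using $\norm{\theta-\theta^*}_\infty \le \epsilon$ together with the boundedness of $a,b,a^*,b^*$ and $b, b^* \ge \underline{b} > 0$ from Assumptions~\ref{assum:constraints}--\ref{assum_init}, the outer slope $a^* - \tfrac{b^*a}{b} = (a^*-a) + \tfrac{a}{b}(b - b^*)$ is $O_T(\epsilon)$ and the outer constant satisfies $\tfrac{b^*}{b}D_{\bullet} = D_{\bullet} + O_T(\epsilon)$. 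A one-line computation at each breakpoint shows the two one-sided values of $h$ agree (the slope jump times the breakpoint cancels against the constant-term jump), so $h$ is in fact continuous. Setting $\rho := |a^*-b^*K| + O_T(\epsilon)$ with the $O_T(\epsilon)$ constant chosen at least as large as the outer-slope constant, every slope of the piecewise-linear $h$ has magnitude at most $\rho$.

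With this in hand I would carry out the three-way split in the statement, assuming WLOG $x \le y$. In case $\mathcal{W}$, both $x$ and $y$ lie in the same outer piece, so $h$ is affine with slope $O_T(\epsilon)$ on $[x,y]$ and $|h(x)-h(y)| = |a^*-\tfrac{b^*a}{b}|\,d = O_T(\epsilon)\,d$. In every other case I would invoke the slope bound directly: since $h$ is continuous and piecewise linear with all slopes bounded by $\rho$, we have $|h(x)-h(y)| \le \rho d$, which is at most $2\rho d$ and at most $\rho d + O_T(\epsilon)(|x| + \norm{D}_\infty)$, so both the ``otherwise'' bound and the $\min$ in case $\mathcal{Z}$ follow. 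If a more self-contained argument for case $\mathcal{Z}$ is preferred over appealing to continuity, one splits $h(y)-h(x)$ at the single breakpoint lying between $x$ and $y$: the interior sub-interval has length at most $d$ and contributes at most $|a^*-b^*K|\,d \le \rho d$, while the outer sub-interval also has length at most $d$ and, using the representation $h(z) = D_{\bullet} + O_T(\epsilon)(|z| + \norm{D}_\infty)$ there, contributes $O_T(\epsilon)(|x| + \norm{D}_\infty)$ (or, bounding by slope times length, $O_T(\epsilon)\,d$), giving the two alternatives in the $\min$.

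I expect the only real work to be bookkeeping rather than conceptual. The delicate points are: (i) verifying that every error term is genuinely $O_T(\epsilon)$ with a constant independent of $K$, $x$, $y$, and $T$ — this is exactly where the uniform positivity and boundedness of $\theta^*, \theta$ and the $O_T(1)$ bound on $D$ enter, and where $\epsilon \le \frac{1}{\log^{46}(T)}$ and ``$T$ sufficiently large'' are used so that, e.g., the factor $\tfrac{b^*}{b}$ is $1 + O_T(\epsilon)$; and (ii) the degenerate regime where $m = a - bK$ is $0$ or extremely small, in which the breakpoints $\frac{D_{\bullet}}{m}$ escape to $\pm\infty$: there cases $\mathcal{Z}$ and $\mathcal{W}$ become (nearly) vacuous and the ``otherwise'' bound $\rho d$ applies, noting that $\rho$ is itself $O_T(\epsilon)$ when $m \approx 0$ since $a^*-b^*K = \big(a^* - \tfrac{b^*a}{b}\big) + \tfrac{b^*}{b}m$. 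No step beyond this tracking of constants should present a genuine obstacle.
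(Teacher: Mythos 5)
Your proof is correct, and it takes a genuinely different route from the paper's. The paper argues by a direct case analysis on the positions of $x$ and $y$ relative to $\frac{D_L}{a-bK}$ and $\frac{D_U}{a-bK}$, bounding $|a^*x+b^*C_K^\theta(x)-a^*y-b^*C_K^\theta(y)|$ term by term in each configuration (in the case straddling both thresholds it uses $|a-bK|\,d\ge|D_U-D_L|$, and in case $\mathcal{Z}$ it derives the two entries of the $\min$ by separate computations, the factor $2$ coming from a triangle-inequality step). You instead identify $h(x):=a^*x+b^*C_K^\theta(x)$ as a continuous piecewise-linear function whose middle slope is $a^*-b^*K$ and whose two outer slopes equal $a^*-\tfrac{ab^*}{b}=O_T(\epsilon)$ uniformly (using $b\ge\underline{b}>0$ and the boundedness of $\Theta$), check continuity at the breakpoints, and conclude that $h$ is Lipschitz with constant at most $\rho=|a^*-b^*K|+O_T(\epsilon)$. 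That gives $|h(x)-h(y)|\le\rho d$ in every configuration, which implies both entries of the $\min$ in case $\mathcal{Z}$ (each is at least $\rho d$) as well as the ``otherwise'' bound, while the exact outer-slope computation gives $O_T(\epsilon)d$ in case $\mathcal{W}$, where both points lie in a single outer affine piece (boundary points being covered by continuity); your treatment of the degenerate regime $a-bK=0$, where $h$ is globally linear and $\mathcal{Z},\mathcal{W}$ are vacuous, is also fine. What your approach buys is a shorter, unified argument and a slightly sharper conclusion ($\rho d$ rather than $2\rho d$ in case $\mathcal{Z}$); what the paper's case-by-case algebra buys is that the same computation simultaneously establishes the companion controller-difference bound of Lemma \ref{lemma:trunclincontboundcontroller}, which your argument does not touch but which is not part of the statement you were asked to prove. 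The only loose spot is your optional ``self-contained'' variant for case $\mathcal{Z}$, where the representation of $h$ on the outer piece is stated imprecisely, but that variant is redundant given the Lipschitz argument.
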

The proof of Lemma \ref{lemma:trunclincontbound} can be found in Appendix \ref{sec:proof_of_lemma:trunclincontboundcontroller}.

The rest of this proof will be structured as follows. First, we will introduce some additional definitions that we will use to construct event $E_{\mathrm{L}\ref{lemma:conditional_version}}(x, K, \theta, W')$. Then, we will prove Lemma \ref{lemma:conditional_version} in two cases.

Define $x_0,x_1,...,x_{T}$ as the sequence of states when starting at state $x_0 =x$ and using controller $C_K^\theta$ with noise $W'$. For $i \le t$, define the event 
\[
    X(i, x,K,\theta, W') := \left\{\min\left(\left|x_i - \frac{D_L}{a-bK}\right|, \left|x_i - \frac{D_U}{a-bK}\right|\right) \le \frac{3}{\log^{10}(T)} \right\}.
\]
Note that whether the event $ X(i, x,K,\theta, W')$ occurs depends on $w_0,...,w_{i-1}$. For $0 \le j \le t$ and $x \in \mathbb{R}$, define the event $H(j, x, K, \theta, W')$ as 
\[
    H(j, x, K, \theta, W') := \left\{\left|\{0\le i \le j : X(i,x,K\theta,W') \}\right| \le \log^{23}(T) + \frac{24B_Pj}{\log^{10}(T)}\right\}.
\]
Define
\[
    E^*(x, K, \theta, W') := \bigcap_{0 \le j \le t} H(j, x, K, \theta, W').
\]
Now we will show that $\P(E^*(x,K, \theta, W')) = 1-o_T\left(\frac{1}{T^{20}}\right)$. Fix any $j \le t$. If $j \le \log^{23}(T)$, then $H(j,x, K, \theta, W')$ holds with probability $1$ by definition. Now suppose $j > \log^{23}(T)$. Because $\mathcal{D}$ has a density bounded by $B_P$ and $x_i = a^*x_{i-1} + b^*u_{i-1} + w_{i-1}$, we must have that $\P( X(i, x,K,\theta, W')) \le \frac{12B_P}{\log^{10}(T)}$ for all $i$. Define $M_k = \sum_{i=0}^{k-1} 1_{X(i,x,K,\theta, W')} - \frac{12B_P}{\log^{10}(T)}$. For sufficiently large $T$, $M_k$ is a supermartingale with differences bounded in magnitude by $1$. Therefore, by the Azuma--Hoeffding inequality, with probability $1-o_T(1/T^{21})$,
\[
   \left|\{0\le i \le j :  X(i, x,K,\theta, W') \}\right| \le \frac{12B_P(j+1)}{\log^{10}(T)} + \log(T)\sqrt{j} \le \frac{24B_Pj}{\log^{10}(T)},
\]
where the last inequality holds for sufficiently large $T$ and $j > \log^{23}(T)$. Therefore, $\P(H(j,x,K,\theta,W')) \ge 1- o_T(1/T^{21})$. Taking a union bound over all $\log^{23}(T) < j \le t$ gives that $\P(E^*(x,K, \theta, W')) = 1-o_T(1/T^{20})$.

To prove Lemma \ref{lemma:conditional_version}, will split the range of potential $K$ into two parts, $K \in \left[\frac{a^*-1 + \frac{1}{\log^{9}(T)}}{b^*}, \frac{a}{b}\right]$ and $K \in \left[\frac{a-1}{b}, \frac{a^*-1 + \frac{1}{\log^{9}(T)}}{b^*}\right]$. We will also use the following bounds.
\begin{lemma}\label{relaxed_bounds}
    For any $\theta \in \Theta$ such that $\norm{\theta - \theta^*}_{\infty} \le \frac{1}{\log^{10}(T)}$, 
    \[
        \frac{a-1}{b} = \frac{a^* - 1 -O_T\left(\frac{1}{\log^{10}(T)}\right)}{b^*}
    \]
    and
    \[
        \frac{a}{b} = \frac{a^* + O_T\left(\frac{1}{\log^{10}(T)}\right)}{b^*}.
    \]
\end{lemma}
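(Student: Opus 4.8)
The plan is a one-line algebraic perturbation estimate that exploits the fact that on $\Theta$ the coefficient $b$ is bounded away from $0$ while all coefficients are bounded. Write $\epsilon := \norm{\theta - \theta^*}_{\infty} \le \frac{1}{\log^{10}(T)}$, so that $|a - a^*| \le \epsilon$ and $|b - b^*| \le \epsilon$, and recall from Assumption \ref{assum_init} that $\underline{b} \le b, b^* \le \bar{b}$ with $\underline{b} > 0$ and $\underline{a} \le a, a^* \le \bar{a}$, where $\underline a,\bar a,\underline b,\bar b$ are constants independent of $T$.

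First I would handle $\frac{a}{b}$. Putting the difference over a common denominator and adding and subtracting $a^* b^*$ in the numerator,
\[
\frac{a}{b} - \frac{a^*}{b^*} = \frac{ab^* - a^*b}{bb^*} = \frac{b^*(a - a^*) + a^*(b^* - b)}{bb^*},
\]
so that $\bigl|\tfrac{a}{b} - \tfrac{a^*}{b^*}\bigr| \le \frac{(\bar{b} + \bar{a})\epsilon}{\underline{b}^2} = O_T(1/\log^{10}(T))$, where the absorbed constant depends only on $\Theta$. Multiplying through by $b^* = O_T(1)$ gives $\frac{a}{b} = \frac{a^* + O_T(1/\log^{10}(T))}{b^*}$, which is the second claim. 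The first claim is the identical computation with $a, a^*$ replaced by $a-1, a^*-1$, namely $\frac{a-1}{b} - \frac{a^*-1}{b^*} = \frac{b^*(a-a^*) + (a^*-1)(b^*-b)}{bb^*}$, whose modulus is again $O_T(1/\log^{10}(T))$; multiplying by $b^*$ yields $\frac{a-1}{b} = \frac{a^*-1 - O_T(1/\log^{10}(T))}{b^*}$ (the sign in front of the error term is immaterial under the $O_T$ convention).

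There is essentially no obstacle here: the only point requiring care is that the denominators $b$ and $b^*$ stay bounded below by the fixed positive constant $\underline{b}$ — which is exactly what Assumption \ref{assum_init} provides — and that every constant hidden in the $O_T(\cdot)$ terms depends only on $\bar a,\bar b,\underline b$ (hence on $\Theta$), and not on $T$.
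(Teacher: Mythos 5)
Your proof is correct and takes essentially the same route as the paper: an elementary algebraic perturbation bound that uses $b,b^*\ge \underline{b}>0$ and the boundedness of $\Theta$ to control the difference of the ratios by $O_T(1/\log^{10}(T))$. If anything, your two-sided estimate $\bigl|\tfrac{a}{b}-\tfrac{a^*}{b^*}\bigr|=O_T(1/\log^{10}(T))$ (and the analogue for $\tfrac{a-1}{b}$) is slightly stronger than what the paper's proof establishes, since the paper only derives the one-sided inequalities (a lower bound on $\tfrac{a-1}{b}$ and an upper bound on $\tfrac{a}{b}$) that are the directions actually used later.
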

The proof of Lemma \ref{relaxed_bounds} can be found in Appendix \ref{sec:proof_of_relaxed_bounds}.

Now we are ready to proceed with the two cases for $K$.

\textbf{Case 1: }  $K \in \left[\frac{a^*-1 + \frac{1}{\log^{9}(T)}}{b^*}, \frac{a}{b}\right]$

For $i \le t$, define
\[
\mathcal{Z}_i := \left\{{\min(x_i,y_i) \le \frac{D_L}{a-bK} \le \max(x_i, y_i) \le \frac{D_U}{a-bK} \text{ or } \frac{D_L}{a-bK} \le \min(x_i,y_i) \le \frac{D_U}{a-bK} \le \max(x_i, y_i)} \right\}
\] 
and define 
\[
\kappa(j) = \left|\{0 \le i \le j : \mathcal{Z}_i \}\right|.
\]
Because Lemma \ref{relaxed_bounds} implies that $\frac{a}{b}  = \frac{a^*}{b^*} + O_T\left(\frac{1}{\log^{10}(T)}\right)$, we have for $K \in \left[\frac{a^*-1 + \frac{1}{\log^{9}(T)}}{b^*}, \frac{a}{b}\right]$ that $|a^*-b^*K| \le 1 - \frac{1}{\log^{9}(T)}$. Because $\epsilon \le \frac{1}{\log^{46}(T)}$, this implies that  $|a^*-b^*K|+O_T(\epsilon) \le 1- \frac{1}{2\log^9(T)}$. This will allow us to bound the $\rho$ in Lemma \ref{lemma:trunclincontbound} by $1- \frac{2}{\log^{9}(T)}$. Combining this with Lemma \ref{lemma:trunclincontbound}, we have the following piece-wise upper bound (note that we combined the $\mathcal{W}$ and the ``otherwise'' case using that $O_T(\epsilon)  \le 1- \frac{1}{2\log^9(T)}$ for suff large $T$),
\begin{equation}\label{eq:piecewisetrunc}
d_{j+1} \le    \begin{cases}
        \min\left(2\left(1- \frac{1}{2\log^9(T)}\right)  d_j, d_j + O_T(\epsilon)(|x_j|+\norm{D}_{\infty})\right) &  \text{if $\mathcal{Z}_j$}\\
        \left(1- \frac{1}{2\log^9(T)}\right) d_j &\text{otherwise. }
    \end{cases}
\end{equation}
Conditional on event $E_1^t$, for all $j\le t$, $|x_j| \le O_T(\log^2(T))$ by Lemma \ref{bounded_pos_cont} because $\norm{D}_{\infty} \le \log^2(T)$ and $|x| \le 4\log^2(T)$. Starting with the base case that $d_0 = d$, this with Equation \eqref{eq:piecewisetrunc} implies the following two relationships both hold for $d_{j+1}$ conditional on event $E_1^t$ for sufficiently large $T$. Equation \eqref{eq:dt_eq1a} holds because $\left(1- \frac{1}{2\log^9(T)}\right) \le 1$ for sufficiently large $T$ and using the second term in the $\min$ of Equation \eqref{eq:piecewisetrunc}. Equation \eqref{eq:dt_eq1b} holds using the first term in the $\min$ of Equation \eqref{eq:piecewisetrunc}.
\begin{equation}\label{eq:dt_eq1a}
    d_{j+1} \le d + \kappa(j)\cdot O_T(\epsilon\log^2(T))
\end{equation}
and
\begin{equation}\label{eq:dt_eq1b}
    d_{j+1} \le \left(\left(1- \frac{1}{2\log^9(T)}\right)^{j+1} 2^{\kappa(j)} \right) \cdot d.
\end{equation}
Equations \eqref{eq:dt_eq1a} and \eqref{eq:dt_eq1b} look almost like the desired result, and the remaining step is to show that $\kappa(j)$ is sufficiently ``small''. 

Next, define the event $A_j$ as
\[
    A_j := \Big\{\forall i \le \min(j, \log^{33}(T)) : d_i \le 2d + \epsilon\log^{36}(T)\Big\} \bigcap \left\{\forall \log^{33}(T) < i \le j : d_i \le \left(1 - \frac{1}{\log^{10}(T)}\right)^{i}d \right\}.
\]
By this construction, $A_t$ is exactly what we are trying to show in Lemma \ref{lemma:conditional_version} with $L = \log^{33}(T)$. We will now prove that $A_t$ holds for sufficiently large $T$ conditional on $E^*(x,K, \theta, W') \cap E_1^t$.

For sufficiently large $T$ and any $j \le t$, by construction of $A_j$ and because $d \le\frac{1}{\log^{10}(T)}$ and $\epsilon \le \frac{1}{\log^{46}(T)}$, we have that
\begin{equation}\label{eq:At}
    A_j \subseteq \left\{\forall 0 \le  i \le j : d_i \le \frac{3}{\log^{10}(T)} \right\}.
\end{equation}
Note that for event $\mathcal{Z}_i$ to hold, it must be the case that $x_i$ is within $d_i$ of either $\frac{D_U}{a-bK}$ or $\frac{D_L}{a-bK}$. Therefore, conditional on $E^*(x, K, \theta, W') \cap A_j$, we have for $j \ge \log^{33}(T)$, 
\begin{align*}
    \kappa(j) &= |\{0 \le i \le j : \mathcal{Z}_i\}| \\
    &\le \left|\left\{ 0 \le i \le j : \min \left(\left|x_i - \frac{D_U}{a-bK}\right|, \left| x_i - \frac{D_L}{a-bK}\right|\right) \le d_i \right\}\right| \\
    &\le \left|\left\{ 0 \le i \le j : \min \left(\left|x_i - \frac{D_U}{a-bK}\right|, \left| x_i - \frac{D_L}{a-bK}\right|\right) \le \frac{3}{\log^{10}(T)} \right\}\right| && \text{Equation \eqref{eq:At}} \\
    &= |\{0 \le i \le j : X(i,x,K,\theta,W')\}| \\
    &\le \log^{23}(T) + \frac{24B_Pj}{\log^{10}(T)}. && \text{$E^*(x,K,\theta, W')$} \\
    &= O_T\left(\frac{j+1}{\log^{10}(T)}\right)\numberthis \label{eq:Abound}
\end{align*}

We will now use Equations \eqref{eq:dt_eq1a} and \eqref{eq:dt_eq1b} to show that $A_{j+1}$ holds conditional on $E_1^t \cap E^*(x, K, \theta, W') \cap A_j$. In order to show that $A_{j+1}$ holds conditional on $A_j$, we must show that $d_{j+1}$ satisfies the necessary inequality in the definition of $A_{j+1}$. Consider the following two cases for $j \ge 0$.

If $j+1 \le \log^{33}(T)$, for sufficiently large $T$ conditional on $A_j \cap E_1^t \cap E^*(x,K, \theta, W')$,
\begin{align*}
d_{j+1} &\le d + \kappa(j) \cdot O_T(\epsilon \log^2(T)) && \text{Equation \eqref{eq:dt_eq1a}}\\
&= d + O_T(j\epsilon\log^2(T)) && \text{$\kappa(j) \le j+1$}\\
&\le d + O_T(\epsilon\log^{35}(T)) \\
&\le d + \log^{36}(T)\epsilon \\
&\le 2d + \log^{36}(T)\epsilon . \numberthis \label{eq:smallj}
\end{align*}
This is the necessary inequality that needs to be shown in order for $A_{j+1}$ to hold given that $A_j$ holds if $j+1 \le \log^{33}(T)$.

If $j+1 > \log^{33}(T)$, for sufficiently large $T$ conditional on $A_j \cap E_1^t \cap E^*(x,K, \theta, W')$,
\begin{align*}
    &d_{j+1} \\
    &\le \left(1- \frac{1}{2\log^9(T)}\right)^{j+1}2^{\kappa(j)} \cdot d && \text{Equation \eqref{eq:dt_eq1b}}\\
    &\le \left(1- \frac{1}{2\log^9(T)}\right)^{j+1}2^{O_T(\frac{j+1}{\log^{10}(T)})} \cdot d && \text{Equation \eqref{eq:Abound}} \\
    &= \left(1- \frac{1}{2\log^9(T)}\right)^{j+1}e^{O_T(\frac{j+1}{\log^{10}(T)})} \cdot d \\
    &\le \left(1- \frac{1}{2\log^9(T)}\right)^{j+1}\left(1+O_T\left(\frac{1}{\log^{10}(T)}\right) + O_T\left(\frac{1}{\log^{20}(T)}\right)\right)^{j+1} \cdot d && \text{     [$e^x \le 1 + x + x^2$ for $x \le 1$]}\\
    &= \left(1- \frac{1}{2\log^9(T)}\right)^{j+1}\left(1+O_T\left(\frac{1}{\log^{10}(T)}\right)\right)^{j+1} \cdot d \\
    &= \left(1-\frac{1}{2\log^9(T)}  +O_T\left(\frac{1}{\log^{10}(T)}\right) - \frac{1}{2\log^9(T)}\cdot O_T\left(\frac{1}{\log^{10}(T)}\right)\right)^{j+1} \cdot d \\
    &\le \left(1- \frac{1}{2\log^9(T)} +O_T\left(\frac{1}{\log^{10}(T)}\right) \right)^{j+1} \cdot d \\
    &\le \left(1 - \frac{1}{\log^{10}(T)}\right)^{j+1} \cdot d. && \text{for sufficiently large $T$} \numberthis \label{eq:largej}
\end{align*}
This is the necessary inequality that needs to be shown in order for $A_{j+1}$ to hold given that $A_j$ holds if $j + 1 \ge \log^{33}(T)$. 

Equations \eqref{eq:smallj} and \eqref{eq:largej} together imply that for sufficiently large $T$, $A_{j+1}$ holds conditional on $A_j \cap E_1^t \cap E^*(x,K, \theta, W')$. Note that $A_0$ always holds by definition because $d_0 = d$. Therefore, we can conclude by induction that $A_t$ must hold conditional on  $E_1^t \cap E^*(x,K, \theta, W')$ for sufficiently large $T$. Finally, by definition of $A_t$, this implies that conditional on $E^*(x,K, \theta, W') \cap E_1^t$ for sufficiently large $T$, for all $0 \le j \le t$,
\begin{equation} 
        d_j  \le 
        \begin{cases}
            \left(1 - \frac{1}{\log^{10}(T)}\right)^j \cdot d, & \text{if } j > \log^{33}(T)\\
           d + \tilde{O}_T( \epsilon), & \text{if } j \le \log^{33}(T).
        \end{cases}
    \end{equation}
Taking $E_{\mathrm{L}\ref{lemma:conditional_version}}(x, K, \theta, W') = E^*(x,K, \theta, W') \cap E_1^t$, by a union bound we have that $\P(E_{\mathrm{L}\ref{lemma:conditional_version}}(x, K, \theta, W') \ge 1-o_T(1/T^{20})$. This completes the proof of Lemma \ref{lemma:conditional_version} for Case 1.

\vspace{20mm}

\textbf{Case 2:} $K \in \left[\frac{a-1}{b}, \frac{a^*-1 + \frac{1}{\log^{9}(T)}}{b^*}\right]$

Define 
\[  
    \mathcal{W}_j := \left\{\min(x_j,y_j) \ge \frac{D_U}{a-bK} \text{ or } \max(x_j,y_j) \le \frac{D_L}{a-bK} \right\}
\]
and
\[
    \lambda(j) := |\{0 \le i \le j : \mathcal{W}_i\}|.
\]
For any $K \in \left[\frac{a-1}{b}, \frac{a^*-1 + \frac{1}{\log^{9}(T)}}{b^*}\right]$, we have that $|a^*-b^*K| - 1 \le O_T\left(\frac{1}{\log^{10}(T)}\right)$ by Lemma \ref{relaxed_bounds}. This with the fact that $\epsilon \le \frac{1}{\log^{46}(T)}$ implies that $(a^*-b^*K) + O_T(\epsilon) \le |a^*-b^*K| + O_T(\epsilon) \le 1 + O_T\left(\frac{1}{\log^{10}(T)}\right)$. This allows us to bound the $\rho$ in Lemma \ref{lemma:trunclincontbound} to be $1 + O_T\left(\frac{1}{\log^{10}(T)}\right)$. By Lemma \ref{lemma:trunclincontbound} and plugging this bound in for $\rho$, this gives the following bound.
{\fontsize{10}{10}
\begin{equation}\label{eq:H4_cases}
d_{j+1} \le    \begin{cases}
        O_T(\epsilon)\left(1+ O_T\left(\frac{1}{\log^{10}(T)}\right)\right) d_j & \text{If $\mathcal{W}_j$}\\
        \min\left(2\left(1+ O_T\left(\frac{1}{\log^{10}(T)}\right)\right)  d_j, \left(1+ O_T\left(\frac{1}{\log^{10}(T)}\right)\right)\left(d_j + O_T(\epsilon)(|x_j|+\norm{D}_{\infty})\right)\right) & \text{If $\mathcal{Z}_j$} \\
        \left(1+ O_T\left(\frac{1}{\log^{10}(T)}\right)\right) d_j & \text{Otherwise}\\
    \end{cases}
\end{equation}
}
Similar to in the proof of Case 1 above, by Lemma \ref{bounded_pos_cont} and the assumption that $\norm{D}_{\infty} \le \log^2(T)$, we have that conditional on event $E_1^t$, Equation \eqref{eq:H4_cases} implies the following two relationships. The first relationship comes from using the first term in the $\min$ of Equation \eqref{eq:H4_cases} and recursing.
\begin{equation}\label{eq:seconddt_bound_H2}
    d_{j+1} \le \left(1 + O_T\left(\frac{1}{\log^{10}(T)}\right)\right)^{j+1} \cdot 2^{\kappa(j)} \cdot O_T(\epsilon)^{\lambda(j)} \cdot d.
\end{equation}
 The second relationship comes from using the second term in the $\min$ of Equation \eqref{eq:H4_cases} and bounding $\left(1+ O_T\left(\frac{1}{\log^{10}(T)}\right)\right)(|x_j| + \norm{D}_{\infty}) = O_T(\log^2(T))$ under event $E_1^t$. This gives the recursive relationship of
 \begin{equation}\label{eq:H4_case_2}
     d_{j+1} \le \left(1+ O_T\left(\frac{1}{\log^{10}(T)}\right)\right)\left(O_T(\epsilon)\right)^{1_{\mathcal{W}_j}} \cdot d_{j} + O_T(\epsilon\log^2(T))1_{\mathcal{Z}_j}.
 \end{equation}
 In other words, at every step there is a multiplicative factor of $\left(1+ O_T\left(\frac{1}{\log^{10}(T)}\right)\right)$. When $\mathcal{W}_j$ holds, there is an additional multiplicative factor of $O_T(\epsilon)$. When $\mathcal{Z}_j$ holds, there is an additive factor of $O_T(\epsilon\log^2(T))$. Unwrapping Equation \eqref{eq:H4_case_2} gives that, at time $j+1$, any additive factor contributed at time $i \le j$ will be scaled by $ O_T(\epsilon)^{\lambda(j) - \lambda(i)}\left(1 + O_T\left(\frac{1}{\log^{10}(T)}\right)\right)^{j - i}$. This gives that
 {\fontsize{10}{10}
\begin{equation}\label{eq:seconddt_bound}
    d_{j+1} \le \left(1 + O_T\left(\frac{1}{\log^{10}(T)}\right)\right)^{j+1}O_T(\epsilon)^{\lambda(j)} \cdot d + O_T(\epsilon \log^2(T))  \cdot \sum_{i=0}^j 1_{\mathcal{Z}_i} O_T(\epsilon)^{\lambda(j) - \lambda(i)}\left(1 + O_T\left(\frac{1}{\log^{10}(T)}\right)\right)^{j - i}.
\end{equation}
}
Again this almost looks like the desired result, except we need to show that the additional terms involving $\kappa(j)$ and $\lambda(j)$ are not ``too large".  We will use the following lemma that lower bounds $\lambda(j)$ using the same event $A_j$ as defined above in the first case. Similar to Case 1, we will then use this to show that for sufficiently large $T$, $A_t$ holds conditional on $E_1^t \cap E_{\mathrm{L}\ref{Ht_bound}}(x,K, \theta, W')\cap E^*(x, K, \theta, W')$.

    \begin{lemma}\label{Ht_bound}
        Suppose $|1 - (a^*-b^*K)| = O_T\left( \frac{1}{\log^{9}(T)}\right)$. Then in the setting of Problem \ref{problem_main} and using the notation and assumptions of Lemma \ref{lemma:conditional_version}, there exists a $\mathcal{Y}_{\mathrm{L}\ref{Ht_bound}} \in \mathbb{R}^t$ that only depends on $x,K,\theta$ such that the event $E_{\mathrm{L}\ref{Ht_bound}}(x,K, \theta, W') := \{W' \in \mathcal{Y}_{\mathrm{L}\ref{Ht_bound}}\}$ satisfies $\P(E_{\mathrm{L}\ref{Ht_bound}}(x,K, \theta, W')) = 1 - o_T(1/T^{20})$ and that for all $ t_1< t_2 \le t$ satisfying $t_2 - t_1 \ge \log^8(T)$, the following is true conditional on event $A_{t_2} \cap E_{\mathrm{L}\ref{Ht_bound}}(x,K, \theta, W')$ for sufficiently large $T$:
        \[
            \left(1 + O_T\left(\frac{1}{\log^{10}(T)}\right)\right)^{t_2 + 1 - t_1} O_T(\epsilon)^{\lambda(t_2) - \lambda(t_1)} \le \left(1 - \frac{1}{2\log^9(T)}\right)^{t_2 + 1 - t_1}.
        \]

    \end{lemma}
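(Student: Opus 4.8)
Write $N = t_2+1-t_1 \ge \log^8(T)$. Taking logarithms, the claimed inequality is equivalent to
$N\log\!\big(1+O_T(\tfrac{1}{\log^{10}(T)})\big) + \big(\lambda(t_2)-\lambda(t_1)\big)\log\!\big(O_T(\epsilon)\big) \le N\log\!\big(1-\tfrac{1}{2\log^9(T)}\big)$. Since $\epsilon\le \log^{-46}(T)$ we have $-\log(O_T(\epsilon)) = \Omega_T(\log\log(T))$, while $\log(1+O_T(\tfrac{1}{\log^{10}(T)})) = O_T(\tfrac{1}{\log^{10}(T)})$ and $-\log(1-\tfrac{1}{2\log^9(T)}) = \tfrac{1}{2\log^9(T)}(1+o_T(1))$. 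Hence it suffices to exhibit an event $E_{\mathrm{L}\ref{Ht_bound}}(x,K,\theta,W')$ with $\P(E_{\mathrm{L}\ref{Ht_bound}}) = 1-o_T(1/T^{20})$ on which, for \emph{every} such interval, $\lambda(t_2)-\lambda(t_1) \ge \Omega_T(N)$ — a far stronger bound than needed, which easily absorbs all the polylog slack. I will also intersect everything with $E_1^t$, so that Lemma~\ref{bounded_pos_cont} bounds all states and controls by $O_T(\log^2(T))$ and the $\theta$-vs-$\theta^*$ mismatch $(a^*-a)x_i + (b^*-b)C^\theta_K(x_i)$ is $O_T(\epsilon\log^2(T)) = o_T(1)$. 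Next, on $A_{t_2}$ one has $d_i \le 2d + \epsilon\log^{36}(T) \le \tfrac{3}{\log^{10}(T)}$ for every $i\le t_2$, so whenever $x_i \ge \frac{D_U}{a-bK} + \frac{3}{\log^{10}(T)}$ or $x_i \le \frac{D_L}{a-bK} - \frac{3}{\log^{10}(T)}$ (call such $x_i$ \emph{deep}), the event $\mathcal{W}_i$ holds; thus $\lambda(t_2)-\lambda(t_1)$ is at least the number of $i\in(t_1,t_2]$ with $x_i$ deep, and it suffices to lower bound this count.

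\textbf{A one-block estimate.} Since $\mathcal{D}$ is nondegenerate with mean $0$, $\P(w>0)=:c_0\in(0,1)$, so there is a $T$-independent $m_+>0$ with $\P(w\ge m_+)\ge c_0/2$. Let $c' = O_T(1)$ be a large integer constant specified below. I claim that for every $i$,
$\P\big(\exists\, j\in[i,i+c'] : x_j \text{ is deep} \,\big|\, \mathcal{F}_i\big) \ge p := (c_0/2)^{c'} > 0$. If $x_i$ is already deep this is immediate. Otherwise $x_i$ lies in the bounded window $[\frac{D_L}{a-bK}-\frac{3}{\log^{10}(T)},\,\frac{D_U}{a-bK}+\frac{3}{\log^{10}(T)}]$, of width $\le D_U - D_L + o_T(1)$ since $a-bK \in [1-\tfrac{1}{\log^9(T)}+O_T(\epsilon),\,1]$ by Lemma~\ref{relaxed_bounds}. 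Recall $x_{j+1} = \mathrm{clip}\big((a-bK)x_j,\,D_L,\,D_U\big) + w_j + O_T(\epsilon\log^2(T))$. On the event $\{w_j\ge m_+ \text{ for all } i\le j< i+c'\}$ (probability $\ge p$), a short case analysis on whether $(a-bK)x_j$ lies in, above, or below $[D_L,D_U]$ shows that, as long as $x_j$ is not deep, either $(a-bK)x_j > D_U$ — in which case $x_{j+1} = D_U + w_j + o_T(1) \ge D_U + m_+ - o_T(1)$ is deep, using that $\frac{D_U}{a-bK} - D_U = O_T(\tfrac{1}{\log^9(T)}) = o_T(m_+)$ — or else $x_{j+1} \ge x_j + m_+/2$. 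So the state increases by at least $m_+/2$ per step until it becomes deep, and taking $c' = \lceil 2(D_U - D_L + 1)/m_+\rceil + 2$ forces a deep state within $c'$ steps, proving the claim.

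\textbf{Concentration and conclusion.} Partition $(t_1,t_2]$ into $m := \lfloor N/(c'+1)\rfloor = \Omega_T(N)$ consecutive blocks of length $c'+1$, and let $B_k$ indicate that block $k$ contains a deep time. By the one-block estimate, $\E[B_k \mid \mathcal{F}_{\tau_k}] \ge p$ where $\tau_k$ starts block $k$, so $\sum_{j\le k}(B_j - p)$ is a submartingale with increments bounded by $1$; by the Azuma–Hoeffding inequality, $\sum_k B_k \ge pm/2 = \Omega_T(N)$ except on an event of probability $e^{-\Omega_T(m)} \le e^{-\Omega_T(\log^8(T))}$. Taking the union over all $O(T^2)$ pairs $t_1<t_2\le t$ with $t_2-t_1\ge \log^8(T)$ and intersecting with $E_1^t$ yields the desired $E_{\mathrm{L}\ref{Ht_bound}}(x,K,\theta,W')$ with probability $1-o_T(1/T^{20})$ (here $e^{-\Omega_T(\log^8(T))}$ beats $T^{-22}$ with room to spare, which is why the hypothesis restricts to intervals of length $\ge\log^8(T)$). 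On this event the blocks are disjoint, so $\lambda(t_2)-\lambda(t_1)$ is at least the number of deep times in $(t_1,t_2]$, which is $\ge \sum_k B_k = \Omega_T(N)$; plugging this into the first paragraph gives the lemma.

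\textbf{Main obstacle.} The crux is the one-block estimate. Because the noise need not be able to traverse the entire window $[D_L,D_U]$ in a single step — its support may be bounded — one cannot in general push the state deep into a truncation region in one step, and must instead drive it there over a constant number of steps while carefully tracking the clipping behavior near the boundary and the $\epsilon$-mismatch between $\theta$ and $\theta^*$. Everything else (the logarithmic algebra, the bounded-difference concentration, the union bound) is routine once this estimate and the reduction via $A_{t_2}$ are in place.
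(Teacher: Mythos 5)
Your proof is correct in its essentials, but it reaches the key counting bound by a genuinely different route than the paper. The paper does not use a constant-probability drift argument: it defines $E_{\mathrm{L}\ref{Ht_bound}}$ via sums of noises over sub-windows of length $\lceil\log^5(T)\rceil+1$ (every stretch of length $\ge\log^8(T)$ must contain a sub-window whose noise sum exceeds $7\log^2(T)$, proved by a CLT argument with probability $1/2$ per disjoint sub-window), and then argues by contradiction (Lemmas~\ref{lemma:all_pairs} and~\ref{lemma:notsatisfy_yi}): since $|1-(a^*-b^*K)|=O_T(1/\log^9(T))$, inside the truncation window the state moves like a random walk, $y_{i+1}-y_i = w_i \pm O_T(1/\log^7(T))$, so a $\log^8(T)$-stretch with no $\mathcal{W}_i$ would force $|y_i|\ge 3\log^2(T)$, a contradiction; this yields only $\lambda(t_2)-\lambda(t_1)=\Omega_T\bigl(\frac{t_2-t_1}{\log^8(T)}\bigr)$, which still suffices because $\log\log T/\log^8 T \gg 1/\log^9 T$. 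Your argument instead exploits $D_{\mathrm{U}},D_{\mathrm{L}}=O_T(1)$ (Assumption~\ref{assum:constraints}) to get a constant-width window, so a run of $c'=O(1)$ consecutive noises above a constant $m_+$ — an event of constant probability $p$ — forces the state out of the window, and Azuma over disjoint blocks gives the much stronger $\lambda(t_2)-\lambda(t_1)=\Omega_T(t_2-t_1)$, making the final logarithmic algebra trivial. What each buys: yours is more elementary (no CLT, hence no distribution-dependent rate-of-convergence caveat as in the paper's "the rate of this convergence depends on $\mathcal{D}$") and gives a stronger deep-time count; the paper's random-walk/CLT version is insensitive to the width of $D$ (its appendix computations only use $\norm{D}_\infty\le\log^2(T)$), whereas your block length $c'$ and probability $p=(c_0/2)^{c'}$ would degrade badly if the safe region were allowed to grow polylogarithmically. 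One small point of polish: in the one-block estimate you justify the $\theta$-vs-$\theta^*$ mismatch bound via $E_1^t$ and Lemma~\ref{bounded_pos_cont}, but the submartingale/Azuma step needs $\E[B_k\mid\mathcal{F}_{\tau_k}]\ge p$ for \emph{every} history; the clean fix is already implicit in your own observation — a non-deep state lies in the $O_T(1)$-width window and is therefore $O_T(1)$-bounded, so the mismatch is $O_T(\epsilon)=o_T(m_+)$ without invoking $E_1^t$, and the conditional bound holds almost surely.
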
 
    The proof of Lemma \ref{Ht_bound} can be found in Appendix \ref{sec:proof_of_Ht_bound}.

We will now show that the event $A_{j+1}$ holds conditional on $E_{\mathrm{L}\ref{Ht_bound}}(x,K, \theta, W') \cap E^*(x, K, \theta, W') \cap E_1^t \cap A_j$.

For $j < \log^{8}(T)$,  conditional on $E_1^t \cap E_{\mathrm{L}\ref{Ht_bound}}(x,K, \theta, W')\cap E^*(x, K, \theta, W') \cap A_j$ and for sufficiently large $T$,
{\fontsize{10}{10}
    \begin{align*}
        &d_{j+1} \\
        &\le \left(1 + O_T\left(\frac{1}{\log^{10}(T)}\right)\right)^{j+1}O_T(\epsilon)^{\lambda(j)} \cdot d \\
        & \quad \quad + O_T(\epsilon \log^2(T))  \cdot \sum_{i =0}^j 1_{\mathcal{Z}_j} O_T(\epsilon)^{\lambda(j) - \lambda(i)}\left(1 + O_T\left(\frac{1}{\log^{10}(T)}\right)\right)^{j - i}  && \text{ Eq. \eqref{eq:seconddt_bound}}\\
        &\le \left(1 + O_T\left(\frac{1}{\log^{10}(T)}\right)\right)^{\log^{8}(T)+1} \cdot d + O_T(\epsilon \log^2(T))  \cdot \sum_{i =0}^j\left(1 + O_T\left(\frac{1}{\log^{10}(T)}\right)\right)^{j} && \text{$\epsilon \le O_T(1)$}\\
        &\le  \left(1 + O_T\left(\frac{1}{\log^2(T)}\right)\right) \cdot d + O_T(\epsilon \log^2(T))  \cdot (j+1) \cdot \left(1 + O_T\left(\frac{1}{\log^{10}(T)}\right)\right)^{j} && \text{Lemma \ref{lemma:inequality_for_f_and_g}} \\
        &\le  \left(1 + O_T\left(\frac{1}{\log^2(T)}\right)\right) \cdot d + O_T(\epsilon \log^2(T))  \cdot (\log^8(T)+1) \cdot \left(1 + O_T\left(\frac{1}{\log^{10}(T)}\right)\right)^{\log^8(T)} \\
        &\le \left(1 + O_T\left(\frac{1}{\log^2(T)}\right)\right)\left(d + O_T\left(\epsilon \log^{10}(T)\right)\right) && \text{Lemma \ref{lemma:inequality_for_f_and_g}} \\
        & \le 2d + O_T(\epsilon\log^{10}(T)) && \text{Suff. large $T$} \\
        &\le 2d + \epsilon \log^{36}(T).  && \text{Suff. large $T$} \numberthis \label{eq:smallt}
    \end{align*}
}

Above, we used the following result:
\begin{lemma}\label{lemma:inequality_for_f_and_g}
    Suppose $g(T)$ is a non-negative function of $T$ such that $g(T) > 1$ for sufficiently large $T$. Furthermore, suppose $f(T)$ is a non-negative function of $T$ that satisfies 
$f(T)g(T) \le 1/2$ for sufficiently large $T$. Then we have that
    \[
      1 + f(T)g(T) \le  (1 + f(T))^{g(T)} \le 1 + 2f(T)g(T).
    \]
    This implies that
    \[
        (1 + f(T))^{g(T)} = 1 + \Theta_T(f(t)\cdot g(T)).
    \]
\end{lemma}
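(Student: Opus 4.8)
The plan is to prove the two displayed inequalities separately and then read off the $\Theta_T$ statement as an immediate corollary. For the lower bound $(1 + f(T))^{g(T)} \ge 1 + f(T)g(T)$, I would invoke the generalized Bernoulli inequality: for any real exponent $r \ge 1$ and any $y \ge -1$ one has $(1+y)^r \ge 1 + ry$. Applying this with $y = f(T) \ge 0$ and $r = g(T) > 1$ (both valid for sufficiently large $T$ by hypothesis) gives the lower bound directly, with no further computation.

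For the upper bound $(1 + f(T))^{g(T)} \le 1 + 2 f(T) g(T)$, I would first pass to exponentials. Since $1 + f(T) \le e^{f(T)}$ and $g(T) > 0$, raising both sides to the power $g(T)$ yields $(1 + f(T))^{g(T)} \le e^{f(T)g(T)}$. Now write $y := f(T)g(T)$, which by assumption satisfies $0 \le y \le 1/2 \le 1$ for sufficiently large $T$. On the interval $[0,1]$ the convex function $t \mapsto e^{t}$ lies below its chord, so $e^{y} \le e^{0} + (e^{1}-e^{0})\,y = 1 + (e-1)y \le 1 + 2y$, using $e - 1 < 2$. Chaining these gives $(1 + f(T))^{g(T)} \le 1 + 2 f(T) g(T)$. (An equivalent route is to use $e^{y} \le 1 + y + y^{2}$ for $y \le 1$ together with $y^{2} \le y$ since $y \le 1/2 \le 1$; either version works.)

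Finally, the two bounds together say precisely that $(1 + f(T))^{g(T)} - 1$ lies between $1\cdot f(T)g(T)$ and $2\cdot f(T)g(T)$ for all sufficiently large $T$, which — given that $f$ and $g$ are non-negative, matching the conventions in Appendix~\ref{app:notation} — is exactly the assertion $(1 + f(T))^{g(T)} = 1 + \Theta_T(f(T)g(T))$. I do not anticipate any real obstacle here; the only point that needs minor care is confirming that the generalized Bernoulli inequality and the chord bound for $e^{t}$ are applied within their valid ranges, and this is guaranteed by the hypotheses $g(T) > 1$ and $f(T)g(T) \le 1/2$ holding for large $T$.
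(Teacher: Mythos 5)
Your proof is correct. The lower bound is obtained exactly as in the paper, via the generalized Bernoulli inequality $(1+x)^r \ge 1+rx$ with $x=f(T)\ge 0$ and $r=g(T)>1$. For the upper bound you take a mildly different (but equally valid) elementary route: you bound $(1+f(T))^{g(T)} \le e^{f(T)g(T)}$ and then control $e^{y}$ on $[0,1]$ by its chord, $e^{y}\le 1+(e-1)y\le 1+2y$ (or equivalently via $e^{y}\le 1+y+y^{2}$ and $y^{2}\le y$), whereas the paper instead uses the inequality $(1+x)^r \le \frac{1}{1-rx}$ for $x\in[0,1/r)$ and then writes $\frac{1}{1-f(T)g(T)} = 1+\frac{f(T)g(T)}{1-f(T)g(T)} \le 1+2f(T)g(T)$ using $f(T)g(T)\le 1/2$. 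Both arguments exploit the hypothesis $f(T)g(T)\le 1/2$ (yours really only needs $f(T)g(T)\le 1$, since $e-1<2$), and both deliver the same two-sided bound from which the $1+\Theta_T(f(T)g(T))$ statement follows immediately; there is no gap in your version.
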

\begin{proof}
    First, we note that for any $x \ge 0$ and $r > 1$, $(1+x)^r \ge 1 + rx$. This implies that for sufficiently large $T$, we have that
    \[
        (1+f(T))^{g(T)} \ge 1 + f(T)g(T).
    \]
    This proves one direction of the desired equation. For the other direction, note that for $r > 0$ and $x \in [0,1/r)$, we have $(1+x)^r \le \frac{1}{1-rx}$. This  implies that
    \begin{align*}
        (1+f(T))^{g(T)}  &\le \frac{1}{1 - f(T)g(T)} \\
        &= 1 + \frac{f(T)g(T)}{1 - f(T)g(T)} \\
        &\le 1 + 2f(T)g(T).
    \end{align*}
    This proves the other direction of the desired equation. Therefore we have that $(1+f(T))^{g(T)} = 1  + \Theta(f(T)g(T))$.
\end{proof}

For $\log^{8}(T) \le j < \log^{33}(T)$, conditional on event $E_1^t \cap E_{\mathrm{L}\ref{Ht_bound}}(x,K, \theta, W')\cap E^*(x, K, \theta, W') \cap A_j$ and for sufficiently large $T$,
{\fontsize{10}{10}
\begin{align*}
        &d_{j+1} \\
        &\le \left(1 + O_T\left(\frac{1}{\log^{10}(T)}\right)\right)^{j+1}O_T(\epsilon)^{\lambda(j)} \cdot d \\
        &\quad \quad + O_T(\epsilon \log^2(T))  \cdot \sum_{i=0}^j 1_{\mathcal{Z}_j} O_T(\epsilon)^{\lambda(j) - \lambda(i)}\left(1 + O_T\left(\frac{1}{\log^{10}(T)}\right)\right)^{j - i}  && \text{Eq \eqref{eq:seconddt_bound}}\\
        &\le \left(1 + O_T\left(\frac{1}{\log^{10}(T)}\right)\right)^{j+1-0}O_T(\epsilon)^{\lambda(j) - \lambda(0)} \cdot d  \\
        &\quad \quad+ O_T(\epsilon \log^2(T))  \cdot \sum_{i=0}^j 1_{\mathcal{Z}_j} O_T(\epsilon)^{\lambda(j) - \lambda(i)}\left(1 + O_T\left(\frac{1}{\log^{10}(T)}\right)\right)^{j - i} \\
        &\le \left(1 - \frac{1}{2\log^9(T)}\right)^{j+1} d  \\
        &\quad \quad+ O_T(\epsilon \log^2(T))  \cdot \sum_{i=0}^j 1_{\mathcal{Z}_j} O_T(\epsilon)^{\lambda(j) - \lambda(i)}\left(1 + O_T\left(\frac{1}{\log^{10}(T)}\right)\right)^{j - i}  && \text{Lemma \ref{Ht_bound}} \\
        &\le d + O_T(\epsilon \log^2(T))  \sum_{i=0}^{\lceil j - \log^8(T) \rceil-1}  1_{\mathcal{Z}_j}O_T(\epsilon)^{\lambda(j) - \lambda(i)}\left(1 + O_T\left(\frac{1}{\log^{10}(T)}\right)\right)^{j + 1 - i} \\
         & \quad \quad \quad  + O_T(\epsilon \log^2(T)) \sum_{ i = \lceil j - \log^8(T) \rceil}^j\left(1 + O_T\left(\frac{1}{\log^{10}(T)}\right)\right)^{j - i} \\
        &\le d + O_T(\epsilon \log^2(T)) \cdot \sum_{i=0}^{\lceil j - \log^8(T) \rceil-1}   \left( 1_{\mathcal{Z}_j} \cdot \left(1 - \frac{1}{2\log^9(T)}\right)^{j+1-i} \right) \\
        & \quad \quad \quad +  O_T(\epsilon \log^{2}(T)) \cdot  \sum_{ i = \lceil j - \log^8(T) \rceil}^j\left(1 + O_T\left(\frac{1}{\log^{10}(T)}\right)\right)^{j-i}&& \text{Lemma \ref{Ht_bound}}\\
        &\le d + O_T(\epsilon \log^2(T)) \cdot\sum_{i=0}^{\lceil j - \log^8(T) \rceil-1}   \left( 1_{\mathcal{Z}_j} \cdot O_T(1) \right) \\
        & \quad \quad \quad +  O_T(\epsilon \log^{2}(T)) \cdot \sum_{ i = \lceil j - \log^8(T) \rceil}^j\left(1 + O_T\left(\frac{1}{\log^{10}(T)}\right)\right)^{j-i} \\
        &\le d + O_T(\epsilon \log^2(T)) \cdot\sum_{i=0}^{\lceil j - \log^8(T) \rceil-1}  \left( 1_{\mathcal{Z}_j} \cdot O_T(1) \right) \\
        & \quad \quad \quad +  O_T(\epsilon \log^{2}(T)) \cdot  \sum_{ i = \lceil j - \log^8(T) \rceil}^j\left(1 + O_T\left(\frac{1}{\log^{10}(T)}\right)\right)^{\log^8(T)} \\       
        &\le d + O_T(\epsilon \log^2(T)) \cdot \sum_{i=0}^{\lceil j - \log^8(T) \rceil-1}  \left( 1_{\mathcal{Z}_j} \cdot O_T(1) \right) \\
        & \quad \quad \quad +  O_T(\epsilon \log^{2}(T)) \cdot  \sum_{ i = \lceil j - \log^8(T) \rceil}^j\left(1 + O_T\left(\frac{1}{\log^{2}(T)}\right)\right) && \text{Lemma \ref{lemma:inequality_for_f_and_g}} \\ 
        &\le d + O_T(\epsilon \log^{35}(T))  +  O_T\left(\epsilon \log^{10}(T)\right) \\
        &\le d + O_T(\epsilon \log^{35}(T)) && \\
        &\le d + \epsilon \log^{36}(T). && \text{Suff large $T$}
\end{align*}
}
Finally, for $j \ge \log^{33}(T)$,  conditional on event $E_1^t \cap E_{\mathrm{L}\ref{Ht_bound}}(x,K, \theta, W')\cap E^*(x, K, \theta, W') \cap A_j$ and for sufficiently large $T,$
\begin{align*}
    d_{j+1} &\le \left(1 + O_T\left(\frac{1}{\log^{10}(T)}\right)\right)^{j+1} \cdot 2^{\kappa(j)} \cdot O_T(\epsilon)^{\lambda(j)} \cdot d && \text{Equation \eqref{eq:seconddt_bound_H2}}\\
    &\le \left(1 + O_T\left(\frac{1}{\log^{10}(T)}\right)\right)^{j+1-0} \cdot O_T(\epsilon)^{\lambda(j) - \lambda(0)} \cdot 2^{\kappa(j)}  \cdot d\\
    &\le \left(1 - \frac{1}{2\log^{9}(T)}\right)^{j+1}2^{\kappa(j)} \cdot d && \text{Lemma \ref{Ht_bound}} \\
    &\le \left(1 - \frac{1}{\log^{10}(T)}\right)^{j+1} \cdot d. && \text{As in Equation \eqref{eq:largej}}
\end{align*}

Combining all three cases, we have that for all $j \ge 0$, conditional on $E_1^t \cap E_{\mathrm{L}\ref{Ht_bound}}(x,K, \theta, W')\cap E^*(x, K, \theta, W') \cap A_j$, $A_{j+1}$ holds. As in Case 1, we can conclude by induction using $A_0$ as the base case to get that conditional on $E_{\mathrm{L}\ref{Ht_bound}}(x,K, \theta, W') \cap E^*(x,K, \theta, W') \cap E_1^t$, the event $A_t$ holds, which implies that 
\begin{equation} 
        d_j  \le 
        \begin{cases}
            \left(1 - \frac{1}{\log^{10}(T)}\right)^j \cdot d, & \text{if } j > \log^{33}(T)\\
           2d + \tilde{O}_T( \epsilon), & \text{if } j \le \log^{33}(T).
        \end{cases}
    \end{equation}
Taking $E_{\mathrm{L}\ref{lemma:conditional_version}}(x, K, \theta, W') = E_{\mathrm{L}\ref{Ht_bound}}(x,K, \theta, W') \cap E^*(x,K, \theta, W') \cap E_1^t$, we have by a union bound that $\P(E_{\mathrm{L}\ref{lemma:conditional_version}}(x, K, \theta, W')) = 1-o_T(1/T^{20})$. This completes the proof of Lemma \ref{lemma:conditional_version} for Case 2.

\end{proof}

\subsection{Proof of Lemma \ref{lemma:trunclincontboundcontroller} and Lemma \ref{lemma:trunclincontbound}}\label{sec:proof_of_lemma:trunclincontboundcontroller}
\begin{proof}
    We have four cases depending on the values of $x,y$. We will prove the results of Lemma \ref{lemma:trunclincontboundcontroller} and Lemma  \ref{lemma:trunclincontbound} for each of these cases separately. WLOG assume that $x \le y$.

    \textbf{Case 1:} $\frac{D_L}{a-bK} \le x \le y \le \frac{D_U}{a-bK}$.

    In this case, $C_K^\theta(x) = -Kx$ and $C_K^\theta(y) = -Ky$, and therefore the following two equations hold.
    Case 1 Lemma \ref{lemma:trunclincontbound}:
    \[
    |a^*x + b^*C_K^\theta(x) - a^*y - b^*C_K^\theta(y)| = |a^*-b^*K|d = (|a^*-b^*K| + O_T(\epsilon))d.
    \]
Case 1 Lemma \ref{lemma:trunclincontboundcontroller}:
\[
        |C_K^\theta(x) - C_K^\theta(y)| = Kd \le \frac{a}{b} \cdot d \le \frac{\bar{a}}{\underline{b}} \cdot d= O_T(d).
    \]

     \vspace{5mm}

     \textbf{Case 2:} $\frac{D_U}{a-bK} \le x \le y$ or $x \le y \le \frac{D_L}{a-bK}$ (which is $\mathcal{W}$ of Lemma \ref{lemma:trunclincontbound}).

     First, assume the former is true. Then $C_K^\theta(x) = \frac{D_U - ax}{b}$ and likewise $C_K^\theta(y) = \frac{D_U - ay}{b}$. Therefore the following equations hold.
     
    Case 2 Lemma \ref{lemma:trunclincontbound}:
\begin{align*}
    |a^*x + b^*C_K^\theta(x) - a^*y - b^*C_K^\theta(y)| &= d\left|a^* - \frac{a}{b}b^*\right| \\
    &= d\left|\frac{a^*b-ab^*}{b}\right|\\
    &\le d\frac{\max\left((a+\epsilon)b - a(b-\epsilon), |(a-\epsilon)b - a(b+\epsilon)|\right)}{b}\\
    &= d \frac{\epsilon b + \epsilon a}{b} \\
    &\le d \frac{\epsilon \bar{b} + \epsilon \bar{a}}{\underline{b}} \\
    &\le O_T(\epsilon)d.
\end{align*}
Case 2 Lemma \ref{lemma:trunclincontboundcontroller}:
\[
    |C_K^\theta(y) - C_K^\theta(x)| = \frac{a}{b} \cdot d \le \frac{\bar{a}}{\underline{b}} \cdot d= O_T(d).
\]
The same logic holds for when $x \le y \le \frac{D_L}{a-bK}$.

    \vspace{5mm}

    \textbf{Case 3:} $x \le \frac{D_L}{a-bK}$ and $y \ge \frac{D_U}{a-bK}$

In this case, $C_K^\theta(x) = \frac{D_L - ax}{b}$ and $C_K^\theta(y) = \frac{D_U - ay}{b}$. 
We will use the fact that $|a-bK|d = |a-bK||y-x| \ge |D_U - D_L|$ in this case.

Case 3 Lemma \ref{lemma:trunclincontbound}:
\begin{align*}
    &| a^*x + b^*C_K^\theta(x) - a^*y - b^*C_K^\theta(y)| \\
    &= \left|\frac{b^*}{b}\left(D_L - D_U\right) + \left(a^* - \frac{a}{b}b^*\right)(x-y)\right| \\
    &\le \frac{b^*}{b}|D_U - D_L| +  \left|a^* - \frac{a}{b}b^*\right|d \\
    &\le \frac{b^*}{b}|a-bK|d +  \left|a^* - \frac{a}{b}b^*\right|d \\
    &\le \frac{b^*}{b}|a^*-b^*K|d + \frac{b^*}{b}\left|a-a^*+(b^*-b)K\right|d +  \left|a^* - \frac{a}{b}b^*\right|d \\
    &\le |a^*-b^*K|d + \left|\frac{b^*}{b} - 1\right||a^*-b^*K|d +  \frac{b^*}{b}|a-a^*+(b^*-b)K|d +  \left|a^* - \frac{a}{b}b^*\right|d \\
    &\le \left(|a^*-b^*K| + O_T(\epsilon)\right)d. \numberthis \label{eq:case3dudl}
\end{align*}
In the last line we used that $|a^*-b^*K| \le a^* + b^*|K| \le \bar{a} + \bar{b}\frac{\bar{a} + 1}{\underline{b}} = O_T(1)$, that $|\frac{b^*}{b} - 1| \le \frac{\epsilon}{\underline{b}} = O_T(\epsilon)$, that $|a-a^* + (b^*-b)K| \le \epsilon(1+|K|) \le \epsilon(1+\frac{\bar{a}+1}{\underline{b}})=  O_T(\epsilon)$, and that $|a^* - \frac{a}{b}b^*| \le \epsilon + \frac{a}{b}\epsilon \le \epsilon + \frac{\bar{a}}{\underline{b}}\epsilon= O_T(\epsilon)$. 

    Case 3 Lemma \ref{lemma:trunclincontboundcontroller}:

\begin{align*}
    |C_K^\theta(y) - C_K^\theta(x)| &=\left|\frac{1}{b}\left(D_U - D_L\right) + \frac{a}{b}(x-y)\right| \\
    &=\frac{1}{b}\left|D_U - D_L\right| + \frac{a}{b}|x-y| \\
    &\le \frac{1}{b}|a-bK|d + \frac{a}{b}d \\
    &\le \frac{1}{\underline{b}}d + \frac{\bar{a}}{\underline{b}}d \\
    &= O_T(d).
    \end{align*}
    \vspace{5mm}
    
    \textbf{Case 4:}  If $\frac{D_L}{a-bK} \le x \le \frac{D_U}{a-bK}$ and $y \ge \frac{D_U}{a-bK}$. Note that by symmetry, this is equivalent to $\frac{D_L}{a-bK} \le y \le \frac{D_U}{a-bK}$ and $x \le \frac{D_L}{a-bK}$. We will first assume the former. For Lemma \ref{lemma:trunclincontbound}, this case is equivalent to $\mathcal{Z}$.

    \vspace{5mm}
    Case 4 Lemma \ref{lemma:trunclincontbound}:

    In this case, $C_K^\theta(x) = -Kx$ and $C_K^\theta(y) = \frac{D_U - ay}{b}$. Furthermore, in this case $|y-x| \ge \left|y - \frac{D_U}{a-bK}\right|$. Therefore, in this case we have
    \begin{align*}
    &| a^*x + b^*C_K^\theta(x) - a^*y - b^*C_K^\theta(y)| \\
    &= |a^*x + b^*C_K^\theta(x) - a^*y -b^*Ky + b^*Ky - b^*C_K^\theta(y)| \\
    &\le |(a^*-b^*K)x - (a^*-b^*K)y| + b^*\left|-Ky - \frac{D_U-ay}{b}\right| \\
    &\le |(a^*-b^*K)x - (a^*-b^*K)y| + b^*\left|\frac{(a-bK)y-D_U}{b}\right| \\
    &\le |(a^*-b^*K)x - (a^*-b^*K)y| + \frac{b^*|a-bK|}{b}\left|y-\frac{D_U}{a-bK}\right| \\
    &\le |(a^*-b^*K)x - (a^*-b^*K)y| + \frac{b^*|a-bK|}{b}\left|y-x\right| \\
    &= |a^*-b^*K|d + |a-bK|\frac{b^*}{b}d \\
    &\le |a^*-b^*K|d + |a-bK|d + \left| 1 -\frac{b^*}{b} \right||a-bK|d \\
    &\le 2|a^*-b^*K|d + |a-bK-(a^*-b^*K)|d + \left| 1 -\frac{b^*}{b} \right||a-bK|d \\
    &\le 2(|a^*-b^*K| + O_T(\epsilon))d .&& \text{As in Equation \eqref{eq:case3dudl}} 
\end{align*}
Alternatively, note that in this case, 
\begin{equation}\label{eq:astar1}
    (a^*-b^*K)|x| \le (a-bK)|x| + O_T(\epsilon)|x|
\end{equation}
and 
\begin{equation}\label{eq:astar2}
    (a-bK)x \le D_U \le (a-bK)y.
\end{equation}
Therefore,
\begin{align*}
   \left|(a^*-b^*K)x -D_U \right| &\le \left|(a-bK)x - D_U\right| + O_T(\epsilon)|x| && \text{Equation \eqref{eq:astar1}}\\
   &\le \left| (a-bK)x -  (a-bK)y \right| + O_T(\epsilon)|x| && \text{Equation \eqref{eq:astar2}}\\
   &\le  |a-bK|d + O_T(\epsilon)|x| \\
   &\le |a^*-b^*K|d + O_T(\epsilon)(d+|x|). \numberthis \label{eq:DU_equation}
\end{align*}
Therefore we can find an alternative bound on $| a^*x + b^*C_K^\theta(x) - a^*y - b^*C_K^\theta(y)|$, using Equation \eqref{eq:DU_equation} and that $|y| \le |x| + d$.

    \begin{align*}
    &| a^*x + b^*C_K^\theta(x) - a^*y - b^*C_K^\theta(y)| \\
    &= \left|(a^*-b^*K)x - \frac{b^*}{b}D_U - \left(a^* - \frac{ab^*}{b}\right)y\right| \\
    &\le\left|(a^*-b^*K)x -D_U \right| +\left|1 - \frac{b^*}{b} \right|D_U +  \left| a^* - \frac{ab^*}{b}\right| |y|\\
    &\le |a^*-b^*K|d + O_T(\epsilon)(d+|x|) +\left|1 - \frac{b^*}{b} \right|D_U +  \left| a^* - \frac{ab^*}{b}\right| |y| && \text{Equation \eqref{eq:DU_equation}}\\
    &\le |a^*-b^*K|d+  O_T(\epsilon)(d+|x|) + \left|1 - \frac{b^*}{b} \right|D_U + \left| a^* - \frac{ab^*}{b}\right| \left(|x| + d\right)\\
    &\le (|a^*-b^*K| +  O_T(\epsilon))d + O_T(\epsilon)(|x|+D_U)\\
    &\le (|a^*-b^*K| +  O_T(\epsilon))d + O_T(\epsilon)(|x|+\norm{D}_{\infty}).
\end{align*}
where in the last line we once again bounded $|1 - \frac{b^*}{b}| = O_T(\epsilon)$ and $|a^* - \frac{ab^*}{b}| = O_T(\epsilon)$. Therefore, we have shown in this case that 
\begin{align*}
    &|a^*x + b^*C_K^\theta(x) - a^*y - b^*C_K^\theta(y)|\\
    &\le \min \left(2(|a^*-b^*K| + O_T(\epsilon))d ,(|a^*-b^*K| +  O_T(\epsilon))d + O_T(\epsilon)(|x|+\norm{D}_{\infty})\right) 
\end{align*}

    Case 4 Lemma \ref{lemma:trunclincontboundcontroller}:
\begin{align*}
    |C_K^\theta(x) - C_K^\theta(y)| &=\left|-Kx - \frac{D_U-ay}{b} \right| \\
    &\le |K||x-y| + \left| -Ky - \frac{D_U-ay}{b}\right| \\
    &\le |K||x-y| +\left| \frac{(a-bK)y - D_U}{b}\right| \\
    &\le |K||x-y| + \frac{|a-bK|}{b}\left| y - \frac{D_U}{a-bK}\right| \\
    &\le |K||x-y| + \frac{|a-bK|}{b}\left| y - x\right|  && \text{Equation \eqref{eq:astar2}}\\
    &= |K|d + \frac{|a-bK|}{b}d \\
    &\le \frac{\bar{a}+1}{\underline{b}}d + \frac{1}{\underline{b}}d \\
    &= O_T(d).
    \end{align*}

Because these four cases cover all possible situations, we have shown the desired two lemmas.
\end{proof}

\subsection{Proof of Lemma \ref{relaxed_bounds}}\label{sec:proof_of_relaxed_bounds}

\begin{proof}
    For sufficiently large $T$ we have the following two results, using that $\norm{\theta - \theta^*}_{\infty} \le \frac{1}{\log^{10}(T)}$:
    \begin{align*}
        \frac{a-1}{b} &\ge \frac{a^*-\frac{1}{\log^{10}(T)} - 1}{b^* + \frac{1}{\log^{10}(T)}} \\
        &= \frac{a^*-1}{b^*} \cdot \frac{b^*}{b^* + \frac{1}{\log^{10}(T)}} - \frac{1}{\log^{10}(T)(b^* + \frac{1}{\log^{10}(T)})} \\
        &= \frac{a^*-1}{b^*}\cdot \left(1 - \frac{1}{\log^{10}(T)\left(b^* +\frac{1}{\log^{10}(T)}\right)}\right) -  \frac{1}{\log^{10}(T)(b^* + \frac{1}{\log^{10}(T)})} \\
        &= \frac{a^*-1}{b^*} - \frac{a^*-1}{b^*\log^{10}(T)(b^* + \frac{1}{\log^{10}(T)})} -  \frac{1}{\log^{10}(T)(b^* + \frac{1}{\log^{10}(T)})} \\
        &= \frac{a^* - 1 -O_T\left(\frac{1}{\log^{10}(T)}\right)}{b^*}.
    \end{align*}
    \begin{align*}
       \frac{a}{b} & \le \frac{a^*+\frac{1}{\log^{10}(T)} }{b^* - \frac{1}{\log^{10}(T)}} \\
        &= \frac{a^*}{b^*}  \cdot \frac{b^*}{b^* - \frac{1}{\log^{10}(T)}}  + \frac{1}{\log^{10}(T)(b^* - \frac{1}{\log^{10}(T)})} \\
        &= \frac{a^*}{b^*}  \cdot \left( 1  + \frac{1}{\log^{10}(T)\left(b^* - \frac{1}{\log^{10}(T)}\right)}  \right) + \frac{1}{\log^{10}(T)(b^* - \frac{1}{\log^{10}(T)})} \\
        &= \frac{a^*}{b^*}  + \frac{a^*}{b^*(b^* - \frac{1}{\log^{10}(T)})\log^{10}(T)}  + \frac{1}{\log^{10}(T)(b^* - \frac{1}{\log^{10}(T)})} \\
        &= \frac{a^* + O_T(1/\log^{10}(T))}{b^*}.
    \end{align*}

\end{proof}

    \subsection{Proof of Lemma \ref{Ht_bound}}\label{sec:proof_of_Ht_bound}
    \begin{proof}
    The first step to this proof is to construct event $E_{\mathrm{L}\ref{Ht_bound}}(x,K,\theta,W')$. For any $t_2 > t_1$ and $t_2 - t_1 \ge \log^8(T)$, define the event $E^{t_1,t_2}_{\mathrm{L}\ref{Ht_bound}}$ as 
    \[
        E^{t_1,t_2}_{\mathrm{L}\ref{Ht_bound}} =  \left\{\exists j \in [t_1:t_2-\lceil \log^5(T) \rceil-1] : \left| \sum_{i = j}^{j+\lceil \log^5(T) \rceil} w_i \right| \ge 7\log^2(T)\right\}.
    \]
    Define
    \[
        E_{\mathrm{L}\ref{Ht_bound}}(x,K,\theta,W') := E_1^t \cap  \bigcap_{t_1 < t_2 \le t, t_2 - t_1 \ge \log^8(T)} E^{t_1,t_2}_{\mathrm{L}\ref{Ht_bound}}.
    \]
    First we will show that $\P(E_{\mathrm{L}\ref{Ht_bound}}(x,K,\theta,W')) = 1-o_T(1/T^{20})$. Consider any pair $t_2 > t_1$ such that $t_2 - t_1 \ge \log^8(T)$. Divide the interval $[t_1:t_2-1]$ into $\lfloor \frac{t_2 - t_1}{\lceil \log^5(T) \rceil + 1}\rfloor$ consecutive disjoint intervals of length $\lceil \log^5(T) \rceil + 1$. Consider one such interval $[s_1, s_2]$. Then the distribution of $\frac{1}{\sqrt{\lceil \log^5(T) \rceil+1}}\sum_{i=s_1}^{s_2} w_i$ converges in distribution to $N(0,\sigma_{\mathcal{D}}^2)$ as $T$ grows, where we recall $\sigma_{\mathcal{D}}^2$ is the variance of distribution $\mathcal{D}$. The rate of this convergence depends on $\mathcal{D}$. Therefore, for sufficiently large $T$, we have that 
    \begin{equation}
        \left| \P\left( \left| \frac{1}{\sqrt{\lceil \log^5(T) \rceil+1}}\sum_{i=s_1}^{s_2} w_i \right| \ge \sigma_\mathcal{D}/2 \right) - \P\left(|N(0, \sigma_\mathcal{D}^2)| \ge \sigma_{\mathcal{D}}/2\right)  \right| \le 0.1.
    \end{equation}
    This implies that
    \begin{equation}
        \P\left( \left| \frac{1}{\sqrt{\lceil \log^5(T) \rceil+1}}\sum_{i=s_1}^{s_2} w_i \right| \ge \sigma_\mathcal{D}/2 \right) \ge \P\left(|N(0, \sigma_\mathcal{D}^2)| \ge \sigma_{\mathcal{D}}/2\right) - 0.1 \ge 0.5.
    \end{equation}
    For sufficiently large $T$, we have that $\frac{\sqrt{\lceil \log^5(T) \rceil+1}\sigma_{\mathcal{D}}}{2} \ge 7\log^2(T)$, and therefore this implies that for sufficiently large $T$,    
    \begin{equation}\label{eq:interval_sat}
            \P\left( \left| \sum_{i=s_1}^{s_2} w_i \right| \ge 7\log^2(T) \right) \ge 0.5.
    \end{equation}
    Because the random variables in each disjoint interval are independent, we have that each interval independently satisfies Equation \eqref{eq:interval_sat} with probability at least $1/2$. Therefore,  for sufficiently large $T$, the probability that Equation \eqref{eq:interval_sat} fails to hold for all $\lfloor \frac{|t_2 - t_1|}{\lceil \log^5(T) \rceil+1} \rfloor \ge \log^2(T)$ intervals is at most $(1/2)^{\lfloor \frac{|t_2 - t_1|}{\lceil \log^5(T) \rceil + 1} \rfloor} \le 0.5^{\log^2(T)} = o_T(1/T^{22})$. Therefore, we have shown that 
    \[
        \P(E^{t_1,t_2}_{\mathrm{L}\ref{Ht_bound}}) \ge 1-o_T(1/T^{22}).
    \]
    Since there are less than $T^2$ pairs $(t_1,t_2)$ and $\P(E_1^t) \ge \P(E_1) = 1-o_T(1/T^{20})$ by Equation \eqref{eq:e1bound}, we have by a union bound that
    \[
        \P(E_{\mathrm{L}\ref{Ht_bound}}(x,K,\theta,W')) \ge  1 - o_T(T^2/T^{22}) - o_T(1/T^{20}) = 1-o_T(1/T^{20}).
    \]
    \begin{lemma}\label{lemma:all_pairs}
        Using the assumptions and notation of the proof of Lemma \ref{Ht_bound},   for all pairs $t_1,t_2$ such that $t_2 - t_1 \ge \log^8(T)$,  conditional on event $A_{t_2} \cap E_{\mathrm{L}\ref{Ht_bound}}(x,K,\theta,W')$,
    \begin{equation}\label{eq:H2_equation}
        \lambda(t_2) - \lambda(t_1) = \Omega_T\left(\frac{|t_2 - t_1|}{\log^8(T)}\right).
    \end{equation} 
    \end{lemma}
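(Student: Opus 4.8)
The plan is to show that under the hypothesis $|1-(a^*-b^*K)| = O_T(1/\log^9(T))$ of Lemma \ref{Ht_bound}, the state is driven into the saturated region (where $\mathcal{W}_i$ holds) at least once in every window of length $\Theta(\log^8(T))$, and then to count these occurrences over $[t_1,t_2]$. I would first record two elementary facts valid conditional on $A_{t_2}$. By Equation \eqref{eq:At}, $A_{t_2}$ gives $d_i \le 3/\log^{10}(T)$ for all $i \le t_2$; hence whenever $\mathcal{W}_i$ fails, unfolding the definition of $\mathcal{W}_i$ forces $x_i$ into the bounded interval $I := \left[\frac{D_L}{a-bK} - \frac{3}{\log^{10}(T)}, \frac{D_U}{a-bK} + \frac{3}{\log^{10}(T)}\right]$, whose length is $O_T(1)$ since $a-bK \in [1/2, 3/2]$ for large $T$ (by the hypothesis, Lemma \ref{relaxed_bounds}, and $\epsilon \le 1/\log^{46}(T)$) and $\norm{D}_\infty = O_T(1)$. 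Second, for every $x \in I$ the one-step expected displacement $\delta(x) := a^*x + b^*C_K^\theta(x) - x$ satisfies $|\delta(x)| = O_T(1/\log^9(T))$: in the linear sub-case $\delta(x) = (a^*-b^*K-1)x = O_T(1/\log^9(T))$, and in the two saturated sub-cases the computation in the proof of Lemma \ref{lemma:trunclincontbound} gives $a^*x + b^*C_K^\theta(x) = O_T(\epsilon)x + \frac{b^*}{b}D_{\mathrm{U}}$ (resp.\ $D_{\mathrm{L}}$), which differs from $x$ by $O_T(\epsilon)|x| + \left|x - \frac{b^*}{b}D_{\mathrm{U}}\right| = O_T(1/\log^9(T))$ because $\frac{1}{a-bK} - \frac{b^*}{b} = O_T(1/\log^9(T))$.

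The key step is the following claim: conditional on $A_{t_2} \cap E_{\mathrm{L}\ref{Ht_bound}}(x,K,\theta,W')$, whenever a window $[j, j+\lceil\log^5(T)\rceil]$ inside $[t_1, t_2-1]$ satisfies $\left|\sum_{i=j}^{j+\lceil\log^5(T)\rceil} w_i\right| \ge 7\log^2(T)$, there is an index $i^* \in [j, j+\lceil\log^5(T)\rceil+1]$ with $\mathcal{W}_{i^*}$ true. I would prove this by contradiction. If $\mathcal{W}_i$ failed for every $i$ in $[j, j+\lceil\log^5(T)\rceil]$, then each such $x_i \in I$, so telescoping $x_{i+1} = x_i + \delta(x_i) + w_i$ over the window yields
\[
x_{j+\lceil\log^5(T)\rceil+1} = x_j + \sum_{i=j}^{j+\lceil\log^5(T)\rceil} \delta(x_i) + \sum_{i=j}^{j+\lceil\log^5(T)\rceil} w_i,
\]
where $|x_j| = O_T(1)$ (it lies in $I$) and the middle sum is at most $(\lceil\log^5(T)\rceil+1)\cdot O_T(1/\log^9(T)) = o_T(1)$. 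Hence $\left|x_{j+\lceil\log^5(T)\rceil+1}\right| \ge 7\log^2(T) - O_T(1) \ge 6\log^2(T)$ for large $T$. Since $\frac{D_{\mathrm{U}}}{a-bK}, \left|\frac{D_{\mathrm{L}}}{a-bK}\right| = O_T(1)$ and $d_{j+\lceil\log^5(T)\rceil+1} \le 3/\log^{10}(T)$, this places both $x_{j+\lceil\log^5(T)\rceil+1}$ and $y_{j+\lceil\log^5(T)\rceil+1}$ strictly beyond the same truncation boundary, so $\mathcal{W}_{j+\lceil\log^5(T)\rceil+1}$ holds — contradicting the assumption.

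Finally I would carry out the counting. Partition $[t_1,t_2]$ into $N := \lfloor (t_2-t_1)/\lceil\log^8(T)\rceil\rfloor$ pairwise-disjoint consecutive subintervals each of length at least $\log^8(T)$; note $N \ge |t_2-t_1|/(2\log^8(T))$ since $t_2 - t_1 \ge \log^8(T)$. For each subinterval $[s_1, s_2]$, the conjunct $E^{s_1,s_2}_{\mathrm{L}\ref{Ht_bound}}$ of $E_{\mathrm{L}\ref{Ht_bound}}$ (available since $s_2 - s_1 \ge \log^8(T)$) supplies a window $[j, j+\lceil\log^5(T)\rceil] \subseteq [s_1, s_2-1]$ with large noise sum, and the key claim then produces an index $i^* \in [s_1, s_2]$ with $\mathcal{W}_{i^*}$ true. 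Because the subintervals are disjoint and all contained in $[t_1,t_2]$, these $N$ indices are distinct and count towards $\lambda(t_2) - \lambda(t_1)$ (losing at most one if some $i^* = t_1$), so $\lambda(t_2) - \lambda(t_1) \ge N-1 = \Omega_T\!\left(|t_2-t_1|/\log^8(T)\right)$, which is Equation \eqref{eq:H2_equation}. The main obstacle is the key claim: it hinges on the fact that the accumulated \emph{deterministic} drift $\sum_i \delta(x_i)$ over a window of $\lceil\log^5(T)\rceil$ steps stays negligible next to $\log^2(T)$, which is precisely what the Case 2 restriction on $K$ — forcing $|\delta(x)| = O_T(1/\log^9(T))$ on $I$ — provides; controlling $x_j$ and verifying $x_i \in I$ throughout the window requires simultaneously carrying the conditioning on $A_{t_2}$.
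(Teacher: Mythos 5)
Your proof is correct and follows essentially the same route as the paper's: both use the conjuncts of $E_{\mathrm{L}\ref{Ht_bound}}$ to find, inside each disjoint block of length $\lceil\log^8(T)\rceil$, a sub-window of length $\lceil\log^5(T)\rceil$ with noise sum at least $7\log^2(T)$, argue via the $A_{t_2}$-controlled, negligible per-step drift of the truncated controller near $\left[\frac{D_L}{a-bK},\frac{D_U}{a-bK}\right]$ that such a window must push the trajectory beyond $\Theta(\log^2(T))$ and hence force some $\mathcal{W}_i$, and then count one occurrence per block. The only differences are cosmetic: the paper runs the argument on the $y$-trajectory with a drift bound $O_T(1/\log^7(T))$ obtained through $E_1^t$ and partitions $[t_1+1:t_2]$ (which avoids your ``lose one index at $t_1$'' bookkeeping), whereas you use the $x$-trajectory and bound the drift by $O_T(1/\log^9(T))$ directly from membership in the bounded interval.
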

  
    By Lemma \ref{lemma:all_pairs}, conditional on $A_{t_2} \cap E_{\mathrm{L}\ref{Ht_bound}}(x,K,\theta,W')$, we that:
    \begin{align*}
    &\left(1 + O_T\left(\frac{1}{\log^{10}(T)}\right)\right)^{t_2 + 1 - t_1} O_T(\epsilon)^{\lambda(t_2) - \lambda(t_1)} \\
    &=\left(1 + O_T\left(\frac{1}{\log^{10}(T)}\right)\right)^{t_2 + 1 - t_1} O_T\left(1/\log(T)\right)^{\lambda(t_2) - \lambda(t_1)} && \text{ $\epsilon = O_T(1/\log(T))$} \\
    &\le \left(1+O_T\left(\frac{1}{\log^{10}(T)}\right)\right)^{t_2 + 1- t_1}\cdot O_T \left(\frac{1}{\log(T)}\right)^{ \Omega_T\left(\frac{|t_2 - t_1|}{\log^8(T)}\right)}  && \text{Equation \eqref{eq:H2_equation}}\\
    &\le \left(1+O_T\left(\frac{1}{\log^{2}(T)}\right)\right)^{(t_2 + 1 - t_1)/\log^8(T)}\cdot O_T \left(\frac{1}{\log(T)}\right)^{ \Omega_T\left(\frac{|t_2 - t_1|}{\log^8(T)}\right) } && \text{Lemma \ref{lemma:inequality_for_f_and_g}} \\
    &\le O_T\left(\left(\frac{1}{\log(T)}\left(1 + \frac{1}{\log^2(T)}\right)\right)\right)^{ \Omega_T\left(\frac{|t_2 - t_1|}{\log^8(T)}\right)} \\
    &\le \left(O_T\left(\frac{1}{\log(T)}\right)\right)^{\Omega_T\left(\frac{|t_2 - t_1|}{\log^8(T)}\right)} \\
    &\le \left(\left(O_T\left(\frac{1}{\log(T)}\right)\right)^{\Omega_T\left(\frac{1}{\log^8(T)}\right)}\right)^{(t_2 + 1 - t_1)} \\
        &\le \left(1 - \frac{2}{\log^{9}(T)}\right)^{t_2 + 1 - t_1}.  \numberthis \label{eq:zeta} 
\end{align*}

This is the desired result. In the last line we used that for sufficiently large $T$,
\begin{align*}
\left(O_T\left(\frac{1}{\log(T)}\right)\right)^{\Omega_T\left(\frac{1}{\log^8(T)}\right)} &\le \left(\frac{1}{2}\right)^{\Omega_T\left(\frac{1}{\log^8(T)}\right)}\\
&\le \left(\frac{1}{2}\right)^{\frac{4}{\log^9(T)}}\\
&\le \left(1 - \frac{2}{\log^{9}(T))}\right) && \text{Lemma \ref{lemma:inequality_for_f_and_g}} 
\end{align*}
Note that the first inequality above is a very loose bound, however it is what we need to prove the desired lemma.
    \end{proof}

 \subsection{Proof of Lemma \ref{lemma:all_pairs}}
    
    To show Equation \eqref{eq:H2_equation}, we will show that for all $t_2 \ge \lceil \log^8(T) \rceil$, conditional on event $A_{t_2} \cap E_{\mathrm{L}\ref{Ht_bound}}(x,k,\theta,W')$, for every $j \le t_2 - \lceil \log^8(T) \rceil + 1$ there exists some $i \in [j:j+\lceil \log^8(T) \rceil )$ such that $\mathcal{W}_i$ holds, where we recall that
    \[
        \mathcal{W}_i = \left\{\min(x_i,y_i) \ge \frac{D_U}{a-bK} \text{ or } \max(x_i,y_i) \le \frac{D_L}{a-bK} \right\}.
    \]
    This in turn implies Equation \eqref{eq:H2_equation} because we can divide  $[t_1+1: t_2]$ into $\Omega_T(\frac{|t_2 - t_1|}{\log^8(T)})$ disjoint intervals of the form $[j:j+\lceil \log^8(T) \rceil )$ where each interval contains an $i$ such that $\mathcal{W}_i$ holds.

 For the rest of the proof, we will prove by contradiction that  conditional on event $A_{t_2} \cap E_{\mathrm{L}\ref{Ht_bound}}(x,k,\theta,W')$, for every $j \le t_2 - \lceil \log^8(T) \rceil$ there exists some $i \in [j:j+\lceil \log^8(T) \rceil )$ such that $\mathcal{W}_i$ holds. Assume that this is not the case, and there exists $j$ such that there are no $i \in [j:j+\lceil \log^8(T) \rceil )$ such that $\mathcal{W}_i$ holds.

    By definition of $\mathcal{W}_i$, if $y_i \not\in \left[\frac{D_L}{a-bK}-d_i ,\frac{D_U}{a-bK} + d_i\right]$, then $\mathcal{W}_i$ must hold. Recall that conditional on event $A_{t_2}$, $d_i \le \frac{3}{\log^{10}(T)}$ for all $i \le t_2$. Therefore,  conditional on event $A_{t_2}$, if $y_i  \not\in \left[\frac{D_L}{a-bK}- \frac{3}{\log^{10}(T)} ,\frac{D_U}{a-bK} + \frac{3}{\log^{10}(T)} \right]$ then $\mathcal{W}_i$ must hold. Because we assumed that there are no $i \in [j:j+\lceil \log^8(T) \rceil )$ such that $\mathcal{W}_i$ holds, this implies that for all $i \in [j:j+\lceil \log^8(T) \rceil )$,
    \begin{equation}\label{eq:all_yi_by_assum}
        y_i  \in \left[\frac{D_L}{a-bK}- \frac{3}{\log^{10}(T)} ,\frac{D_U}{a-bK} + \frac{3}{\log^{10}(T)} \right].
    \end{equation}
    We also have that for sufficiently large $T$, 
    \begin{align*}
        \frac{\norm{D}_{\infty}}{a-bK} &\le  \frac{\norm{D}_{\infty}}{a^*-b^*K - O_T\left(\frac{1}{\log^{10}{T}}\right)} && \text{$\norm{\theta - \theta^*}_{\infty} \le 1/\log^{10}(T)$}
        \\
        &\le \frac{\norm{D}_{\infty}}{1 - O_T\left(\frac{1}{\log^{9}{T}}\right)} && \text{ $|1-(a^*-b^*K)| \le \frac{1}{\log^9(T)}$}  \\
        &\le 2\norm{D}_{\infty} \\
        &\le 2\log^2(T).  \numberthis \label{eq:random_walk}
    \end{align*}
    Therefore, if $|y_i| \ge \log^2(T) \ge 2\log^2(T) + \frac{3}{\log^{10}(T)}$ for sufficiently large $T$, then $\mathcal{W}_i$ must hold. For the rest of the proof, we will show that if Equation \eqref{eq:all_yi_by_assum} holds for all $i \in [j:j+\lceil \log^8(T) \rceil )$, then at least one such $i$ must satisfy $|y_i| \ge 3\log^2(T)$, which implies that $\mathcal{W}_i$ will hold which is a contradiction.

    \begin{lemma}\label{lemma:notsatisfy_yi}
        Using the notation and assumptions of Lemma \ref{lemma:all_pairs}, conditional on $A_{t_2} \cap E_{\mathrm{L}\ref{Ht_bound}}(x,k,\theta,W')$, if $y_i  \in \left[\frac{D_L}{a-bK}- \frac{3}{\log^{10}(T)} ,\frac{D_U}{a-bK} + \frac{3}{\log^{10}(T)} \right]$, then $y_{i+1} - y_i \in [w_i - O_T(1/\log^7(T)), w_i + O_T(1/\log^7(T))]$.
    \end{lemma}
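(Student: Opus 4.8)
The plan is to reduce the statement to a deterministic bound on the one-step drift of $y$ and then split into the three branches of the truncated linear controller. Writing the update under the true dynamics,
\[
y_{i+1} - y_i = a^* y_i + b^* C_K^\theta(y_i) + w_i - y_i = w_i + R_i, \qquad R_i := (a^*-1)y_i + b^* C_K^\theta(y_i),
\]
it suffices to show $|R_i| = O_T(1/\log^7(T))$ whenever $y_i \in \left[\frac{D_L}{a-bK} - \frac{3}{\log^{10}(T)},\, \frac{D_U}{a-bK} + \frac{3}{\log^{10}(T)}\right]$. First I would record two facts used throughout: since $\epsilon := \norm{\theta-\theta^*}_\infty \le \frac{1}{\log^{46}(T)}$ and $|1-(a^*-b^*K)| = O_T(1/\log^9(T))$ (from the assumptions of Lemma~\ref{Ht_bound}), we get $a-bK = 1 + O_T(1/\log^9(T))$, in particular $a - bK \ge \frac{1}{2}$ for large $T$, so every division by $a-bK$ below is legitimate; and combining the assumed range for $y_i$ with Equation~\eqref{eq:random_walk} (which gives $\frac{\norm{D}_\infty}{a-bK}\le 2\log^2(T)$) yields $|y_i| = O_T(\log^2(T))$.

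In the \emph{linear branch}, $\frac{D_L}{a-bK}\le y_i\le \frac{D_U}{a-bK}$, so $C_K^\theta(y_i) = -Ky_i$ and $R_i = (a^*-b^*K-1)y_i$; hence $|R_i| \le O_T(1/\log^9(T))\cdot O_T(\log^2(T)) = O_T(1/\log^7(T))$. This is the branch that pins down the exponent $7$ in the statement. In the \emph{upper truncation branch}, $y_i > \frac{D_U}{a-bK}$, the key point is that $C_K^\theta(y_i) = \frac{D_U-ay_i}{b}$ makes the estimated-dynamics next state equal $D_U$ exactly, i.e.\ $ay_i + bC_K^\theta(y_i) = D_U$; hence
\[
a^* y_i + b^* C_K^\theta(y_i) = D_U + (a^*-a)y_i + (b^*-b)C_K^\theta(y_i) = D_U + O_T(\epsilon\log^2(T)),
\]
using $|C_K^\theta(y_i)| = \left|\frac{D_U-ay_i}{b}\right| = O_T(\log^2(T))$. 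Therefore $R_i = (D_U - y_i) + O_T(\epsilon\log^2(T))$, and since $y_i$ lies within $\frac{3}{\log^{10}(T)}$ of $\frac{D_U}{a-bK}$,
\[
|D_U - y_i| \le \left|D_U - \frac{D_U}{a-bK}\right| + \frac{3}{\log^{10}(T)} = |D_U|\cdot\frac{|a-bK-1|}{a-bK} + \frac{3}{\log^{10}(T)} = O_T(1/\log^9(T)),
\]
so $|R_i| = O_T(1/\log^9(T))$. The \emph{lower truncation branch} $y_i < \frac{D_L}{a-bK}$ is handled identically with $D_L$ in place of $D_U$. Taking the worst of the three branches gives $|R_i| = O_T(1/\log^7(T))$, which is exactly the claim.

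I do not expect a genuine obstacle: the argument is a short case analysis relying only on the definition of $C_K^\theta$ and the parameter bounds, with the conditioning on $A_{t_2}\cap E_{\mathrm{L}\ref{Ht_bound}}(x,K,\theta,W')$ entering only to keep the trajectory in the regime where these bounds apply. The one place to be careful is the bookkeeping of three distinct error scales — $\epsilon \le \log^{-46}(T)$ from the dynamics mismatch, $\log^{-9}(T)$ from the near-unit-spectral-radius hypothesis on $a^*-b^*K$, and $\log^{-10}(T)$ from the slack around the truncation thresholds — together with checking that $a-bK$ is bounded away from $0$ so that the divisions and the bound $|y_i| = O_T(\log^2(T))$ remain valid.
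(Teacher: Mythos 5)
Your proposal is correct and follows essentially the same route as the paper: a case analysis over the three branches of $C_K^\theta$, bounding the one-step drift $y_{i+1}-y_i-w_i$ using $|1-(a^*-b^*K)|=O_T(1/\log^9(T))$, the $O_T(1/\log^{46}(T))$ dynamics mismatch, and the $3/\log^{10}(T)$ proximity of $y_i$ to the truncation thresholds. The only (harmless) cosmetic differences are that you bound $|y_i|=O_T(\log^2(T))$ directly from the hypothesized interval via Equation \eqref{eq:random_walk} rather than invoking Lemma \ref{bounded_pos_cont} under $E_1^t$, and in the truncation branches you exploit $ay_i+bC_K^\theta(y_i)=D_U$ (resp.\ $D_L$) exactly instead of adding and subtracting $b^*Ky_i$ as the paper does.
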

    \begin{proof}
    The control at time $i$ is either $-Ky_i$, $\frac{D_U - ay_i}{b}$, or $\frac{D_L - ay_i}{b}$. If the control is $-Ky_i$, then under event $E_1^t$,
    \begin{align*}
        |y_{i+1} - y_{i} - w_{i}| &= |(a^*-b^*K)y_{i} - y_{i}| \\
        &= |y_{i}||1-(a^*-b^*K)| \\
        &= O_T\left( \frac{|y_{i}|}{\log^{9}(T)}\right) && \text{Assumed in Lemmas \ref{Ht_bound}, \ref{lemma:all_pairs}, and \ref{lemma:notsatisfy_yi}}\\
        &= O_T\left( \frac{1}{\log^{7}(T)}\right). && \text{Under event $E_1^t$ by Lemma \ref{bounded_pos_cont}}.
    \end{align*}
    The control at state $y_i$  is $\frac{D_U - ay_i}{b}$ only when $y_i \ge \frac{D_U}{a-bK}$. Because $y_i \le \frac{D_U}{a-bK} + \frac{3}{\log^{10}(T)}$, this implies that $\left| y_i - \frac{D_U}{a-bK}\right| \le \frac{3}{\log^{10}(T)}$, and because $(a-bK) \le 1$ this implies that $|D_U - (a-bK)y_i| = O_T(1/\log^{10}(T))$. Therefore, under event $E_1^t$, when the control at state $y_i$  is $\frac{D_U - ay_i}{b}$,
    \begin{align*}
        |y_{i+1} - y_{i} - w_{i}| &= |a^*y_i + b^*\frac{D_U - ay_i}{b} - y_i| \\
        &\le |(a^*-b^*K)y_{i} - y_{i}| + b^*\left|Ky_i + \frac{D_U-ay_i}{b}\right|\\
        &\le |(a^*-b^*K) - 1||y_i| + \frac{b^*}{b}\left|D_U - (a-bK)y_i \right|\\
        &\le O_T\left(\frac{|y_i|}{\log^{9}(T)}\right) + O_T\left(\frac{1}{\log^{10}(T)}\right)\\
        &\le O_T\left(\frac{1}{\log^7(T)}\right). && \text{Under event $E_1^t$ by Lemma \ref{bounded_pos_cont}}
    \end{align*}
    A symmetric result holds if the control at state $y_i$ is $\frac{D_L - ay_i}{b}$ (which happens when $y_i \le \frac{D_L}{a-bK}$). This exactly implies the desired result.
    
        \end{proof}

    Using Lemma \ref{lemma:notsatisfy_yi}, for $j\le  i_1 < i_2 \le j + \lceil \log^8(T) \rceil$ such that $i_2 - i_1  \le \lceil \log^5(T) \rceil $ and sufficiently large $T$, if $y_i  \in \left[\frac{D_L}{a-bK}- \frac{3}{\log^{10}(T)} ,\frac{D_U}{a-bK} + \frac{3}{\log^{10}(T)} \right]$ for all $i \in [j:j+\lceil \log^8(T) \rceil )$, then
    \begin{align*}
        \left| y_{i_2+1} - y_{i_1} \right| &\ge  \left| \sum_{j=i_1}^{i_2} w_j \right| - O_T\left(\frac{|i_2 - i_1|}{\log^7(T)}\right)\\
        &\ge \left|\sum_{j=i_1}^{i_2} w_j \right| - \frac{1}{\log(T)}. && \text{$i_2 - i_1 \le \lceil \log^5(T) \rceil$} \numberthis \label{eq:random_walk_bound}
    \end{align*}
     By construction, event $E_{\mathrm{L}\ref{Ht_bound}}(x,K,\theta,W')$ directly implies that for sufficiently large $T$, there exists some $i \in [j: j+\lceil \log^8(T) \rceil - \lceil \log^5(T) \rceil - 1]$ such that
    \begin{equation}\label{eq:sum_of_ws}
   \left|\sum_{j=i}^{i+\lceil \log^5(T) \rceil} w_j\right| \ge 7\log^2(T) \ge 2\cdot 3\log^2(T) + \frac{1}{\log(T)}.
    \end{equation}
    Combining this with Equation \eqref{eq:random_walk_bound} for $i_1 = i$ and $i_2 = i+\lceil \log^{5}(T) \rceil $, conditional on $A_{t_2} \cap E_{\mathrm{L}\ref{Ht_bound}}(x,k,\theta,W')$, 
    \[
        \left| y_{i_2+1} - y_{i_1} \right| \ge 6\log^2(T).
    \]
    This implies that either $|y_i|$ or $|y_{i+\lceil \log^5(T) \rceil+1}|$ is greater than $3\log^2(T)$. However, as argued above this implies that $\mathcal{W}_i$ or $\mathcal{W}_{i+\lceil \log^5(T) \rceil+1}$ holds, which is a contradiction. This completes the proof by contradiction.

\section{Proof of Lemma \ref{parameterization_assum2}}\label{app:paramterization_assum2}

\begin{proof}

Let $\epsilon_{\mathrm{L}\ref{parameterization_assum2}} = \frac{1}{\log^{46}(T)}$. We will combine the following two results.

\begin{lemma}\label{ptimal_off_theta}
Under Assumptions \ref{assum:constraints}--\ref{assum:initial}, for any $\theta$ such that $\norm{\theta- \theta^*}_{\infty} = \epsilon \le \frac{1}{\log^{46}(T)}$, the following holds for the class of truncated linear controllers for $t \le T$:
\[
   \bar{J}(\theta,C^\theta_{K_{\mathrm{opt}}(\theta, t)}, t) -  \bar{J}(\theta^*,C^{\theta^*}_{K_{\mathrm{opt}}(\theta^*, t)},t) = \tilde{O}_T(\epsilon).
\]
\end{lemma}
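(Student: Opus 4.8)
The plan is to reduce the comparison to a bound on control costs alone by coupling the two state trajectories exactly. The key observation is that when a truncated linear controller $C_K^\theta$ is run under its own dynamics $\theta=(a,b)$, the conditional expectation of the next state is $ax+b\,C_K^\theta(x)=\mathrm{clip}_{[D_{\mathrm L},D_{\mathrm U}]}\!\big((a-bK)x\big)$, so the closed-loop recursion is $x_{s+1}=\mathrm{clip}_{[D_{\mathrm L},D_{\mathrm U}]}(\kappa x_s)+w_s$, which depends on $(\theta,K)$ \emph{only} through the effective gain $\kappa:=a-bK\in[0,1]$. I would therefore set $\kappa^*:=a^*-b^*K_{\mathrm{opt}}(\theta^*,t)\in[0,1]$ and take $K:=\tfrac{a-\kappa^*}{b}\in[\tfrac{a-1}{b},\tfrac{a}{b}]$, so that $C_K^\theta$ has the same effective gain $\kappa^*$ as $C_{K_{\mathrm{opt}}(\theta^*,t)}^{\theta^*}$. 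Running both controllers from $x_0=0$ with the same noise sequence, the two recursions are identical, so the state paths (including the terminal state) coincide almost surely, every $qx_s^2$ term cancels, and hence $\bar J(\theta,C_K^\theta,t)-\bar J(\theta^*,C_{K_{\mathrm{opt}}(\theta^*,t)}^{\theta^*},t)=\tfrac{r}{t}\sum_{s=0}^{t-1}\E\big[(u_s)^2-(u_s^*)^2\big]$, where $u_s=C_K^\theta(x_s)$ and $u_s^*=C_{K_{\mathrm{opt}}(\theta^*,t)}^{\theta^*}(x_s)$ are evaluated at the \emph{same} state $x_s$.

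The next step is a pointwise bound on the control discrepancy. At a common state $x_s$ both controllers fall in the same region (untruncated, upper-truncated, or lower-truncated), since the region is determined by where $\kappa^*x_s$ sits relative to $D_{\mathrm L}$ and $D_{\mathrm U}$. In the untruncated region $|u_s-u_s^*|=|K-K_{\mathrm{opt}}(\theta^*,t)|\,|x_s|$, and writing $K-K_{\mathrm{opt}}(\theta^*,t)=\tfrac{(a-\kappa^*)b^*-(a^*-\kappa^*)b}{bb^*}$ and using $|a-a^*|,|b-b^*|\le\epsilon$, $\kappa^*\in[0,1]$, and $b,b^*\ge\underline b>0$ shows $|K-K_{\mathrm{opt}}(\theta^*,t)|=O_T(\epsilon)$; in the truncated regions the two controls are $\tfrac{D_{\mathrm U}-ax_s}{b}$ versus $\tfrac{D_{\mathrm U}-a^*x_s}{b^*}$ (and similarly with $D_{\mathrm L}$), and the same kind of estimate gives $|u_s-u_s^*|=O_T(\epsilon)\big(|x_s|+\norm{D}_{\infty}\big)$ in every case. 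Since moreover $|u_s|,|u_s^*|=O_T\big(|x_s|+\norm{D}_{\infty}\big)$, the factorization $(u_s)^2-(u_s^*)^2=(u_s-u_s^*)(u_s+u_s^*)$ yields $\big|(u_s)^2-(u_s^*)^2\big|=O_T(\epsilon)\big(x_s^2+\norm{D}_{\infty}^2\big)$ almost surely.

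It then remains to take expectations and sum. From the recursion and $D_{\mathrm L}<0<D_{\mathrm U}$ we get $|x_{s+1}|\le\norm{D}_{\infty}+|w_s|$, hence $\E[x_s^2]\le(\norm{D}_{\infty}+1)^2=O_T(1)$ using the unit-variance assumption and $\norm{D}_{\infty}=O_T(1)$; therefore $\big|\bar J(\theta,C_K^\theta,t)-\bar J(\theta^*,C_{K_{\mathrm{opt}}(\theta^*,t)}^{\theta^*},t)\big|\le\tfrac{r}{t}O_T(\epsilon)\sum_{s=0}^{t-1}\E\big[x_s^2+\norm{D}_{\infty}^2\big]=O_T(\epsilon)=\tilde{O}_T(\epsilon)$. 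By definition of $K_{\mathrm{opt}}(\theta,t)$ we have $\bar J(\theta,C_{K_{\mathrm{opt}}(\theta,t)}^\theta,t)\le\bar J(\theta,C_K^\theta,t)$, which gives the claimed bound in one direction; the reverse inequality follows from the symmetric construction, coupling $C_{K_{\mathrm{opt}}(\theta,t)}^\theta$ run under $\theta$ with $C_{K'}^{\theta^*}$ run under $\theta^*$ for $K':=\tfrac{a^*-(a-bK_{\mathrm{opt}}(\theta,t))}{b^*}$ and using optimality of $K_{\mathrm{opt}}(\theta^*,t)$. The one genuinely delicate point is the opening structural observation that the closed-loop dynamics depend on $(\theta,K)$ only through $\kappa=a-bK$, which is exactly what makes the exact coupling available; after that the argument is the routine region-by-region bookkeeping above, together with the standard fact that $K\mapsto J^*(\theta,C_K^\theta,t)$ is continuous on the compact interval $[\tfrac{a-1}{b},\tfrac{a}{b}]$, so that $K_{\mathrm{opt}}(\theta,t)$ is attained.
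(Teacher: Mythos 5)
Your proposal is correct and follows essentially the same route as the paper's proof: the gain-matched controller $K=\tfrac{a-\kappa^*}{b}$ is exactly the paper's $K'$ solving $a-bK'=a^*-b^*K_{\mathrm{opt}}(\theta^*,t)$, after which both arguments exploit that the coupled state trajectories coincide, bound the control-cost discrepancy region by region by $O_T(\epsilon)(|x_s|+\norm{D}_\infty)$ using safety-induced moment bounds on $x_s$, and conclude via optimality of $K_{\mathrm{opt}}(\theta,t)$. Your added checks (that $K$ lies in $[\tfrac{a-1}{b},\tfrac{a}{b}]$ and the symmetric reverse inequality) are fine but not needed for the one-sided bound the paper states.
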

The proof of Lemma \ref{ptimal_off_theta} can be found in Appendix \ref{sec:proof_of_ptimal_off_theta}.

\begin{lemma}\label{diff_in_theta}
    Under Assumptions \ref{assum:constraints}--\ref{assum:initial}, for any $\norm{\theta - \theta^*}_{\infty} = \epsilon  \le \frac{1}{\log^{46}(T)}$, $t \le T$, and $K \in [\frac{a-1}{b}, \frac{a}{b}]$,
    \begin{equation}\label{eq:diff_in_theta2}
    |\bar{J}(\theta^*,C^{\theta}_{K},t) - \bar{J}(\theta,C^{\theta}_K, t)| =  \tilde{O}_T\left(\epsilon + \frac{1}{T^2}\right).
    \end{equation}
\end{lemma}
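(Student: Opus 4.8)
\textbf{Proof proposal for Lemma \ref{diff_in_theta}.} The plan is to couple the two trajectories driven by the \emph{same} noise sequence: let $x_0,x_1,\dots$ be the states produced by running the fixed controller $C^\theta_K$ under the true dynamics $\theta^*=(a^*,b^*)$, and $\tilde x_0,\tilde x_1,\dots$ the states produced by running the \emph{same} controller $C^\theta_K$ under $\theta=(a,b)$, both started at $x_0=\tilde x_0=0$. Writing $d_i=|x_i-\tilde x_i|$ and $d_i^u=|C^\theta_K(x_i)-C^\theta_K(\tilde x_i)|$, the per-step cost difference at time $i$ is at most $2q d_i(|x_i|+|\tilde x_i|)+q d_i^2+2r d_i^u(|u_i|+|\tilde u_i|)+r(d_i^u)^2$, so, exactly as in the displayed cost computation in the proof of Lemma \ref{parameterization_assum3} --- using Lemma \ref{lemma:trunclincontboundcontroller} to bound $d_i^u=O_T(d_i)$ and Lemma \ref{bounded_pos_cont} to bound all states and controls by $\tilde O_T(1)$ on the event $E_1^t$ --- it suffices to establish $\sum_{i=0}^{t}(d_i+d_i^2)=\tilde O_T(\epsilon\, t)$ on a high-probability event, since $\bar J$ is an \emph{average} (per-step) cost and the complementary $o_T(1/T^{10})$-probability event contributes only $\tilde O_T(1/T^{2})$ (the worst-case cost is $\mathrm{poly}(T)$; the $\tilde O_T(1/T^2)$ slack in the statement also absorbs the conditioning in the definition of $\bar J$).

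The new ingredient is the recursion for $d_i$. Since the common noise cancels, $d_{i+1}=|a^*x_i+b^*C^\theta_K(x_i)-a\tilde x_i-bC^\theta_K(\tilde x_i)|$, and splitting this as $\bigl|a^*x_i+b^*C^\theta_K(x_i)-a^*\tilde x_i-b^*C^\theta_K(\tilde x_i)\bigr|+\bigl|(a^*-a)\tilde x_i+(b^*-b)C^\theta_K(\tilde x_i)\bigr|$, the first term is precisely the quantity bounded by Lemma \ref{lemma:trunclincontbound} (with $x=x_i$, $y=\tilde x_i$), while the second is $O_T(\epsilon\log^2 T)=\tilde O_T(\epsilon)$ on $E_1^t$ by Lemma \ref{bounded_pos_cont}. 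Thus $d_0=0$ and $d_{i+1}\le \rho_i d_i+\tilde O_T(\epsilon)$, where the multipliers $\rho_i$ follow exactly the case pattern analyzed in the proof of Lemma \ref{lemma:conditional_version}: after splitting $K\in[\tfrac{a-1}{b},\tfrac{a}{b}]$ into the contracting range $K\in[\tfrac{a^*-1+1/\log^9 T}{b^*},\tfrac ab]$ and its complement (via Lemma \ref{relaxed_bounds}), in the first range $\rho_i\le 1-\tfrac{1}{2\log^9 T}$ with occasional factors $\le 2$ on the events $\mathcal Z_i$ (which occur only $\tilde O_T(i/\log^{10} T)$ times, by an Azuma argument on a density-bounded count, as in the event $E^*$), and in the second range $\rho_i\le 1+O_T(1/\log^{10}T)$ with super-contracting factors $O_T(\epsilon)$ on the events $\mathcal W_i$, which occur $\Omega_T(\,\cdot/\log^8 T)$ often in every polylog-length window by Lemma \ref{Ht_bound} and Lemma \ref{lemma:all_pairs}. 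Because $d_i$ will stay $\tilde O_T(\epsilon)\ll 1/\log^{10}T$ throughout, the invariant playing the role of $A_j$ holds with room to spare, so the case analysis of Lemma \ref{lemma:trunclincontbound} is legitimate at every step.

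Running the same induction as in the two cases of the proof of Lemma \ref{lemma:conditional_version}, but now carrying the extra additive $\tilde O_T(\epsilon)$ forcing at each step (rather than the starting-state perturbation $\delta$, which here is $0$), I expect $d_i=\tilde O_T(\epsilon)$ uniformly for $i\le t$: in the contracting branch a $1/\mathrm{polylog}$ decay rate balances an $\tilde O_T(\epsilon)$ per-step injection against a steady state of $\tilde O_T(\epsilon)\cdot\mathrm{polylog}=\tilde O_T(\epsilon)$, with the rare factor-$2$ steps costing only a further polylog factor via Lemma \ref{lemma:inequality_for_f_and_g} (exactly as in the estimate leading to \eqref{eq:largej}); in the non-contracting branch the $\mathcal W_i$ events periodically reset the accumulated difference to $O_T(\epsilon)\cdot d_i=\tilde O_T(\epsilon^2)$, between which $d_i$ grows by at most a $(1+O_T(1/\log^{10}T))^{\mathrm{polylog}}=1+o_T(1)$ multiplicative factor plus $\tilde O_T(\epsilon)$ additively. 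Summing gives $\sum_{i=0}^t(d_i+d_i^2)=\tilde O_T(\epsilon\, t)$, and dividing by $t$ yields $|\bar J(\theta^*,C^\theta_K,t)-\bar J(\theta,C^\theta_K,t)|=\tilde O_T(\epsilon+1/T^2)$ after a union bound over the (polynomially many) high-probability events invoked.

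The main obstacle is the same one that makes Lemma \ref{parameterization_assum3} delicate: a naive Gr\"onwall-type estimate fails, because $|a^*-b^*K|$ can equal $1$ (indeed $a-bK\in[0,1]$ always), so the per-step $\epsilon$-mismatch errors would accumulate to order $\epsilon t$ rather than order $\epsilon$. One genuinely needs the truncation nonlinearity --- the $\mathcal W_i$ reset events together with the state-boundedness Lemma \ref{bounded_pos_cont} --- to keep the two coupled trajectories from drifting apart, and the bulk of the work is carrying the new additive forcing term correctly through both branches of the inductive argument of Lemma \ref{lemma:conditional_version} while re-verifying that the frequency bounds on the $\mathcal Z_i$ and $\mathcal W_i$ events (previously stated for the starting-state coupling) still apply to this dynamics coupling.
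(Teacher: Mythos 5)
Your strategy is sound, but it is genuinely different from the paper's proof, and it is worth being clear about where each approach puts the work. The paper does \emph{not} couple the two full trajectories under $\theta^*$ and $\theta$. Instead it telescopes over hybrid time-varying dynamics $\{\theta_j^i\}$ that equal $\theta$ before time $i$ and $\theta^*$ afterward, and compares consecutive hybrids: the two hybrid trajectories coincide up to time $i$, the one-step change of dynamics perturbs the state at time $i+1$ by at most $\epsilon(|x_i|+|C_K^\theta(x_i)|)=\tilde{O}_T(\epsilon)$ in expectation, and the effect of that single starting-state perturbation on the \emph{total} remaining expected cost is bounded by $\tilde{O}_T(\epsilon+1/T^2)$ using the already-proven Lipschitz-in-initial-state property (Lemma \ref{parameterization_assum3}, packaged as Lemma \ref{offbyepsilon_exp}, with the tail events handled via Lemma \ref{lemma:subgaussian_tail}). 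Summing the $t$ hybrid steps and dividing by $t$ gives Equation \eqref{eq:diff_in_theta2}. In other words, the paper reuses the hard coupling analysis exactly once (inside Lemma \ref{parameterization_assum3}) and never has to revisit the near-unit-contraction regime you flag. Your route instead couples the $\theta^*$- and $\theta$-trajectories directly under common noise and re-runs the contraction/reset induction of Lemma \ref{lemma:conditional_version} with a persistent additive forcing $\tilde{O}_T(\epsilon)$, aiming for the steady-state invariant $d_i=\tilde{O}_T(\epsilon)$ rather than decay to zero. This is workable: your triangle-inequality split correctly reduces the one-step recursion to Lemma \ref{lemma:trunclincontbound} plus an $\tilde{O}_T(\epsilon)$ term (via Lemma \ref{bounded_pos_cont} on $E_1^t$), Case 1 is in fact easier than in the paper (a geometric sum with forcing already gives $d_i=\tilde{O}_T(\epsilon)$ without any frequency bound on $\mathcal{Z}_i$), and in Case 2 the $\mathcal{W}$-reset machinery (Lemmas \ref{Ht_bound} and \ref{lemma:all_pairs}) does transfer because the random-walk argument only uses $|1-(a-bK)|,|1-(a^*-b^*K)|=O_T(1/\log^9 T)$ and the noise event, both of which hold for the $\theta$-trajectory as well. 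But note that these frequency lemmas and the unwinding in Equation \eqref{eq:seconddt_bound} are stated for the same-dynamics, starting-state coupling and conditional on the invariant $A_j$, so your proof must re-derive them for the cross-dynamics coupling and carry the forcing term through the simultaneous induction — a substantial amount of bookkeeping that the paper's one-step hybrid argument avoids entirely. So: correct idea, heavier execution; if you want the short proof, invoke Lemma \ref{parameterization_assum3} through a hybrid argument rather than re-proving a forced version of Lemma \ref{lemma:bound_on_dt_both}.
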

The proof of Lemma \ref{diff_in_theta} can be found in Appendix \ref{sec:proof_of_diff_in_theta}.

Putting together Lemma \ref{ptimal_off_theta} and Lemma \ref{diff_in_theta} with $K = K_{\mathrm{opt}}(\theta, t)$, we have the desired result that
\[
    \bar{J}(\theta^*,C^\theta_{K_{\mathrm{opt}}(\theta, t)},t)- \bar{J}(\theta^*,C^{\theta^*}_{K_{\mathrm{opt}}(\theta^*, t)},t) = \tilde{O}_T\left(\epsilon + \frac{1}{T^2}\right).
\]
\end{proof}

\subsection{Proof of Lemma \ref{ptimal_off_theta}}\label{sec:proof_of_ptimal_off_theta}
\begin{proof}
First, we will prove some results about $a^*,b^*,K_{\mathrm{opt}}(\theta^*, t)$. Because $b, b^* \ge \underline{b}$ and $\norm{\theta - \theta^*}_{\infty} = \epsilon  \le \frac{1}{\log^{46}(T)} < b/2$  for large enough $T$, we have that
\begin{equation}\label{eq:par11a}
\left|\left(\frac{a^*}{b^*}\right)^2 - \left(\frac{a}{b}\right)^2\right| = \left| \frac{(a^*)^2b^2 - (b^*)^2a^2}{b^2(b^*)^2}\right| \le \frac{\epsilon^2 b^2 +2\epsilon ab^2 + 2\epsilon ba^2 + \epsilon^2 a^2}{b^2(b-\epsilon)^2} = O_T(\epsilon).
\end{equation}
\begin{equation}\label{eq:par11b}
    \left|\frac{a}{b} - \frac{a^*}{b^*}\right| = \left| \frac{a^*b - b^*a}{bb^*}\right| \le \left| \frac{\epsilon b + \epsilon a}{b(b-\epsilon)}\right| = O_T(\epsilon).
\end{equation}
Let $K'$ be the solution to $a^* - b^*K_{\mathrm{opt}}(\theta^*, t) = a - bK'$. Then
\[
    K' = \frac{(a - a^*) + b^*K_{\mathrm{opt}}(\theta^*, t)}{b} = K_{\mathrm{opt}}(\theta^*, t) + \frac{(b^* - b)K_{\mathrm{opt}}(\theta^*, t)}{b} + \frac{a-a^*}{b}.
\]
Since $K_{\mathrm{opt}}(\theta^*, t) \le \frac{a^*}{b^*}$ by definition, we have the following two equations:
\begin{equation}\label{eq:par11c}
    |K' - K_{\mathrm{opt}}(\theta^*, t)| = \left| \frac{(b^* - b)K_{\mathrm{opt}}(\theta^*, t)}{b} + \frac{a-a^*}{b} \right|\le  \left(\frac{a^*}{bb^*}+ \frac{1}{b}\right)\epsilon = O_T(\epsilon).
\end{equation}
\begin{equation}\label{eq:par11d}
    |(K')^2 - (K_{\mathrm{opt}}(\theta^*, t))^2| \le |K' - K_{\mathrm{opt}}(\theta^*, t)| \cdot |K' + K_{\mathrm{opt}}(\theta^*, t)| = O_T(\epsilon).
\end{equation}
By the choice of $K'$, using the controller $C_{K_{\mathrm{opt}}(\theta^*, t)}^{\theta^*}$ under dynamics $\theta^*$ results in the exact same sequence of states as using the controller $C_{K'}^{\theta}$ under dynamics $\theta$. This is because $a-bK' = a^*-b^*K_{\mathrm{opt}}(\theta^*, t)$, which by construction of truncated linear controllers implies that  $ax +bC_{K'}^{\theta}(x) = a^* + b^*C_{K_{\mathrm{opt}}(\theta^*, t)}^{\theta^*}$ for all $x$. The controls will however be different, and we will now bound that difference in controls. 

Define $x_0,x_1,...,x_{t}$ as the sequence of states when using controller $C_{K_{\mathrm{opt}}(\theta^*, t)}^{\theta^*}$ under dynamics $\theta^*$ starting at state $x_0 = 0$. Then we have the following result.
{\fontsize{10}{10}
    \begin{equation}
    \left|rC^{\theta^*}_{K_{\mathrm{opt}}(\theta^*, t)}(x_i)^2 - rC^\theta_{K'}(x_i)^2\right| =
        \begin{cases}
            \left|rx_i^2\left((K_{\mathrm{opt}}(\theta^*, t))^2 - (K')^2\right)\right| & \text{if } x_i \in [\frac{D_L}{a^*-b^*K_{\mathrm{opt}}(\theta^*, t)}, \frac{D_U}{a^*-b^*K_{\mathrm{opt}}(\theta^*, t)}]\\
            \left|r\left(\frac{D_U -a^*x_i}{b^*}\right)^2 - r\left(\frac{D_U -ax_i}{b}\right)^2\right|
         & \text{if $x_i > \frac{D_U}{a^*-b^*K_{\mathrm{opt}}(\theta^*, t)} $} \\
            \left|r\left(\frac{D_L -a^*x_i}{b^*}\right)^2 - r\left(\frac{D_L -ax_i}{b}\right)^2\right|
         & \text{if $x_i <\frac{D_L}{a^*-b^*K_{\mathrm{opt}}(\theta^*, t)} $} \\
        \end{cases}
    \end{equation}
    }
By Equation \eqref{eq:par11d}, this implies the following.
{\fontsize{10}{10}
    \begin{align*}
   &\left|rC^{\theta^*}_{K_{\mathrm{opt}}(\theta^*, t)}(x_i)^2 - rC^\theta_{K'}(x_i)^2\right| \\
   &\le
        \begin{cases}
            O_T(x_i^2\epsilon) & \text{if } x_i \in [\frac{D_L}{a^*-b^*K_{\mathrm{opt}}(\theta^*, t)}, \frac{D_U}{a^*-b^*K_{\mathrm{opt}}(\theta^*, t)}]\\
            rD_U^2\left|\left(\frac{1}{b^*}\right)^2 - \left(\frac{1}{b}\right)^2\right| + 2D_Ur|x_i|\left|\frac{a}{b} - \frac{a^*}{b^*}\right| + rx_i^2 \left|\left(\frac{a^*}{b^*}\right)^2 - \left(\frac{a}{b}\right)^2\right|
         & \text{if $x_i > \frac{D_U}{a^*-b^*K_{\mathrm{opt}}(\theta^*, t)} $} \\
            rD_L^2\left|\left(\frac{1}{b^*}\right)^2 - \left(\frac{1}{b}\right)^2\right| + 2|D_L|r|x_i|\left|\frac{a}{b} - \frac{a^*}{b^*}\right| + rx_i^2 \left|\left(\frac{a^*}{b^*}\right)^2 - \left(\frac{a}{b}\right)^2\right|
         & \text{if $x_i <\frac{D_L}{a^*-b^*K_{\mathrm{opt}}(\theta^*, t)} $} \\
        \end{cases}
    \end{align*}
    }
By Equations \eqref{eq:par11a} and \eqref{eq:par11b}, we get the following result.
    \begin{equation}
    \left|rC^{\theta^*}_{K_{\mathrm{opt}}(\theta^*, t)}(x_i)^2 - rC^\theta_{K'}(x_i)^2\right|  \le
        \begin{cases}
            O_T(x_i^2)\epsilon & \text{if } x_i \in [\frac{D_L}{a^*-b^*K_{\mathrm{opt}}(\theta^*, t)}, \frac{D_U}{a^*-b^*K_{\mathrm{opt}}(\theta^*, t)}]\\
            O_T(D_U^2\epsilon + D_U|x_i|\epsilon) + O_T(x_i^2\epsilon)
         & \text{if $x_i > \frac{D_U}{a^*-b^*K_{\mathrm{opt}}(\theta^*, t)} $} \\
            O_T(D_L^2\epsilon + |D_L||x_i|\epsilon) + O_T(x_i^2 \epsilon)
         & \text{if $x_i <\frac{D_L}{a^*-b^*K_{\mathrm{opt}}(\theta^*, t)} $} \\
        \end{cases}
    \end{equation}
    Using that $\norm{D}_{\infty} \le \log^2(T)$, in all three cases we have that
    \begin{equation}\label{eq:par11f}
    \left|rC^{\theta^*}_{K_{\mathrm{opt}}(\theta^*, t)}(x_i)^2 - rC^\theta_{K'}(x_i)^2\right|  = \tilde{O}_T\left(1 + |x_i| + |x_i|^2\right)\epsilon.
    \end{equation}
    The last fact we need is to note that $x_i$ is a sequence of states for the controller $C_{K_{\mathrm{opt}}(\theta^*, t)}^{\theta^*}$ under dynamics $\theta^*$, which by construction will always satisfy that $D_L \le a^*x_i + b^*C_{K_{\mathrm{opt}}(\theta^*, t)}^{\theta^*}(x^*) \le D_U$. Therefore, since $\E[|w_{i-1}|]$ and $\E[w_{i-1}^2]$ are constants relative to $T$ that depend on $\mathcal{D}$, for all $i$,
    \[
        \E[|x_i|] \le \norm{D}_{\infty} + \E[|w_{i-1}|] = O_T(\log^2(T)).
    \]
    \[
        \E[|x_i|^2] \le\norm{D}_{\infty}^2 + \E[w_{i-1}^2] + 2\norm{D}_{\infty}\E[|w_{i-1}|] = O_T(\log^4(T)).
    \]
Therefore, we can upper bound the difference in cost as follows:
\begin{align*}
    \bar{J}(\theta,C^\theta_{K'}, t) -  \bar{J}(\theta^*,C^{\theta^*}_{K_{\mathrm{opt}}(\theta^*, t)},t) &\le \E\left[ \frac{1}{t}\sum_{i=0}^{t-1} \left| rC^{\theta^*}_{K_{\mathrm{opt}}(\theta^*, t)}(x_i)^2 - rC^\theta_{K'}(x_i)^2\right| \right] \\
    &\le \frac{1}{t}\sum_{i=0}^{t-1} \tilde{O}_T\left(1 + \E[|x_i|] + \E[|x_i|^2]\right)\epsilon && \text{Equation \eqref{eq:par11f}} \\
    &\le \frac{1}{t}\sum_{i=0}^{t-1} \tilde{O}_T\left(\log^2(T) + \log^4(T)\right)\epsilon \\
    &=  \tilde{O}_T\left(\epsilon\right). 
\end{align*}
Finally, by definition of $K_{\mathrm{opt}}$ we know that
\[
    \bar{J}(\theta,C^{\theta}_{K_{\mathrm{opt}}(\theta, t)},t) \le  \bar{J}(\theta,C^\theta_{K'}, t),
\]
therefore we can conclude that
\[
    \bar{J}(\theta,C^\theta_{K_{\mathrm{opt}}(\theta, t)}, t) -  \bar{J}(\theta^*,C^{\theta^*}_{K_{\mathrm{opt}}(\theta^*, t)},t) = \tilde{O}_T(\epsilon).
\]
\end{proof}

\subsection{Proof of Lemma \ref{diff_in_theta}}\label{sec:proof_of_diff_in_theta}

\begin{proof}
For a set of time varying dynamics $\{\theta_j\}_{j=0}^{t-1}$ where $\theta_j \in \Theta$ for all $j$, we define the expected total cost for varying dynamics as 
\[
\bar{J}(\{\theta_j\}_{j=0}^{t-1},C^{\theta}_{K},t) := qx_t^2 + \sum_{j=0}^{t-1} qx_j^2 + rC_K^\theta(x_{j-1})^2,
\]
where $x_0 = 0$ and $x_j = a_{j-1}x_{j-1} + b_{j-1}C_K^\theta(x_{j-1}) + w_{j-1}$. In other words, this is the total cost if the dynamics at time $j < t$ are $\theta_j$.

For $i \in [0:t]$, let $\{\theta^{i}_j\}_{j=0}^{t-1}$ be a time varying dynamics with $\theta_j^{i} = \theta$ for all $j < i$ and $\theta_j^{i} = \theta^*$ for $j \ge i$. We will now compare the costs under dynamics $\{\theta_j^{i}\}_{j=0}^{t-1}$ versus under $\{\theta_j^{{i+1}}\}_{j=0}^{t-1}$.  Let $x_0,x_1,...x_{t}$ be the states when using controller $C_{K}^\theta$ under time-varying dynamics $\{\theta_j^{i}\}_{j=0}^{t-1}$ and $x_0^*,...x^*_t$ be the states when using controller $C_K^\theta$ under time-varying dynamics $\{\theta_j^{{i+1}}\}_{j=0}^{t-1}$ (both starting at $x_0 = x_0^* = 0$). Up until time $i$, the dynamics of these two trajectories are the same (both equal to $\theta$), and therefore the states and controls of the two trajectories are equivalent up until time $i$. Because $C_K^\theta$ is safe with respect to dynamics $\theta$, $|x_i^*| = |x_i| \le \norm{D}_{\infty} + |w_{i-1}|$. Because $\norm{D}_{\infty} \le \log^2(T)$, this implies that
\begin{equation}\label{eq:xi_ot1}
    \E[|x_i^*|] = \E[|x_i|] = \tilde{O}_T(1).
\end{equation}
Also note that by construction of the truncated linear controller, $|C_K^\theta(x_i)| \le K|x_i| + \frac{\norm{D}_{\infty} + a|x_i|}{b}$. Therefore, we have that

\begin{equation}\label{eq:xbound}
    |x_{i+1} - x_{i+1}^*| = |ax_i + bC_{K}^\theta(x_i) - a^*x_i - b^*C_{K}^\theta(x_i)| \le \epsilon |x_i| + \epsilon |C_K^\theta(x_i)| \le \epsilon\left(|x_i| + K|x_i| + \frac{\norm{D}_{\infty} + a|x_i|}{b}\right).
\end{equation}
Combining Equations \eqref{eq:xi_ot1} and \eqref{eq:xbound} gives that
\begin{equation}\label{eq:diff_in_xs}
    \E[|x_{i+1} - x_{i+1}^*|] = \tilde{O}_T(\epsilon).
\end{equation}
Consider $x_{i+1}$. Define the event $F = \{|x_{i+1}| < \log^3(T)\}$. As argued above, $|x_i| \le \norm{D}_{\infty} + |w_{i-1}| \le 2\log^2(T)$ under event $E_1$. Furthermore, the control $C_K^\theta(x_i)$ is safe with respect to dynamics $\theta^i_i$ and $\norm{\theta_i^i - \theta^*}_{\infty} = \norm{\theta - \theta^*}_{\infty} \le 1/\log^{46}(T) \le  1/\log(T)$ for sufficiently large $T$. Therefore, we can apply Lemma \ref{bounded_approx} for one step to get that for sufficiently large $T$, $|x_{i+1}| \le 4\log^2(T)$ under event $E_1$. Therefore, for sufficiently large $T$, $\P(F) \ge \P(E_1) = 1-o_T(1/T^{11})$. By Lemma \ref{lemma:subgaussian_tail} (using the same logic as in Equation \eqrefgen{eq:lemma13app}), this implies that
\begin{align*}
     \P(|x_{i+1}| \ge \log^3(T))\E[|x_{i+1}|^2 \mid |x_{i+1}| \ge \log^3(T)]  = o_T(1/T^{10}).
\end{align*}
The same logic holds for $x^*_{i+1}$. We showed above that $\P(|x_{i+1}| \le 4\log^2(T)) = 1-o_T(1/T^{11})$ (and the same equation holds for $x^*_{i+1}$). Therefore, we can apply Lemma \ref{offbyepsilon_exp} to get that
\begin{align*}
    &|t \cdot J ^*(\{\theta_j^{{i}}\}_{j=0}^{t-1},C^{\theta}_K,t, 0) - t \cdot J ^*(\{\theta_j^{{i+1}}\}_{j=0}^{t-1},C^{\theta}_{K}, t, 0)| \\
    &= \E \left[ |(t-i)\bar{J}(\theta^*,C^{\theta}_K,t-i,x_{i+1} ) - (t-i)\bar{J}(\theta^*,C^{\theta}_K, t-i, x_{i+1}^*)|\right] \\
    &=  \tilde{O}_T \left( \E\left[ \left|x_{i+1}- x_{i+1}^*\right|\right] + \epsilon + \frac{1}{T^2}\right) && \text{ Lemma \ref{offbyepsilon_exp}} \\
    &=  \tilde{O}_T\left(\epsilon + \frac{1}{T^2}\right). && \text{Equation \eqref{eq:diff_in_xs}}  \numberthis \label{eq:ind_step}
\end{align*}
Now, we conclude by noting that
\begin{align*}
    |t \cdot J ^*(\theta^*,C^{\theta}_{K},t) - t \cdot J ^*(\theta,C^{\theta}_{K}, t)| &= \left|\sum_{i=0}^{t} t \cdot J ^*(\{\theta_j^{i+1}\}_{j=0}^{t-1},C^{\theta}_K,t, 0) - t \cdot J ^*(\{\theta_j^{i}\}_{j=0}^{t-1},C^{\theta}_{K}, t, 0)\right| \\
    &= \tilde{O}_T\left(t\left(\epsilon + \frac{1}{T^2}\right)\right),
\end{align*}
and dividing both sides of the equation by $t$ gives the desired result.
\end{proof}

\newpage

\section{Proof of Theorem \ref{trunc_thm}}\label{sec:trunc_lin_sqrtt}

For the proof of Theorem \ref{trunc_thm}, recall the following notation (which was also defined in the proof sketch of Theorem \ref{trunc_thm}). Define $\mathcal{C}^{\mathrm{unc}} = \{C_K^{\mathrm{unc}}\}_{K \in \mathbb{R}}$ as the class of untruncated linear controllers, where $C_K^{\mathrm{unc}}(x) = -Kx$. For any controller $C$ and dynamics $\theta$, define $\bar{J}(\theta, C) =  \lim_{T \longrightarrow \infty} \bar{J}(\theta, C, T)$. Define  $K_{\mathrm{opt}}(\theta) = \arg\sup_{K} \bar{J}(\theta, C_K^\theta)$ and $F_{\mathrm{opt}}(\theta) = \arg\sup_{K} \bar{J}(\theta, C_K^{\mathrm{unc}})$. 

By Lemmas \ref{parameterization_assum2} and \ref{parameterization_assum3}, the class of truncated linear controllers satisfies the assumptions of Theorem \refgen{sufficiently_large_error}. If $\mathcal{D}$ has infinite support and $\norm{D}_{\infty} = O_T(1)$, then Assumption \refgen{assum_sufficiently_large_error} is satisfied. Furthermore, for noise distribution with infinite support, Algorithm \ref{alg:cap3} will choose the exact same controls as Algorithm \refgen{alg:cap_large}. Therefore, under Assumptions \ref{assum:constraints}--\ref{assum:initial}, if $\mathcal{D}$ has infinite support, then Algorithm \ref{alg:cap3} with the baseline class of truncated linear controllers has regret of $\tilde{O}_T(\sqrt{T})$ by Theorem \refgen{sufficiently_large_error}. Therefore, Theorem \refgen{sufficiently_large_error} directly proves Theorem \ref{trunc_thm} in the case when $\mathcal{D}$ has infinite support. For the rest of this proof, we will focus on proving Theorem \ref{trunc_thm} when $\mathcal{D}$ has bounded support, therefore making the following assumption.

\begin{assum}\label{assum_thm4}
    The distribution $\mathcal{D}$ has bounded support, i.e. there exists $\bar{w} > 0$ such that $\P_{w \sim \mathcal{D}}(|w| \le \bar{w}) = 1$.
\end{assum}
For the rest of the proof of Theorem \ref{trunc_thm}, we will also assume WLOG that $D_U \le |D_L|$.

\begin{definition}\label{def:Kdu_theta}
    Define $K_{D_U}^\theta$ as the value that satisfies the equation
        \[
            \frac{D_U}{a-bK_{D_U}^\theta} - D_U = \bar{w}.
        \]
\end{definition}

For the rest of Appendix \ref{sec:trunc_lin_sqrtt}, let $C^{\mathrm{alg}}$ be the controller of Algorithm \ref{alg:cap3} and $\mathcal{C}^\theta_{\mathrm{tr}}$ be the class of truncated linear controllers for dynamics $\theta$ as in Equation \eqref{eq:def_of_trunc_linear}. 

Let $s_e = \log_2(\sqrt{T}) - 1$, and let
\begin{equation}\label{eq:E0_trunc}
    E_0 := \left\{\forall s \in [0:s_e] : \norm{\theta^* - \hat{\theta}_s^{\mathrm{pre}}}_{\infty} \le \epsilon_s \right\}.
\end{equation} 
The following lemma (Lemma \refgen{v_to_use}) bounds the uncertainty in $\theta^*$ from regularized least squares estimation.
\begin{lemma}[Lemma \refgen{v_to_use}, Theorem 1 in \cite{abbasi2011regret}]\label{v_to_use}
    Suppose $x_t$ and $u_t$ are respectively the state and control at time $t$ when using an arbitrary controller $C$ starting at state $x_0 = 0$. Define $z_t = (x_t,u_t)$ and let $\lambda > 0$. Let $Z_t \in \mathbb{R}^{t \times 2}$ where the $i$th row is $z_{i-1}$, let $X_t \in \mathbb{R}^{t \times 1}$ where the $i$th element is $x_{i}$, and let $I \in \mathbb{R}^{2 \times 2}$ be the identity matrix.  Then under Assumptions \ref{assum:constraints}--\ref{assum:initial}, with probability $1-o_T\left(\frac{1}{T^2}\right)$ the following holds for all $1 \le t \le T-1$ and for any $S \subseteq [0:t-1]$:
    \begin{equation}\label{eq:v_to_use}
        \norm{\theta^* - (Z_{t}^{\top}Z_{t}+\lambda I)^{-1}Z_{t}^{\top}X_{t}}_{\infty} \le \sqrt{\frac{\max((V_t^S)_{11}, (V_t^S)_{22})}{\det(V_t^S)}}B_t,
    \end{equation}
    where $V_t^S = \lambda I + \sum_{s=0}^{t-1}z_sz_s^\top 1_{s \in S}$, $B_t = \alpha\sqrt{\log\left(\det\left(
    V_t^{[0:t-1]}
    \right)\right) + \log(\lambda^2) + 2\log(T^2)} + \sqrt{\lambda}(\bar{a}^2 + \bar{b}^2)$, and $\alpha$ is from the subgaussian assumption on the noise distribution $\mathcal{D}$, which implies that there exists an $\alpha$ such that $\E_{w \sim \mathcal{D}}[\exp(\gamma w)] \le \exp(\gamma^2\alpha^2/2)$ for any $\gamma \in \mathbb{R}$.
\end{lemma}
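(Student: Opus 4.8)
The plan is to obtain this as a consequence of the self-normalized martingale confidence ellipsoid of \cite{abbasi2011regret}, followed by a short Loewner-monotonicity argument that replaces the full Gram matrix by the sub-sampled matrix $V_t^S$ inside the $\ell_\infty$ norm. Up to the precise form of $B_t$ this is a restatement of Lemma~\refgen{v_to_use}, so one option is simply to cite that lemma; otherwise the self-contained argument goes as follows.

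First I would set up the regression. Writing $z_s=(x_s,u_s)^\top$, the dynamics give $x_{s+1}=\langle\theta^*,z_s\rangle+w_s$, hence $X_t=Z_t\theta^*+W_t$ with $W_t=(w_0,\dots,w_{t-1})^\top$, and $\hat\theta_t:=(Z_t^\top Z_t+\lambda I)^{-1}Z_t^\top X_t$ is the ridge estimate with Gram matrix $V_t^{[0:t-1]}=\lambda I+Z_t^\top Z_t$. Since $u_s=C(H_s)$ is a (possibly randomized) function of the history $H_s$, the regressor $z_s$ is measurable with respect to a $\sigma$-field not involving $w_s$, while $w_s$ is mean-zero and conditionally $\alpha$-sub-gaussian; hence $M_t:=Z_t^\top W_t=\sum_{s<t}z_sw_s$ is exactly of the form treated by Theorem~1 of \cite{abbasi2011regret}. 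Applying that bound in confidence-ellipsoid form, with the confidence level chosen so the failure probability is $o_T(1/T^2)$, gives on a single event of probability $1-o_T(1/T^2)$, simultaneously for all $1\le t\le T-1$,
\[
    \bigl\|\hat\theta_t-\theta^*\bigr\|_{V_t^{[0:t-1]}}\le B_t ,
\]
where the $\sqrt{\lambda}(\bar a^2+\bar b^2)$ summand of $B_t$ dominates the regularization bias $\sqrt{\lambda}\,\|\theta^*\|_2$ (using $\|\theta^*\|_2\le\bar a^2+\bar b^2$) and the remaining term is the standard log-determinant radius.

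Next I would convert this ellipsoid bound into the stated coordinatewise bound. For any $v\in\mathbb{R}^2$, Cauchy--Schwarz in the $V_t^{[0:t-1]}$ inner product gives $|\langle v,\hat\theta_t-\theta^*\rangle|\le\|v\|_{(V_t^{[0:t-1]})^{-1}}\,\|\hat\theta_t-\theta^*\|_{V_t^{[0:t-1]}}\le\|v\|_{(V_t^{[0:t-1]})^{-1}}B_t$. For any $S\subseteq[0:t-1]$ we have $V_t^{[0:t-1]}=V_t^S+\sum_{s\notin S}z_sz_s^\top\succeq V_t^S\succ0$, so $(V_t^{[0:t-1]})^{-1}\preceq(V_t^S)^{-1}$ and therefore $\|v\|_{(V_t^{[0:t-1]})^{-1}}\le\|v\|_{(V_t^S)^{-1}}$. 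Taking $v=e_1$ and $v=e_2$ and using $(M^{-1})_{11}=M_{22}/\det M$ and $(M^{-1})_{22}=M_{11}/\det M$ for positive-definite $2\times2$ $M$, we obtain
\[
    \|\theta^*-\hat\theta_t\|_\infty=\max_{i\in\{1,2\}}\bigl|\langle e_i,\hat\theta_t-\theta^*\rangle\bigr|\le\sqrt{\frac{\max\bigl((V_t^S)_{11},(V_t^S)_{22}\bigr)}{\det(V_t^S)}}\,B_t ,
\]
which is the claim; since the high-probability event was chosen uniformly in $t$ and the last two displays are purely deterministic, this holds simultaneously for all $1\le t\le T-1$ and all $S\subseteq[0:t-1]$.

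The only genuinely non-elementary input is the self-normalized tail bound, which I would invoke as a black box from \cite{abbasi2011regret}; correspondingly the ``main obstacle'' is essentially just verifying that its hypotheses hold here, the key (easy) point being the predictability of $z_s$ relative to $w_s$ even though the control sequence is adaptive and possibly randomized. Everything else — reconciling the stated $B_t$ with the ellipsoid radius, and the $2\times2$ inverse identities — is routine bookkeeping, and in the paper one may alternatively just cite Lemma~\refgen{v_to_use}.
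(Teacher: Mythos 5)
Your proposal is correct and takes essentially the same route as the paper: the paper simply imports this lemma by citation (Lemma~\refgen{v_to_use}, itself built on Theorem~1 of \cite{abbasi2011regret}), and your derivation — the self-normalized confidence ellipsoid for the ridge estimate, then Cauchy--Schwarz, the Loewner bound $V_t^{[0:t-1]}\succeq V_t^S$, and the explicit $2\times2$ inverse to get the $\ell_\infty$/subsampled form — is exactly the standard argument behind that citation. The only loose ends are cosmetic constant bookkeeping inherited from the statement itself (e.g., bounding $\|\theta^*\|_2$ by $\bar a^2+\bar b^2$ and the precise choice of confidence level matching the $2\log(T^2)$ term), not gaps in the reasoning.
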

By Lemma  \ref{v_to_use} we have that with probability $1-o_T(1/T^2)$, for all $s$, $\norm{\theta^* - \hat{\theta}_s^{\mathrm{pre}}}_{\infty} \le \epsilon_s$.Therefore, 
\[
    \P(E_0) = 1-o_T(1/T^2).
\]
 By construction we also have that $\norm{\hat{\theta}_s - \hat{\theta}_s^{\mathrm{pre}}}_{\infty} \le \epsilon_s$. This implies by the triangle inequality that under event $E_0$, $\norm{\hat{\theta}_s - \theta^*}_{\infty} \le 2\epsilon_s$. 
 
We also have the following uncertainty result that is equivalent to Lemma \refgen{initial_uncertainty}:
\begin{lemma}\label{initial_uncertainty_trunc}
    Under Assumptions \ref{assum:constraints}--\ref{assum:initial}, there exists a $c_{\mathrm{L}\ref{initial_uncertainty_trunc}} = \tilde{O}_T(1)$ such that with probability $1-o_T(1/T^2)$
    \[
     \max_{s \in [0:s_e]} \epsilon_s \le c_{\mathrm{L}\ref{initial_uncertainty_trunc}} T^{-1/4} = \tilde{O}_T(T^{-1/4}).
    \]
\end{lemma}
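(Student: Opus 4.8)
The plan is to bound $\max_s\epsilon_s$ by reducing $\epsilon_s$ to the minimum eigenvalue of the regularized least-squares information matrix and then showing that the Rademacher perturbations of the warm-up controls force that eigenvalue to grow like $\sqrt T/\mathrm{polylog}(T)$. First I would record the elementary fact that for a $2\times2$ positive semidefinite $M$ with eigenvalues $\lambda_1\ge\lambda_2>0$ one has $\max(M^{11},M^{22})\le M^{11}+M^{22}=\lambda_1+\lambda_2\le2\lambda_1$ while $M^{11}M^{22}-(M^{12})^2=\lambda_1\lambda_2$, so that
\[
\epsilon_s=B_{T_s}\sqrt{\frac{\max(V_{T_s}^{11},V_{T_s}^{22})}{V_{T_s}^{11}V_{T_s}^{22}-(V_{T_s}^{12})^2}}\le B_{T_s}\sqrt{\frac{2}{\lambda_{\min}(V_{T_s})}}.
\]
Because $T_s=2^s\sqrt T\ge\sqrt T$ and $V_{T_s}=\lambda I+\sum_{t=0}^{T_s-1}z_tz_t^{\top}\succeq\sum_{t=0}^{\sqrt T-1}z_tz_t^{\top}$, it then suffices to lower bound $\lambda_{\min}\big(\sum_{t=0}^{\sqrt T-1}z_tz_t^{\top}\big)$, and a single such bound transfers to every $s$. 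For the prefactor, on the event $E_0\cap E_1$ (probability $1-o_T(1/T^2)$) the algorithm's trajectory is safe and every $|w_t|\le\log^2(T)$, so all states and controls up to time $T_s$ are $\tilde O_T(1)$ (using $\bar w,\norm{D}_\infty=O_T(1)$ and the safety clamp of Line~\ref{line:safety_tl}); hence $\det(V_{T_s})\le\mathrm{poly}(T)$ and $B_{T_s}=\tilde O_T(1)$.

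The crux, and the step I expect to be the main obstacle, is to show that with probability $1-o_T(1/T^2)$ one has $\lambda_{\min}\big(\sum_{t=0}^{\sqrt T-1}z_tz_t^{\top}\big)=\Omega_T\big(\sqrt T/\mathrm{polylog}(T)\big)$, and the difficulty is that $C^{\mathrm{init}}$ may be an arbitrary, highly nonlinear controller constrained only by Assumption~\ref{assum:initial}. Recall that during warm-up $u_t=C^{\mathrm{init}}(x_t)+\phi_t/\log(T)$ with $\phi_t\sim\mathrm{Rademacher}(0.5)$ i.i.d.\ and $x_t=c_{t-1}+w_{t-1}$ with $c_{t-1}:=a^*x_{t-1}+b^*u_{t-1}$. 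Fix a unit vector $v=(v_1,v_2)$ and let $\mathcal G_t:=\sigma(H_{t-1},\phi_{t-1})$, which is a filtration (since $\phi_{t-1}$ is recoverable from $H_t$) that fixes $c_{t-1}$ but leaves the fresh randomness $(w_{t-1},\phi_t)$ independent. Writing $v\cdot z_t=v_1x_t+v_2C^{\mathrm{init}}(x_t)+v_2\phi_t/\log(T)$ and using independence of $\phi_t$,
\[
\mathrm{Var}(v\cdot z_t\mid\mathcal G_t)=\mathrm{Var}\big(v_1w_{t-1}+v_2C^{\mathrm{init}}(c_{t-1}+w_{t-1})\mid\mathcal G_t\big)+\frac{v_2^2}{\log^2(T)}.
\]
On the event that $x_t$ lies in the domain of Assumption~\ref{assum:initial} we get $|C^{\mathrm{init}}(x_t)|=\tilde O_T(1)$, so by the standard-deviation triangle inequality the first term is at least $\big(|v_1|-|v_2|\tilde O_T(1)\big)_+^2$; combining this with the $v_2^2/\log^2(T)$ term and splitting on whether $|v_2|$ is below or above an inverse-polylog threshold shows $\mathrm{Var}(v\cdot z_t\mid\mathcal G_t)\ge1/\mathrm{polylog}(T)$ for large $T$, uniformly in $v$ and $t$. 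This is the conceptual heart: the graph $x\mapsto(x,C^{\mathrm{init}}(x))$ can locally absorb the $\Theta(1)$ noise excitation injected into $x_t$ only along directions nearly parallel to the control axis, and those directions receive $\Theta(v_2^2/\log^2(T))$ of excitation directly from the independent perturbation $\phi_t/\log(T)$, so no direction is starved.

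To finish I would write $\sum_{t<\sqrt T}(v\cdot z_t)^2=\sum_{t<\sqrt T}\mathbb E[(v\cdot z_t)^2\mid\mathcal G_t]+M_v$ with $M_v$ a martingale with $\tilde O_T(1)$ increments: the first sum is $\ge\sqrt T/\mathrm{polylog}(T)$, and Azuma--Hoeffding gives $|M_v|=\tilde O_T(T^{1/4})=o_T(\sqrt T/\mathrm{polylog}(T))$ with probability $1-o_T(1/T^3)$. A union bound over a $\mathrm{poly}(T)$-fine net of the unit circle, combined with the Lipschitz estimate $\big|\sum(v\cdot z_t)^2-\sum(v'\cdot z_t)^2\big|\le2\sqrt T\,\|v-v'\|\max_t\|z_t\|^2$, upgrades this to all unit $v$ with probability $1-o_T(1/T^2)$. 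Feeding $\lambda_{\min}=\Omega_T(\sqrt T/\mathrm{polylog}(T))$ and $B_{T_s}=\tilde O_T(1)$ into the first display yields $\max_s\epsilon_s\le\tilde O_T(T^{-1/4})$, whose proof parallels that of Lemma\refgen{initial_uncertainty}. One minor technicality, absorbed into the $o_T(\cdot)$ failure probability, is that for asymmetric bounded noise a state can fall just outside the domain of $C^{\mathrm{init}}$ with probability $O(1/T^4)$ at any given step.
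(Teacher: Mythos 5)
Your core idea---bound $\epsilon_s$ by $B_{T_s}\sqrt{2/\lambda_{\min}(V_{T_s})}$, note $\lambda_{\min}(V_{T_s})\ge\lambda_{\min}\big(\sum_{t<\sqrt T}z_tz_t^\top\big)$, and show the Rademacher dither forces every direction to pick up $\Omega_T(1/\mathrm{polylog}(T))$ conditional variance per warm-up step, so that $\lambda_{\min}=\Omega_T(\sqrt T/\mathrm{polylog}(T))$ by Azuma plus a net---is sound and is essentially a from-scratch reconstruction of the argument the paper does not spell out: the paper's proof of Lemma \ref{initial_uncertainty_trunc} is a direct appeal to Lemma\refgen{initial_uncertainty}, with the explicit remark that that proof ``relies only on the first $1/\nu_T^2$ steps.'' That remark points to the genuine gap in your write-up. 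To get $B_{T_s}=\tilde O_T(1)$ for $s\ge 1$ you need $\log\det(V_{T_s})=O_T(\mathrm{polylog}(T))$, i.e.\ a bound on the states and controls of the \emph{exploitation} phase, and you obtain it by asserting that on $E_0\cap E_1$ the algorithm's trajectory is safe. But in this paper exploitation-phase safety is not a consequence of $E_0\cap E_1$ alone: Lemma \ref{lemma:L_less_than_U} (which guarantees $u_t^{\mathrm{safeL}}\le u_t^{\mathrm{safeU}}$, without which the clamp in Line \ref{line:safety_tl} need not enforce the upper constraint for $\theta^*$) is stated conditional on $E_1\cap E_2$, and $E_2$ is exactly the conclusion of Lemma \ref{initial_uncertainty_trunc}; Lemma \ref{safety_append} likewise invokes Lemma \ref{initial_uncertainty_trunc}. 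So as written your argument is circular: it uses safety (hence boundedness of $\det V_{T_s}$) that the paper derives from the very lemma you are proving. The missing idea is an induction over rounds: warm-up boundedness follows from Assumption \ref{assum:initial} plus the dither being within the $b^*/\log(T)$ margin, which gives $B_{T_0}=\tilde O_T(1)$ and hence $\epsilon_0\le\tilde O_T(T^{-1/4})$; small $\epsilon_0$ together with $E_0\cap E_1$ then gives well-ordered clamps and boundedness through round $0$, hence $B_{T_1}=\tilde O_T(1)$ and $\epsilon_1\le\tilde O_T(T^{-1/4})$, and so on. Without that (or some other $\mathrm{poly}(T)$ bound on $\sum_{t<T_s}\|z_t\|^2$ not routed through safety) the claim $\max_s\epsilon_s=\tilde O_T(T^{-1/4})$ does not follow from your excitation bound alone.

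A second, smaller issue: Assumption \ref{assum:initial} controls $C^{\mathrm{init}}$ only on the stated interval, so both your conditional-variance lower bound (the standard-deviation triangle inequality needs an \emph{upper} bound on the conditional sd of $C^{\mathrm{init}}(x_t)$) and the $\tilde O_T(1)$ bound on the Azuma increments integrate over noise realizations that push $x_t$ outside that interval, where nothing constrains $C^{\mathrm{init}}$. This cannot simply be ``absorbed into the failure probability,'' because conditional variances and conditional expectations see the whole noise distribution; you need, e.g., the law of total variance restricted to the in-domain event (conditional probability $1-O(T^{-4})$, on which the restricted noise still has variance $1-o_T(1)$ and $|C^{\mathrm{init}}(x_t)|=\tilde O_T(1)$), and a stopped-martingale version of the concentration step on the event that the warm-up trajectory never leaves the interval. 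With the induction over rounds and this patch, your argument would give a correct self-contained proof paralleling the cited one.
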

The proof of Lemma \refgen{initial_uncertainty} relies only on the first $1/\nu_T^2$ steps and is written agnostic to the choice of $\nu_T$, and therefore the result of Lemma \ref{initial_uncertainty_trunc} follows directly from that proof. Note that we explicitly named the constant in Lemma \ref{initial_uncertainty_trunc} as we will use this constant later in the proof. For the rest of this section, define
\begin{equation}\label{eq:E2_trunc}
    E_2 := E_0 \bigcap \left\{ \max_{s \in [0:s_e]} \epsilon_s \le c_{\mathrm{L}\ref{initial_uncertainty_trunc}} T^{-1/4} = \tilde{O}_T(T^{-1/4}) \right\}.
\end{equation}
Lemma \ref{initial_uncertainty_trunc} implies that we have
\[
    \P(E_2) = 1-o_T(1/T^2).
\]
Define
\[
    E_2^0 := \{\epsilon_0 \le c_{\mathrm{L}\ref{initial_uncertainty_trunc}} T^{-1/4} \} \cap \{ \norm{\theta^* - \hat{\theta}_0^{\mathrm{pre}}}_{\infty} \le \epsilon_0\} \subseteq E_2.
\]
Recall $\hat{\theta}_{\mathrm{wu}}$, which is defined in Line \ref{line:thetawu} of Algorithm \ref{alg:cap3}. Because $\hat{\theta}_{\mathrm{wu}} = \hat{\theta}_0^{\mathrm{pre}}$, by the same logic as above, under $E_2^0$ we have that $\norm{\theta^* - \hat{\theta}_{\mathrm{wu}}}_{\infty} \le 2\epsilon_0 \le 2c_{\mathrm{L}\ref{initial_uncertainty_trunc}} T^{-1/4}$.

Define $E_1$ as
\begin{equation}\label{eq:E1}
    E_1 = \left\{\forall t < T : |w_t| \le \log^2(T) \right\}.
\end{equation}
and $E_{\mathrm{safe}}$ as the following, where $x'_t$ and $u'_t$ are the states and controls respectively of the algorithm:
\begin{equation}\label{eq:safeE}
  E_{\mathrm{safe}} = \left\{ \forall t < T: D_{\mathrm{L}} \le a^*x'_t + b^*u'_t \le D_{\mathrm{U}}\right\},
\end{equation}
Finally, we define the event 
\[
E = E_1 \cap E_2 \cap E_{\mathrm{safe}}.
\] 
By a union bound we have that $\P(E) = 1-o_T(1/T^2)$. Using this new notation and Lemma \ref{initial_uncertainty_trunc}, we can proceed to the main proof. 

The desired safety of $C^{\mathrm{alg}}$ follows from the following lemma:
\begin{lemma}\label{safety_append}
    Under Assumptions \ref{assum:constraints}--\ref{assum:initial} , Algorithm \ref{alg:cap3} is safe for $T$ steps for dynamics $\theta^*$ with probability $1-o_T(1/T^2)$.
\end{lemma}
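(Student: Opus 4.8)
Throughout, write $x'_t,u'_t$ for the states and controls produced by Algorithm \ref{alg:cap3}. The plan is to condition on a single high-probability event and then prove safety \emph{deterministically} by an induction on $t$ that simultaneously maintains the invariant $|x'_t| = \tilde{O}_T(1)$; the warm-up phase ($t<\sqrt T$) and the safe exploitation phase ($\sqrt T \le t < T$) are handled separately, and together they cover all $t \in \{0,\dots,T-1\}$. Concretely, let $E_{\mathrm{wu}}$ be the event that $w_t \in [F_{\mathcal{D}}^{-1}(1/T^4),\,F_{\mathcal{D}}^{-1}(1-1/T^4)]$ for every $t<\sqrt T$. Since $\mathcal{D}$ is continuous, $\P(w_t \notin [F_{\mathcal{D}}^{-1}(1/T^4),\,F_{\mathcal{D}}^{-1}(1-1/T^4)]) = 2/T^4$, so a union bound over the $\sqrt T$ warm-up steps gives $\P(E_{\mathrm{wu}}) = 1-o_T(1/T^2)$; combined with $\P(E_1) = 1-o_T(1/T^2)$ (event \eqref{eq:E1}) and $\P(E_2) = 1-o_T(1/T^2)$ (event \eqref{eq:E2_trunc}, via Lemma \ref{initial_uncertainty_trunc}), the event $G := E_1 \cap E_2 \cap E_{\mathrm{wu}}$ has $\P(G) = 1-o_T(1/T^2)$. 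It therefore suffices to show that for $T$ sufficiently large, $E_{\mathrm{safe}}$ (event \eqref{eq:safeE}) holds on $G$.

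\textbf{Warm-up phase.} I would show by induction on $0 \le t \le \sqrt T$ that $x'_t$ lies in the interval $I := [D_{\mathrm L}+F_{\mathcal{D}}^{-1}(1/T^4),\,D_{\mathrm U}+F_{\mathcal{D}}^{-1}(1-1/T^4)]$ on which Assumption \ref{assum:initial} applies, and that safety holds at all steps $< t$. The base case $x'_0 = 0 \in I$ follows from $D_{\mathrm L}<0<D_{\mathrm U}$ (Assumption \ref{assum:constraints}) and $F_{\mathcal{D}}^{-1}(1/T^4)<0<F_{\mathcal{D}}^{-1}(1-1/T^4)$ for large $T$. For the step: if $x'_t \in I$, then \eqref{eq:assum_initial} gives $a^*x'_t+b^*C^{\mathrm{init}}(x'_t) \in [D_{\mathrm L}+b^*/\log T,\, D_{\mathrm U}-b^*/\log T]$, and since the algorithm plays $u'_t = C^{\mathrm{init}}(x'_t)+\phi_t/\log T$ with $\phi_t \in \{-1,+1\}$, the quantity $a^*x'_t+b^*u'_t$ is shifted by exactly $b^*\phi_t/\log T$, of magnitude $b^*/\log T$, hence $a^*x'_t+b^*u'_t \in [D_{\mathrm L},D_{\mathrm U}]$, i.e., safety holds at step $t$. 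On $E_{\mathrm{wu}}$ this yields $x'_{t+1} = a^*x'_t+b^*u'_t+w_t \in I$, closing the induction; in particular $|x'_{\sqrt T}| \le \norm{D}_\infty + |F_{\mathcal{D}}^{-1}(1-1/T^4)| = \tilde{O}_T(1)$ using Assumption \ref{assum:constraints} and the subgaussian tail of $\mathcal{D}$.

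\textbf{Safe exploitation phase.} For $\sqrt T \le t < T$ (in round $s$) let $\mathcal{B}_t$ be the uncertainty ball the algorithm uses in Lines \ref{line:safeU_tl}--\ref{line:safeL_tl} to define $u_t^{\mathrm{safeU}},u_t^{\mathrm{safeL}}$, namely $\{\theta:\norm{\theta-\hat\theta_{\mathrm{wu}}}_\infty \le \epsilon_0\}$ in the small-noise branch of Line \ref{line:split} and $\{\theta:\norm{\theta-\hat\theta_s}_\infty \le \epsilon_s\}$ otherwise. On $E_2$ both branches satisfy $\theta^* \in \mathcal{B}_t$ and the relevant radius is $\tilde{O}_T(T^{-1/4})$, so every $(a,b)\in \mathcal{B}_t$ has $b \ge \underline{b}-\tilde{O}_T(T^{-1/4}) \ge \underline{b}/2 > 0$ for large $T$ (using Assumption \ref{assum_init}). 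I would then continue the induction, maintaining $|x'_t| = \tilde{O}_T(1)$. Because $ax'_t+bu$ is strictly increasing in $u$ for each $\theta\in\mathcal{B}_t$, the definitions give $u_t^{\mathrm{safeU}} = \min_{(a,b)\in\mathcal{B}_t}\frac{D_{\mathrm U}-ax'_t}{b}$ and $u_t^{\mathrm{safeL}} = \max_{(a,b)\in\mathcal{B}_t}\frac{D_{\mathrm L}-ax'_t}{b}$, so $[u_t^{\mathrm{safeL}},u_t^{\mathrm{safeU}}] = \bigcap_{(a,b)\in\mathcal{B}_t}\big[\frac{D_{\mathrm L}-ax'_t}{b},\frac{D_{\mathrm U}-ax'_t}{b}\big]$; a routine perturbation estimate (each endpoint varies by $O(\tilde{O}_T(T^{-1/4})(1+|x'_t|)/\underline{b}^2)$ over $\mathcal{B}_t$) together with $D_{\mathrm U}-D_{\mathrm L}\ge 1/\log T$ (Assumption \ref{assum:constraints}) and $|x'_t| = \tilde{O}_T(1)$ shows this intersection has length at least $\frac{D_{\mathrm U}-D_{\mathrm L}}{\bar{b}} - \tilde{O}_T(T^{-1/4}) > 0$ for large $T$, hence $u_t^{\mathrm{safeL}}\le u_t^{\mathrm{safeU}}$. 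Consequently the control $u'_t = \max(\min(C_s^{\mathrm{alg}}(x'_t),u_t^{\mathrm{safeU}}),u_t^{\mathrm{safeL}})$ of Line \ref{line:safety_tl} lies in $[u_t^{\mathrm{safeL}},u_t^{\mathrm{safeU}}]$, and since $\theta^*\in\mathcal{B}_t$ this forces $D_{\mathrm L}\le a^*x'_t+b^*u'_t\le D_{\mathrm U}$. Finally, on $E_1$, $|x'_{t+1}| = |a^*x'_t+b^*u'_t+w_t| \le \norm{D}_\infty+\log^2 T = \tilde{O}_T(1)$, closing the induction with base state $x'_{\sqrt T}$ inherited from the warm-up phase. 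Hence safety holds at every step on $G$, giving $\P(E_{\mathrm{safe}}) \ge \P(G) = 1-o_T(1/T^2)$.

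\textbf{Main obstacle.} The argument is largely bookkeeping once one has Lemma \ref{initial_uncertainty_trunc} and Assumption \ref{assum:initial}, and the only place that needs genuine care is the exploitation-phase induction, where three properties are mutually dependent: state boundedness $|x'_t|=\tilde{O}_T(1)$, non-emptiness of the safe interval $[u_t^{\mathrm{safeL}},u_t^{\mathrm{safeU}}]$, and single-step safety. The interval is non-empty only because the perturbation $\tilde\epsilon(1+|x'_t|)$ of its endpoints is small relative to the gap $D_{\mathrm U}-D_{\mathrm L}\ge 1/\log T$, which relies both on the $\tilde{O}_T(T^{-1/4})$ bound on $\epsilon_s$ (Lemma \ref{initial_uncertainty_trunc}, on $E_2$) and on the inductively maintained bound $|x'_t|=\tilde{O}_T(1)$; meanwhile that bound on $|x'_t|$ is itself a consequence of the previous steps having been safe. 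Secondary care points are checking the small-noise and large-noise branches of Line \ref{line:split} uniformly (both use a ball containing $\theta^*$ on $E_2$) and passing the warm-up bound on $|x'_{\sqrt T}|$ cleanly into the exploitation induction.
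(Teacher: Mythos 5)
Your proof is correct and takes essentially the same route as the paper's, which simply defers to the companion paper: warm-up safety from Assumption \ref{assum:initial} together with a noise-quantile event matching the interval in that assumption, and exploitation-phase safety from non-emptiness of $[u_t^{\mathrm{safeL}},u_t^{\mathrm{safeU}}]$ (the paper's Lemma \ref{lemma:L_less_than_U}, proved exactly via your observation that $\epsilon_s=\tilde{O}_T(T^{-1/4})=o_T(1/\log T)$ against the gap $D_{\mathrm{U}}-D_{\mathrm{L}}\ge 1/\log T$, with the state bound and single-step safety bootstrapped through the same induction) plus $\theta^*$ lying in the uncertainty set under $E_2$. One caveat you share with the paper's own proof sketch: in the large-noise branch the ball in Lines \ref{line:safeU_tl}--\ref{line:safeL_tl} is centered at the shifted estimate $\hat{\theta}_s$, for which the paper's event $E_0$ only yields $\norm{\theta^*-\hat{\theta}_s}_{\infty}\le 2\epsilon_s$, so your claim that $\theta^*\in\mathcal{B}_t$ there carries the same factor-of-two gloss as the paper and does not affect the substance of the argument.
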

The proof of Lemma \ref{safety_append} follows exactly as in the proof of Lemma \refgen{safety_append} except using Lemma \ref{initial_uncertainty_trunc} and the above definitions of $E_0,E_1$ and $E_2$ with respect to Algorithm \ref{alg:cap3}. The following result is equivalent to Lemma \refgen{lemma:L_less_than_U} and is proven in the exact same way using that $T^{-1/4} = o_T(1/\log(T))$.
\begin{lemma}\label{lemma:L_less_than_U}
    Under Assumptions \ref{assum:constraints}--\ref{assum:initial}, conditional on $E_1 \cap E_2$ and for sufficiently large $T$, if $u_{T_0-1}$ is safe for dynamics $\theta^*$, then for all $t \in [T_0, T]$,
    \[
        u_t^{\mathrm{safeL}} \le u_t^{\mathrm{safeU}}.
    \]
\end{lemma}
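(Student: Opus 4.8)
The plan is to reduce the claimed inequality to a nonemptiness statement and verify it by an explicit construction together with a short induction. First, $u_t^{\mathrm{safeL}} \le u_t^{\mathrm{safeU}}$ holds as soon as there is a single control $u$ safe for \emph{every} $\theta$ in the uncertainty ball used in Lines~\ref{line:safeU_tl}--\ref{line:safeL_tl} (and in the analogous lines of the small-noise branch of Line~\ref{line:split}): $u_t^{\mathrm{safeU}}$ is the supremum of such $u$ for the upper constraint and $u_t^{\mathrm{safeL}}$ the infimum for the lower one, so any $u$ meeting both lies in $[u_t^{\mathrm{safeL}}, u_t^{\mathrm{safeU}}]$. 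Writing $\hat{\theta} = (\hat{a},\hat{b})$ for the estimate in force at time $t$ ($\hat{\theta}_{\mathrm{wu}}$ in the small-noise branch, $\hat{\theta}_s$ otherwise), $\varepsilon$ for its radius ($\epsilon_0$ or $\epsilon_s$), and $m := (D_L + D_U)/2$ for the midpoint of the safe interval, the candidate is $u := (m - \hat{a}x_t)/\hat{b}$, the control that lands the expected next state at $m$ under $\hat{\theta}$. For any $\theta = (a,b)$ with $\norm{\theta - \hat{\theta}}_{\infty} \le \varepsilon$ one has $ax_t + bu = m + (b/\hat{b} - 1)m + x_t(a\hat{b} - b\hat{a})/\hat{b}$, and since $|b/\hat{b} - 1| \le \varepsilon/\underline{b} = O_T(\varepsilon)$ and $|a\hat{b} - b\hat{a}| \le \hat{b}|a - \hat{a}| + \hat{a}|b - \hat{b}| = O_T(\varepsilon)$, this gives $|ax_t + bu - m| = O_T(\varepsilon)\,(|m| + |x_t|)$.

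Combining this with $\norm{D}_{\infty} = O_T(1)$ and $D_U - D_L \ge 1/\log T$ from Assumption~\ref{assum:constraints} (so $m$ is at distance $\ge 1/(2\log T)$ from each endpoint of $[D_L, D_U]$), and with $\varepsilon \le \max_s \epsilon_s = \tilde{O}_T(T^{-1/4})$ on $E_2$, gives the key implication: whenever $|x_t| = O_T(\log^2 T)$, the bound above reads $|ax_t + bu - m| = \tilde{O}_T(T^{-1/4}) = o_T(1/\log T)$ uniformly over the ball, so for sufficiently large $T$ one has $ax_t + bu \in [D_L, D_U]$ for all $\theta$ in the ball, hence $u_t^{\mathrm{safeL}} \le u_t^{\mathrm{safeU}}$. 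It therefore suffices to show, on $E_1 \cap E_2$, that $|x_t| = O_T(\log^2 T)$ for every $t \in [T_0, T]$, which I would prove by induction on $t$. The base case $t = T_0$: the hypothesis that $u_{T_0-1}$ is safe for $\theta^*$ gives $D_L \le a^*x_{T_0-1} + b^*u_{T_0-1} \le D_U$, so on $E_1$, $|x_{T_0}| \le \norm{D}_{\infty} + |w_{T_0-1}| \le \norm{D}_{\infty} + \log^2 T = O_T(\log^2 T)$. For the step, the key implication gives $u_t^{\mathrm{safeL}} \le u_t^{\mathrm{safeU}}$, so the algorithm's control $u_t = \max(\min(C_s^{\mathrm{alg}}(x_t), u_t^{\mathrm{safeU}}), u_t^{\mathrm{safeL}})$ lies in $[u_t^{\mathrm{safeL}}, u_t^{\mathrm{safeU}}]$; taking $\theta = \hat{\theta}$ yields $\hat{a}x_t + \hat{b}u_t \in [D_L, D_U]$, hence $|u_t| = O_T(|x_t| + \norm{D}_{\infty}) = O_T(\log^2 T)$, and since $\norm{\theta^* - \hat{\theta}}_{\infty} = O_T(\varepsilon)$ on $E_2$, $a^*x_t + b^*u_t = \hat{a}x_t + \hat{b}u_t + O_T(\varepsilon)(|x_t| + |u_t|)$ with $O_T(\varepsilon)(|x_t| + |u_t|) = O_T(\varepsilon\log^2 T) = o_T(1)$, so $|x_{t+1}| \le |a^*x_t + b^*u_t| + |w_t| \le \norm{D}_{\infty} + o_T(1) + \log^2 T = O_T(\log^2 T)$ with a constant independent of $t$, closing the induction.

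The two branches of Line~\ref{line:split} are handled identically, the only input being that the uncertainty radius is $\tilde{O}_T(T^{-1/4})$ in both. I expect the main obstacle to be keeping the induction self-contained: $u_t^{\mathrm{safeL}} \le u_t^{\mathrm{safeU}}$ is only provable once $|x_t|$ is known to be polylogarithmic, while that bound on $|x_t|$ relies on the previous control having kept the (approximate) state inside $[D_L, D_U]$ --- which is exactly why the hypothesis isolates $u_{T_0-1}$ as the base case, and why it is the quantitative rate $\epsilon_s = \tilde{O}_T(T^{-1/4})$, not merely $\epsilon_s = o_T(1)$, that allows the perturbation $O_T(\varepsilon\log^2 T)$ to beat the $1/\log T$ width of the safe interval; this is the sense in which the argument runs exactly as for the general version using $T^{-1/4} = o_T(1/\log T)$.
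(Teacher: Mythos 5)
Your proof is correct and is essentially the argument the paper intends: the paper only cites the companion paper's proof of the analogous lemma, noting that it goes through because $\epsilon_s = \tilde{O}_T(T^{-1/4}) = o_T(1/\log T)$, and your construction (a control aiming the estimated expected state at the midpoint of $[D_{\mathrm{L}}, D_{\mathrm{U}}]$, whose robustness error $\tilde{O}_T(T^{-1/4})\,\mathrm{polylog}(T)$ is beaten by the half-width $1/(2\log T)$, plus an induction keeping $|x_t| = O_T(\log^2 T)$ from the safety of $u_{T_0-1}$ under $E_1 \cap E_2$) is exactly that mechanism made explicit. The only points to state carefully are that $\theta^*$ lies within $2\epsilon_s$ (not $\epsilon_s$) of $\hat{\theta}_s$ under $E_0$, and that the ``sufficiently large $T$'' threshold in the induction is uniform in $t$; both are handled by your constants as written.
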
The rest of the proof of Lemma \ref{safety_append} follows directly using Lemma \ref{lemma:L_less_than_U}.

The rest of this section will focus on proving that the regret of Algorithm \ref{alg:cap3} is $\tilde{O}_T(\sqrt{T})$ with probability $1-o_T(1/T)$. 

Let $C_{\mathrm{switch}} = \frac{c_{\mathrm{E}\ref{eq:Fhat_approx}}D_{\mathrm{U}}}{c_{\mathrm{L}\ref{j_bounded_from_0}}^2} = \tilde{O}_T(1)$ where $c_{\mathrm{E}\ref{eq:Fhat_approx}} = \tilde{O}_T(1)$ and is defined in Equation \eqref{eq:Fhat_approx} and $c_{\mathrm{L}\ref{j_bounded_from_0}} = \Omega(1)$ defined in Lemma \ref{j_bounded_from_0}; Equation~\eqref{eq:Fhat_approx} and Lemma~\ref{j_bounded_from_0} will both appear in Appendix~\ref{sec:proof_of_cswitch_prop}. Note that $C_{\mathrm{switch}}$ is used in Line \ref{line:split} of Algorithm \ref{alg:cap3}. Define the event $E_{\mathrm{E}\ref{eq:first_case}}$ as
\begin{equation}\label{eq:first_case}
    E_{\mathrm{E}\ref{eq:first_case}} := \left\{
        \bar{w} +D_U - \frac{D_U}{\hat{a}_{\mathrm{wu}}-\hat{b}_{\mathrm{wu}}F_{\mathrm{opt}}(\hat{\theta}_{\mathrm{wu}})} \le C_{\mathrm{switch}}T^{-1/4}\right\}.
\end{equation}

We will study the regret of Algorithm \ref{alg:cap3} separately under event $E_{\mathrm{E}\ref{eq:first_case}}$ and under event $\neg E_{\mathrm{E}\ref{eq:first_case}}$. Informally, if $E_{\mathrm{E}\ref{eq:first_case}}$ holds then the optimal linear controller is close to being safe for dynamics $\theta^*$. If $\neg E_{\mathrm{E}\ref{eq:first_case}}$, then the magnitude of the noise is large relative to the constraints, and therefore an argument similar to that of Theorem \refgen{sufficiently_large_error} will bound the regret.

\begin{proposition}\label{prop_Kcase}
    Under Assumptions \ref{assum:constraints}--\ref{assum:initial} and \ref{assum_thm4}, there exists an event $E_{\mathrm{P}\ref{prop_Kcase}}$ such that $E_{\mathrm{P}\ref{prop_Kcase}} \subseteq \neg E_{\mathrm{E}\ref{eq:first_case}}$, such that $\P(E_{\mathrm{P}\ref{prop_Kcase}}) \ge \P(\neg E_{\mathrm{E}\ref{eq:first_case}}) - o_T(1/T)$, and such that conditional on event $E_{\mathrm{P}\ref{prop_Kcase}}$, Algorithm \ref{alg:cap3} has $\tilde{O}_T(\sqrt{T})$ regret.
\end{proposition}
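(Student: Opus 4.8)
The plan is to take $E_{\mathrm{P}\ref{prop_Kcase}}$ to be the intersection of $\neg E_{\mathrm{E}\ref{eq:first_case}}$ with a collection of high-probability ``good'' events, on which the regret decomposes into a bounded number of pieces each of size $\tilde{O}_T(\sqrt{T})$. Concretely, I would set
\[
    E_{\mathrm{P}\ref{prop_Kcase}} := \neg E_{\mathrm{E}\ref{eq:first_case}} \cap E \cap E_{\mathrm{unc}} \cap \bigcap_{s=0}^{s_e} E_{\mathrm{L}\ref{parameterization_assum3}}\!\left(C^{\hat{\theta}_s}_{K_{\mathrm{opt}}(\hat{\theta}_s)}, W'\right),
\]
where $E = E_1 \cap E_2 \cap E_{\mathrm{safe}}$ is the event defined above (probability $1-o_T(1/T^2)$), $W'$ denotes the noise variables available at the start of the relevant round, and $E_{\mathrm{unc}}$ is the good event of the uncertainty/concentration lemmas, guaranteeing simultaneously that $\epsilon_s \le \tilde{O}_T(1/\sqrt{|S_{T_s}|})$ for every $s$ (Lemma \ref{boundary_uncertainty_cont_b}), that $|S_{T_s}| \ge \Omega_T(T_s)$ for every $s$ (Lemma \ref{sufficiently_many_boundaries}), and that the realized cost of the algorithm concentrates around its conditional expectation via the McDiarmid-type inequality. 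Note that conditional on $\neg E_{\mathrm{E}\ref{eq:first_case}}$ the algorithm always takes the ``otherwise'' branches of Line~\ref{line:split} and the inner conditional, so $C_s^{\mathrm{alg}} = C^{\hat{\theta}_s}_{K_{\mathrm{opt}}(\hat{\theta}_s)}$ and the clipped control of Line~\ref{line:safety_tl} is used throughout. By construction $E_{\mathrm{P}\ref{prop_Kcase}} \subseteq \neg E_{\mathrm{E}\ref{eq:first_case}}$, and since each of the finitely many conditioning events other than $\neg E_{\mathrm{E}\ref{eq:first_case}}$ fails with probability $o_T(1/T)$ (there are only $s_e+1 = O(\log T)$ rounds, and the lemmas give per-event failure probability $o_T(1/T^2)$ or $o_T(1/T^{10})$), a union bound yields $\P(E_{\mathrm{P}\ref{prop_Kcase}}) \ge \P(\neg E_{\mathrm{E}\ref{eq:first_case}}) - o_T(1/T)$.

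The crux of the argument is to show that, conditional on $E_{\mathrm{P}\ref{prop_Kcase}}$, we have $\epsilon_s = \tilde{O}_T(1/\sqrt{T_s})$ for every $s$. I would first combine Lemma \ref{eventually_go_to_boundary} with the fact that $D$ and the stationary target of $C^{\hat{\theta}_s}_{K_{\mathrm{opt}}(\hat{\theta}_s)}$ are $O_T(1)$: at any step $t$ in round $s$ where the control used is not $u_t^{\mathrm{safeU}}$, there is a constant probability that $x_{t+1} \ge x_t + d_\epsilon$, so from any state the trajectory reaches $P(\theta^*, K_{\mathrm{opt}}(\hat{\theta}_s), D_{\mathrm{U}})$ within $O_T(1)$ steps unless a step using $u_t^{\mathrm{safeU}}$ occurred first; and at any step with $x_t \ge P(\theta^*, K_{\mathrm{opt}}(\hat{\theta}_s), D_{\mathrm{U}})$ the control is forced to be $u_t^{\mathrm{safeU}}$. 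An Azuma-type argument over the $T_s$ steps of round $s$ then gives, on $E_{\mathrm{unc}}$, that $|S_{T_s}| \ge \Omega_T(T_s)$ for all $s$ (this is Lemma \ref{sufficiently_many_boundaries}), and feeding this into Lemma \ref{boundary_uncertainty_cont_b} gives $\epsilon_s \le \tilde{O}_T(1/\sqrt{|S_{T_s}|}) = \tilde{O}_T(1/\sqrt{T_s})$.

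With the uncertainty bound in hand I would decompose the regret, conditional on $E_{\mathrm{P}\ref{prop_Kcase}}$, into five sources: (i) the warm-up phase of $\sqrt{T}$ steps, contributing $\tilde{O}_T(\sqrt{T})$ since each per-step cost is $\tilde{O}_T(1)$ on $E_1 \cap E_{\mathrm{safe}}$; (ii) the certainty-equivalence gap, bounded per round by Lemma \ref{parameterization_assum2} as $\tilde{O}_T(T_s \norm{\hat{\theta}_s - \theta^*}_\infty + 1/T) = \tilde{O}_T(T_s \epsilon_s) = \tilde{O}_T(\sqrt{T_s})$ since $\norm{\hat{\theta}_s - \theta^*}_\infty \le \tilde{O}_T(\epsilon_s)$, and $\sum_s \sqrt{T_s} = \tilde{O}_T(\sqrt{T})$; (iii) the cost of enforcing safety via the clipping in Line \ref{line:safety_tl}, which by the definitions of $u_t^{\mathrm{safeU}}, u_t^{\mathrm{safeL}}$ and $\norm{\hat{\theta}_s - \theta^*}_\infty \le \tilde{O}_T(\epsilon_s)$ perturbs the control by at most $\tilde{O}_T(\epsilon_s)$ at every step, so Lemma \ref{parameterization_assum3} (applied telescopically, treating each clipping as a small perturbation of the starting state of the remaining trajectory) gives a per-round cost of $\tilde{O}_T(T_s \epsilon_s) = \tilde{O}_T(\sqrt{T_s})$, summing to $\tilde{O}_T(\sqrt{T})$; (iv) the fluctuation of the realized cost around its conditional expectation, which is $\tilde{O}_T(\sqrt{T})$ on $E_{\mathrm{unc}}$ by the McDiarmid-type inequality; and (v) the finite- versus infinite-horizon discrepancy from using $K_{\mathrm{opt}}(\hat{\theta}_s)$ rather than $K_{\mathrm{opt}}(\hat{\theta}_s, T_s)$, together with comparing the round-wise baselines to the single $T$-step baseline $C^{\theta^*}_{K_{\mathrm{opt}}(\theta^*,T)}$, which is $\tilde{O}_T(\sqrt{T})$ by Lemma \ref{fin_to_inf} and the fact that per-step baseline costs differ by $\tilde{O}_T(1/T_s)$ across horizons. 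Summing the $O(\log T)$ contributions gives total regret $\tilde{O}_T(\sqrt{T})$.

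The main obstacle is step two: establishing $|S_{T_s}| \ge \Omega_T(T_s)$, i.e.\ that the truncated certainty-equivalence controller genuinely hits the upper boundary a constant fraction of the time. This is exactly where $\neg E_{\mathrm{E}\ref{eq:first_case}}$ is used --- it guarantees that the stationary behavior of $C^{\hat{\theta}_s}_{K_{\mathrm{opt}}(\hat{\theta}_s)}$ is truncated by a non-negligible (constant, not vanishing in $T$) amount, so that the drift-to-the-boundary constants $\epsilon, d_\epsilon$ in Lemma \ref{eventually_go_to_boundary} are bounded away from zero uniformly over $s$ and over the admissible $\hat{\theta}_s$ in the uncertainty set. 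Carefully verifying this uniformity, and that the resulting lower bound on $|S_{T_s}|$ holds simultaneously for all $O(\log T)$ rounds with the required failure probability, is the delicate part; once it is in place, the remaining regret accounting is a routine assembly of Lemmas \ref{parameterization_assum2}, \ref{parameterization_assum3}, \ref{fin_to_inf}, and \ref{boundary_uncertainty_cont_b}.
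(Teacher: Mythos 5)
Your proposal is correct and follows essentially the same route as the paper: your good event is the paper's $E_3 \cap E \cap \neg E_{\mathrm{E}\ref{eq:first_case}} \cap E_{\mathrm{P}\ref{enforcing_safety_trunc}} \cap E_{\mathrm{P}\ref{r1b_bound_trunc}}$, your uncertainty step is Lemmas \ref{eventually_go_to_boundary}, \ref{sufficiently_many_boundaries}, \ref{boundary_uncertainty_cont_b} (i.e.\ Lemma \ref{bounded_st_b}), and your five regret sources correspond to the paper's $R_0, R_1, R_{1b}, R_2, R_3$ bounded by Propositions \ref{warmup_firstcase}, \ref{non_optimal_controller_trunc}, \ref{fin_to_inf}, \ref{r1b_bound_trunc}, \ref{enforcing_safety_trunc}, with the probability claim handled by the same union-bound/conditioning argument (Remark \ref{remark_conditioning}).
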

    The proof of Proposition \ref{prop_Kcase} can be found in Appendix \ref{sec:proof_of_prop_Kcase}.

\begin{proposition}\label{prop_Fcase}
    Under Assumptions \ref{assum:constraints}--\ref{assum:initial} and \ref{assum_thm4}, there exists an event $E_{\mathrm{P}\ref{prop_Fcase}}$ such that $E_{\mathrm{P}\ref{prop_Fcase}} \subseteq E_{\mathrm{E}\ref{eq:first_case}}$, such that $\P(E_{\mathrm{P}\ref{prop_Fcase}}) \ge \P(E_{\mathrm{E}\ref{eq:first_case}}) - o_T(1/T)$, and such that conditional on event $E_{\mathrm{P}\ref{prop_Fcase}}$, Algorithm \ref{alg:cap3} has $\tilde{O}_T(\sqrt{T})$ regret.
\end{proposition}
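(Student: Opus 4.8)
The plan is to take $E_{\mathrm{P}\ref{prop_Fcase}} := E_{\mathrm{E}\ref{eq:first_case}} \cap E_1 \cap E_2 \cap E_{\mathrm{safe}} \cap E'$, where $E'$ collects a constant number of additional concentration events, each of probability $1-o_T(1/T)$, introduced below; a union bound then gives $\P(E_{\mathrm{P}\ref{prop_Fcase}}) \ge \P(E_{\mathrm{E}\ref{eq:first_case}}) - o_T(1/T)$, and $E_{\mathrm{P}\ref{prop_Fcase}} \subseteq E_{\mathrm{E}\ref{eq:first_case}}$ by construction. On $E_{\mathrm{E}\ref{eq:first_case}}$, Line \ref{line:split} of Algorithm \ref{alg:cap3} fixes $C_s^{\mathrm{alg}} = C_{F_{\mathrm{opt}}(\hat\theta_{\mathrm{wu}})}^{\mathrm{unc}}$ in every round $s$, and the executed control is the clipped value $u_t = \max(\min(C_s^{\mathrm{alg}}(x'_t), u^{\mathrm{safeU}}_t), u^{\mathrm{safeL}}_t)$. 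Writing $F := F_{\mathrm{opt}}(\hat\theta_{\mathrm{wu}})$, I would split the regret into: (i) the cost of the $\sqrt{T}$-step warm-up; (ii) the gap between the expected cost of running $C_F^{\mathrm{unc}}$ \emph{unclipped} for all $T$ steps and the baseline $T\bar J(\theta^*, C^{\theta^*}_{K_{\mathrm{opt}}(\theta^*,T)}, T)$; (iii) the deviation of the realized cost of running $C_F^{\mathrm{unc}}$ unclipped from its expectation; and (iv) the extra cost caused by the clipping in Line \ref{line:safety_tl}.

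Pieces (i)--(iii) are the routine ones. For (i), conditional on $E_1 \cap E_{\mathrm{safe}}$ every warm-up state and control is $\tilde O_T(1)$ (Lemma \ref{bounded_pos_cont}), so the warm-up costs $\tilde O_T(\sqrt{T})$. For (ii), on $E_2^0$ we have $\norm{\theta^* - \hat\theta_{\mathrm{wu}}}_\infty = \tilde O_T(T^{-1/4})$, and $E_{\mathrm{E}\ref{eq:first_case}}$ states exactly that $F$ is within $\tilde O_T(T^{-1/4})$ of the truncation threshold $K_{D_U}^{\hat\theta_{\mathrm{wu}}}$ of Definition \ref{def:Kdu_theta}, so the unconstrained optimal gain for $\hat\theta_{\mathrm{wu}}$ essentially coincides with an optimal truncated-linear gain; combining standard certainty-equivalence estimates for unconstrained LQR (which give $\bar J(\theta^*, C_F^{\mathrm{unc}}) = \bar J(\theta^*, C_{F_{\mathrm{opt}}(\theta^*)}^{\mathrm{unc}}) + \tilde O_T(T^{-1/2})$ and $\bar J(\theta^*, C_{F_{\mathrm{opt}}(\theta^*)}^{\mathrm{unc}}) \le \bar J(\theta^*, C_{K_{\mathrm{opt}}(\theta^*)}^{\theta^*})$) with Lemma \ref{fin_to_inf} to pass between finite and infinite horizons (and the geometric round lengths $T_s = 2^s\sqrt{T}$, so the last round dominates) yields that this gap is $\tilde O_T(\sqrt{T})$ --- this is Lemma \ref{close_J2}. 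For (iii), $C_F^{\mathrm{unc}}$ is a fixed linear controller that is stabilizing for $\theta^*$ with a constant margin (the unconstrained LQR gain is stabilizing, and $\theta^*$ is close to $\hat\theta_{\mathrm{wu}}$), so conditional on $E_1$ its realized $T$-step cost is a bounded-differences function of $(w_0, \dots, w_{T-1})$ with per-coordinate influence $\tilde O_T(1)$, and the variant of McDiarmid's inequality used elsewhere in the paper puts it within $\tilde O_T(\sqrt{T})$ of its expectation with probability $1 - o_T(1/T)$; this event goes into $E'$, and the claim is Lemma \ref{close_safe_J}.

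The substance of the proof is piece (iv) (Lemma \ref{safety_is_cheap}), for which I would establish two quantitative claims. First, for every exploitation step $t$, $\P\big(u_t \ne C_s^{\mathrm{alg}}(x'_t) \mid \mathcal{F}_{t-1}\big) = \tilde O_T(T^{-1/4})$: clipping at step $t$ can occur only when $(\hat a_{\mathrm{wu}} - \hat b_{\mathrm{wu}}F)x'_t$ lies within $\tilde O_T(\epsilon_0) = \tilde O_T(T^{-1/4})$ of $D_U$ or $D_L$ --- this is exactly where $E_{\mathrm{E}\ref{eq:first_case}}$ enters, since it pins the unclipped closed-loop map near $D_U + \bar w$ at the states that matter, so $x'_t$ can enter the clipping window only if $w_{t-1}$ falls within $\tilde O_T(T^{-1/4})$ of an endpoint of its support; since $w_{t-1}$ has density bounded by $B_P$, conditioning on $\mathcal{F}_{t-1}$ this has probability $\le B_P \cdot \tilde O_T(T^{-1/4}) = \tilde O_T(T^{-1/4})$. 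Second, whenever clipping occurs, $|u_t - C_s^{\mathrm{alg}}(x'_t)| = \tilde O_T(T^{-1/4})$, since on the clipping event $u^{\mathrm{safeU}}_t$ and $-Fx'_t$ differ by $\tfrac{1}{\hat b_{\mathrm{wu}}}|D_U - (\hat a_{\mathrm{wu}} - \hat b_{\mathrm{wu}}F)x'_t| + \tilde O_T(\epsilon_0|x'_t|) = \tilde O_T(T^{-1/4})$, and symmetrically at the lower boundary (with $u^{\mathrm{safeL}}_t \le u^{\mathrm{safeU}}_t$ by Lemma \ref{lemma:L_less_than_U}). Hence each clip perturbs the subsequent trajectory by an initial displacement of $\tilde O_T(T^{-1/4})$, and because the closed loop under $C_F^{\mathrm{unc}}$ contracts at a constant rate (so displacements decay geometrically and do not compound --- with high probability no two clips fall within $O(\log T)$ steps of each other), a single clip contributes $\tilde O_T(T^{-1/4})$ to the \emph{total} downstream cost, as quantified by Lemmas \ref{parameterization_assum3} and \ref{lemma:bound_on_dt_both}. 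Multiplying the expected number of clips, $\tilde O_T(T \cdot T^{-1/4}) = \tilde O_T(T^{3/4})$, by this per-clip cost gives $\tilde O_T(\sqrt{T})$ in expectation, and a Freedman-type martingale inequality on the running sum of clip-induced excess costs upgrades this to a $\tilde O_T(\sqrt{T})$ bound holding with probability $1 - o_T(1/T)$, which also goes into $E'$.

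Summing (i)--(iv) gives total regret $\tilde O_T(\sqrt{T})$ on $E_{\mathrm{P}\ref{prop_Fcase}}$, as claimed. The main obstacle is piece (iv): one must control, simultaneously, the cost a clip incurs at its own step \emph{and} the displacement it forces in \emph{all} future states, and show that across the $\tilde O_T(T^{3/4})$ clips these displacements do not accumulate --- this needs the contraction estimates for (truncated-)linear controllers (Lemmas \ref{parameterization_assum3}, \ref{lemma:bound_on_dt_both}) so that each clip's \emph{total} contribution telescopes to $\tilde O_T(T^{-1/4})$ rather than $\tilde O_T(T^{-1/4})$ per step, plus the martingale concentration step needed to pass from the expected to the high-probability bound despite the dependence of the clip events on the trajectory. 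A secondary point is that the clipping bounds $u^{\mathrm{safeU}}_t, u^{\mathrm{safeL}}_t$ are built from the \emph{warm-up} uncertainty set $\{\norm{\theta - \hat\theta_{\mathrm{wu}}}_\infty \le \epsilon_0\}$ rather than the round-$s$ set, which is harmless because $\epsilon_0 = \tilde O_T(T^{-1/4})$ throughout.
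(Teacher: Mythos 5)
Your proposal follows essentially the same route as the paper: the same regret decomposition (warm-up, certainty-equivalence gap via a second-order expansion of $\bar{J}(\theta^*, C_F^{\mathrm{unc}})$ around $F_{\mathrm{opt}}(\theta^*)$, McDiarmid concentration for the realized cost of the fixed linear controller, and a clipping-cost analysis built on per-step clip probability $\tilde{O}_T(T^{-1/4})$, per-clip control perturbation $\tilde{O}_T(T^{-1/4})$, and geometric contraction of the induced state displacements), which is exactly the content of Propositions \ref{close_J2}, \ref{close_safe_J}, \ref{safety_is_cheap} and Lemma \ref{close_to_KDU}. Two minor quibbles: (1) your parenthetical that ``with high probability no two clips fall within $O(\log T)$ steps of each other'' is false --- with $\tilde{O}_T(T^{3/4})$ clips among $T$ steps many pairs are necessarily close --- but it is also unnecessary, since the linear recursion $d_{t+1} \le \rho\, d_t + O(\upsilon)1_{\mathrm{clip}}$ with $\rho < 1$ already yields $\sum_t d_t = O(\upsilon \cdot \#\text{clips})$ regardless of spacing, which is how the paper's Lemma \ref{close_to_KDU} accounts for it via the set $G$; (2) you omit the $\tilde{O}_T(1)$ correction for the post-warm-up starting state $x'_{T_0}$ versus $0$ (the paper's term $R'_3$, Proposition \ref{F_start}), a routine application of the contraction estimate in Lemma \ref{unc_start_invariant}. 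Neither affects the validity of the overall argument.
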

    The proof of Proposition \ref{prop_Fcase} can be found in Appendix \ref{sec:proof_of_prop_Fcase}.

Combining these two propositions gives that the regret of Algorithm \ref{alg:cap3} is $\tilde{O}_T(\sqrt{T})$ conditional on $E_{\mathrm{P}\ref{prop_Kcase}} \cup E_{\mathrm{P}\ref{prop_Fcase}}$. Because $E_{\mathrm{P}\ref{prop_Kcase}} \cap E_{\mathrm{P}\ref{prop_Fcase}} = \emptyset$ by construction, we have that 
\[
\P(E_{\mathrm{P}\ref{prop_Kcase}} \cup E_{\mathrm{P}\ref{prop_Fcase}}) = \P(E_{\mathrm{P}\ref{prop_Kcase}}) + \P(E_{\mathrm{P}\ref{prop_Fcase}}) \ge \P(E_{\neg \ref{eq:first_case}}) - o_T(1/T) + \P(E_{\mathrm{E}\ref{eq:first_case}}) - o_T(1/T) = 1 - o_T(1/T).
\]
Therefore the desired result holds with unconditional probability $1-o_T(1/T)$, completing the proof of Theorem \ref{trunc_thm}.

\subsection{Proof of Proposition \ref{prop_Kcase}}\label{sec:proof_of_prop_Kcase}

\begin{proof}
We can decompose the regret in the following manner. As in \cite{schiffer2024stronger}, for any $\left(K, \{K_s\}_{0 \le s \le s_e}\right)$ where $K, K_s \in (\frac{a-1}{b}, \frac{a}{b})$, define $x_0^{\left(K, \{K_s\}_{0 \le s \le s_e}\right)}, x_1^{\left(K, \{K_s\}_{0 \le s \le s_e}\right)},...$ as the states that result from starting at $x_0 = 0$ and at each time $t < T_0$ using controller $C^{\theta^*}_{K}$ and at $t \ge T_0$ uses controller $C_{K_s}^{\theta^*}$, where $s = \lfloor \log_2\left(tT^{-1/2}\right)\rfloor$.  Define $\left(K^*, \{K^*_s\}_{0 \le s \le s_e}\right)$ as:
\begin{align*}
&\left(K^*, \{K_s^*\}_{0 \le s \le s_e}\right) \\
&:= \argmin_{\left(K, \{K_s\}_{0 \le s \le s_e}\right)}  \E\left[\sqrt{T}J\left(\theta^*,C^{\theta^*}_{K}, \sqrt{T}, 0, \{w_t\}_{t=0}^{T_0-1}\right) + \sum_{s=0}^{s_e} T_sJ(\theta^*,C^{\theta^*}_{K_s}, T_s,  x_{   T_s}^{\left(K, \{K_s\}_{0 \le s \le s_e}\right)}, W_s)\right].
\end{align*}
Define $x'_t$ as the state of the controller of Algorithm \ref{alg:cap3} at time $t$. Define $\hat{x}_{   T_0},\hat{x}_{   T_0+1},...$ as the sequence of random variables representing the sequence of states if the control at each time $t \ge    T_0$ is $C^{\hat{\theta}_s}_{K_{\mathrm{opt}}(\hat{\theta}_s)}(\hat{x}_t)$ for $s = \lfloor\log_2\left(tT^{-1/2}\right)\rfloor$ and starting at $\hat{x}_{   T_0} = x'_{   T_0}$. 

\begin{align*}
   & T\cdot J(\theta^*, C^{\mathrm{alg}},T,0,W) - T \cdot \bar{J}(\theta^*, C^{\theta^*}_{K_{\mathrm{opt}}(\theta^*, T)}, T) \\
   &\le T\cdot J(\theta^*, C^{\mathrm{alg}},T,0,W) - \E\left[\sqrt{T}J\left(\theta^*,C^{\theta^*}_{K^*}, \sqrt{T}, 0, \{w_t\}_{t=0}^{\sqrt{T}-1}\right) + \sum_{s=0}^{s_e} T_sJ(\theta^*,C^{\theta^*}_{K^*_s}, T_s,  x^*_{   T_s}, W_s)\right]  \\
   &\le T\cdot J(\theta^*, C^{\mathrm{alg}},T,0,W) - \E\left[\sum_{s=0}^{s_e} T_s\bar{J}(\theta^*,C^{\theta^*}_{K^*_s}, T_s,  x^*_{   T_s}, W_s)\right]  \\
    &= \underbrace{\sum_{s=0}^{s_e}  \E\left[T_sJ(\theta^*,C^{\hat{\theta}_s}_{K_{\mathrm{opt}}(\hat{\theta}_s, T_s)}, T_s,  0, W_s) \cond \hat{\theta}_s \right] - \E\left[ \sum_{s=0}^{s_e} T_s\bar{J}(\theta^*,C^{\theta^*}_{K^*_s}, T_s,  x^*_{   T_s}, W_s) \right]}_{R_1} \\
    &\quad \quad \quad \quad + \underbrace{\sum_{s=0}^{s_e}  \E\left[T_sJ(\theta^*,C^{\hat{\theta}_s}_{K_{\mathrm{opt}}(\hat{\theta}_s)}, T_s,  0, W_s)\cond \hat{\theta}_s\right] - \sum_{s=0}^{s_e}  \E\left[ T_sJ(\theta^*,C^{\hat{\theta}_s}_{K_{\mathrm{opt}}(\hat{\theta}_s, T_s)}, T_s,  0, W_s) \cond \hat{\theta}_s \right]}_{R_{1b}} \\
    &\quad \quad \quad \quad + \underbrace{\sum_{s=0}^{s_e}  T_sJ(\theta^*,C^{\hat{\theta}_s}_{K_{\mathrm{opt}}(\hat{\theta}_s)}, T_s,  \hat{x}_{   T_s}, W_s) -  \sum_{s=0}^{s_e}  \E\left[T_sJ(\theta^*,C^{\hat{\theta}_s}_{K_{\mathrm{opt}}(\hat{\theta}_s)}, T_s,  0, W_s) \cond \hat{\theta}_s\right]}_{R_{2}} \\
    &\quad \quad \quad \quad + \underbrace{\sum_{s=0}^{s_e}  T_sJ(\theta^*,C_s^{\mathrm{alg}}, T_s,  x'_{   T_s}, W_s) - \sum_{s=0}^{s_e} T_sJ(\theta^*,C^{\hat{\theta}_s}_{K_{\mathrm{opt}}(\hat{\theta}_s)}, T_s,  \hat{x}_{   T_s}, W_s)}_{R_{3}} \\
    &\quad \quad \quad \quad + \underbrace{T \cdot J(\theta^*, C^{\mathrm{alg}}, T,0,W) - \sum_{s=0}^{s_e}  T_sJ(\theta^*,C^{\mathrm{alg}}_s, T_s,  x'_{   T_s}, W_s)}_{R_0}.  \numberthis \label{eq:regret_3}
\end{align*}

Informally, we will show that with high probability $\epsilon_s = \tilde{O}_T(1/\sqrt{T_s})$ for all $s$.
    \begin{lemma}\label{bounded_st_b}
        Under Assumptions \ref{assum:constraints}--\ref{assum:initial} and \ref{assum_thm4}, there exists event $E_{\mathrm{L}\ref{bounded_st_b}}$  such that $\P(E_{\mathrm{L}\ref{bounded_st_b}}) = 1-o_T(1/T)$ and such that conditional on $\neg E_{\mathrm{E}\ref{eq:first_case}} \cap  E \cap E_{\mathrm{L}\ref{bounded_st_b}}$,
        \[
            \max_{s \in [0: s_e]} \epsilon_s\sqrt{T_s}  = \tilde{O}_T(1).
        \]
    \end{lemma}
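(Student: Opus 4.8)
The plan is to reduce the bound on $\epsilon_s\sqrt{T_s}$ to a bound on the number of time steps $t < T_s$ at which the algorithm exercises the safety cap $u_t = u_t^{\mathrm{safeU}}$ with conditionally bounded-below probability, exactly as in the proof sketch of the large-noise case in Section \ref{sec:trunc_lin_cont}. Concretely, recall from Line \ref{line:epsilon_tl} that $\epsilon_s = B_{T_s}\sqrt{\max(V_{T_s}^{22},V_{T_s}^{11})/(V_{T_s}^{11}V_{T_s}^{22}-(V_{T_s}^{12})^2)}$, and that by Lemma \ref{v_to_use} (with the choice $S = S_{T_s}$ defined below) this is at most $\tilde O_T(1)\cdot\sqrt{\max((V_{T_s}^S)_{11},(V_{T_s}^S)_{22})/\det(V_{T_s}^S)}$. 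The first step is therefore to invoke the uncertainty-decay lemma mentioned in the sketch — Lemma \ref{boundary_uncertainty_cont_b}, which is based on Lemma \refgen{boundary_uncertainty} — to conclude that on a high-probability event, $\epsilon_s = \tilde O_T(1/\sqrt{|S_{T_s}|})$ for every $s$, where $S_{T_s}$ is the set of $t < T_s$ at which the control used is $u_t^{\mathrm{safeU}}$ and the conditional probability (given $\mathcal{F}_t$) of using that control is $\Omega(1)$. Given this, the lemma follows provided $|S_{T_s}| \ge \Omega_T(T_s)$ for all $s$.

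The second and main step is therefore to prove $|S_{T_s}| \ge \Omega_T(T_s)$ with probability $1-o_T(1/T)$, conditional on $\neg E_{\mathrm{E}\ref{eq:first_case}} \cap E$. This is exactly the content of Lemma \ref{sufficiently_many_boundaries} referenced in the sketch, and the argument runs as follows. Since $\neg E_{\mathrm{E}\ref{eq:first_case}}$ holds, $\bar w + D_U - \frac{D_U}{\hat a_{\mathrm{wu}} - \hat b_{\mathrm{wu}}F_{\mathrm{opt}}(\hat\theta_{\mathrm{wu}})} > C_{\mathrm{switch}}T^{-1/4}$, which (using $\epsilon_0 = \tilde O_T(T^{-1/4})$ on $E_2$ and the definitions of $C_{\mathrm{switch}}$, $K_{D_U}^\theta$) forces $K_{\mathrm{opt}}(\hat\theta_s)$ to be bounded away from the unconstrained optimum in a way that makes the truncation genuinely active: the controller $C_{K_{\mathrm{opt}}(\hat\theta_s)}^{\hat\theta_s}$ has $|\hat a_s - \hat b_s K_{\mathrm{opt}}(\hat\theta_s)|$ bounded below by a constant times $\Omega_T(1)$, i.e. $D_U/(\hat a_s - \hat b_s K_{\mathrm{opt}}(\hat\theta_s))$ is a $O_T(1)$ quantity (this should be a lemma like Lemma \ref{j_bounded_from_0} or a companion to it). Then I would invoke the drift lemma (Lemma \ref{eventually_go_to_boundary}): when $C_s^{\mathrm{alg}} = C_{K_{\mathrm{opt}}(\hat\theta_s)}^{\hat\theta_s}$, there are constants $\epsilon, d_\epsilon > 0$ such that at every step $t$ where $u_t \ne u_t^{\mathrm{safeU}}$, the state increases by at least $d_\epsilon$ with conditional probability at least $\epsilon$. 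Since $D_U$ and the threshold $P(\theta^*, K_{\mathrm{opt}}(\hat\theta_s), D_U)$ are both $O_T(1)$, a geometric/coupling argument shows that within $O_T(1)$ consecutive steps either the upper cap is exercised or the state exceeds the threshold (and once $x_t$ exceeds the threshold the control is forced to be $u_t^{\mathrm{safeU}}$); moreover the conditional probability of exercising the cap within each such block of $O_T(1)$ steps is $\Omega(1)$. A block-martingale / Azuma–Hoeffding concentration argument over the $\Omega_T(T_s)$ disjoint blocks of length $O_T(1)$ in $[0:T_s-1]$ then gives $|S_{T_s}| \ge \Omega_T(T_s)$ with probability $1 - o_T(1/T^{10})$ per $s$, and a union bound over the $s_e = O_T(\log T)$ values of $s$ preserves probability $1-o_T(1/T)$. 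Set $E_{\mathrm{L}\ref{bounded_st_b}}$ to be the intersection of the high-probability events from Lemma \ref{boundary_uncertainty_cont_b}, Lemma \ref{eventually_go_to_boundary}, and this concentration step.

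The main obstacle I anticipate is the careful bookkeeping in the second step: verifying that the negation of $E_{\mathrm{E}\ref{eq:first_case}}$ combined with the $\tilde O_T(T^{-1/4})$ accuracy of $\hat\theta_{\mathrm{wu}}$ and $\hat\theta_s$ genuinely yields a \emph{constant} (non-vanishing in $T$) lower bound on how often the truncation is active — i.e. that the ``large noise'' regime is robust to the estimation error. One has to track that $C_{\mathrm{switch}}T^{-1/4} \to 0$ while $D_U - D_L \ge 1/\log T$ (Assumption \ref{assum:constraints}), so the relevant gap is $\tilde\Omega_T(1)$ but not a fixed constant; hence $d_\epsilon$ and $\epsilon$ in Lemma \ref{eventually_go_to_boundary}, and the block length, are only $\tilde\Theta_T(1)$, which is why the final bound is $\max_s \epsilon_s\sqrt{T_s} = \tilde O_T(1)$ rather than $O_T(1)$. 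Keeping the polylog factors consistent through the drift argument, the geometric-series truncation bound, and the Azuma step is the delicate part; the high-probability accounting (union bounds over $s$ and over the $t_1,t_2$-style pairs) is routine by comparison.
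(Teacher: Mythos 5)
Your proposal follows essentially the same route as the paper: first invoke Lemma \ref{boundary_uncertainty_cont_b} to get $\epsilon_s = \tilde O_T(1/\sqrt{|S_{T_s}|})$ on a high-probability event, then lower-bound the number of boundary hits via the drift argument of Lemma \ref{eventually_go_to_boundary} together with a block-wise Azuma--Hoeffding step (this is exactly Lemma \ref{sufficiently_many_boundaries}), and intersect the events. Two small corrections to your bookkeeping. First, your claim ``$|S_{T_s}| \ge \Omega_T(T_s)$ for all $s$'' cannot hold for $s=0$: the first $T_0=\sqrt T$ steps are the warm-up, during which the algorithm plays $C^{\mathrm{init}}$ plus dither and never uses $u_t^{\mathrm{safeU}}$, so $S_{T_0}$ may be empty; the paper handles $s=0$ separately and trivially, since under $E_2$ one has $\epsilon_0\sqrt{T_0} = \tilde O_T(T^{-1/4})\,T^{1/4} = \tilde O_T(1)$, and Lemma \ref{sufficiently_many_boundaries} is only stated for $s\in[1:s_e]$. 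Second, the set $S_t$ in Lemma \ref{boundary_uncertainty_cont_b} requires the conditional probability of $u_i=u_i^{\mathrm{safeU}}$ given $G_i$ \emph{and} $E$ to be bounded below, whereas the drift lemma naturally bounds the probability given $G_i$ alone; the paper bridges this with Lemma \ref{lemma:converting_to_s_prime} (showing $\P(E\mid G_i)$ is close to $1$ with high probability), a step you should include explicitly. Finally, your worry that the gap is only $\tilde\Omega_T(1)$ is unnecessary: by Lemma \ref{cswitch_prop} the quantity $\epsilon^* = \bar w - \bigl(D_U/(a^*-b^*K_{\mathrm{opt}}(\theta^*)) - D_U\bigr)$ is a fixed positive constant on $\neg E_{\mathrm{E}\ref{eq:first_case}}\cap E$, so the per-block hit probability and the drift increment are constants; the polylog factors in the final bound come only from the $\lceil\log T\rceil$ block length and the union bounds, matching the paper's $\max_s T_s/|S'_{T_s}| = \tilde O_T(1)$.
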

    The proof of Lemma \ref{bounded_st_b} can be found in Appendix \ref{proof:bounded_st_b}.
Define event $E_3$ as 
    \[
        E_3 = \left\{\max_{s \in [0: s_e]} \epsilon_s\sqrt{T_s}  = \tilde{O}_T(1)\right\}.
    \]
Lemma \ref{bounded_st_b} implies that $\neg E_{\mathrm{E}\ref{eq:first_case}} \cap  E \cap E_{\mathrm{L}\ref{bounded_st_b}} \subseteq E_3$. Note that compared to the regret decomposition in \cite{schiffer2024stronger}, there is an extra regret term $R_{1b}$. This extra regret term can be thought of as the extra regret caused by choosing the best infinite horizon controller instead of the best finite horizon controller. The following lemma bounds the regret of this term by $\tilde{O}_T(\sqrt{T})$.
\begin{proposition}\label{fin_to_inf}
Define $R_{1b}$ as 
    \begin{equation}
        R_{1b} = \sum_{s=0}^{s_e}  \E\left[T_sJ(\theta^*,C^{\hat{\theta}_s}_{K_{\mathrm{opt}}(\hat{\theta}_s)}, T_s,  0, W_s)\cond \hat{\theta}_s\right] - \sum_{s=0}^{s_e}  \E\left[ T_sJ(\theta^*,C^{\hat{\theta}_s}_{K_{\mathrm{opt}}(\hat{\theta}_s, T_s)}, T_s,  0, W_s) \cond \hat{\theta}_s \right].
    \end{equation}
Under Assumptions \ref{assum:constraints}--\ref{assum:initial} and \ref{assum_thm4}, conditional on event $E \cap E_3$,
\[
    R_{1b} = \tilde{O}_T\left(\sqrt{T}\right).
\]
\end{proposition}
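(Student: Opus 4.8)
The plan is to bound $R_{1b}$ term-by-term over $s \in [0:s_e]$, showing that each summand is $\tilde{O}_T(1)$ plus a geometrically-summable tail, so that the total is $\tilde{O}_T(\sqrt{T})$. Fix $s$ and condition on $\hat\theta_s$; the summand is $T_s\bigl(J^*(\theta^*, C^{\hat\theta_s}_{K_{\mathrm{opt}}(\hat\theta_s)}, T_s) - J^*(\theta^*, C^{\hat\theta_s}_{K_{\mathrm{opt}}(\hat\theta_s, T_s)}, T_s)\bigr)$, i.e. the gap between using the infinite-horizon optimal gain $K_{\mathrm{opt}}(\hat\theta_s)$ and the $T_s$-horizon optimal gain $K_{\mathrm{opt}}(\hat\theta_s, T_s)$, both evaluated under the true dynamics $\theta^*$ for $T_s$ steps starting at the origin. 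The first step is to reduce everything to a comparison at the \emph{correct} dynamics: by Lemma \ref{parameterization_assum2} (applied with $\theta = \hat\theta_s$, which lies within $\epsilon_s \le \epsilon_{\mathrm{L}\ref{parameterization_assum2}}$ of $\theta^*$ on event $E$, since $\epsilon_s = \tilde O_T(T^{-1/4})$), we have $|J^*(\theta^*, C^{\hat\theta_s}_{K_{\mathrm{opt}}(\hat\theta_s, T_s)}, T_s) - J^*(\theta^*, C^{\theta^*}_{K_{\mathrm{opt}}(\theta^*, T_s)}, T_s)| = \tilde O_T(\epsilon_s + T^{-2})$, and an analogous bound for the infinite-horizon gain once we establish (or cite) the corresponding statement of Lemma \ref{parameterization_assum2} with $K_{\mathrm{opt}}(\cdot)$ in place of $K_{\mathrm{opt}}(\cdot, t)$. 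Multiplying by $T_s$ and using $\epsilon_s\sqrt{T_s} = \tilde O_T(1)$ on event $E_3$, each such discrepancy contributes $\tilde O_T(\sqrt{T_s} + T_s/T^2)$, which sums over $s$ to $\tilde O_T(\sqrt T)$.

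The heart of the argument is therefore the \emph{horizon} gap at the true dynamics, $T_s\bigl|J^*(\theta^*, C^{\theta^*}_{K_{\mathrm{opt}}(\theta^*)}, T_s) - J^*(\theta^*, C^{\theta^*}_{K_{\mathrm{opt}}(\theta^*, T_s)}, T_s)\bigr|$. The plan is to show this is $\tilde O_T(1)$ uniformly in $T_s$, so that the sum over the $s_e = O_T(\log T)$ rounds is still $\tilde O_T(1) = \tilde O_T(\sqrt T)$. I would argue: (i) $T_s J^*(\theta^*, C^{\theta^*}_{K}, T_s) = T_s J^*(\theta^*, C^{\theta^*}_{K}) + O_T(1)$ for any fixed safe gain $K$, because the finite-horizon average cost converges to the infinite-horizon average cost at rate $O(1/T_s)$ — this follows from the geometric mixing of the truncated linear dynamics (the same contraction $\xi = 1 - 1/\log^{10}(T)$ used in Lemma \ref{lemma:bound_on_dt_both}), so the per-step cost equilibrates within $\tilde O_T(1)$ steps and the endpoint term $qx_{T_s}^2$ is $\tilde O_T(1)$ in expectation on $E_1$; (ii) by definition $K_{\mathrm{opt}}(\theta^*, T_s)$ minimizes $J^*(\theta^*, C^{\theta^*}_{\cdot}, T_s)$ and $K_{\mathrm{opt}}(\theta^*)$ minimizes $J^*(\theta^*, C^{\theta^*}_{\cdot})$, so combining with (i),
\[
0 \le J^*(\theta^*, C^{\theta^*}_{K_{\mathrm{opt}}(\theta^*)}, T_s) - J^*(\theta^*, C^{\theta^*}_{K_{\mathrm{opt}}(\theta^*, T_s)}, T_s) \le \frac{\tilde O_T(1)}{T_s},
\]
where the upper bound comes from sandwiching: the left controller's $T_s$-cost is at most its infinite cost plus $\tilde O_T(1)/T_s$, which is at most the $T_s$-optimal infinite cost plus $\tilde O_T(1)/T_s$ — wait, one must be careful here, so I would instead write both $T_s$-costs in terms of infinite costs via (i) and use optimality of $K_{\mathrm{opt}}(\theta^*)$ for the infinite objective to close the loop. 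Multiplying by $T_s$ gives the horizon gap is $\tilde O_T(1)$ per round.

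Summing the two contributions over $s \in [0:s_e]$: the certainty-equivalence/Lipschitz part gives $\sum_s \tilde O_T(\sqrt{T_s} + T_s T^{-2}) = \tilde O_T(\sqrt T)$ (geometric series dominated by the largest term $T_{s_e} \approx T/2$), and the horizon part gives $\sum_s \tilde O_T(1) = \tilde O_T(\log T) = \tilde O_T(\sqrt T)$. Hence $R_{1b} = \tilde O_T(\sqrt T)$ on $E \cap E_3$, as claimed. The main obstacle I anticipate is step (i) — establishing the $O(1/T_s)$ rate of convergence of the finite-horizon average cost to the infinite-horizon average cost for the \emph{truncated} (nonlinear) controller, since the standard LQR argument relies on linearity and the explicit Riccati/value-function structure; here one must instead leverage the geometric contraction of the state difference (and of the coupling between the stationary and transient trajectories) proved in Lemma \ref{lemma:bound_on_dt_both} and the boundedness of states under $E_1$, which requires some care but no fundamentally new idea. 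A secondary technical point is confirming that the Lemma \ref{parameterization_assum2} analogue with the infinite-horizon $K_{\mathrm{opt}}(\theta)$ (rather than the finite-horizon $K_{\mathrm{opt}}(\theta, t)$) holds — this should follow by combining Lemma \ref{parameterization_assum2} with step (i) applied to both $\theta$ and $\theta^*$, or be available directly from the continuity machinery of \cite{schiffer2024stronger}.
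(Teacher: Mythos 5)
Your overall accounting matches the paper's: per round $s$ the horizon mismatch between $K_{\mathrm{opt}}(\hat\theta_s)$ and $K_{\mathrm{opt}}(\hat\theta_s,T_s)$ should cost $\tilde O_T(1/T_s)$ per step (optimality sandwich plus an $\tilde O_T(1/T_s)$ finite-to-infinite-horizon convergence for truncated linear controllers, which the paper supplies as Lemma \ref{split}, proved by a horizon-doubling argument resting on Lemma \ref{offbyepsilon_exp}, i.e.\ exactly the contraction machinery you point to), while the dynamics mismatch costs $\tilde O_T(\epsilon_s)$ per step, which event $E_3$ turns into $\tilde O_T(\sqrt{T_s})$ per round; summing the $s_e = O_T(\log T)$ rounds gives $\tilde O_T(\sqrt T)$. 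The gap is in \emph{where} you perform the sandwich. You first replace the $\hat\theta_s$-tuned controllers by the $\theta^*$-tuned ones and only then sandwich at $\theta^*$. For the finite-horizon pair, Lemma \ref{parameterization_assum2} does cover the replacement; but for the infinite-horizon pair you need $\bigl|J^*(\theta^*, C^{\hat\theta_s}_{K_{\mathrm{opt}}(\hat\theta_s)}, T_s) - J^*(\theta^*, C^{\theta^*}_{K_{\mathrm{opt}}(\theta^*)}, T_s)\bigr| = \tilde O_T(\epsilon_s + 1/T_s)$, and your proposed derivation (``combine Lemma \ref{parameterization_assum2} with step (i)'') does not close: your step (i) concerns controllers $C^{\theta^*}_K$ truncated at and evaluated under the \emph{same} dynamics $\theta^*$, whereas the quantity above involves a controller truncated at $\hat\theta_s$ evaluated under $\theta^*$. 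Reducing it to the finite-horizon-optimal $\hat\theta_s$ controller (so that Lemma \ref{parameterization_assum2} applies) is precisely the mismatched-dynamics horizon gap that Proposition \ref{fin_to_inf} asserts, so as written that route is circular.

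The repair is the paper's ordering. Do the sandwich at $\hat\theta_s$, where both gains are optimal by definition: Lemma \ref{split} gives the $\tilde O_T(1/T_s)$ finite-versus-infinite horizon bound for both $C^{\hat\theta_s}_{K_{\mathrm{opt}}(\hat\theta_s)}$ and $C^{\hat\theta_s}_{K_{\mathrm{opt}}(\hat\theta_s,T_s)}$ under dynamics $\hat\theta_s$, and the two optimality inequalities then force the $T_s$-step expected costs of the two controllers under $\hat\theta_s$ to differ by $\tilde O_T(1/T_s)$. Only afterwards transfer each of the two (now fixed) controllers' costs from dynamics $\hat\theta_s$ to $\theta^*$ via Lemma \ref{diff_in_theta}, which says the expected cost of a \emph{fixed} truncated linear controller changes by at most $\tilde O_T(\|\hat\theta_s-\theta^*\|_\infty + 1/T^2)$ when the dynamics change; this fixed-controller continuity, not an infinite-horizon analogue of Lemma \ref{parameterization_assum2}, is the bridge your argument is missing. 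With that reordering, your bookkeeping ($T_s\epsilon_s = \tilde O_T(\sqrt{T_s})$ on $E_3$, geometric sum over rounds, plus the $\tilde O_T(\log T)$ horizon terms) goes through verbatim, and your instinct about the main technical obstacle --- proving the $\tilde O_T(1/T_s)$ horizon convergence for the nonlinear truncated controller via geometric coupling --- is exactly how the paper proves Lemma \ref{split}.
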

    The proof of Proposition \ref{fin_to_inf} can be found in Appendix \ref{sec:proof_of_fin_to_inf}. The following propositions bound the remaining regret terms.

\begin{proposition}[Regret from Randomness]\label{r1b_bound_trunc}
     Define $R_2$ as 
    \[
        R_2 := \sum_{s=0}^{s_e}  T_sJ(\theta^*,C^{\hat{\theta}_s}_{K_{\mathrm{opt}}(\hat{\theta}_s)}, T_s,  \hat{x}_{   T_s}, W_s) -  \sum_{s=0}^{s_e}  \E\left[T_sJ(\theta^*,C^{\hat{\theta}_s}_{K_{\mathrm{opt}}(\hat{\theta}_s)}, T_s,  0, W_s) \; \cond \; \hat{\theta}_s \right].
    \]
    Then under Assumptions \ref{assum:constraints}--\ref{assum:initial} and \ref{assum_thm4} there exists an event $E_{\mathrm{P}\ref{r1b_bound_trunc}}$ such that $\P(E_{\mathrm{P}\ref{r1b_bound_trunc}}) = 1 - o_T(1/T)$ and  conditional on  $E_{\mathrm{P}\ref{r1b_bound_trunc}} \cap \neg E_{\mathrm{E}\ref{eq:first_case}} \cap E$,
    \begin{equation}\label{eq:r1b_bound_trunc}
         R_2 = \tilde{O}_T(\sqrt{T}).
    \end{equation}
\end{proposition}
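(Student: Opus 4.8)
The quantity $R_2$ is a sum, over rounds $s$, of the difference between the realized cost of running the fixed controller $C^{\hat\theta_s}_{K_{\mathrm{opt}}(\hat\theta_s)}$ for $T_s$ steps on noise $W_s$ (started from $\hat x_{T_s}$) and its conditional expectation (started from $0$). The plan is to split this difference into two pieces: (i) the deviation of the realized cost from its conditional expectation \emph{when both are started from the same state} $\hat x_{T_s}$, which is a concentration-of-a-functional-of-independent-noise statement, and (ii) the gap between the conditional expectation started from $\hat x_{T_s}$ and the conditional expectation started from $0$, which is controlled by the starting-state Lipschitz property Lemma~\ref{parameterization_assum3}. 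For piece (ii), on the event $E$ we have $|\hat x_{T_s}| \le 4\log^2(T)$ (this follows from the boundedness of states under $E_1$ and the safety structure, as used throughout Appendix~\ref{sec:trunc_lin_sqrtt}), and $\hat\theta_s$ is within $\epsilon_s = \tilde O_T(T^{-1/4})$ of $\theta^*$ on $E_2$; applying Lemma~\ref{parameterization_assum3} with $x=\hat x_{T_s}$, $y=0$ (after checking $|\hat x_{T_s}| \le \delta_{\mathrm{L}\ref{parameterization_assum3}}$ fails in general, so one instead chains through an intermediate argument, or uses that $\hat x_{T_s}$ is itself $\tilde O_T(1)$ and the cost difference is at most $\tilde O_T(T_s)$ trivially — but we actually only need a cruder bound here since each such term is summed geometrically) yields that each round-$s$ contribution to piece (ii) is $\tilde O_T(|\hat x_{T_s}| + \epsilon_s)$, hence $\tilde O_T(1)$ per round and $\tilde O_T(\sqrt T)$ in total after the geometric sum $\sum_s 1$ over $s_e = \Theta(\log T)$ rounds — wait, that only gives $\tilde O_T(1)$; one must be careful that the number of rounds is $O(\log T)$ so the total is still $\tilde O_T(1)$, which is fine.

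For piece (i), fix a round $s$ and condition on $\hat\theta_s$ and on $\hat x_{T_s}$. The cost $T_s J(\theta^*, C^{\hat\theta_s}_{K_{\mathrm{opt}}(\hat\theta_s)}, T_s, \hat x_{T_s}, W_s)$ is a function of the $T_s$ i.i.d.\ bounded (on $E_1$) noise variables in $W_s$. By Lemma~\ref{parameterization_assum3} applied to the \emph{same} controller with perturbed starting state, a single coordinate $w_i$ of $W_s$ influences the trajectory only through a change in one state, and that influence decays geometrically at rate $\xi = 1 - 1/\log^{10}(T)$ by Equation~\eqref{eq:sat_22bb_main}; combined with the quadratic cost and the bound $|x_i|, |u_i| = \tilde O_T(1)$ on $E_1$, changing one noise coordinate changes the total cost by at most $\tilde O_T(1)$ (the geometric sum of the per-step perturbations is $\tilde O_T(\log^{10}(T))$, still polylog). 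This gives bounded differences, so a bounded-differences / McDiarmid-type inequality (the same variant invoked in the proof sketch for the ``regret from randomness'' term, i.e.\ a martingale version handling the conditioning on $E_1$) yields that $T_s J(\theta^*, \cdot, T_s, \hat x_{T_s}, W_s)$ concentrates around $\E[T_s J(\theta^*, \cdot, T_s, \hat x_{T_s}, W_s) \mid \hat\theta_s]$ within $\tilde O_T(\sqrt{T_s})$ with probability $1 - o_T(1/T^2)$. Summing over $s$ and using $\sum_s \sqrt{T_s} = \sum_s \sqrt{2^s\sqrt T} = O(\sqrt{\sqrt T \cdot 2^{s_e}}) = O(\sqrt T)$ gives $\tilde O_T(\sqrt T)$ for piece (i).

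Combining the two pieces and taking $E_{\mathrm{P}\ref{r1b_bound_trunc}}$ to be the intersection (over all $s \le s_e$) of the bounded-differences concentration events plus the event from $E_3$, a union bound over the $O(\log T)$ rounds keeps $\P(E_{\mathrm{P}\ref{r1b_bound_trunc}}) = 1 - o_T(1/T)$, and conditional on $E_{\mathrm{P}\ref{r1b_bound_trunc}} \cap \neg E_{\mathrm{E}\ref{eq:first_case}} \cap E$ we get $R_2 = \tilde O_T(\sqrt T)$ as desired. The main obstacle I anticipate is the bounded-differences step: one must carefully convert Lemma~\ref{parameterization_assum3}'s high-probability starting-state continuity into a \emph{deterministic} per-coordinate sensitivity bound valid on $E_1$, because McDiarmid requires the bounded-differences property to hold surjectively over the conditioned-on region, not just with high probability; this likely requires re-deriving the single-coordinate perturbation decay directly from Lemma~\ref{lemma:bound_on_dt_both} (or its conditional version) restricted to $E_1 \cap E_{\mathrm{L}\ref{lemma:bound_on_dt_both}}$, and then handling the residual low-probability complement via the crude $\tilde O_T(T_s)$ bound weighted by $o_T(1/T^{10})$, exactly as the $E_1$-conditioning is handled elsewhere in the appendix.
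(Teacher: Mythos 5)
Your overall route is the same one the paper (and its companion) takes: split $R_2$ into a ``regret from randomness'' concentration term and a starting-state term handled by the continuity machinery of Lemma~\ref{parameterization_assum3} (the paper orders the split as realized-from-$\hat x_{T_s}$ minus realized-from-$0$ plus realized-from-$0$ minus expected-from-$0$, and imports the concentration step wholesale from the companion paper's variant of McDiarmid; you instead compare realized to expected at the same start and then shift the expectation, which is fine, and your piece (ii) is exactly what Lemma~\ref{offbyepsilon_exp} already supplies, giving $\tilde O_T(\E[|\hat x_{T_s}|]+\epsilon_s+1/T^2)=\tilde O_T(1)$ per round over $O(\log T)$ rounds). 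So the decomposition and the ingredients are right.

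The genuine gap is in piece (i), the concentration step, in two places. First, you derive the bounded-differences constant by citing Equation~\eqref{eq:sat_22bb_main}, but that bound only applies when the initial state gap satisfies $d\le 1/\log^{10}(T)$, whereas swapping a single noise coordinate perturbs the next state by up to $2\bar w$, a constant. You must chain: use the fact that the event $E_{\mathrm{L}\ref{lemma:bound_on_dt_both}}(K,\theta,W')$ is uniform over all admissible starting pairs, break the $2\bar w$ gap into $O(\bar w\log^{10}T)$ increments of size $1/\log^{10}(T)$, and sum the resulting decays; only then do you get a per-coordinate cost sensitivity of $\tilde O_T(1)$ on $E_1\cap E_{\mathrm{L}\ref{lemma:bound_on_dt_both}}$. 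Second, and more seriously, even after that the bounded-differences property holds only on a high-probability event, and your proposed remedy---absorbing the complement via the crude $\tilde O_T(T_s)$ bound weighted by $o_T(1/T^{10})$---controls a difference of \emph{expectations}, not a tail probability; standard McDiarmid cannot be applied to a function whose increments are only bounded with high probability. One needs either a bad-set extension of the bounded-differences inequality or the truncation device (replace the cost by a modified functional that has the $\tilde O_T(1)$ increments everywhere, apply McDiarmid/Azuma to it, then show the modification agrees with the true cost on a $1-o_T(1/T)$ event and that the two conditional expectations differ by $o_T(\sqrt{T_s})$). This is precisely the ``variant of McDiarmid's inequality'' that the paper does not reprove here but cites from \citet{schiffer2024stronger}; as written, your sketch flags the obstacle but does not supply the argument, so the concentration bound $\tilde O_T(\sqrt{T_s})$ per round---and hence \eqref{eq:r1b_bound_trunc}---is not yet established.
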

The proof of Proposition \ref{r1b_bound_trunc} can be found in Appendix \ref{proof:r1b_bound_trunc}.

\begin{proposition}\label{non_optimal_controller_trunc}
 Define $R_1$ as 
 \[
    R_1 :=  \sum_{s=0}^{s_e}  \E\left[T_sJ(\theta^*,C^{\hat{\theta}_s}_{K_{\mathrm{opt}}(\hat{\theta}_s, T_s)}, T_s, 0, W_s) \cond \hat{\theta}_s \right] -  \E\left[\sum_{s=0}^{s_e} T_sJ(\theta^*,C^{\theta^*}_{K^*_s}, T_s,  x^*_{   T_s}, W_s)  \right].
 \]
Under Assumptions \ref{assum:constraints}--\ref{assum:initial} and \ref{assum_thm4}, conditional on event $E_3 \cap  E$, \begin{equation}\label{eq:non_optimal_controller_trunc}
       R_1 = \tilde{O}_T\left(\sqrt{T}\right).
    \end{equation}
\end{proposition}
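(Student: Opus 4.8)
\textbf{Proof proposal for Proposition \ref{non_optimal_controller_trunc}.}

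The plan is to bound $R_1$ by the sum of two types of error: (i) the cost difference from using the certainty-equivalent controller $C^{\hat\theta_s}_{K_{\mathrm{opt}}(\hat\theta_s,T_s)}$ instead of the true-dynamics finite-horizon optimal controller $C^{\theta^*}_{K_{\mathrm{opt}}(\theta^*,T_s)}$, and (ii) the cost difference from the starting states, since the $R_1$ terms all start at state $0$ while the baseline $x^*$-trajectory restarts each block $s$ at the (nonzero) running state $x^*_{T_s}$. First I would observe that, by definition of $\left(K^*,\{K_s^*\}\right)$ as the minimizer over fixed truncated-linear parameters, we have the lower bound $\E\left[\sum_{s} T_s J(\theta^*,C^{\theta^*}_{K_s^*},T_s,x^*_{T_s},W_s)\right] \ge \E\left[\sqrt{T}J(\theta^*,C^{\theta^*}_{K_{\mathrm{opt}}(\theta^*,T)},\sqrt{T},0,\{w_t\}_{t=0}^{T_0-1})\right]$ minus lower-order terms, or more directly I would compare block-by-block: for each $s$, replacing $K_s^*$ by $K_{\mathrm{opt}}(\theta^*,T_s)$ only decreases the per-block expected cost when the block is started from a fixed state, up to the mismatch between the global minimizer and the per-block minimizer, which Proposition \ref{fin_to_inf}-type arguments control. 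So the dominant contribution to $R_1$ is $\sum_s T_s\left(\bar J(\theta^*,C^{\hat\theta_s}_{K_{\mathrm{opt}}(\hat\theta_s,T_s)},T_s) - \bar J(\theta^*,C^{\theta^*}_{K_{\mathrm{opt}}(\theta^*,T_s)},T_s)\right)$ plus a starting-state correction.

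The first piece is handled directly by Lemma \ref{parameterization_assum2}: conditional on event $E$ (hence on $E_0$, which gives $\norm{\hat\theta_s-\theta^*}_\infty \le 2\epsilon_s$) and on $E_2$ (which gives $\epsilon_s \le \tilde O_T(T^{-1/4}) \le \epsilon_{\mathrm{L}\ref{parameterization_assum2}}$ for large $T$), we get $|\bar J(\theta^*,C^{\hat\theta_s}_{K_{\mathrm{opt}}(\hat\theta_s,T_s)},T_s) - \bar J(\theta^*,C^{\theta^*}_{K_{\mathrm{opt}}(\theta^*,T_s)},T_s)| = \tilde O_T(\norm{\hat\theta_s-\theta^*}_\infty + T_s^{-2}) = \tilde O_T(\epsilon_s + T_s^{-2})$. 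Multiplying by $T_s$ and using event $E_3$ (so $\epsilon_s\sqrt{T_s} = \tilde O_T(1)$, hence $T_s\epsilon_s = \tilde O_T(\sqrt{T_s})$) gives $\sum_s T_s\cdot\tilde O_T(\epsilon_s + T_s^{-2}) = \sum_s \tilde O_T(\sqrt{T_s}) = \tilde O_T(\sqrt{T})$, since the $T_s = 2^s\sqrt{T}$ are geometric and $\sum_{s=0}^{s_e}\sqrt{T_s} = \sqrt{\sqrt{T}}\sum_s 2^{s/2} = \tilde O_T(\sqrt{T})$. For the starting-state correction, I would bound $|x^*_{T_s}|$: under event $E_1$ and safety of the baseline truncated-linear controller, the states stay $O_T(\log^2 T)$ (by a Lemma-\ref{bounded_pos_cont}-type argument), so $|x^*_{T_s}| \le 4\log^2 T$; then the difference between starting block $s$ at $0$ versus at $x^*_{T_s}$ is controlled by Lemma \ref{parameterization_assum3} applied to the truncated-linear controller $C^{\theta^*}_{K_{\mathrm{opt}}(\theta^*,T_s)}$ — wait, but $|x^*_{T_s}|$ need not be $\le \delta_{\mathrm{L}\ref{parameterization_assum3}}$. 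Instead I would use the convergence-to-stationarity structure: the extra cost from a bounded (but not small) starting state, summed over a block of length $T_s$, is $\tilde O_T(1)$ per block (the transient decays geometrically with rate $1-1/\log^{10}T$, giving a total of $\tilde O_T(\log^{10} T)$ per block, which is $\tilde O_T(1)$ in the $\tilde O_T$ sense), and summing over $s_e+1 = O_T(\log T)$ blocks gives $\tilde O_T(1) = o_T(\sqrt{T})$. Alternatively this is precisely the content that Proposition \ref{fin_to_inf} already packages, so I would cite it for the reconciliation between $\E[\sum_s T_s\bar J(\theta^*,C^{\theta^*}_{K_s^*},T_s,x^*_{T_s},W_s)]$ and $\sum_s \E[T_s J(\theta^*,C^{\theta^*}_{K_{\mathrm{opt}}(\theta^*,T_s)},T_s,0,W_s)]$.

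The main obstacle I anticipate is the careful handling of the starting-state terms $x^*_{T_s}$ and the reconciliation between the global minimizer $\left(K^*,\{K_s^*\}\right)$ and the per-block infinite-horizon optimizers $K_{\mathrm{opt}}(\hat\theta_s)$ — in particular, showing that the per-block substitution of $K_{\mathrm{opt}}(\theta^*,T_s)$ for $K_s^*$ does not increase the cost by more than $\tilde O_T(\sqrt{T})$ in total, which requires that the baseline is itself close to a block-decoupled quantity. This is where I would lean most heavily on the machinery of Proposition \ref{fin_to_inf} and the continuity Lemmas \ref{parameterization_assum2}--\ref{parameterization_assum3}; the rest is geometric-series bookkeeping over the $\tilde O_T(\log T)$ blocks. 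Everything is conditioned on $E_3 \cap E$, which is exactly the hypothesis of the proposition, so no additional high-probability events need to be introduced here.
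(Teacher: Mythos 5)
Your overall decomposition is the one the paper intends (the paper's own proof simply defers to the corresponding proposition in \cite{schiffer2024stronger}, which is exactly the block-by-block comparison you sketch): per block you compare $\E[T_s J(\theta^*,C^{\hat\theta_s}_{K_{\mathrm{opt}}(\hat\theta_s,T_s)},T_s,0,W_s)\mid\hat\theta_s]$ against $T_s\, J^*(\theta^*,C^{\theta^*}_{K_{\mathrm{opt}}(\theta^*,T_s)},T_s)$ via Lemma \ref{parameterization_assum2}, use $\norm{\hat\theta_s-\theta^*}_\infty\le 2\epsilon_s$ from $E$ and $\epsilon_s\sqrt{T_s}=\tilde O_T(1)$ from $E_3$, sum the geometrically growing blocks to get $\tilde O_T(\sqrt{T})$, and lower-bound the baseline by replacing $K^*_s$ with the per-block minimizer once each block is started from $0$. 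That dominant part of your argument is correct.

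The gap is in the starting-state reconciliation. You rightly observe that Lemma \ref{parameterization_assum3} is unusable because $|x^*_{T_s}|$ can be of order $\log^2 T\gg\delta_{\mathrm{L}\ref{parameterization_assum3}}$, but neither of your substitutes works as stated: the geometric contraction at rate $1-1/\log^{10}(T)$ you invoke is only established (Lemmas \ref{lemma:conditional_version} and \ref{lemma:bound_on_dt_both}) for initial differences at most $1/\log^{10}(T)$, so it does not control a transient of size $\log^2 T$; and Proposition \ref{fin_to_inf} does not ``package'' this step --- it bounds $R_{1b}$, the gap between the infinite-horizon and finite-horizon optimizers for $\hat\theta_s$ started at $0$, and says nothing about the starting states $x^*_{T_s}$ or the relation between $\{K^*_s\}$ and the per-block optima. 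The tool the paper supplies for precisely this step is Lemma \ref{offbyepsilon_exp}, which gives $\bigl|\E[t J(\theta^*,C^{\theta}_{K},t,x,W')-t J(\theta^*,C^{\theta}_{K},t,y,W')]\bigr|=\tilde O_T(\E[|x-y|]+\epsilon+1/T^2)$ for starting states that are $O(\log^2 T)$ with high probability; applied with $\theta=\theta^*$, $K=K^*_s$, $x=x^*_{T_s}$, $y=0$, it yields a $\tilde O_T(\log^2 T)$ correction per block and hence a polylogarithmic total over the $O(\log T)$ blocks. Note also that the baseline term in $R_1$ is an unconditional expectation, so bounding $|x^*_{T_s}|$ only on the event $E_1$ is not enough; the expectation-level statement of Lemma \ref{offbyepsilon_exp}, whose tail hypotheses are checked via safety of $C^{\theta^*}_{K^*_s}$ and sub-Gaussianity (Lemmas \ref{bounded_pos_cont} and \ref{lemma:subgaussian_tail}), is what closes this step.
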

The proof of Proposition \ref{non_optimal_controller_trunc} can be found in Appendix \ref{proof:non_optimal_controller_trunc}.

\begin{proposition}\label{enforcing_safety_trunc}
    Define $R_3$ as (the random variable)
    \[
        R_3 := \sum_{s=0}^{s_e}  T_sJ(\theta^*,C^{\mathrm{alg}}_s, T_s,  x'_{   T_s}, W_s) - \sum_{s=0}^{s_e} T_sJ(\theta^*,C^{\hat{\theta}_s}_{K_{\mathrm{opt}}(\hat{\theta}_s)}, T_s,  \hat{x}_{   T_s}, W_s).
    \]
    Then under Assumptions \ref{assum:constraints}--\ref{assum:initial} and \ref{assum_thm4},there exists an event $E_{\mathrm{P}\ref{enforcing_safety_trunc}}$ such that $\P(E_{\mathrm{P}\ref{enforcing_safety_trunc}}) = 1 - o_T(1/T)$ and  conditional on  $E_{\mathrm{P}\ref{enforcing_safety_trunc}} \cap \neg E_{\mathrm{E}\ref{eq:first_case}} \cap E \cap E_3$,
    \begin{equation}\label{eq:enforcing_safety_trunc}
        R_3 = \tilde{O}_T(\sqrt{T}).
    \end{equation}
\end{proposition}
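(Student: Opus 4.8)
The plan is to exploit the fact that, on $\neg E_{\mathrm{E}\ref{eq:first_case}}$, the round-$s$ controller $C_s^{\mathrm{alg}}$ is exactly $C^{\hat\theta_s}_{K_{\mathrm{opt}}(\hat\theta_s)}$, so each summand of $R_3$ compares $T_s \cdot J(\theta^*, C^{\hat\theta_s}_{K_{\mathrm{opt}}(\hat\theta_s)}, T_s, x'_{T_s}, W_s)$ with $T_s \cdot J(\theta^*, C^{\hat\theta_s}_{K_{\mathrm{opt}}(\hat\theta_s)}, T_s, \hat x_{T_s}, W_s)$: the same truncated linear controller and the same noise, only the starting state differs. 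This is precisely the situation covered by Lemma~\ref{parameterization_assum3} (with $\theta = \hat\theta_s$, $K = K_{\mathrm{opt}}(\hat\theta_s)$, $t = T_s$, $W' = W_s$), so, provided $|x'_{T_s}|, |\hat x_{T_s}| \le 4\log^2 T$ (which holds on $E$ and on the bounded-state event of Lemma~\ref{bounded_approx}) and $|x'_{T_s} - \hat x_{T_s}| \le \delta_{\mathrm{L}\ref{parameterization_assum3}}$, I get $R_3 \le \sum_{s=0}^{s_e} \tilde O_T\big(|x'_{T_s} - \hat x_{T_s}| + \norm{\hat\theta_s - \theta^*}_\infty\big)$. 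Since on $E$ we have $\norm{\hat\theta_s - \theta^*}_\infty = \tilde O_T(\epsilon_s)$, everything reduces to bounding $|x'_{T_s} - \hat x_{T_s}|$, the gap between the algorithm's actual state and the shadow state at each round boundary.

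\textbf{Step 1: the clipping perturbation is $\tilde O_T(\epsilon_s)$ at every step.} The algorithm's control in round $s$ is $u_t' = \max(\min(C^{\hat\theta_s}_{K_{\mathrm{opt}}(\hat\theta_s)}(x_t'), u_t^{\mathrm{safeU}}), u_t^{\mathrm{safeL}})$, so $u_t'$ differs from $C^{\hat\theta_s}_{K_{\mathrm{opt}}(\hat\theta_s)}(x_t')$ only when the latter falls outside $[u_t^{\mathrm{safeL}}, u_t^{\mathrm{safeU}}]$. I would note that $C^{\hat\theta_s}_{K_{\mathrm{opt}}(\hat\theta_s)}$ is safe for dynamics $\hat\theta_s$ by construction of truncated linear controllers, so $\hat{a}_s x_t' + \hat{b}_s C^{\hat\theta_s}_{K_{\mathrm{opt}}(\hat\theta_s)}(x_t') \in [D_L, D_U]$, while $u_t^{\mathrm{safeU}}$ (resp. $u_t^{\mathrm{safeL}}$) solves the worst-case constraint over $\norm{\theta - \hat\theta_s}_\infty \le \epsilon_s$, which a direct computation shows differs from the $\hat\theta_s$-boundary by only $O_T(\epsilon_s(1 + |x_t'| + \norm{D}_\infty)) = \tilde O_T(\epsilon_s)$ on $E$. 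Together with $u_t^{\mathrm{safeL}} \le u_t^{\mathrm{safeU}}$ (Lemma~\ref{lemma:L_less_than_U}), this gives $|u_t' - C^{\hat\theta_s}_{K_{\mathrm{opt}}(\hat\theta_s)}(x_t')| = \tilde O_T(\epsilon_s)$ for every $t$ in round $s$, in all three regimes of the truncated controller.

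\textbf{Step 2: accumulating the perturbations and summing over rounds.} I would view $x'$ as the shadow $\hat x$ with an injected per-step control perturbation $\gamma_t = u_t' - C^{\hat\theta_{s(t)}}_{K_{\mathrm{opt}}(\hat\theta_{s(t)})}(x_t')$, $|\gamma_t| = \tilde O_T(\epsilon_{s(t)})$, and introduce the hybrid trajectories $z^{(k)}$ that use the algorithm's perturbed controls for $t < k$ and the (round-dependent) un-clipped controllers for $t \ge k$, so $z^{(T_0)} = \hat x$, $z^{(T)} = x'$, and $|x_t' - \hat x_t| \le \sum_{k < t} |z^{(k+1)}_t - z^{(k)}_t|$. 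Since $z^{(k)}$ and $z^{(k+1)}$ agree up to time $k$, then have states differing by $b^*|\gamma_k| = \tilde O_T(\epsilon_{s(k)}) \le 1/\log^{10}(T)$ at time $k+1$, and thereafter follow identical controllers, Lemma~\ref{lemma:bound_on_dt_both} bounds $|z^{(k+1)}_t - z^{(k)}_t|$ by $2\xi^{t-k-1} b^*|\gamma_k|$ once $t - k - 1 > L$ and by $\tilde O_T(\epsilon_{s(k)})$ for the at most $L = \tilde O_T(1)$ nearer steps (re-applying the lemma across round boundaries, where the gap already coming in is negligible because one full round has length $\ge \sqrt T \gg \log^{10} T$ and $\xi^{\sqrt T}$ is super-polynomially small, with $\xi = 1 - 1/\log^{10}(T)$). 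Summing the geometric series $\sum_m \xi^m = O(\log^{10} T)$ yields $|x_t' - \hat x_t| = \tilde O_T(\epsilon_{s(t)-1})$ for $t$ in a round $s(t) \ge 1$ and $|x_{T_0}' - \hat x_{T_0}| = 0$; in particular $|x'_{T_s} - \hat x_{T_s}| = \tilde O_T(\epsilon_{s-1}) \le \tilde O_T(T^{-1/4}) \le \delta_{\mathrm{L}\ref{parameterization_assum3}}$ for large $T$. Plugging into the reduction and using $E_3$ (so $\epsilon_s \le \tilde O_T(1/\sqrt{T_s})$ uniformly and $s_e + 1 = \tilde O_T(1)$) gives $R_3 \le \sum_{s=0}^{s_e} \tilde O_T(\epsilon_{s-1} + \epsilon_s) = \tilde O_T\big(\sum_{s=0}^{s_e} 1/\sqrt{T_s}\big) = \tilde O_T(T^{-1/4}) = \tilde O_T(\sqrt T)$, as desired. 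I would then take $E_{\mathrm{P}\ref{enforcing_safety_trunc}}$ to be the intersection of the events $E_{\mathrm{L}\ref{parameterization_assum3}}$ and $E_{\mathrm{L}\ref{lemma:bound_on_dt_both}}$ invoked above, union-bounded over a net of spacing $1/\mathrm{poly}(T)$ of the possible pairs $(\theta, K)$ as in the proof of Lemma~\ref{lemma:bound_on_dt_both}, so that it applies to the actual random $(\hat\theta_s, K_{\mathrm{opt}}(\hat\theta_s))$; since each of these $\mathrm{poly}(T)$ events has probability $1 - o_T(1/T^{10})$, $\P(E_{\mathrm{P}\ref{enforcing_safety_trunc}}) = 1 - o_T(1/T)$.

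\textbf{Main obstacle.} The delicate part is Step~2: keeping the accumulated state gap below the validity thresholds of Lemmas~\ref{parameterization_assum3} and~\ref{lemma:bound_on_dt_both} at every round boundary while the comparison controller changes with $s$, and quantifying precisely that a perturbation injected in round $s$ is geometrically damped to a negligible level by the end of round $s+1$, so that the gap entering round $s+1$ is governed only by round $s$. This mirrors the corresponding argument in \cite{schiffer2024stronger}, with the extra bookkeeping that the truncated (rather than linear) controller forces us to invoke Lemmas~\ref{lemma:trunclincontboundcontroller} and~\ref{lemma:bound_on_dt_both} in place of an elementary contraction estimate.
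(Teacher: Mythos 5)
Your Step 1 is the right key estimate and is exactly what the paper's proof (via Lemma\refgen{offbyepsiloncontrol}) relies on: on $E$ the safety clipping moves each control by at most $\tilde O_T(\epsilon_s)$. But your opening reduction is wrong, and the error survives into the final arithmetic. The first term of each summand of $R_3$, namely $T_sJ(\theta^*,C^{\mathrm{alg}}_s,T_s,x'_{T_s},W_s)$, is the cost of the trajectory the algorithm actually runs, i.e.\ with the clipped controls $u_t=\max(\min(C_s^{\mathrm{alg}}(x'_t),u_t^{\mathrm{safeU}}),u_t^{\mathrm{safeL}})$. This is forced by the decomposition \eqref{eq:regret_3} (otherwise $R_0$ would not reduce to the warm-up cost handled by Proposition~\ref{warmup_firstcase}), and the paper's own proof of this proposition explicitly bounds $|T_sJ(\theta^*,C^{\hat\theta_s}_{K_{\mathrm{opt}}(\hat\theta_s)},T_s,x'_{T_s},W_s)-T_sJ(\theta^*,C_s^{\mathrm{alg}},T_s,x'_{T_s},W_s)|$ by $\tilde O_T\bigl(\sum_{i=0}^{T_s-1}|C_s^{\mathrm{alg}}(x'_{T_s+i})-C^{\hat\theta_s}_{K_{\mathrm{opt}}(\hat\theta_s)}(x'_{T_s+i})|+T_s\epsilon_s\bigr)$ --- a quantity that would be identically zero under your reading. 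So the two costs in a summand differ not only in the starting state but also in the controller, and $R_3$ is precisely the within-round cost of enforcing safety. That contribution is $\tilde O_T(T_s\epsilon_s)=\tilde O_T(\sqrt{T_s})$ per round under $E_3$; it is the dominant term and the reason the answer is $\tilde O_T(\sqrt T)$ rather than something smaller. Your final tally $R_3\le\sum_s\tilde O_T(\epsilon_{s-1}+\epsilon_s)=\tilde O_T(T^{-1/4})$ is the tell: it asserts $R_3=o_T(1)$, which cannot hold in general, since in the large-noise case the clip $u_t=u_t^{\mathrm{safeU}}$ fires a constant fraction of the time (that is the whole mechanism behind Lemma~\ref{sufficiently_many_boundaries}) and each firing shifts the control, hence the one-step cost, by $\Theta(\epsilon_s)$, giving $\Theta(T_s\epsilon_s)$ per round.

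The repair is the split the paper inherits from Propositions\refgen{enforcing_safety_suff} and\refgen{enforcing_safety}: write each summand as (i) clipped versus un-clipped controller from the \emph{same} start $x'_{T_s}$, bounded through the per-step control differences plus $T_s\epsilon_s$ (your Step 1 supplies everything needed here --- you just never charged those $T_s$ perturbations to the cost), and (ii) the same un-clipped controller from $x'_{T_s}$ versus $\hat x_{T_s}$, which is where Lemma~\ref{parameterization_assum3} and a state-gap argument like your Step 2 enter. With (i) restored, the per-round bound becomes $\tilde O_T(T_s\epsilon_s)=\tilde O_T(\sqrt{T_s})$ and summing over the $\tilde O_T(1)$ rounds gives $\tilde O_T(\sqrt T)$; the event $E_{\mathrm{P}\ref{enforcing_safety_trunc}}$ is then just $\bigcap_{s}E_{\mathrm{L}\ref{parameterization_assum3}}(C^{\hat\theta_s}_{K_{\mathrm{opt}}(\hat\theta_s)},W_s)$, no net over $(\theta,K)$ being needed since Lemma~\ref{parameterization_assum3} already provides an event for each fixed controller and there are only $\tilde O_T(1)$ of them.
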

The proof of Proposition \ref{enforcing_safety_trunc} can be found in Appendix \ref{proof:enforcing_safety_trunc}.

\begin{proposition}\label{warmup_firstcase}
Under Assumptions \ref{assum:constraints}--\ref{assum:initial} and \ref{assum_thm4}, conditional on event $E$,
    \begin{equation}\label{eq:warmup_F}
        T \cdot J(\theta^*, C^{\mathrm{alg}}, T,0,W) - \sum_{s=0}^{s_e}  T_sJ(\theta^*,C^{\mathrm{alg}}_s, T_s,  x'_{   T_s}, W_s) = \tilde{O}_T(\sqrt{T}).
    \end{equation}
\end{proposition}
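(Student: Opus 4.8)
The plan is to show that, once the telescoping in the regret decomposition~\eqref{eq:regret_3} is undone, $R_0$ is nothing more than the cost accrued during the length-$T_0 = \sqrt{T}$ warm-up phase (plus lower-order terminal corrections), and that this cost is $\tilde{O}_T(\sqrt{T})$ simply because every per-step cost during the warm-up is $\tilde{O}_T(1)$ on the good event $E$.

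First I would expand both cost terms appearing in $R_0$ using the definition~\eqref{eq:cost_def}. Writing $x'_t,u'_t$ for the algorithm's realized states and controls, $T \cdot J(\theta^*, C^{\mathrm{alg}}, T, 0, W) = q(x'_T)^2 + \sum_{t=0}^{T-1}\big(q(x'_t)^2 + r(u'_t)^2\big)$, while the round-$s$ term $T_s J(\theta^*, C_s^{\mathrm{alg}}, T_s, x'_{T_s}, W_s)$ is exactly the cost realized over $[T_s,T_{s+1})$, namely $q(x'_{T_{s+1}})^2 + \sum_{t=T_s}^{T_{s+1}-1}\big(q(x'_t)^2 + r(u'_t)^2\big)$ (running the round-$s$ controller from $x'_{T_s}$ with noise $W_s$ reproduces that segment of the trajectory verbatim). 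Since $\sum_{s=0}^{s_e} T_s = T - T_0$ and $T_{s_e+1} = T$, the running-cost sums over $[T_0,T)$ cancel and the terminal terms telescope, leaving
\begin{equation*}
R_0 \;=\; \sum_{t=0}^{T_0-1}\big(q(x'_t)^2 + r(u'_t)^2\big) \;-\; \sum_{s=0}^{s_e-1} q(x'_{T_{s+1}})^2 ,
\end{equation*}
so in particular $-\sum_{s=0}^{s_e-1} q(x'_{T_{s+1}})^2 \le R_0 \le \sum_{t=0}^{T_0-1}\big(q(x'_t)^2 + r(u'_t)^2\big)$.

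Next I would bound the warm-up states and controls on $E$. Because $E \subseteq E_{\mathrm{safe}}$, we have $D_{\mathrm L} \le a^*x'_t + b^*u'_t \le D_{\mathrm U}$ for every $t<T$, so $x'_{t+1} = (a^*x'_t + b^*u'_t) + w_t$ gives $|x'_{t+1}| \le \norm{D}_\infty + |w_t|$; with bounded noise (Assumption~\ref{assum_thm4}, or $E_1$) this yields $|x'_t| = \tilde{O}_T(1)$ for all $t \le T$ (this is the content of the boundedness lemma, Lemma~\ref{bounded_pos_cont}). Using $E_{\mathrm{safe}}$ again, $|u'_t| = |(a^*x'_t + b^*u'_t) - a^*x'_t|/b^* \le (\norm{D}_\infty + \bar{a}\,|x'_t|)/\underline{b} = \tilde{O}_T(1)$, using $\norm{D}_\infty,\bar{a} = O_T(1)$ and $\underline{b} = \Omega_T(1)$ from Assumptions~\ref{assum:constraints} and~\ref{assum_init}. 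Hence $q(x'_t)^2 + r(u'_t)^2 = \tilde{O}_T(1)$ for every $t$, so $\sum_{t=0}^{T_0-1}\big(q(x'_t)^2 + r(u'_t)^2\big) = \tilde{O}_T(T_0) = \tilde{O}_T(\sqrt{T})$. The subtracted terms are even smaller: there are only $s_e = \log_2(\sqrt{T}) - 1 = \tilde{O}_T(1)$ of them, each $q(x'_{T_{s+1}})^2 = \tilde{O}_T(1)$, so $\sum_{s=0}^{s_e-1} q(x'_{T_{s+1}})^2 = \tilde{O}_T(1)$. Combining the two one-sided bounds gives $|R_0| = \tilde{O}_T(\sqrt{T})$ on $E$, which is the claim.

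The only step that needs genuine care is the bookkeeping one: identifying $T_s J(\theta^*, C_s^{\mathrm{alg}}, T_s, x'_{T_s}, W_s)$ with the cost the algorithm actually realizes over round $s$ — i.e.\ with the clipped control $u_t = \max\!\big(\min(C_s^{\mathrm{alg}}(x_t), u_t^{\mathrm{safeU}}), u_t^{\mathrm{safeL}}\big)$ applied along the realized trajectory — so that the entire contribution of the safety clipping is charged to $R_3$ (Proposition~\ref{enforcing_safety_trunc}) and none of it leaks into $R_0$. Once that identification is made the estimate is routine, and, crucially, it uses only the uniform $\tilde{O}_T(1)$ state/control bounds valid on all of $E$; it needs neither the large-noise event $\neg E_{\mathrm{E}\ref{eq:first_case}}$ nor the refined uncertainty bounds of event $E_3$, which is why Proposition~\ref{warmup_firstcase} can be stated conditionally on $E$ alone.
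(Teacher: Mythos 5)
Your proposal is correct and follows essentially the same route as the paper, which simply observes (deferring to the companion paper's warm-up-regret proposition) that on $E$ the algorithm is safe, so states and controls are $\tilde{O}_T(1)$ throughout, and the non-cancelling part of $R_0$ is the cost of the $T_0=\sqrt{T}$ warm-up steps plus $\tilde{O}_T(1)$ telescoped terminal terms. Your explicit telescoping identity and your reading of $\sum_s T_sJ(\theta^*,C_s^{\mathrm{alg}},T_s,x'_{T_s},W_s)$ as the realized (clipped) cost over $[T_0,T)$ both match the paper's intent, as confirmed by its identity $T\cdot J(\theta^*,C^{\mathrm{alg}},T,0,W)-(T-T_0)J(\theta^*,C^{\mathrm{alg}'},T-T_0,x'_{T_0},W')=R_0$.
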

    The proof of Proposition \ref{warmup_firstcase} can be found in Appendix \ref{proof:warmup_firstcase}.

Using Equation \eqref{eq:regret_3} combined with Propositions \ref{warmup_firstcase}, \ref{fin_to_inf}, \ref{r1b_bound_trunc}, \ref{non_optimal_controller_trunc} and \ref{enforcing_safety_trunc}, conditional on event $\neg E_{\mathrm{E}\ref{eq:first_case}} \cap E_3 \cap E \cap E_{\mathrm{P}\ref{enforcing_safety_trunc}} \cap E_{\mathrm{P}\ref{r1b_bound_trunc}}$ the total regret is upper bounded by 
\[
       T\cdot J(\theta^*, C^{\mathrm{alg}},T) - T \cdot \bar{J}(\theta^*, C^{\theta^*}_{K_{\mathrm{opt}}(\theta^*, T)}, T) \le R_0 + R_1 + R_{1b} +  R_{2} + R_3 = \tilde{O}_T\left(\sqrt{T}\right).
\]
Combining Propositions \ref{r1b_bound_trunc} and \ref{enforcing_safety_trunc}, $\P(E_{\mathrm{P}\ref{enforcing_safety_trunc}} \cap E_{\mathrm{P}\ref{r1b_bound_trunc}}) = 1-o_T(1/T)$. Therefore, we have that
\begin{align*}
    &\P(E_3 \cap E \cap \neg E_{\mathrm{E}\ref{eq:first_case}} \cap E_{\mathrm{P}\ref{enforcing_safety_trunc}} \cap E_{\mathrm{P}\ref{r1b_bound_trunc}}) \\
    &= \P( E_3 \cap E \cap \neg E_{\mathrm{E}\ref{eq:first_case}})  - o_T(1/T) && \text{Remark \ref{remark_conditioning}}\\
    &\ge \P(E_{\mathrm{L}\ref{bounded_st_b}}\cap E  \cap \neg E_{\mathrm{E}\ref{eq:first_case}}) - o_T(1/T)  && \text{Lemma \ref{bounded_st_b}}\\
    &\ge \P(\neg E_{\mathrm{E}\ref{eq:first_case}}) - o_T(1/T). && \text{Remark \ref{remark_conditioning}}
\end{align*}

Above, we twice used the following remark:

\begin{remark}\label{remark_conditioning}
    If two events $\mathcal{E}_1$ and $\mathcal{E}_2$ satisfy that $\P(\mathcal{E}_1) = 1-o_T(1/T)$, then
    \[
        \P(\mathcal{E}_1 \cap \mathcal{E}_2) = \P(\mathcal{E}_1) + \P(\mathcal{E}_2) - \P(\mathcal{E}_1 \cup \mathcal{E}_2) \ge \P(\mathcal{E}_2) - o_T(1/T)
    \]
\end{remark}

Taking $E_{\mathrm{P}\ref{prop_Kcase}} = E_3 \cap E \cap \neg E_{\mathrm{E}\ref{eq:first_case}} \cap E_{\mathrm{P}\ref{enforcing_safety_trunc}} \cap E_{\mathrm{P}\ref{r1b_bound_trunc}}$ gives the desired result.

\end{proof}

\subsection{Proof of Proposition \ref{prop_Fcase} }\label{sec:proof_of_prop_Fcase}

Informally, $E_{\mathrm{E}\ref{eq:first_case}}$ implies that the optimal linear controller for $\theta^*$ is close to satisfying the constraints. Therefore, we will bound the regret by approximating both the best constrained controller and the controller of Algorithm \ref{alg:cap3} by the optimal unconstrained linear controller.

We will decompose the regret as follows. Define $C^{\mathrm{alg}'}$ to be the controller of Algorithm \ref{alg:cap3} after the warm-up period, i.e. starting at time $t = T_0$. Therefore, $C^{\mathrm{alg'}}_t = C^{\mathrm{alg}}_{t + T_0}$. Define $x'_0,x'_1,...$ as the series of states when using algorithm $C^{\mathrm{alg}}$. Define $W' = \{w_i\}_{i=T_0}^{T-1}$. Recall that $C^{\mathrm{unc}}_{K}$ is the linear controller such that $C^{\mathrm{unc}}_K(x) = -Kx$. We can decompose the regret as follows:
\begin{align*}
    &T \cdot J(\theta^*, C^{\mathrm{alg}}, T,0,W)  - T \cdot \bar{J}(\theta^*, C_{K_{\mathrm{opt}}(\theta^*, T)}^{\theta^*}, T) \\
    &\le T \cdot J(\theta^*, C^{\mathrm{alg}}, T,0,W)  - (T-T_0) \cdot \bar{J}(\theta^*, C_{K_{\mathrm{opt}}(\theta^*, T)}^{\theta^*}, T-T_0) \\
    &= \underbrace{(T - T_0) \cdot  \bar{J}(\theta^*, C_{F_{\mathrm{opt}}(\hat{\theta}_{\mathrm{wu}})}^{\mathrm{unc}}, T -T_0) - (T - T_0) \cdot  \bar{J}(\theta^*, C_{K_{\mathrm{opt}}(\theta^*, T)}^{\theta^*}, T - T_0)}_{R'_1} \\
    & \quad \quad + \underbrace{(T - T_0) \cdot  J(\theta^*, C_{F_{\mathrm{opt}}(\hat{\theta}_{\mathrm{wu}})}^{\mathrm{unc}}, T - T_0,0,W') -  (T - T_0) \cdot  \bar{J}(\theta^*, C_{F_{\mathrm{opt}}(\hat{\theta}_{\mathrm{wu}})}^{\mathrm{unc}}, T - T_0) }_{R'_2} \\
    & \quad \quad + \underbrace{ (T - T_0) \cdot  J(\theta^*, C_{F_{\mathrm{opt}}(\hat{\theta}_{\mathrm{wu}})}^{\mathrm{unc}}, T - T_0, x'_{T_0},W') - (T - T_0) \cdot  J(\theta^*, C_{F_{\mathrm{opt}}(\hat{\theta}_{\mathrm{wu}})}^{\mathrm{unc}}, T - T_0,0,W') }_{R'_3} \\
    & \quad \quad +  \underbrace{(T - T_0) \cdot  J(\theta^*, C^{\mathrm{alg}'}, T - T_0, x'_{T_0},W') - (T - T_0) \cdot  J(\theta^*, C_{F_{\mathrm{opt}}(\hat{\theta}_{\mathrm{wu}})}^{\mathrm{unc}}, T - T_0, x_{T_0}',W')  }_{R'_4} \\
    & \quad \quad + \underbrace{ T \cdot J(\theta^*, C^{\mathrm{alg}}, T,0,W)  - (T - T_0) \cdot J(\theta^*, C^{\mathrm{alg}'}, T - T_0, x'_{T_0},W')}_{R'_5}. \numberthis \label{eq:first_case_regret_decomposition}
\end{align*}

We will now individually analyze each of these components of regret. The first component of regret ($R'_1$) is the extra expected cost of using $C_{F_{\mathrm{opt}}(\hat{\theta}_{\mathrm{wu}})}^{\mathrm{unc}}$ versus $C^{\theta^*}_{K_{\mathrm{opt}}(\theta^*, T)}$. We will bound that regret with the following proposition. 

\begin{proposition}\label{close_J2}
Under Assumptions \ref{assum:constraints}--\ref{assum:initial} and \ref{assum_thm4}, conditional on event $E_2^0$,
    \begin{equation}\label{eq:close_J2}
        (T-T_0) \cdot \bar{J}(\theta^*, C_{F_{\mathrm{opt}}(\hat{\theta}_{\mathrm{wu}})}^{\mathrm{unc}}, T-T_0) - (T-T_0) \cdot \bar{J}(\theta^*, C_{K_{\mathrm{opt}}(\theta^*, T)}^{\theta^*}, T-T_0) = \tilde{O}_T(\sqrt{T}).
    \end{equation}
\end{proposition}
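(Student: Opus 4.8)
The plan is to compare the cost of the best truncated linear controller $C^{\theta^*}_{K_{\mathrm{opt}}(\theta^*,T)}$ with the cost of the unconstrained controller $C^{\mathrm{unc}}_{F_{\mathrm{opt}}(\hat\theta_{\mathrm{wu}})}$ through a chain of intermediate controllers, absorbing each gap into $\tilde O_T(\sqrt T)$. First I would pass from the finite-horizon optimum $K_{\mathrm{opt}}(\theta^*,T)$ to the infinite-horizon optimum $K_{\mathrm{opt}}(\theta^*)$: the standard certainty-equivalence/average-cost argument (and a result like Proposition~\ref{fin_to_inf} / Lemma~\ref{fin_to_inf}) shows $(T-T_0)(\bar J(\theta^*,C^{\theta^*}_{K_{\mathrm{opt}}(\theta^*)},T-T_0) - \bar J(\theta^*,C^{\theta^*}_{K_{\mathrm{opt}}(\theta^*,T)},T-T_0)) = \tilde O_T(1)$, since the per-step gap between best finite- and infinite-horizon controllers is $O(1/t)$ summed. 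Then I would observe that, by definition, $\bar J(\theta^*, C^{\mathrm{unc}}_{F_{\mathrm{opt}}(\theta^*)}) \le \bar J(\theta^*, C^{\theta^*}_{K_{\mathrm{opt}}(\theta^*)})$ whenever the unconstrained optimum happens to be feasible, and more generally the two infinite-horizon costs differ by a quantity controlled by how far $C^{\mathrm{unc}}_{F_{\mathrm{opt}}(\theta^*)}$ is from being safe — which is exactly the quantity $\bar w + D_U - \frac{D_U}{a^*-b^*F_{\mathrm{opt}}(\theta^*)}$ appearing in the event $E_{\mathrm{E}\ref{eq:first_case}}$.

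The key step is therefore to quantify: under $E_{\mathrm{E}\ref{eq:first_case}}$ (and $E_2^0$, which gives $\|\theta^*-\hat\theta_{\mathrm{wu}}\|_\infty = \tilde O_T(T^{-1/4})$), the truncated optimal controller $C^{\theta^*}_{K_{\mathrm{opt}}(\theta^*)}$ and the unconstrained controller $C^{\mathrm{unc}}_{F_{\mathrm{opt}}(\theta^*)}$ have infinite-horizon costs within $\tilde O_T(T^{-1/4})$ of each other, hence within $\tilde O_T(\sqrt T)$ over $T-T_0$ steps. To see this I would use the fact that $E_{\mathrm{E}\ref{eq:first_case}}$ together with $\|\theta^* - \hat\theta_{\mathrm{wu}}\|_\infty = \tilde O_T(T^{-1/4})$ forces the truncation to be active with only $\tilde O_T(T^{-1/4})$ probability per step under the stationary distribution of $C^{\mathrm{unc}}_{F_{\mathrm{opt}}(\theta^*)}$ (because the state $x_t = (a^*-b^*F_{\mathrm{opt}}(\theta^*))x_{t-1} + w_{t-1}$ exceeds the constraint region only when the noise is within $\tilde O_T(T^{-1/4})$ of its extreme value $\bar w$, whose density is bounded by $B_P$), and each activation changes the per-step cost by $\tilde O_T(T^{-1/4})$. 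The product is $\tilde O_T(T^{-1/2})$ per step, i.e. $\tilde O_T(\sqrt T)$ total. Continuity of the optimal controller in the dynamics (Lemma~\ref{parameterization_assum2}) then lets me replace $F_{\mathrm{opt}}(\theta^*)$ by $F_{\mathrm{opt}}(\hat\theta_{\mathrm{wu}})$ at cost $\tilde O_T(T\cdot\|\theta^*-\hat\theta_{\mathrm{wu}}\|_\infty) = \tilde O_T(T^{3/4}) $ — wait, that is too large, so instead I would use the analogue of Lemma~\ref{parameterization_assum2} for the unconstrained class, where the cost is smooth (quadratic) in $K$ and the gap is $\tilde O_T(T\cdot\|\theta^*-\hat\theta_{\mathrm{wu}}\|_\infty^2) = \tilde O_T(\sqrt T)$ since $F_{\mathrm{opt}}$ is differentiable in $\theta$ and $F_{\mathrm{opt}}(\hat\theta_{\mathrm{wu}})$ is a near-minimizer so only the second-order error in $K$ survives.

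The main obstacle I anticipate is the last point: controlling the substitution $F_{\mathrm{opt}}(\theta^*) \to F_{\mathrm{opt}}(\hat\theta_{\mathrm{wu}})$ with only $\tilde O_T(\sqrt T)$ loss given that $\|\theta^*-\hat\theta_{\mathrm{wu}}\|_\infty$ is only $\tilde O_T(T^{-1/4})$. A naive Lipschitz bound on $\bar J$ in $\theta$ gives $\tilde O_T(T^{3/4})$, which is insufficient; the fix must exploit that $\bar J(\theta^*, C^{\mathrm{unc}}_K)$ as a function of $K$ is smooth with a minimum at $F_{\mathrm{opt}}(\theta^*)$, and that $F_{\mathrm{opt}}$ is Lipschitz in $\theta$, so that $\bar J(\theta^*,C^{\mathrm{unc}}_{F_{\mathrm{opt}}(\hat\theta_{\mathrm{wu}})}) - \bar J(\theta^*,C^{\mathrm{unc}}_{F_{\mathrm{opt}}(\theta^*)}) = O((F_{\mathrm{opt}}(\hat\theta_{\mathrm{wu}}) - F_{\mathrm{opt}}(\theta^*))^2) = \tilde O_T(T^{-1/2})$ per step, i.e. $\tilde O_T(\sqrt T)$ total. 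I would carry this out by writing down the closed form of the unconstrained LQR cost as a function of $K$ (a rational function of $a^*,b^*,K,q,r$), checking it is $C^2$ with bounded curvature on the relevant compact set of $K$, and invoking that $F_{\mathrm{opt}}(\hat\theta_{\mathrm{wu}})$ is within $\tilde O_T(T^{-1/4})$ of $F_{\mathrm{opt}}(\theta^*)$ (itself a consequence of the implicit-function theorem / the explicit formula for $F_{\mathrm{opt}}$). Combining the three gaps — finite-to-infinite horizon ($\tilde O_T(1)$), truncated-to-unconstrained under $E_{\mathrm{E}\ref{eq:first_case}}$ ($\tilde O_T(\sqrt T)$), and the $\theta^*\to\hat\theta_{\mathrm{wu}}$ substitution ($\tilde O_T(\sqrt T)$) — yields the claimed bound.
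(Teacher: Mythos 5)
Your overall skeleton shares two ingredients with the paper's proof: the substitution $F_{\mathrm{opt}}(\theta^*)\to F_{\mathrm{opt}}(\hat\theta_{\mathrm{wu}})$ handled by a second-order argument (the paper does exactly this: Lemma \ref{lemma:convexity} gives finite curvature of $\bar J(\theta^*,C^{\mathrm{unc}}_F)$ at its minimizer, Lemma \ref{close_J} gives $|F_{\mathrm{opt}}(\hat\theta_{\mathrm{wu}})-F_{\mathrm{opt}}(\theta^*)|=\tilde O_T(T^{-1/4})$, and a Taylor expansion yields $\tilde O_T(\sqrt T)$ over $T$ steps), and the finite-vs-infinite-horizon conversion (the paper's Lemmas \ref{split} and \ref{split2}). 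However, your middle step — comparing the truncated optimal controller to the unconstrained optimum at $\theta^*$ — has a genuine gap: you make it depend on the event $E_{\mathrm{E}\ref{eq:first_case}}$, which is \emph{not} part of the hypothesis of Proposition \ref{close_J2} (the statement conditions only on $E_2^0$). Your "truncation is active with probability $\tilde O_T(T^{-1/4})$ per step" argument is meaningless without that event, so as written your proof only establishes the claim conditional on $E_2^0\cap E_{\mathrm{E}\ref{eq:first_case}}$, which is not the stated proposition (even if it would suffice for the one place it is invoked inside Proposition \ref{prop_Fcase}).

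The deeper point you missed is that this middle step costs nothing and needs no event at all: the inequality you hedge with "whenever the unconstrained optimum happens to be feasible" in fact holds unconditionally. The cost functional does not penalize constraint violation, and the infinite-horizon unconstrained optimum $C^{\mathrm{unc}}_{F_{\mathrm{opt}}(\theta^*)}$ minimizes expected average cost over \emph{all} controllers, so $\bar J(\theta^*, C^{\mathrm{unc}}_{F_{\mathrm{opt}}(\theta^*)}) \le \bar J(\theta^*, C^{\theta^*}_{K_{\mathrm{opt}}(\theta^*,T)})$ with no reference to safety or to $E_{\mathrm{E}\ref{eq:first_case}}$; this is exactly Equation \eqref{close_J2b} in the paper, and it is the only direction the proposition requires, since the left-hand side of \eqref{eq:close_J2} puts the unconstrained controller's cost with a plus sign. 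Feasibility would matter only for the reverse inequality (that the truncated baseline nearly matches the unconstrained optimum), which is where your "rare truncation" estimate belongs — it is essentially the machinery of Lemma \ref{close_to_KDU} and is used in Proposition \ref{safety_is_cheap}, not here. So to repair your proof, drop the $E_{\mathrm{E}\ref{eq:first_case}}$-based closeness argument entirely, replace it with the one-line global-optimality inequality, and keep your Taylor-expansion and horizon-conversion steps (checking, as the paper does via Lemmas \ref{close_J} and \ref{j_bounded_from_0}, that $1-(a^*-b^*F_{\mathrm{opt}}(\hat\theta_{\mathrm{wu}}))=\Omega_T(1)$ before invoking the unconstrained horizon-conversion bound).
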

    The proof of Proposition \ref{close_J2} can be found in Appendix \ref{sec:proof_of_close_J2}.

The next source of regret ($R'_2$) is the variation in the realization of the $T-T_0$ time step cost versus the expected cost. We will bound this regret with Proposition \ref{close_safe_J}.

\begin{proposition}\label{close_safe_J}
Under Assumptions \ref{assum:constraints}--\ref{assum:initial} and \ref{assum_thm4}, there exists an event $E_{\mathrm{P}\ref{close_safe_J}}$ such that $\P(E_{\mathrm{P}\ref{close_safe_J}}) = 1-o_T(1/T)$ and such that conditional on event $E_{\mathrm{P}\ref{close_safe_J}}$,
    \begin{equation}\label{eq:close_safe_J}
        \left|(T-T_0) \cdot J(\theta^*, C_{F_{\mathrm{opt}}(\hat{\theta}_{\mathrm{wu}})}^{\mathrm{unc}}, T-T_0, 0, W') - (T-T_0) \cdot \bar{J}(\theta^*, C_{F_{\mathrm{opt}}(\hat{\theta}_{\mathrm{wu}})}^{\mathrm{unc}}, T-T_0)\right| = \tilde{O}_T(\sqrt{T}).
    \end{equation}
\end{proposition}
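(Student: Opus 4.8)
The plan is to condition on the warm-up history (equivalently, on $\hat{\theta}_{\mathrm{wu}}$) and exploit that, given $\hat{\theta}_{\mathrm{wu}}$, the controller $C^{\mathrm{unc}}_{F_{\mathrm{opt}}(\hat{\theta}_{\mathrm{wu}})}$ run from state $0$ drives the true dynamics through a strictly contractive linear recursion in the fresh noise $W' = (w_{T_0},\dots,w_{T-1})$, so that the realized cost is a quadratic form $W'^\top M\, W'$ in $W'$ whose conditional expectation is exactly $(T-T_0)\bar{J}(\theta^*,C^{\mathrm{unc}}_{F_{\mathrm{opt}}(\hat{\theta}_{\mathrm{wu}})},T-T_0)$ and whose fluctuations can be controlled by a Hanson--Wright--type bound.

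First I would establish the contraction. On $E_2^0$ we have $\norm{\theta^*-\hat{\theta}_{\mathrm{wu}}}_\infty \le \tilde{O}_T(T^{-1/4})$; writing $K := F_{\mathrm{opt}}(\hat{\theta}_{\mathrm{wu}})$ and $\rho := a^*-b^*K$, I would argue $|\rho|\le\rho'$ for a constant $\rho' < 1$ and all large $T$. This uses the standard scalar-LQR fact that the optimal unconstrained closed loop $a-bF_{\mathrm{opt}}(\theta)$ has modulus strictly below $1$ for every $\theta$ with $b\ne 0$, hence modulus bounded by some $\rho_{\max}<1$ uniformly over the compact set $\Theta$ by continuity of $F_{\mathrm{opt}}$; since $\hat{\theta}_{\mathrm{wu}}\to\theta^*$ we get $a^*-b^*K \to a^*-b^*F_{\mathrm{opt}}(\theta^*)$, which has modulus $\le\rho_{\max}$, so $|\rho|\le\tfrac{1+\rho_{\max}}{2}=:\rho'<1$ eventually. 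Starting from $x_0 = 0$ the states satisfy $x_i = \sum_{j=0}^{i-1}\rho^{\,i-1-j}w_{T_0+j}$ and $u_i = -Kx_i$, so
\[
(T-T_0)\cdot J\!\left(\theta^*,C^{\mathrm{unc}}_{F_{\mathrm{opt}}(\hat{\theta}_{\mathrm{wu}})},T-T_0,0,W'\right) \;=\; q\,x_{T-T_0}^2 + (q+rK^2)\sum_{i=0}^{T-T_0-1}x_i^2 \;=:\; W'^\top M\, W',
\]
with $M = M(\hat{\theta}_{\mathrm{wu}})$ positive semidefinite and with entries decaying geometrically off the diagonal at rate $|\rho|\le\rho'$. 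A routine Schur-test/geometric-series estimate then gives $\norm{M}_{\mathrm{op}} = O_T\big((1-\rho')^{-2}\big) = O_T(1)$ and $\norm{M}_F = O_T(\sqrt{T})$, uniformly over the realizations of $\hat{\theta}_{\mathrm{wu}}$ consistent with $E_2^0$, and $(T-T_0)\bar{J}(\theta^*,C^{\mathrm{unc}}_{F_{\mathrm{opt}}(\hat{\theta}_{\mathrm{wu}})},T-T_0) = \E[W'^\top M\, W'\mid \hat{\theta}_{\mathrm{wu}}]$ is finite.

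Next I would conclude by concentration. Since $W'$ is independent of $\hat{\theta}_{\mathrm{wu}}$ and consists of i.i.d.\ mean-zero sub-gaussian coordinates, conditionally on $\hat{\theta}_{\mathrm{wu}}$ (hence on $M$) the Hanson--Wright inequality --- equivalently, the quadratic-form concentration tool underlying the ``regret from randomness'' bounds elsewhere in the paper, cf.\ Proposition~\ref{r1b_bound_trunc}, which can be imported from \cite{schiffer2024stronger} --- yields a set $\mathcal{Y}_{\hat{\theta}_{\mathrm{wu}}}$ of noise realizations with conditional probability $1-o_T(1/T)$ on which
\[
\left|\, W'^\top M\, W' - \E\!\left[W'^\top M\, W' \,\middle|\, \hat{\theta}_{\mathrm{wu}}\right]\right| \;\le\; \tilde{O}_T\!\left(\norm{M}_F + \norm{M}_{\mathrm{op}}\log T\right) \;=\; \tilde{O}_T(\sqrt{T}).
\]
Taking $E_{\mathrm{P}\ref{close_safe_J}} := E_2^0 \cap \{W'\in\mathcal{Y}_{\hat{\theta}_{\mathrm{wu}}}\}$ and noting the $o_T(1/T)$ above is uniform in $\hat{\theta}_{\mathrm{wu}}$ gives $\P(E_{\mathrm{P}\ref{close_safe_J}}) = 1-o_T(1/T)$, and Equation~\eqref{eq:close_safe_J} is exactly the displayed bound. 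Alternatively, one can first intersect with $E_1$, use Lemma~\ref{bounded_pos_cont} to get $|x_i| = \tilde{O}_T(1)$ and Lemma~\ref{lemma:subgaussian_tail} to discard the negligible tail of the expectation, and then apply the McDiarmid-type inequality of \cite{schiffer2024stronger} to the map $W'\mapsto W'^\top M\, W'$.

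The main obstacle I anticipate is the concentration step: getting the $\sqrt{T}$ rate with failure probability $o_T(1/T)$ for an \emph{unbounded} (sub-gaussian, not bounded) noise quadratic form whose coefficient matrix is itself random through the gain $K = F_{\mathrm{opt}}(\hat{\theta}_{\mathrm{wu}})$. The randomness of $K$ is handled by the conditioning-on-$\hat{\theta}_{\mathrm{wu}}$ reduction, but this only works because of the \emph{uniform} operator- and Frobenius-norm bounds on $M$, which in turn rest on the quantitative contraction $|\rho|\le\rho'<1$; care is needed there to derive that contraction from continuity of $F_{\mathrm{opt}}$ and compactness of $\Theta$ rather than from the case-split event $E_{\mathrm{E}\ref{eq:first_case}}$, since $E_{\mathrm{P}\ref{close_safe_J}}$ must not reference that event.
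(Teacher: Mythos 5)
Your argument is correct, and it reaches the conclusion by a genuinely different concentration mechanism than the paper. The paper conditions on $\hat{\theta}_{\mathrm{wu}}$ and applies the standard McDiarmid bounded-difference inequality: Lemma \ref{lemma:mcdiarmid_standard} shows that resampling a single noise coordinate changes the total cost by at most $\tilde{O}_T(1)$, a bound that leans explicitly on the bounded support $\bar{w}$ of the noise (Assumption \ref{assum_thm4}) together with the contraction estimate of Lemma \ref{unc_start_invariant}; McDiarmid with deviation $\sqrt{T}\,c\log T$ then gives the conditional $1-o_T(1/T)$ bound, which is integrated over $\hat{\theta}_{\mathrm{wu}}$ exactly as you do. You instead write the realized cost from $x_0=0$ as the explicit PSD quadratic form $W'^{\top}MW'$ and invoke Hanson--Wright, using $\norm{M}_{\mathrm{op}}=O_T(1)$ and $\norm{M}_F=O_T(\sqrt{T})$, which follow from the same quantitative contraction $|a^*-b^*F_{\mathrm{opt}}(\hat{\theta}_{\mathrm{wu}})|\le\rho'<1$; note you can obtain that contraction directly by citing Lemma \ref{j_bounded_from_0} together with Lemma \ref{close_J} on $E_2^0$, rather than re-deriving it from continuity and compactness (your derivation is fine, but the paper already supplies the uniform constants, including the lower bound keeping the closed loop bounded away from $-1$). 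What each route buys: your quadratic-form argument needs only sub-gaussianity of the noise, so it would survive without Assumption \ref{assum_thm4} and gives failure probability $\exp(-\Omega(\log^2 T))$, which is stronger than needed; the paper's route avoids introducing $M$ and its norm estimates and reuses Lemma \ref{unc_start_invariant}, which is needed elsewhere (Lemma \ref{lemma:mcdiarmid_standard}, Proposition \ref{F_start}, Lemma \ref{split2}) anyway. Your handling of the measurability and uniformity issues (conditioning on the warm-up history, restricting to $E_2^0$ so the norm bounds and hence the conditional tail bound are uniform, then intersecting to form $E_{\mathrm{P}\ref{close_safe_J}}$) matches the paper's bookkeeping and is sound.
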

    The proof of Proposition \ref{close_safe_J} can be found in Appendix \ref{sec:proof_of_close_safe_J}.

The next source of regret ($R'_3$) comes from the starting state of the controller $C_{F_{\mathrm{opt}}(\hat{\theta}_{\mathrm{wu}})}^{\mathrm{unc}}$. We will bound this regret with Proposition \ref{F_start}.

\begin{proposition}\label{F_start}
Under Assumptions \ref{assum:constraints}--\ref{assum:initial} and \ref{assum_thm4}, conditional on event $E$, 
\begin{equation}\label{eq:close_safe_F}
        \left|(T-T_0) \cdot J(\theta^*, C_{F_{\mathrm{opt}}(\hat{\theta}_{\mathrm{wu}})}^{\mathrm{unc}}, T-T_0, x'_{T_0},W') - (T-T_0) \cdot J(\theta^*, C_{F_{\mathrm{opt}}(\hat{\theta}_{\mathrm{wu}})}^{\mathrm{unc}}, T-T_0, 0,W')\right| = \tilde{O}_T(1).
    \end{equation}
\end{proposition}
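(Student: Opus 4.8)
\textbf{Proof proposal for Proposition \ref{F_start}.}

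The plan is to show that using the linear controller $C_{F_{\mathrm{opt}}(\hat{\theta}_{\mathrm{wu}})}^{\mathrm{unc}}$ for $T - T_0$ steps starting from state $x'_{T_0}$ versus starting from state $0$ incurs only $\tilde{O}_T(1)$ extra cost, because the controller is a contraction and $x'_{T_0}$ is bounded under event $E$. First I would observe that under event $E$ (which contains $E_2^0$), we have $\norm{\theta^* - \hat{\theta}_{\mathrm{wu}}}_{\infty} \le 2c_{\mathrm{L}\ref{initial_uncertainty_trunc}}T^{-1/4} = o_T(1)$, and since $F_{\mathrm{opt}}(\hat{\theta}_{\mathrm{wu}})$ lies in the appropriate range, $|a^* - b^*F_{\mathrm{opt}}(\hat{\theta}_{\mathrm{wu}})| =: \rho < 1$ with $1 - \rho = \Omega_T(1)$ — this follows from the stability of the optimal unconstrained controller and continuity of $F_{\mathrm{opt}}$. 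Next I would observe that under event $E$ (specifically $E_{\mathrm{safe}}$ and $E_1$), the state $x'_{T_0}$ satisfies $|x'_{T_0}| \le \norm{D}_{\infty} + \log^2(T) = \tilde{O}_T(1)$, and more precisely $|x'_{T_0}| \le 4\log^2(T)$ so that we are in the regime where the Lipschitz-type arguments apply.

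The key step is then a direct coupling argument. Let $x_{T_0}, x_{T_0+1}, \ldots$ be the states under $C_{F_{\mathrm{opt}}(\hat{\theta}_{\mathrm{wu}})}^{\mathrm{unc}}$ starting at $x'_{T_0}$ with noise $W'$, and $\tilde{x}_{T_0}, \tilde{x}_{T_0+1}, \ldots$ the states starting at $0$ with the same noise $W'$. Because $C_K^{\mathrm{unc}}(x) = -Kx$ is exactly linear, the difference $\delta_i := x_{T_0+i} - \tilde{x}_{T_0+i}$ satisfies the clean recursion $\delta_{i+1} = (a^* - b^*F_{\mathrm{opt}}(\hat{\theta}_{\mathrm{wu}}))\delta_i$, with $\delta_0 = x'_{T_0}$, hence $|\delta_i| = \rho^i |x'_{T_0}| \le \rho^i \cdot 4\log^2(T)$. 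The difference in total cost is bounded, via the same expansion used in the proof of Lemma \ref{parameterization_assum3}, by a sum over $i$ of terms of the form $2q|\delta_i||x_{T_0+i}| + q\delta_i^2 + 2r|F_{\mathrm{opt}}(\hat{\theta}_{\mathrm{wu}})||\delta_i||x_{T_0+i}| + r F_{\mathrm{opt}}(\hat{\theta}_{\mathrm{wu}})^2 \delta_i^2$. Under event $E$, each $|x_{T_0+i}|$ is $\tilde{O}_T(1)$ (by boundedness of the noise and contraction of the linear dynamics, e.g.\ via Lemma \ref{bounded_pos_cont}), so each summand is $\tilde{O}_T(\rho^i)$, and summing the geometric series with ratio $\rho < 1$, $1 - \rho = \Omega_T(1)$, gives a total of $\tilde{O}_T\!\left(\frac{1}{1-\rho}\right) = \tilde{O}_T(1)$.

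I do not expect a serious obstacle here — this is the ``easy'' direction compared to the truncated case, precisely because the unconstrained linear controller is genuinely linear so the difference recursion is exact with no low-probability bad events to handle. The one point requiring a little care is justifying $1 - \rho = \Omega_T(1)$ uniformly: this needs that $F_{\mathrm{opt}}(\hat{\theta}_{\mathrm{wu}})$ is uniformly bounded away from the marginally-stable boundary $\frac{a^*}{b^*}$, which should follow from the explicit formula for the optimal unconstrained LQR gain together with $\theta^* \in \Theta$ and $\norm{\hat\theta_{\mathrm{wu}} - \theta^*}_\infty = o_T(1)$; alternatively one can cite that $\bar{J}(\theta^*, C^{\mathrm{unc}}_{F_{\mathrm{opt}}(\theta^*)})$ is finite (Lemma \ref{j_bounded_from_0}-type bounds) which forces $\rho < 1$ strictly. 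Since all other bounds are crude $\tilde{O}_T$ estimates and the geometric sum absorbs the $\log$ factors from $|x'_{T_0}| \le 4\log^2(T)$, the final bound $\tilde{O}_T(1)$ follows immediately.
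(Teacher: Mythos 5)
Your proposal is correct and follows essentially the same route as the paper: the paper establishes $1-(a^*-b^*F_{\mathrm{opt}}(\hat{\theta}_{\mathrm{wu}})) \ge c_{\mathrm{L}\ref{j_bounded_from_0}}/2$ via Lemmas \ref{close_J} and \ref{j_bounded_from_0}, bounds $|x'_{T_0}| = \tilde{O}_T(1)$ under $E$, and then invokes Lemma \ref{unc_start_invariant}, which is exactly your coupling argument (exact linear difference recursion, state bound $|x_t|\le |x|+\bar{w}/\epsilon$, geometric series) packaged as a standalone lemma. Only a cosmetic point: your bound on $|x_{T_0+i}|$ should not be attributed to Lemma \ref{bounded_pos_cont} (which concerns safe controllers), but your parenthetical contraction-plus-bounded-noise reasoning is the correct justification and is what the paper uses.
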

    The proof of Proposition \ref{F_start} can be found in Appendix \ref{sec:proof_of_F_start}.

The next component of regret ($R'_4$) is the additional cost of enforcing safety on top of the controller $C_{F_{\mathrm{opt}}(\hat{\theta}_{\mathrm{wu}})}^{\mathrm{unc}}$. Define event $E_{\mathrm{safe}}^{\mathrm{wu}}$ as the event that the first $\sqrt{T}$ controls used by controller $C^{\mathrm{alg}}$ are safe for dynamics $\theta^*$.

\begin{proposition}\label{safety_is_cheap}
Under Assumptions \ref{assum:constraints}--\ref{assum:initial} and \ref{assum_thm4}, there exists an event $E_{\mathrm{P}\ref{safety_is_cheap}}$ such that $\P(E_{\mathrm{P}\ref{safety_is_cheap}} \mid E_{\mathrm{E}\ref{eq:first_case}} \cap E_2^0  \cap E_{\mathrm{safe}}^{\mathrm{wu}} )  = 1-o_T(1/T)$ and such that conditional on $E_{\mathrm{E}\ref{eq:first_case}} \cap E_2^0  \cap E_{\mathrm{safe}}^{\mathrm{wu}} \cap E_{\mathrm{P}\ref{safety_is_cheap}}$,
    \begin{equation}\label{eq:safety_is_cheap}
        \left| (T-T_0) \cdot J(\theta^*, C^{\mathrm{alg}'}, T-T_0,x'_{T_0},W') -  (T-T_0) \cdot \bar{J}(\theta^*, C_{F_{\mathrm{opt}}(\hat{\theta}_{\mathrm{wu}})}^{\mathrm{unc}}, T-T_0, x'_{T_0}, W')\right| = \tilde{O}_T(\sqrt{T}).
    \end{equation}
\end{proposition}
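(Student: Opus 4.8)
The plan is to show that on the event $E_{\mathrm{E}\ref{eq:first_case}} \cap E_2^0 \cap E_{\mathrm{safe}}^{\mathrm{wu}}$, the controller $C^{\mathrm{alg}'}$ of Algorithm \ref{alg:cap3} (after warm-up) agrees with the unconstrained linear controller $C_{F_{\mathrm{opt}}(\hat\theta_{\mathrm{wu}})}^{\mathrm{unc}}$ at all but a $\tilde O_T(T^{-1/4})$ fraction of time steps, and that each disagreement contributes only $\tilde O_T(T^{-1/4})$ extra per-step cost; multiplying these two bounds over $T-T_0$ steps yields the claimed $\tilde O_T(\sqrt T)$ bound. Concretely, on $E_{\mathrm{E}\ref{eq:first_case}}$ the estimated closed-loop map under $F := F_{\mathrm{opt}}(\hat\theta_{\mathrm{wu}})$ satisfies $\bar w + D_U - \frac{D_U}{\hat a_{\mathrm{wu}} - \hat b_{\mathrm{wu}} F} \le C_{\mathrm{switch}} T^{-1/4}$, which means the linear controller $C_F^{\mathrm{unc}}$ keeps the (expected) state inside a region at distance $\tilde O_T(T^{-1/4})$ of being safe: the stationary state of $C_F^{\mathrm{unc}}$ has magnitude only slightly exceeding the boundary relative to the noise envelope $\bar w$. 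I would first quantify this: conditional on $E_2^0$ we have $\|\theta^* - \hat\theta_{\mathrm{wu}}\|_\infty \le 2 c_{\mathrm{L}\ref{initial_uncertainty_trunc}} T^{-1/4}$, so $a^* - b^* F$ and $\hat a_{\mathrm{wu}} - \hat b_{\mathrm{wu}} F$ differ by $\tilde O_T(T^{-1/4})$, and the reachable set of states under $C_F^{\mathrm{unc}}$ with bounded noise $|w_t|\le\bar w$ is an interval $I$ whose image $a^* x + b^* C_F^{\mathrm{unc}}(x)$ for $x \in I$ exceeds $[D_L, D_U]$ by at most $\tilde O_T(T^{-1/4})$.

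Next I would bound the frequency of safety corrections. At time $t$, Algorithm \ref{alg:cap3} plays $u_t = \max(\min(C_F^{\mathrm{unc}}(x_t), u_t^{\mathrm{safeU}}), u_t^{\mathrm{safeL}})$, and this differs from $C_F^{\mathrm{unc}}(x_t)$ only when $a x_t + b C_F^{\mathrm{unc}}(x_t)$ is within $\tilde O_T(\epsilon_0) = \tilde O_T(T^{-1/4})$ of exiting $[D_L, D_U]$ for some $\theta$ with $\|\theta - \hat\theta_{\mathrm{wu}}\|_\infty \le \epsilon_0$. Since $(a^* - b^* F) x_t = x_{t+1}^{\mathrm{det}}$ is the noiseless next state and the noise $w_t$ has bounded density $B_P$, the probability (conditional on $\mathcal F_t$) that $x_t$ lands in this $\tilde O_T(T^{-1/4})$-width band around the correction threshold is $\tilde O_T(T^{-1/4})$ — here I would invoke the same density/Azuma argument used for event $E^*$ in Lemma \ref{lemma:conditional_version}, defining a supermartingale counting the correction steps and applying Azuma--Hoeffding to get that with probability $1 - o_T(1/T)$ the number of correction steps is $\tilde O_T(T^{-1/4} \cdot (T - T_0)) = \tilde O_T(T^{3/4})$; this defines the event $E_{\mathrm{P}\ref{safety_is_cheap}}$. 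For the per-step cost of a correction, when a correction occurs the played control $u_t$ differs from $C_F^{\mathrm{unc}}(x_t)$ by $\tilde O_T(T^{-1/4})$ (since $u_t^{\mathrm{safeU}}, u_t^{\mathrm{safeL}}$ are within $\tilde O_T(\epsilon_0)$ of $C_F^{\mathrm{unc}}(x_t)$ by the above band estimate and Lemma \ref{lemma:L_less_than_U}), and on $E_1$ all states are $\tilde O_T(1)$, so the change in the quadratic cost $q x_t^2 + r u_t^2$ across that step and — crucially — across all subsequent steps is controlled: here I would apply Lemma \ref{parameterization_assum3} (with $\theta = \hat\theta_{\mathrm{wu}}$, which is $\tilde O_T(T^{-1/4})$-close to $\theta^*$) to conclude that perturbing the trajectory by $\tilde O_T(T^{-1/4})$ at one step changes the remaining total cost by only $\tilde O_T(T^{-1/4})$, and these perturbations accumulate additively over the $\tilde O_T(T^{3/4})$ correction steps to give $\tilde O_T(T^{3/4} \cdot T^{-1/4}) = \tilde O_T(\sqrt T)$.

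Assembling: $\big| (T-T_0) J(\theta^*, C^{\mathrm{alg}'}, \cdot, x'_{T_0}, W') - (T-T_0)\bar J(\theta^*, C_F^{\mathrm{unc}}, \cdot, x'_{T_0}, W') \big|$ is bounded by (a) a telescoping sum over correction steps of the cost increments, each $\tilde O_T(T^{-1/4})$ and numbering $\tilde O_T(T^{3/4})$, plus (b) the difference between the realized cost of $C_F^{\mathrm{unc}}$ with noise $W'$ and its expectation, which I would fold into $R'_2$ rather than here — so in fact the cleanest statement compares realized-to-realized and the expectation $\bar J$ appears only because $C_F^{\mathrm{unc}}$ is linear hence its realized cost concentrates; I will state the bound against $\bar J(\theta^*, C_F^{\mathrm{unc}}, T-T_0, x'_{T_0}, W')$ exactly as written, using that for the \emph{linear} controller the realized and expected $(T-T_0)$-step costs started from $x'_{T_0}$ (with $|x'_{T_0}| = \tilde O_T(1)$ on $E$) differ by $\tilde O_T(\sqrt T)$ with probability $1 - o_T(1/T)$ by a McDiarmid-type inequality as in Proposition \ref{close_safe_J}. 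I expect the main obstacle to be the accounting in step two: one must be careful that a single safety correction at time $t$ does not cascade into large deviations at later steps, and the rigorous control of this cascade is exactly what Lemma \ref{parameterization_assum3} provides — but checking that its hypotheses (the $4\log^2(T)$ state bound, the $\delta_{\mathrm{L}\ref{parameterization_assum3}}$ closeness, the high-probability event $E_{\mathrm{L}\ref{parameterization_assum3}}$) hold simultaneously for all $\tilde O_T(T^{3/4})$ correction steps, via a union bound over the $o_T(1/T^{10})$ failure probabilities, is the delicate bookkeeping step.
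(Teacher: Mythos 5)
Your overall strategy is the same as the paper's: the paper first uses $E_{\mathrm{E}\ref{eq:first_case}} \cap E_2^0$ to derive $K_{D_U}^{\theta^*} - F_{\mathrm{opt}}(\hat{\theta}_{\mathrm{wu}}) \le \tilde{O}_T(T^{-1/4})$ (Equation \eqref{eq:def_of_epsilonstar}), and then invokes the purpose-built Lemma \ref{close_to_KDU} with $K' = F_{\mathrm{opt}}(\hat{\theta}_{\mathrm{wu}})$, $\beta = \epsilon_0$, $\tau = T - T_0$; the proof of that lemma contains exactly your three ingredients: a per-step correction probability of $O_T(\upsilon)$ via the bounded density $B_P$ (Lemma \ref{lemma:prob_of_using_v}), a per-correction control deviation of $O_T(\upsilon)$ (Lemma \ref{lemma:dt_and_control}, which needs the inductive safety statement of Lemma \ref{lemma:safety_of_C} so that $x_t \le D_U + \bar{w}$ before a correction), and an Azuma--Hoeffding count of correction steps, yielding $(T-T_0)\cdot \tilde{O}_T(T^{-1/4}(T^{-1/4} + \log(T)/\sqrt{T})) = \tilde{O}_T(\sqrt{T})$. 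So your decomposition, your $\tilde{O}_T(T^{3/4})\times\tilde{O}_T(T^{-1/4})$ accounting, and your choice of high-probability event are all faithful to the paper's argument.

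The one step that would not go through as written is your use of Lemma \ref{parameterization_assum3} to control the cascade of a single correction into later steps. That lemma is stated for a fixed truncated linear controller $C_K^\theta$ run from two nearby starting states; here the two trajectories you need to compare after a correction follow different controllers (the unconstrained $C_{F_{\mathrm{opt}}(\hat{\theta}_{\mathrm{wu}})}^{\mathrm{unc}}$ on one side and the algorithm's robustly truncated controller, defined via $u_t^{\mathrm{safeU}}, u_t^{\mathrm{safeL}}$ over the whole uncertainty box $\hat{\theta}_{\mathrm{wu}} \pm \epsilon_0$, on the other), and neither is literally a $C_K^\theta$; even in a hybrid-trajectory telescoping, the tail controller would have to be one of these, so Lemma \ref{parameterization_assum3} does not apply directly, and its per-invocation events $E_{\mathrm{L}\ref{parameterization_assum3}}$ would force the union-bound bookkeeping you flag. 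The paper avoids this entirely: because $a^* - b^*F_{\mathrm{opt}}(\hat{\theta}_{\mathrm{wu}})$ is bounded strictly below $1$ (Lemmas \ref{j_bounded_from_0} and \ref{close_J}, and Equation \eqref{eq:boundfrom12} inside Lemma \ref{close_to_KDU}) and the noise is bounded, the difference between the two trajectories contracts geometrically and deterministically, so the cascade is handled by the elementary recursion in Lemma \ref{lemma:dt_and_control} (or, in the linear-only comparisons, Lemma \ref{unc_start_invariant}) with no high-probability continuity events at all. Replacing your appeal to Lemma \ref{parameterization_assum3} by this contraction argument repairs the proposal and makes it essentially the paper's proof.
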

    The proof of Proposition \ref{safety_is_cheap} can be found in Appendix \ref{sec:proof_of_safety_is_cheap}.

The last source of regret is the regret from the warm-up period. By Proposition \ref{warmup_firstcase}, this source of regret is $\tilde{O}(\sqrt{T})$ conditional on event $E$, because by definition $T \cdot J(\theta^*, C^{\mathrm{alg}}, T,0,W)  - (T - T_0) \cdot J(\theta^*, C^{\mathrm{alg}'}, T - T_0, x'_{T_0},W') = T \cdot J(\theta^*, C^{\mathrm{alg}}, T,0,W) - \sum_{s=0}^{s_e}  T_sJ(\theta^*,C^{\mathrm{alg}}_s, T_s,  x'_{   T_s}, W_s)$.

Recall that  $E \subseteq E_2^0 \cap E_{\mathrm{safe}}^{\mathrm{wu}}$. Therefore, conditional on $ E_{\mathrm{P}\ref{safety_is_cheap}} \cap E_{\mathrm{P}\ref{close_safe_J}} \cap E \cap E_{\mathrm{E}\ref{eq:first_case}} $, by Equation \eqref{eq:first_case_regret_decomposition} and Propositions \ref{close_J2}, \ref{close_safe_J}, \ref{F_start}, \ref{safety_is_cheap}, and \ref{warmup_firstcase}, we have that
\[
    T \cdot J(\theta^*, C^{\mathrm{alg}}, T,0, W)  - T \cdot \bar{J}(\theta^*, C_{K_{\mathrm{opt}}(\theta^*, T)}^{\theta^*}, T) = \tilde{O}_T(\sqrt{T}).
\]
Furthermore, because $\P(E_2^0  \cap E_{\mathrm{safe}}^{\mathrm{wu}}) \ge \P(E) \ge 1 -o_T(1/T)$,  we have that
\begin{align*}
    &\P( E_{\mathrm{P}\ref{safety_is_cheap}} \cap E_{\mathrm{P}\ref{close_safe_J}} \cap E \cap E_{\mathrm{E}\ref{eq:first_case}} ) \\
    &= \P( E_{\mathrm{P}\ref{safety_is_cheap}}\cap E_2^0  \cap E_{\mathrm{safe}}^{\mathrm{wu}}  \cap E \cap E_{\mathrm{E}\ref{eq:first_case}} )  - o_T(1/T) && \text{Remark \ref{remark_conditioning}} \\
     &= \P( E_{\mathrm{P}\ref{safety_is_cheap}} \cap E_2^0  \cap E_{\mathrm{safe}}^{\mathrm{wu}} 
 \cap E_{\mathrm{E}\ref{eq:first_case}} )  - o_T(1/T) && \text{Remark \ref{remark_conditioning}} \\
    &= \P(E_{\mathrm{P}\ref{safety_is_cheap}}  \mid E_2^0  \cap E_{\mathrm{safe}}^{\mathrm{wu}} 
 \cap E_{\mathrm{E}\ref{eq:first_case}} )\P( E_2^0  \cap E_{\mathrm{safe}}^{\mathrm{wu}} 
 \cap E_{\mathrm{E}\ref{eq:first_case}} ) - o_T(1/T)\\
    &\ge (1-o_T(1/T)) \P( E_2^0  \cap E_{\mathrm{safe}}^{\mathrm{wu}} 
 \cap E_{\mathrm{E}\ref{eq:first_case}} )  - o_T(1/T)\\
    &= \P( E_2^0  \cap E_{\mathrm{safe}}^{\mathrm{wu}} 
 \cap E_{\mathrm{E}\ref{eq:first_case}} )  - o_T(1/T) \\
    &= \P(E_{\mathrm{E}\ref{eq:first_case}}) - o_T(1/T). && \text{Remark \ref{remark_conditioning}}
\end{align*}
Taking $E_{\mathrm{P}\ref{prop_Fcase}} =  E_{\mathrm{P}\ref{safety_is_cheap}} \cap E_{\mathrm{P}\ref{close_safe_J}}  \cap E \cap E_{\mathrm{E}\ref{eq:first_case}} $ gives the desired result.

\section{Proofs from Appendix \ref{sec:proof_of_prop_Kcase}}\label{sec:trunc_lin_sqrtt_proofs}

\subsection{Proof of Lemma \ref{bounded_st_b}}\label{proof:bounded_st_b}   
\begin{proof}
We will use the following equivalent version of Lemma \refgen{boundary_uncertainty_cont} for Algorithm \ref{alg:cap3}.
\begin{lemma}\label{boundary_uncertainty_cont_b}
    Let $x_t, u_t$ respectively be the state and control of $C^{\mathrm{alg}}$ (the controller of Algorithm \ref{alg:cap3}) at time $t$ starting at $x_0 = 0$. Define $G_i = (x_0,u_0,...,x_{i-1}, u_{i-1})$. For constant $\gamma > 0$, define $S_t$ as 
    \begin{equation}
        S_t = \Big\{i< t : u_i = u_i^{\mathrm{safeU}} \text{ $\mathrm{and}$ } \P(u_i = u_i^{\mathrm{safeU}} \mid G_i, E) \ge \gamma \Big\}.
    \end{equation}
    Then under Assumptions \ref{assum:constraints}--\ref{assum:initial} and for sufficiently large $T$, with probability $1-o_T(1/T)$, 
    \begin{equation}\label{eq:boundary_uncertainty_cont_b}
        \max_{s \in [0:s_e]} \epsilon_s \sqrt{|S_{T_s}|} = \tilde{O}_T\left(1\right).
    \end{equation}
\end{lemma}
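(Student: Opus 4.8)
## Proof proposal for Lemma \ref{boundary_uncertainty_cont_b}

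The plan is to reduce Lemma \ref{boundary_uncertainty_cont_b} to the general uncertainty bound Lemma \refgen{boundary_uncertainty_cont} (or equivalently Lemma \refgen{boundary_uncertainty} on which it is based), by verifying that the quantities appearing in Algorithm \ref{alg:cap3} match the hypotheses of that general result. Specifically, recall that $\epsilon_s = B_{T_s}\sqrt{\max(V_{T_s}^{22}, V_{T_s}^{11})/(V_{T_s}^{11}V_{T_s}^{22} - (V_{T_s}^{12})^2)}$ is exactly the right-hand side of Equation \eqref{eq:v_to_use} from Lemma \ref{v_to_use} evaluated with the information matrix $V_{T_s}^S$ restricted to the index set $S = S_{T_s}$. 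So the first step is to observe that $\epsilon_s \le \sqrt{\max((V_{T_s}^{S_{T_s}})_{11}, (V_{T_s}^{S_{T_s}})_{22})/\det(V_{T_s}^{S_{T_s}})}\,B_{T_s}$, using the monotonicity $V_{T_s}^{S_{T_s}} \preceq V_{T_s}^{[0:T_s-1]}$ together with the fact that dropping rows of $Z$ only shrinks the relevant ratio; this holds on the $1-o_T(1/T^2)$ probability event of Lemma \ref{v_to_use}.

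Next I would lower-bound $\det(V_{T_s}^{S_{T_s}})$ in terms of $|S_{T_s}|$. The key point is that the times $i \in S_{T_s}$ are by definition exactly the times where $u_i = u_i^{\mathrm{safeU}}$ and where this event had conditional probability at least $\gamma$ given $G_i$; these are precisely the ``informative'' boundary steps used in \citet{schiffer2024stronger}. At each such step, the control $u_i = u_i^{\mathrm{safeU}}$ is a deterministic affine function of $x_i$ (from Lines \ref{line:safeU_tl}--\ref{line:safeL_tl}), so $z_i = (x_i, u_i)$ lies on a line whose direction is bounded away from the ``uninformative'' direction $(x, -K_{\mathrm{opt}}(\hat\theta_s)x)$ by a constant margin — this margin comes from the gap $\bar w + D_U - D_U/(\hat a_{\mathrm{wu}} - \hat b_{\mathrm{wu}} F_{\mathrm{opt}}(\hat\theta_{\mathrm{wu}})) > C_{\mathrm{switch}}T^{-1/4}$ implied by $\neg E_{\mathrm{E}\ref{eq:first_case}}$ in the application, though the statement of Lemma \ref{boundary_uncertainty_cont_b} itself is agnostic and only needs the structural fact. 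Combining the per-step rank-one contributions over $S_{T_s}$ and using a standard matrix concentration / pigeonhole argument (exactly the argument behind Lemma \refgen{boundary_uncertainty}), we get $\det(V_{T_s}^{S_{T_s}}) = \tilde\Omega_T(|S_{T_s}|)$ with probability $1 - o_T(1/T)$ simultaneously for all $s \le s_e$ (a union bound over $s_e = O(\log T)$ values is harmless). Since $B_{T_s} = \tilde O_T(1)$ and the numerator $\max((V_{T_s}^{S_{T_s}})_{11}, (V_{T_s}^{S_{T_s}})_{22}) = \tilde O_T(T_s) = \tilde O_T(T)$ is polylogarithmically controlled under $E_1$, plugging these bounds in gives $\epsilon_s \le \tilde O_T(1/\sqrt{|S_{T_s}|})$, i.e. $\epsilon_s\sqrt{|S_{T_s}|} = \tilde O_T(1)$, which is \eqref{eq:boundary_uncertainty_cont_b}.

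The main obstacle I anticipate is the determinant lower bound $\det(V_{T_s}^{S_{T_s}}) = \tilde\Omega_T(|S_{T_s}|)$: one must show that accumulating many boundary-type observations genuinely reduces the uncertainty in \emph{both} coordinates of $\theta^*$, not just in one direction. This requires showing that the states $x_i$ at the boundary steps are themselves spread out (take at least two sufficiently different values with non-negligible frequency), so that the rank-one terms $z_iz_i^\top$ span a two-dimensional cone rather than degenerating to a line. I would handle this either by invoking the corresponding lemma from \citet{schiffer2024stronger} verbatim — since Algorithm \ref{alg:cap3} produces the same control structure at boundary steps as the general algorithm there — or, if a direct argument is needed, by noting that under $E$ the noise $w_i$ perturbs $x_i$ by a $\Theta(1)$-variance increment, so the empirical distribution of $x_i$ over $S_{T_s}$ cannot concentrate on a single point. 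The rest of the argument is routine bookkeeping with the definitions of $\epsilon_s$, $B_{T_s}$, and the events $E_1, E_2$.
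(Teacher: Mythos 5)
The paper's own proof of Lemma \ref{boundary_uncertainty_cont_b} is a one-line port: by Lemmas \ref{parameterization_assum2} and \ref{parameterization_assum3} the class of truncated linear controllers satisfies the hypotheses of Lemma \refgen{boundary_uncertainty_cont}, and the proof of that lemma is then repeated verbatim for Algorithm \ref{alg:cap3} with the analogous event $E$. Your fallback option of ``invoking the corresponding lemma from \citet{schiffer2024stronger} verbatim'' is therefore essentially the intended argument, except that you never note that invoking it requires verifying its assumptions for this baseline class, which is exactly the role played by Lemmas \ref{parameterization_assum2} and \ref{parameterization_assum3}.

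Your primary, self-contained derivation has a genuine gap. A minor point first: $\epsilon_s$ in Line \ref{line:epsilon_tl} is computed from the full-data matrix $V_{T_s}$, not from $V_{T_s}^{S_{T_s}}$ (the algorithm cannot know $S_{T_s}$, which is defined through conditional probabilities given $E$); this is repairable by monotonicity together with Lemma \ref{v_to_use}, as you suggest. The real problem is the arithmetic of your key step: with numerator $\max(V_{T_s}^{11},V_{T_s}^{22}) = \tilde{O}_T(T_s)$ and your claimed bound $\det(V_{T_s}^{S_{T_s}}) = \tilde{\Omega}_T(|S_{T_s}|)$, you only obtain $\epsilon_s = \tilde{O}_T(\sqrt{T_s/|S_{T_s}|})$, which misses the claimed $\tilde{O}_T(1/\sqrt{|S_{T_s}|})$ by a factor of $\sqrt{T_s}$; in the regime actually used later ($|S_{T_s}| = \Omega_T(T_s)$, Lemma \ref{sufficiently_many_boundaries}) your bound degenerates to $\epsilon_s = \tilde{O}_T(1)$, which is useless. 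What is actually needed is, in effect, $\lambda_{\min}(V_{T_s}) = \tilde{\Omega}_T(|S_{T_s}|)$ --- equivalently, a determinant lower bound of order $|S_{T_s}|$ times the largest eigenvalue (quadratic in $|S_{T_s}|$ if only the restricted data are used), i.e.\ each informative boundary step must contribute $\Omega_T(1)$ information in the \emph{least}-explored direction. Establishing this is the entire content of Lemma \refgen{boundary_uncertainty} and Lemma \refgen{boundary_uncertainty_cont}, and it rests on the conditional-probability-at-least-$\gamma$ structure of $S_t$ and the noise distribution; it does not follow from the $C_{\mathrm{switch}}T^{-1/4}$ margin of $\neg E_{\mathrm{E}\ref{eq:first_case}}$ that you cite (a vanishing margin cannot give constant per-step information), nor from a generic observation that the states at boundary steps ``cannot concentrate on a single point.'' So, as written, the central information-accrual bound is both unproved and quantitatively too weak to yield Equation \eqref{eq:boundary_uncertainty_cont_b}.
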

The proof of Lemma \ref{boundary_uncertainty_cont_b} can be found in Appendix \ref{sec:proof_of_boundary_uncertainty_cont_b}.

While we have not yet explained the significance of Lemma \ref{cswitch_prop}, we state it here because the definition of $\epsilon^*$ is needed for other definitions below.  
\begin{lemma}\label{cswitch_prop}
Define
\begin{equation}\label{eq:epsilonstar}
    \epsilon^* := \bar{w} - \left(\frac{D_U}{a^*-b^*K_{\mathrm{opt}}(\theta^*)} - D_U\right).
\end{equation}
Then event $\neg E_{\mathrm{E}\ref{eq:first_case}} \cap E$ can only hold if $\epsilon^* > 0$.

\end{lemma}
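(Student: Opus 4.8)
The plan is to translate $\epsilon^*>0$ into an inequality between closed-loop gains and then bound that gain from below. By Definition~\ref{def:Kdu_theta}, $\tfrac{D_U}{a^*-b^*K_{D_U}^{\theta^*}}=\bar w+D_U$, so $\epsilon^*=(\bar w+D_U)-\tfrac{D_U}{a^*-b^*K_{\mathrm{opt}}(\theta^*)}$; since $D_U>0$ and $a^*-b^*K_{\mathrm{opt}}(\theta^*)>0$ (as $K_{\mathrm{opt}}(\theta^*)\le a^*/b^*$), the inequality $\epsilon^*>0$ is equivalent to
\[
    a^* - b^* K_{\mathrm{opt}}(\theta^*) \;>\; a^* - b^* K_{D_U}^{\theta^*},
\]
i.e.\ to the statement that the optimal truncated linear controller for $\theta^*$ actually truncates (under bounded noise $C_K^{\theta^*}$ truncates precisely when $a^*-b^*K>a^*-b^*K_{D_U}^{\theta^*}$). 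I would establish this in two steps: (i) on $\neg E_{\mathrm{E}\ref{eq:first_case}}\cap E$ the \emph{unconstrained}-optimal gain already satisfies $a^*-b^*F_{\mathrm{opt}}(\theta^*)>a^*-b^*K_{D_U}^{\theta^*}$; (ii) this forces the same inequality for $K_{\mathrm{opt}}(\theta^*)$.

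For step (i): the event $\neg E_{\mathrm{E}\ref{eq:first_case}}$ says exactly $\tfrac{D_U}{\hat a_{\mathrm{wu}}-\hat b_{\mathrm{wu}}F_{\mathrm{opt}}(\hat\theta_{\mathrm{wu}})}<\bar w+D_U-C_{\mathrm{switch}}T^{-1/4}$. On $E\subseteq E_2^0$ we have $\norm{\theta^*-\hat\theta_{\mathrm{wu}}}_\infty=\tilde O_T(T^{-1/4})$, so the Lipschitz continuity of $\theta\mapsto F_{\mathrm{opt}}(\theta)$ and of $\theta\mapsto a-bF_{\mathrm{opt}}(\theta)$, together with the lower bound $c_{\mathrm{L}\ref{j_bounded_from_0}}$ on the closed-loop gains entering the denominators (Lemma~\ref{j_bounded_from_0}, used through $\left|\tfrac1x-\tfrac1y\right|\le|x-y|/c_{\mathrm{L}\ref{j_bounded_from_0}}^2$), is precisely the content of Equation~\eqref{eq:Fhat_approx}; it yields $\left|\tfrac{D_U}{a^*-b^*F_{\mathrm{opt}}(\theta^*)}-\tfrac{D_U}{\hat a_{\mathrm{wu}}-\hat b_{\mathrm{wu}}F_{\mathrm{opt}}(\hat\theta_{\mathrm{wu}})}\right|\le C_{\mathrm{switch}}T^{-1/4}$, which is exactly why $C_{\mathrm{switch}}$ was chosen equal to $c_{\mathrm{E}\ref{eq:Fhat_approx}}D_U/c_{\mathrm{L}\ref{j_bounded_from_0}}^2$. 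Adding the two displays gives $\tfrac{D_U}{a^*-b^*F_{\mathrm{opt}}(\theta^*)}<\bar w+D_U=\tfrac{D_U}{a^*-b^*K_{D_U}^{\theta^*}}$, hence $a^*-b^*F_{\mathrm{opt}}(\theta^*)>a^*-b^*K_{D_U}^{\theta^*}$, equivalently $F_{\mathrm{opt}}(\theta^*)<K_{D_U}^{\theta^*}$.

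For step (ii): I claim $F_{\mathrm{opt}}(\theta^*)<K_{D_U}^{\theta^*}$ forces $K_{\mathrm{opt}}(\theta^*)<K_{D_U}^{\theta^*}$, which is exactly $\epsilon^*>0$. Indeed, for $K\ge K_{D_U}^{\theta^*}$ the controller $C_K^{\theta^*}$ never truncates under bounded noise and coincides with $-Kx$, so on that range $\bar J(\theta^*,C_K^{\theta^*})$ equals the strictly convex linear-LQR cost, whose minimizer $F_{\mathrm{opt}}(\theta^*)$ lies strictly below $K_{D_U}^{\theta^*}$; hence $\bar J(\theta^*,C_K^{\theta^*})$ is strictly increasing for $K\ge K_{D_U}^{\theta^*}$ and $\min_{K\ge K_{D_U}^{\theta^*}}\bar J(\theta^*,C_K^{\theta^*})=\bar J(\theta^*,C_{K_{D_U}^{\theta^*}}^{\theta^*})$. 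On the other hand, by continuity of $K\mapsto\bar J(\theta^*,C_K^{\theta^*})$ (the analysis behind Lemmas~\ref{parameterization_assum2}--\ref{parameterization_assum3}) and the fact that for $K$ just below $K_{D_U}^{\theta^*}$ truncation occurs only on an event of probability $o(K_{D_U}^{\theta^*}-K)$ and with per-step deviation $O(K_{D_U}^{\theta^*}-K)$ from $-Kx$, decreasing $K$ slightly below $K_{D_U}^{\theta^*}$ strictly lowers the cost, so the global minimizer satisfies $K_{\mathrm{opt}}(\theta^*)<K_{D_U}^{\theta^*}$. I expect step (ii) to be the main obstacle: it is the structural fact that ``whenever the unconstrained optimum would truncate, so does the optimal truncated controller,'' and it hinges on the fine behavior of $\bar J(\theta^*,C_K^{\theta^*})$ near the no-truncation threshold $K_{D_U}^{\theta^*}$ — precisely what Lemma~\ref{j_bounded_from_0} together with the continuity machinery for truncated linear controllers is built to supply. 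Step (i), by contrast, is a routine certainty-equivalence estimate once Equation~\eqref{eq:Fhat_approx} is available.
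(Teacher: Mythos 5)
Your proposal is correct and follows essentially the same route as the paper: step (i) is the paper's Equation~\eqref{eq:Fhat_approx} (via Lemma~\ref{close_J} and Lemma~\ref{j_bounded_from_0}, with the same choice of $C_{\mathrm{switch}}$), and step (ii) is exactly the contrapositive of the paper's Lemma~\ref{K_equals_J}, which the paper itself proves by the very perturbation argument near $K_{D_U}^{\theta^*}$ that you sketch (Lemma~\ref{close_to_KDU} giving an $O(\epsilon^2)$ truncation cost against an $\Omega(\epsilon)$ first-order gain). The only cosmetic difference is that the paper organizes the argument as a proof by contradiction starting from $\epsilon^*\le 0$, whereas you run the implication directly in the contrapositive direction.
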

    The proof of Lemma \ref{cswitch_prop} can be found in Appendix \ref{sec:proof_of_cswitch_prop}.

Define $\gamma_\epsilon = \frac{\P_{w \sim \mathcal{D}}(w \ge \bar{w} - 3\epsilon^*/8)}{2}$ (which is a constant) and define $S'_t$ as
\begin{equation}\label{eq:St_trunc}
    S'_t := \Big\{i< t : u_i = u_i^{\mathrm{safeU}} \text{ and } \P(u_i = u_i^{\mathrm{safeU}} \mid G_i, E) \ge \gamma_\epsilon  \Big\}.
\end{equation}
Note that this is the same as the definition of $S_t$ in Lemma \ref{boundary_uncertainty_cont_b} except with $\gamma = \gamma_\epsilon$.

\begin{lemma}\label{sufficiently_many_boundaries}
    Under Assumptions \ref{assum:constraints}--\ref{assum:initial} and \ref{assum_thm4}, there exists an event $E_{\mathrm{L}\ref{sufficiently_many_boundaries}}$ such that $\P(E_{\mathrm{L}\ref{sufficiently_many_boundaries}}) \ge 1 - o_T(1/T)$ and such that conditional on event $E_{\mathrm{L}\ref{sufficiently_many_boundaries}} \cap\neg E_{\mathrm{E}\ref{eq:first_case}}$, 
   \[
    \max_{s \in [1:s_e]} \frac{T_s}{\left|S'_{T_s}\right|} = \tilde{O}_T(1).
   \]
\end{lemma}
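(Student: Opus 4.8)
## Proof proposal for Lemma \ref{sufficiently_many_boundaries}

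The plan is to show that, under $\neg E_{\mathrm{E}\ref{eq:first_case}}$, the controller $C^{\mathrm{alg}}_s = C^{\hat\theta_s}_{K_{\mathrm{opt}}(\hat\theta_s)}$ is (with high probability over the noise) the optimal \emph{truncated} linear controller with a boundary-hitting behavior that forces the state up to the upper boundary a constant fraction of the time. By Lemma \ref{cswitch_prop}, on the event $\neg E_{\mathrm{E}\ref{eq:first_case}} \cap E$ we have $\epsilon^* > 0$, i.e.\ the true optimal truncated controller genuinely saturates at the upper constraint with positive margin $\epsilon^*$. The first step is to transfer this to $\hat\theta_s$: since $\|\theta^* - \hat\theta_s\|_\infty \le 2\epsilon_s = \tilde O_T(T^{-1/4}) = o_T(1)$ under event $E_2$, continuity of $K_{\mathrm{opt}}(\cdot)$ (which follows from the continuity results behind Lemma \ref{parameterization_assum2}, or can be extracted from the structure of the optimal controller) gives $\frac{D_U}{\hat a_s - \hat b_s K_{\mathrm{opt}}(\hat\theta_s)} - D_U \le \bar w - \tfrac34\epsilon^*$ for large $T$, so the controller $C^{\mathrm{alg}}_s$ also has a uniform saturation margin of order $\epsilon^*$.

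Next I would invoke the quantitative ``drift to the boundary'' statement alluded to in the proof sketch (Lemma \ref{eventually_go_to_boundary} in the full appendix): when using $C^{\hat\theta_s}_{K_{\mathrm{opt}}(\hat\theta_s)}$, at every step $t$ at which the control is \emph{not} $u_t^{\mathrm{safeU}}$, there is a constant probability $\epsilon$ that the state increases by a constant amount $d_\epsilon$ (conditionally on the history). Because $D_U = O_T(1)$ and the state stays bounded by $\tilde O_T(1)$ under $E_1$, after $O_T(1)$ such increases the state must exceed the threshold $P(\theta^*, K_{\mathrm{opt}}(\hat\theta_s), D_U)$ above which the safe control $u_t^{\mathrm{safeU}}$ is triggered (this is where the margin $\epsilon^*$ enters: it guarantees the threshold is actually hit before $u_t$ naturally saturates). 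So over any window of $\Theta_T(1)$ consecutive steps, with at least constant probability (conditional on the history up to the start of the window) there is some $i$ in that window with $u_i = u_i^{\mathrm{safeU}}$. The key point is that this ``hit'' happens at a step where the conditional probability $\P(u_i = u_i^{\mathrm{safeU}} \mid G_i, E)$ is lower-bounded by the constant $\gamma_\epsilon = \tfrac12 \P_{w\sim\mathcal D}(w \ge \bar w - 3\epsilon^*/8)$ — this is exactly the event counted by $S'_{T_s}$; one checks $\gamma_\epsilon$ was chosen precisely so that a state within $d_\epsilon$-ish of the boundary, plus a noise realization of at least $\bar w - 3\epsilon^*/8$, guarantees saturation. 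Partitioning $[0:T_s)$ into $\Theta_T(T_s)$ disjoint windows of length $\Theta_T(1)$ and applying a conditional Borel–Cantelli / martingale concentration argument (a supermartingale with bounded differences of the form $M_k = \sum (\text{window } k \text{ contains a good hit}) - c\cdot(\text{number of windows})$, then Azuma–Hoeffding) shows that with probability $1 - o_T(1/T)$ a constant fraction of windows contribute to $S'_{T_s}$, hence $|S'_{T_s}| \ge \Omega_T(T_s)$; a union bound over the $s_e = \log_2(\sqrt T) - 1$ rounds keeps the failure probability $o_T(1/T)$. Taking $E_{\mathrm{L}\ref{sufficiently_many_boundaries}}$ to be the intersection of this event with $E$ and the high-probability events of Lemmas \ref{eventually_go_to_boundary} and \ref{cswitch_prop} yields the claim.

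The main obstacle I anticipate is the careful bookkeeping of the \emph{conditional} probabilities $\P(u_i = u_i^{\mathrm{safeU}} \mid G_i, E)$: one must ensure the lower bound $\gamma_\epsilon$ holds conditioned on the history \emph{and} on the global event $E$ (which involves future noise through $E_{\mathrm{safe}}$ and through the uncertainty bounds), not just on the raw history, so the conditioning-on-$E$ subtlety from \citet{schiffer2024stronger} has to be handled exactly as there. A secondary difficulty is making the ``drift to the boundary in $O_T(1)$ steps'' argument uniform over all $\hat\theta_s$ in the $\epsilon_s$-ball and over all $s$ simultaneously — this is where the uniform margin $\epsilon^*$ (via Lemma \ref{cswitch_prop}) and the uniform bound on $K_{\mathrm{opt}}$ and on the states (via $E_1$) are essential, and one should be careful that the constants $\epsilon, d_\epsilon$ from Lemma \ref{eventually_go_to_boundary} can indeed be chosen independently of $s$ and $T$.
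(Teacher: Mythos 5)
Your proposal is correct and takes essentially the same route as the paper: the margin $\epsilon^*>0$ from Lemma \ref{cswitch_prop}, the constant-probability drift-to-the-boundary within short windows from Lemma \ref{eventually_go_to_boundary}, Azuma--Hoeffding over disjoint windows to get $|S'_{T_s}|$ large up to logarithmic factors, and the conditioning-on-$E$ issue resolved exactly as the paper does it, by first counting hits with unconditional probability bounds ($S''$) and then lower-bounding $\P(E \mid G_i)$ (Lemma \ref{lemma:converting_to_s_prime}) to conclude $S'' \subseteq S'$. The one justification to adjust: the transfer of the saturation margin to $\hat{\theta}_s$ should not be attributed to ``continuity of $K_{\mathrm{opt}}$'' (Lemma \ref{parameterization_assum2} gives Lipschitzness of the optimal cost, not of the optimizer), but to the algorithm's optimistic choice $\hat{\theta}_s = \argmax_{\norm{\theta - \hat{\theta}^{\mathrm{pre}}_s} \le \epsilon_s} a - bK_{\mathrm{opt}}(\theta)$, which is how the paper proves it (Lemma \ref{good_chosen_K}); since you invoke Lemma \ref{eventually_go_to_boundary} wholesale, this step is in any case subsumed there.
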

    The proof of Lemma \ref{sufficiently_many_boundaries} can be found in Appendix \ref{sec:proof_of_sufficiently_many_boundaries}.

Define $E_{\mathrm{L}\ref{boundary_uncertainty_cont_b}}$ as the event that Equation \eqref{eq:boundary_uncertainty_cont_b} holds for $S_{T_s} = S'_{T_s}$. Then $\P(E_{\mathrm{L}\ref{boundary_uncertainty_cont_b}}) = 1-o_T(1/T)$ by Lemma \ref{boundary_uncertainty_cont_b}.
By Lemma \ref{sufficiently_many_boundaries}, conditional on event $E_{\mathrm{L}\ref{boundary_uncertainty_cont_b}} \cap E_{\mathrm{L}\ref{sufficiently_many_boundaries}} \cap\neg E_{\mathrm{E}\ref{eq:first_case}}$, 
\[
    \max_{s \in [1:s_e]} \epsilon_s\sqrt{T_s} \le \sqrt{\max_{s \in [1:s_e]} \frac{T_s}{\left|S'_{T_s}\right|} }\left(\max_{s \in [1:s_e]} \epsilon_s \sqrt{|S'_{T_s}|}\right)= \tilde{O}_T(1).
\]
Under event $E_2$, we also have that $\epsilon_0\sqrt{T_0} = \tilde{O}_T(T^{-1/4})T^{1/4} = \tilde{O}_T(1)$. Because $E \subseteq E_2$ this implies that conditional on $E$, we have $\epsilon_0\sqrt{T_0} = \tilde{O}_T(1)$.

Therefore, conditional on $E_{\mathrm{L}\ref{boundary_uncertainty_cont_b}} \cap E_{\mathrm{L}\ref{sufficiently_many_boundaries}} \cap\neg E_{\mathrm{E}\ref{eq:first_case}} \cap E$, 
\[
    \max_{s \in [0:s_e]} \epsilon_s\sqrt{T_s} = \tilde{O}_T(1).
\]
Taking $E_{\mathrm{L}\ref{bounded_st_b}} = E_{\mathrm{L}\ref{boundary_uncertainty_cont_b}} \cap E_{\mathrm{L}\ref{sufficiently_many_boundaries}}$ gives the desired result because $\P(E_{\mathrm{L}\ref{bounded_st_b}}) = 1-o_T(1/T)$ by a union bound.
\end{proof}

\subsection{Proof of Proposition \ref{fin_to_inf}}\label{sec:proof_of_fin_to_inf}
    \begin{proof}
        The goal of this proposition is to show that using the infinite horizon controller is not significantly worse than using the finite horizon controller. This proof will use the following lemma.
 
    \begin{lemma}\label{split}
        Under Assumptions \ref{assum:constraints}--\ref{assum:initial} and \ref{assum_thm4}, for any $\theta \in \Theta$ and $K \in [\frac{a-1}{b}, \frac{a}{b}]$,
        \[
            |\bar{J}(\theta, C_K^\theta, T) -  \bar{J}(\theta, C_K^\theta)| = \tilde{O}_T\left(\frac{1}{T}\right).
        \]
    \end{lemma}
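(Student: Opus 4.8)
The plan is to pass to the infinite-horizon problem through a Poisson (bias) equation for the closed-loop Markov chain and read off the $1/T$ correction explicitly. Write the per-step cost as $\phi(x) := q x^2 + r C_K^\theta(x)^2$ and the closed-loop map as $g(x) := a x + b C_K^\theta(x) = \min\bigl(D_U, \max(D_L, (a-bK)x)\bigr)$; by \eqref{eq:def_of_trunc_linear} this is exactly the truncation of the affine map $(a-bK)x$ to $[D_L, D_U]$, and $a - bK \in [0,1]$ because $K \in [\frac{a-1}{b}, \frac{a}{b}]$. Since $C_K^\theta$ is safe for $\theta$ with probability one and the noise is bounded by $\bar{w}$ under Assumption~\ref{assum_thm4}, the closed-loop states satisfy $x_0 = 0$ and $x_t \in R := [D_L - \bar{w}, D_U + \bar{w}]$ for every $t \ge 1$; since $\norm{D}_\infty = O_T(1)$ by Assumption~\ref{assum:constraints}, $R$ has diameter $O_T(1)$, and on $R$ both $\phi$ and $x \mapsto q x^2$ are bounded by $\tilde{O}_T(1)$ and are $\tilde{O}_T(1)$-Lipschitz. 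Let $P$ be the transition operator $(Ph)(x) = \E_{w\sim\mathcal{D}}[h(g(x)+w)]$.

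The first step is to establish uniform geometric ergodicity of this chain on $R$, with mixing rate and constants that are $\tilde{O}_T(1)$ uniformly over $\theta \in \Theta$ and $K \in [\frac{a-1}{b}, \frac{a}{b}]$. Coupling two trajectories $x_t, y_t$ started from $x, y \in R$ under a shared noise sequence, monotonicity and $1$-Lipschitzness of the truncation give $|x_{t+1} - y_{t+1}| = |g(x_t) - g(y_t)| \le (a-bK)\,|x_t - y_t| \le |x_t - y_t|$, with strict reduction whenever the pair straddles or lies beyond a truncation threshold. When $a - bK \le \tfrac12$ this already contracts at rate $\tfrac12$; when $a - bK > \tfrac12$ the affine part barely contracts, and here I would use that $R$ has $O_T(1)$ diameter while $\mathcal{D}$ has unit variance and bounded density, so over a constant number $m = O_T(1)$ of steps the truncated random walk reaches, with probability bounded below by a constant, a configuration at which the clip at $D_U$ (or $D_L$) either merges the two trajectories or shrinks their gap by a constant factor, yielding $\E[\,|x_{t+m} - y_{t+m}|\,] \le \tfrac12\,|x_t - y_t|$. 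Iterating gives $\E[\,|x_t - y_t|\,] \le C\rho^t|x-y|$ with $C = \tilde{O}_T(1)$ and $\rho = 1 - \Omega_T(1)$, hence a unique stationary law $\pi$ on $R$ with $\sup_{x\in R} W_1\bigl(\mathrm{law}(x_t \mid x_0 = x),\pi\bigr) \le \tilde{O}_T(1)\,\rho^t$.

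Given this, set $c_\infty := \E_\pi[\phi]$ and $h(x) := \sum_{t\ge0}\bigl(\E[\phi(x_t)\mid x_0 = x] - c_\infty\bigr)$. The series converges because $|\E[\phi(x_t)\mid x_0=x] - c_\infty| \le \norm{\phi}_{\mathrm{Lip}}\,W_1(\mathrm{law}(x_t\mid x),\pi) \le \tilde{O}_T(1)\rho^t$, the same estimate gives $\norm{h}_\infty \le \tilde{O}_T(1)\sum_{t\ge0}\rho^t = \tilde{O}_T(1)$, and (by absolute convergence) $h$ solves the Poisson equation $\phi = c_\infty + h - Ph$. Telescoping the Poisson identity along the closed-loop trajectory from $x_0 = 0$ gives $\E\bigl[\textstyle\sum_{t=0}^{T-1}\phi(x_t)\mid x_0=0\bigr] = T c_\infty + h(0) - \E[h(x_T)\mid x_0=0]$, and since $\bar{J}(\theta, C_K^\theta, T) = \tfrac1T\bigl(\E[\sum_{t<T}\phi(x_t)\mid x_0=0] + q\E[x_T^2\mid x_0=0]\bigr)$ with $q\E[x_T^2\mid x_0=0] = \tilde{O}_T(1)$, we obtain
\[
\bar{J}(\theta, C_K^\theta, T) = c_\infty + \frac{h(0) - \E[h(x_T)\mid x_0=0] + q\,\E[x_T^2\mid x_0=0]}{T}.
\]
Letting $T \to \infty$ identifies $\bar{J}(\theta, C_K^\theta) = c_\infty$, and the displayed numerator is $\tilde{O}_T(1)$ because $\norm{h}_\infty = \tilde{O}_T(1)$, so subtracting yields $|\bar{J}(\theta, C_K^\theta, T) - \bar{J}(\theta, C_K^\theta)| = \tilde{O}_T(1/T)$.

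The main obstacle is the uniform geometric ergodicity step — specifically, making the rate $\rho$ and the constant $C$ independent of $T$ uniformly as $a - bK \uparrow 1$, where the affine contraction degenerates. The resolution is that the truncation confines the state to an interval of $O_T(1)$ diameter while the driving noise has unit variance, so the truncated random walk mixes in $O_T(1)$ steps regardless of the value of $a-bK$; this can be made rigorous either by the explicit coupling/boundary-hitting argument above or, equivalently, by exhibiting a Doeblin minorization for $P^m$ on $R$ with $m = O_T(1)$.
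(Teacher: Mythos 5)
Your argument is correct in outline but takes a genuinely different route from the paper. The paper's proof is a doubling-and-telescoping argument: it writes $2T\cdot\bar{J}(\theta,C_K^\theta,2T)$ as the $T$-step cost from $0$ plus the expected $T$-step cost restarted from $x_T$, invokes Lemma~\ref{offbyepsilon_exp} (Lipschitz dependence of the finite-horizon expected cost on the initial state, itself a consequence of the heavy coupling machinery behind Lemma~\ref{parameterization_assum3}) to bound $|\bar{J}(\cdot,2T)-\bar{J}(\cdot,T)|=\tilde{O}_T(1/T)$, and then sums over horizons $T,2T,4T,\dots$. You instead solve the Poisson equation for the closed-loop chain and read off the $1/T$ correction as $h(0)-\E[h(x_T)]+q\E[x_T^2]$; this buys an explicit identification of the limit as $\E_\pi[\phi]$ and a self-contained statement, at the price of having to establish uniform geometric ergodicity from scratch, which is exactly the content the paper imports from its Appendix~C machinery rather than reproving. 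Your algebra (the clipped closed-loop map, confinement to $R$, the Poisson identity, the telescoping) is all correct.

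The one step you should not treat as routine is the uniform ergodicity claim with rate $\rho=1-\Omega_T(1)$ and constant $\tilde{O}_T(1)$ as $a-bK\uparrow 1$. Your shared-noise coupling only contracts when a clip event separates the two trajectories, and the per-crossing gap reduction is the overshoot of the walk over the boundary, which is not pointwise bounded below; making the "$\E[d_{t+m}]\le\tfrac12 d_t$ in $m=O_T(1)$ steps" claim rigorous requires the Doeblin/boundary-hitting argument you gesture at (with a case split on whether $1-(a-bK)$ is above or below a constant threshold determined by $\norm{D}_\infty$ and $\bar{w}$), and this is real work — note that the paper's own analogue, Lemma~\ref{lemma:bound_on_dt_both}, only achieves contraction at rate $1-1/\log^{10}(T)$ after a $\log^{33}(T)$ burn-in and conditional on a high-probability noise event. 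Fortunately your conclusion is robust to this: a $1-1/\mathrm{polylog}(T)$ rate still gives $\norm{h}_\infty\le\tilde{O}_T(1)$ and hence the claimed $\tilde{O}_T(1/T)$ bound, and in the matched-dynamics, bounded-noise setting of this lemma both coupled trajectories stay in $R$ almost surely, so the in-expectation mixing you need is strictly easier than the pathwise statement the paper proves. So the proposal is sound, but the ergodicity paragraph is a sketch of a nontrivial lemma rather than a proof of one.
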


    The proof of Lemma \ref{split} can be found in Appendix \ref{sec:proof_of_split}.

        We can apply Lemma \ref{split} to get the following two equations:
        \begin{equation}\label{eq:lessthanlog1}
            \left| \bar{J}(\hat{\theta}_s,C^{\hat{\theta}_s}_{K_{\mathrm{opt}}(\hat{\theta}_s)}, T_s) - \bar{J}(\hat{\theta}_s,C^{\hat{\theta}_s}_{K_{\mathrm{opt}}(\hat{\theta}_s)})\right| = \tilde{O}_T\left(\frac{1}{T_s}\right)
        \end{equation}
        \begin{equation}\label{eq:lessthanlog2}
            \left| \bar{J}(\hat{\theta}_s,C^{\hat{\theta}_s}_{K_{\mathrm{opt}}(\hat{\theta}_s, T_s)}, T_s) - \bar{J}(\hat{\theta}_s,C^{\hat{\theta}_s}_{K_{\mathrm{opt}}(\hat{\theta}_s, T_s)})\right| = \tilde{O}_T\left(\frac{1}{T_s}\right).
        \end{equation}
        By definition, we also also have the following two inequalities.
        \begin{equation}\label{eq:bydefofJ1}
             \bar{J}(\hat{\theta}_s,C^{\hat{\theta}_s}_{K_{\mathrm{opt}}(\hat{\theta}_s, T_s)}, T_s) \le  \bar{J}(\hat{\theta}_s,C^{\hat{\theta}_s}_{K_{\mathrm{opt}}(\hat{\theta}_s)}, T_s)
        \end{equation}
        \begin{equation}\label{eq:bydefofJ2}
             \bar{J}(\hat{\theta}_s,C^{\hat{\theta}_s}_{K_{\mathrm{opt}}(\hat{\theta}_s)}) \le  \bar{J}(\hat{\theta}_s,C^{\hat{\theta}_s}_{K_{\mathrm{opt}}(\hat{\theta}_s, T_s)}).
        \end{equation}
        Combining Equations \eqref{eq:lessthanlog1}--\eqref{eq:bydefofJ2}, we have that
        \begin{align*}
            \bar{J}(\hat{\theta}_s,C^{\hat{\theta}_s}_{K_{\mathrm{opt}}(\hat{\theta}_s, T_s)}, T_s)  &\ge \bar{J}(\hat{\theta}_s,C^{\hat{\theta}_s}_{K_{\mathrm{opt}}(\hat{\theta}_s, T_s)})- \tilde{O}_T\left(\frac{1}{T_s}\right)  && \text{Equation \eqref{eq:lessthanlog2}}\\
           &\ge  \bar{J}(\hat{\theta}_s,C^{\hat{\theta}_s}_{K_{\mathrm{opt}}(\hat{\theta}_s)}) - \tilde{O}_T\left(\frac{1}{T_s}\right)  && \text{Equation \eqref{eq:bydefofJ2}}\\
           &\ge  \bar{J}(\hat{\theta}_s,C^{\hat{\theta}_s}_{K_{\mathrm{opt}}(\hat{\theta}_s)}, T_s)  - \tilde{O}_T\left(\frac{1}{T_s}\right).  && \text{Equation \eqref{eq:lessthanlog1}.}
        \end{align*}
        Combining this with Equation \eqref{eq:bydefofJ1} gives that
        \begin{equation}\label{eq:converttheta0}
            \left| \bar{J}(\hat{\theta}_s,C^{\hat{\theta}_s}_{K_{\mathrm{opt}}(\hat{\theta}_s, T_s)}, T_s) - \bar{J}(\hat{\theta}_s,C^{\hat{\theta}_s}_{K_{\mathrm{opt}}(\hat{\theta}_s)}, T_s)\right| = \tilde{O}_T\left(\frac{1}{T_s}\right).
        \end{equation}
        This is almost the desired result, but to bound the regret term $R_{1b}$ we need to bound the difference under dynamics $\theta^*$, not under $\hat{\theta}_s$. Conditional on event $E$, $\norm{\hat{\theta}_s - \theta^*}_{\infty} = \tilde{O}_T(T^{-1/4}) \le \frac{1}{\log^{46}(T)}$ for sufficiently large $T$, and therefore Lemma \ref{diff_in_theta} implies the following inequalities for sufficiently large $T$:
        \begin{equation}\label{eq:converttheta1}
            \left| \bar{J}(\hat{\theta}_s,C^{\hat{\theta}_s}_{K_{\mathrm{opt}}(\hat{\theta}_s, T_s)}, T_s) - \bar{J}(\theta^*,C^{\hat{\theta}_s}_{K_{\mathrm{opt}}(\hat{\theta}_s, T_s)}, T_s)\right| = \tilde{O}_T\left(\norm{\hat{\theta}_s - \theta^*}_{\infty} + \frac{1}{T^2}\right)
        \end{equation}
        \begin{equation}\label{eq:converttheta2}
            \left| \bar{J}(\hat{\theta}_s,C^{\hat{\theta}_s}_{K_{\mathrm{opt}}(\hat{\theta}_s)}, T_s) - \bar{J}(\theta^*,C^{\hat{\theta}_s}_{K_{\mathrm{opt}}(\hat{\theta}_s)}, T_s)\right| = \tilde{O}_T\left(\norm{\hat{\theta}_s - \theta^*}_{\infty} + \frac{1}{T^2}\right).
        \end{equation}
        Putting together Equations \eqref{eq:converttheta0}, \eqref{eq:converttheta1}, \eqref{eq:converttheta2}, and the fact that $T_s \le T^2$, we have

        \begin{equation}\label{inf_to_notinf}
            \left| \bar{J}(\theta^*,C^{\hat{\theta}_s}_{K_{\mathrm{opt}}(\hat{\theta}_s, T_s)}, T_s) - \bar{J}(\theta^*,C^{\hat{\theta}_s}_{K_{\mathrm{opt}}(\hat{\theta}_s)}, T_s)\right| \le \tilde{O}_T\left(\norm{\hat{\theta}_s - \theta^*}_{\infty} + \frac{1}{T_s}\right).
        \end{equation}
        Now we are ready to use Equation \eqref{inf_to_notinf} to bound $R_{1b}$ conditional on event $E \cap E_3$:
        \begin{align*}
                &R_{1b}\\
                &= \sum_{s=0}^{s_e}  \E\left[T_sJ(\theta^*,C^{\hat{\theta}_s}_{K_{\mathrm{opt}}(\hat{\theta}_s)}, T_s,  0, W_s)\cond \hat{\theta}_s\right] - \sum_{s=0}^{s_e}  \E\left[ T_sJ(\theta^*,C^{\hat{\theta}_s}_{K_{\mathrm{opt}}(\hat{\theta}_s, T_s)}, T_s, W_s) \cond \hat{\theta}_s \right] \\
                 &= \sum_{s=0}^{s_e}  T_s\bar{J}(\theta^*,C^{\hat{\theta}_s}_{K_{\mathrm{opt}}(\hat{\theta}_s)}, T_s) - \sum_{s=0}^{s_e}   T_s\bar{J}(\theta^*,C^{\hat{\theta}_s}_{K_{\mathrm{opt}}(\hat{\theta}_s, T_s)}, T_s)  \\
                 &\le \sum_{s=0}^{s_e}  T_s\left|\bar{J}(\theta^*,C^{\hat{\theta}_s}_{K_{\mathrm{opt}}(\hat{\theta}_s)}, T_s) - \bar{J}(\theta^*,C^{\hat{\theta}_s}_{K_{\mathrm{opt}}(\hat{\theta}_s, T_s)}, T_s)  \right|\\
                &= \tilde{O}_T\left(\sum_{s=0}^{s_e} T_s \left(\norm{\hat{\theta}_s- \theta^* }_{\infty} + \frac{1}{T_s}\right)\right) .  &&\text{Eq \eqref{inf_to_notinf}} \\
                &= \tilde{O}_T\left(s_e + \sum_{s=0}^{s_e} T_s\epsilon_s\right) && \text{Event $E$} \\
                &= \tilde{O}_T(\sqrt{T}) && \text{Event $E_3$}
        \end{align*}
        The last line follows from the fact that $s_e = \tilde{O}_T(1)$ and that under event $E_3$, $T_s\epsilon_s = \sqrt{T_s} \left(\epsilon_s\sqrt{T_s}\right) = \tilde{O}_T(\sqrt{T_s}) = \tilde{O}_T(\sqrt{T})$,
    \end{proof}

\subsection{Proof of Proposition \ref{r1b_bound_trunc}}\label{proof:r1b_bound_trunc}

Because the events $E$ and $E_3$ are defined equivalently to the events in Appendix \refgen{app:suff_large_noise_case}, this proof is very similar to the proof of Proposition \refgen{r1b_bound_large} with the events and variables with respect to Algorithm \ref{alg:cap3} in this paper instead of Algorithm \refgen{alg:cap_large}. There are two differences between this proof and that of Proposition \refgen{r1b_bound_large}. The first difference is that the subscript on the controller is $K_{\mathrm{opt}}(\hat{\theta}_s)$ rather than $K_{\mathrm{opt}}(\hat{\theta}_s, T_s)$. The proof of Proposition \refgen{r1b_bound_large} follows the proof of Proposition \refgen{r1b_bound}, and primarily relies on analogous versions of Lemma \refgen{concentration_of_cond_exp} and Lemma \refgen{uncond_vs_cond_regret}. Examining the proofs of these lemmas, the proofs (and analogous results) hold for any controller $C_K^{\hat{\theta}_s}$ where $K \in [K_\mathrm{L}^{\hat{\theta}_s}, K_{\mathrm{U}}^{\hat{\theta}_s}]$. This is because the value of $K$ is not used anywhere in the proof. Therefore, analogous versions of these lemmas hold for Algorithm \ref{alg:cap3} with  $K_{\mathrm{opt}}(\hat{\theta}_s)$ instead of $K_{\mathrm{opt}}(\hat{\theta}_s, T_s)$.

The second major difference is that Proposition  \refgen{r1b_bound_large} state that the result holds conditional on $E$ with high probability, while Proposition \ref{r1b_bound_trunc} holds conditional on $E \cap E_{\mathrm{P}\ref{r1b_bound_trunc}}$. In the proof of Proposition \refgen{r1b_bound} (specifically Equation \eqrefgen{eq:final_concent}), we can define the event 
\[
E_{\mathrm{E}\ref*{general-eq:final_concent}} := \left\{ \sum_{s=0}^{s_e}  T_sJ(\theta^*,C^{\hat{\theta}_s}_{K_{\mathrm{opt}}(\hat{\theta}_s)}, T_s, 0, W_s) - \sum_{s=0}^{s_e}  \E\left[T_sJ(\theta^*,C^{\hat{\theta}_s}_{K_{\mathrm{opt}}(\hat{\theta}_s)}, T_s, 0, W_s) \cond \hat{\theta}_s \right] \ge  \tilde{O}_T(\sqrt{T}) \right\}.
\]
 Note that we replaced $K_{\mathrm{opt}}(\hat{\theta}_s, T_s)$ with $K_{\mathrm{opt}}(\hat{\theta}_s)$ for reasons discussed in the previous paragraph. Equation \eqrefgen{eq:final_concent} implies that $\P(E_{\mathrm{E}\ref*{general-eq:final_concent}}) = 1-o_T(1/T)$. Looking at the last sentence of the proof of Proposition \refgen{r1b_bound}, we have that  conditional on $E \cap E_{\mathrm{E}\ref*{general-eq:final_concent}} \cap \bigcap_{s=0}^{s_e} E_{\mathrm{L}\ref{parameterization_assum3}}(C_{K_{\mathrm{opt}}(\hat{\theta}_s)}^{\hat{\theta}_s}, W_s)$,
     \begin{equation}
    \sum_{s=0}^{s_e}  T_sJ(\theta^*,C^{\hat{\theta}_s}_{K_{\mathrm{opt}}(\hat{\theta}_s)}, T_s, \hat{x}_{   T_s}, W_s) - \sum_{s=0}^{s_e} \E\left[ T_sJ(\theta^*,C^{\hat{\theta}_s}_{K_{\mathrm{opt}}(\hat{\theta}_s)}, T_s,0, W_s)\cond \hat{\theta}_s  \right] \le  \tilde{O}_T(\sqrt{T}).
    \end{equation}
  Furthermore, because by construction $\P(E_{\mathrm{L}\ref{parameterization_assum3}}(C_{K_{\mathrm{opt}}(\hat{\theta}_s)}^{\hat{\theta}_s}, W_s)) = 1-o_T(1/T^{10})$, we have by a union bound that $\P(E_{\mathrm{E}\ref*{general-eq:final_concent}} \cap \bigcap_{s=0}^{s_e} E_{\mathrm{L}\ref{parameterization_assum3}}(C_{K_{\mathrm{opt}}(\hat{\theta}_s)}^{\hat{\theta}_s}, W_s)) = 1- o_T(1/T)$. Therefore, we can take $E_{\mathrm{P}\ref{r1b_bound_trunc}} = E_{\mathrm{E}\ref*{general-eq:final_concent}} \cap \bigcap_{s=0}^{s_e} E_{\mathrm{L}\ref{parameterization_assum3}}(C_{K_{\mathrm{opt}}(\hat{\theta}_s)}^{\hat{\theta}_s}, W_s)$ to get the desired result of Proposition \ref{r1b_bound_trunc}.

\subsection{Proof of Proposition \ref{non_optimal_controller_trunc}}\label{proof:non_optimal_controller_trunc}

Because the event $E$ and $E_3$ are defined equivalently to the events in Appendix \refgen{app:suff_large_noise_case}, this proof is exactly identical to the proof of Proposition \refgen{non_optimal_controller_suff} with the events and variables with respect to Algorithm \ref{alg:cap3} in this paper instead of Algorithm \refgen{alg:cap_large}.

\subsection{Proof of Proposition \ref{enforcing_safety_trunc}}\label{proof:enforcing_safety_trunc}
Because the events $E$ and $E_3$ are defined analogously to the events in Appendix \refgen{app:suff_large_noise_case}, this proof is very similar to the proof of Proposition \refgen{enforcing_safety_suff} with the events and variables with respect to Algorithm \ref{alg:cap3} of this paper instead of Algorithm \refgen{alg:cap_large}. Other than this redefining of events and variables, there are just two differences.

The first difference between Proposition \ref{enforcing_safety_trunc} of this paper and Proposition \refgen{enforcing_safety_suff} is that the subscript on the controller is $K_{\mathrm{opt}}(\hat{\theta}_s)$ rather than $K_{\mathrm{opt}}(\hat{\theta}_s, T_s)$. The proof of Proposition \refgen{enforcing_safety_suff} follows the proof of Proposition \refgen{enforcing_safety} and analogous versions of Lemma \refgen{offbyepsiloncontrol_propproof}, Lemma \refgen{bound_on_cont_diff_propproof}, and Lemma \refgen{offbyepsiloncontrol}. These lemmas all hold when the controller $C_{K_{\mathrm{opt}(\hat{\theta}_s, T_s)}}^{\hat{\theta}_s}$ is replaced with $C_K^{\hat{\theta}_s}$ for any $K \in [K_\mathrm{L}^{\hat{\theta}_s}, K_{\mathrm{U}}^{\hat{\theta}_s}]$ (because the proofs do not depend on the value of $K$). Therefore, analogous versions of these three lemmas hold for Algorithm \ref{alg:cap3} with $K_{\mathrm{opt}}(\hat{\theta}_s, T_s)$ replaced with $K_{\mathrm{opt}}(\hat{\theta}_s)$.

The second difference is that Proposition \refgen{enforcing_safety_suff} shows a bound that holds with high probability conditional on $E \cap E_3$, while Proposition \ref{enforcing_safety_trunc}'s bound holds conditional on $E \cap E_3 \cap E_{\mathrm{P}\ref{enforcing_safety_trunc}}$. Examining the proof of Proposition \refgen{enforcing_safety} (which is the same as the proof of Proposition \refgen{enforcing_safety_suff}), the high probability event comes from Lemma \refgen{offbyepsiloncontrol_propproof}, and that high probability event comes from Lemma \refgen{offbyepsiloncontrol}. Looking at the proof of Lemma \refgen{offbyepsiloncontrol}, the final result is proven conditional on event $E$ with conditional probability $1-o_T(1/T^{9})$. However, this ``with conditional probability" is coming from the event $\bigcap_{s=0}^{s_e} E_{\mathrm{L}\ref{parameterization_assum3}}(C_{K_{\mathrm{opt}}(\hat{\theta}_s, T_s)}^{\hat{\theta}_s}, W_s)$. Therefore, by Equation \eqrefgen{eq:lemma_16_eq} and the last sentence in the proof of Lemma \refgen{offbyepsiloncontrol}, for Algorithm \ref{alg:cap3},conditional on $E \cap \bigcap_{s=0}^{s_e} E_{\mathrm{L}\ref{parameterization_assum3}}(C_{K_{\mathrm{opt}}(\hat{\theta}_s)}^{\hat{\theta}_s}, W_s)$, for all $s$,
    \begin{align*}
         &|T_s \cdot J(\theta^*, C_{K_{\mathrm{opt}}(\hat{\theta}_s)}^{\hat{\theta}_s}, T_s, x_{   T_s}', W_s)  - T_s \cdot J(\theta^*, C^{\mathrm{alg}}_s, T_s, x_{   T_s}', W_s)| \\
         &= \tilde{O}_T\left(\sum_{i=0}^{T_s-1} |C_s^{\mathrm{alg}}(x_{   T_s + i}') - C_{K_{\mathrm{opt}}(\hat{\theta}_s)}^{\hat{\theta}_s}(x_{   T_s + i}')| + T_s\epsilon_s\right). 
    \end{align*}
Note that we replaced $K_{\mathrm{opt}}(\hat{\theta}_s, T_s)$ with $K_{\mathrm{opt}}(\hat{\theta}_s)$ for reasons discussed in the previous paragraph. Taking $E_{\mathrm{P}\ref{enforcing_safety_trunc}} = \bigcap_{s=0}^{s_e} E_{\mathrm{L}\ref{parameterization_assum3}}(C_{K_{\mathrm{opt}}(\hat{\theta}_s)}^{\hat{\theta}_s}, W_s)$ gives the desired result because by a union round and Assumption \ref{parameterization_assum3}, we have $\P\left(\bigcap_{s=0}^{s_e} E_{\mathrm{L}\ref{parameterization_assum3}}(C_{K_{\mathrm{opt}}(\hat{\theta}_s)}^{\hat{\theta}_s}, W_s)\right) = 1-o_T(1/T)$.

\subsection{Proof of Proposition \ref{warmup_firstcase}}\label{proof:warmup_firstcase}

The proof of Proposition \ref{warmup_firstcase} follows exactly the same as the proof of Proposition \refgen{warm_up_regret_large}. This is because the controller of Algorithm \ref{alg:cap3} is safe for dynamics $\theta^*$ under event $E$, and the result therefore follows directly.

\newpage

\section{Proofs for Appendix \ref{sec:trunc_lin_sqrtt_proofs}}

\subsection{Proof of Lemma \ref{boundary_uncertainty_cont_b}}\label{sec:proof_of_boundary_uncertainty_cont_b}

By Lemmas \ref{parameterization_assum2} and \ref{parameterization_assum3}, the class of truncated linear controllers satisfy all of the assumptions of Lemma \refgen{boundary_uncertainty_cont}. Therefore, the proof of Lemma \ref{boundary_uncertainty_cont_b} follows exactly as the proof of Lemma \refgen{boundary_uncertainty_cont}, except for Algorithm \ref{alg:cap3} from this paper instead of Algorithm \refgen{alg:cap} and with the analogous definition of event $E$.

\subsection{Proof of Lemma \ref{cswitch_prop}}\label{sec:proof_of_cswitch_prop}

\begin{proof}
    The following lemma shows that $F_{\mathrm{opt}}(\theta^*)$ and $F_{\mathrm{opt}}(\hat{\theta}_{\mathrm{wu}})$ are similar under event $E$.
    
    \begin{lemma}\label{close_J}
        Under Assumptions \ref{assum:constraints}--\ref{assum:initial} and \ref{assum_thm4}, conditional on event $E_2^0$, there exists $c_{\mathrm{L}\ref{close_J}} = \tilde{O}_T(1)$ such that for sufficiently large $T$,
        \[
            |F_{\mathrm{opt}}(\theta^*) - F_{\mathrm{opt}}(\hat{\theta}_{\mathrm{wu}})| \le c_{\mathrm{L}\ref{close_J}}T^{-1/4}.
        \]
    \end{lemma}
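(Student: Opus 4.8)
The plan is to establish that $F_{\mathrm{opt}}$, the map sending dynamics $\theta$ to the optimal unconstrained linear gain, is Lipschitz continuous on a neighborhood of $\theta^*$ inside $\Theta$, and then apply this Lipschitz bound to the pair $\theta^*$ and $\hat\theta_{\mathrm{wu}}$, whose $\ell_\infty$-distance is controlled by event $E_2^0$. First I would recall that $F_{\mathrm{opt}}(\theta)$ has a closed form: for a one-dimensional system $x_{t+1}=ax_t+bu_t+w_t$ with per-step cost $qx^2+ru^2$, the optimal infinite-horizon unconstrained gain is $F_{\mathrm{opt}}(\theta) = \frac{abP}{r+b^2P}$ where $P=P(\theta)$ solves the scalar discrete algebraic Riccati equation $P = q + a^2P - \frac{a^2b^2P^2}{r+b^2P}$. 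Solving this quadratic in $P$ gives $P(\theta)$ as an explicit algebraic (in fact smooth) function of $(a,b)$ on the region $a,b>0$, so $F_{\mathrm{opt}}(\theta)$ is a smooth function of $\theta$ there. Since $\Theta = [\underline a,\bar a]\times[\underline b,\bar b]$ is compact with $\underline a,\underline b>0$, the gradient of $F_{\mathrm{opt}}$ is bounded on $\Theta$ by some constant $L = O_T(1)$ (it depends only on $q,r$ and the endpoints of $\Theta$, all of which are $O_T(1)$ by Assumptions \ref{assum:constraints} and \ref{assum_init}), hence $|F_{\mathrm{opt}}(\theta_1)-F_{\mathrm{opt}}(\theta_2)| \le L\,\norm{\theta_1-\theta_2}_\infty$ for all $\theta_1,\theta_2\in\Theta$.

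Next I would invoke event $E_2^0$. As noted in the text immediately after the definition of $E_2^0$, conditional on $E_2^0$ we have $\norm{\theta^* - \hat\theta_{\mathrm{wu}}}_\infty \le 2\epsilon_0 \le 2c_{\mathrm{L}\ref{initial_uncertainty_trunc}}T^{-1/4}$; in particular $\hat\theta_{\mathrm{wu}}\in\Theta$ is not guaranteed a priori, but since $\norm{\theta^*-\hat\theta_{\mathrm{wu}}}_\infty = \tilde O_T(T^{-1/4}) = o_T(1)$ and $\theta^*$ lies in the interior-away-from-zero part of $\Theta$ (as $\underline a,\underline b>0$), for sufficiently large $T$ the point $\hat\theta_{\mathrm{wu}}$ lies in a fixed compact neighborhood $\Theta'$ of $\theta^*$ with $a,b$ bounded away from $0$, on which $F_{\mathrm{opt}}$ is still smooth with $O_T(1)$ gradient. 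Applying the Lipschitz bound on $\Theta'$ then yields
\[
|F_{\mathrm{opt}}(\theta^*) - F_{\mathrm{opt}}(\hat\theta_{\mathrm{wu}})| \le L\,\norm{\theta^* - \hat\theta_{\mathrm{wu}}}_\infty \le 2Lc_{\mathrm{L}\ref{initial_uncertainty_trunc}}T^{-1/4},
\]
so setting $c_{\mathrm{L}\ref{close_J}} := 2Lc_{\mathrm{L}\ref{initial_uncertainty_trunc}} = \tilde O_T(1)$ gives the claim for sufficiently large $T$.

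The main obstacle I anticipate is making the smoothness/boundedness argument for $F_{\mathrm{opt}}$ rigorous and quantitatively $O_T(1)$: one must verify that the relevant Riccati solution $P(\theta)$ stays bounded above and that the denominator $r+b^2P$ stays bounded below by a positive constant uniformly over $\Theta'$, so that no blow-up of the derivative occurs. This is where the assumptions $\underline b>0$, $\underline a>0$, $q,r>0$ constant, and $\mathrm{size}(\Theta)=O_T(1)$ are essential. An alternative route that avoids the explicit Riccati formula would be to derive the Lipschitz continuity of $F_{\mathrm{opt}}$ from the already-established Lipschitz continuity of the cost $J^*(\theta, C_K^{\mathrm{unc}})$ in $(\theta,K)$ together with strong convexity of $K\mapsto J^*(\theta,C_K^{\mathrm{unc}})$ near its minimizer (a standard fact for unconstrained LQR); this is perhaps cleaner to cite but requires a uniform strong-convexity constant, which again reduces to the same compactness bookkeeping. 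Either way, the $T$-dependence only enters through $\norm{\theta^*-\hat\theta_{\mathrm{wu}}}_\infty$, and everything else is a $T$-independent constant, which is exactly what is needed downstream in the definition of $C_{\mathrm{switch}}$ and the event $E_{\mathrm{E}\ref{eq:first_case}}$.
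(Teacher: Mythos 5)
Your proposal is correct and takes essentially the same route as the paper: the paper's proof also works from an explicit closed form of $F_{\mathrm{opt}}(\theta)$ (derived in the proof of Lemma \ref{j_bounded_from_0} by minimizing $\frac{q+rK^2}{1-(a-bK)^2}$, which is the same gain as your Riccati expression), notes it is differentiable with $O_T(1)$ derivatives when $a,b$ are bounded away from zero, and concludes via a first-order Taylor expansion combined with $\norm{\theta^*-\hat{\theta}_{\mathrm{wu}}}_{\infty}=\tilde{O}_T(T^{-1/4})$ under $E_2^0$. Your additional care about $\hat{\theta}_{\mathrm{wu}}$ possibly lying slightly outside $\Theta$ is a minor refinement of a point the paper passes over silently, but the substance of the argument is identical.
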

    The proof of Lemma \ref{close_J} can be found in Appendix \ref{sec:proof_of_close_J}.

    Conditional on $E$ (because $E \subseteq E_2^0$), we have that $\norm{\hat{\theta}_{\mathrm{wu}}-\theta^*}_{\infty} \le 2\epsilon_0 \le 2c_{\mathrm{L}\ref{initial_uncertainty_trunc}}T^{-1/4}$.  This combined with Lemma \ref{close_J} implies that there exists $c_{\mathrm{E}\ref{eq:Fhat_approx}} = \tilde{O}_T(1)$ such that under event $E$ for sufficiently large $T$, 
    \begin{align*}
        \hat{a} - \hat{b}F_{\mathrm{opt}}(\hat{\theta}_{\mathrm{wu}}) &\le a^* - b^*F_{\mathrm{opt}}(\theta^*) + c_{\mathrm{E}\ref{eq:Fhat_approx}}T^{-1/4}. \numberthis \label{eq:Fhat_approx}
    \end{align*}
   Now we will proceed with a proof by contradiction of Lemma \ref{cswitch_prop}. Assume  event $\neg E_{\mathrm{E}\ref{eq:first_case}} \cap E$ holds and $\epsilon^* \le 0$, the latter of which implies
   \begin{equation}\label{eq:contrapos}
    \frac{D_U}{a^*-b^*K_{\mathrm{opt}}(\theta^*)} - D_U \ge \bar{w},
    \end{equation}
    which in turn implies that $K_{\mathrm{opt}}(\theta^*) \ge K_{D_U}^{\theta^*}$ (recall $K_{D_U}^{\theta^*}$ was defined in Definition \ref{def:Kdu_theta}). A key result is the following relationship between $K_{\mathrm{opt}}(\theta^*)$ and $F_{\mathrm{opt}}(\theta^*)$.
    \begin{lemma}\label{K_equals_J}
        Under Assumptions \ref{assum:constraints}--\ref{assum:initial} and \ref{assum_thm4}, for any $\theta \in \Theta$,  if $K_{\mathrm{opt}}(\theta) \ge K_{D_U}^\theta$, then $F_{\mathrm{opt}}(\theta) \ge K_{D_U}^\theta$.
    \end{lemma}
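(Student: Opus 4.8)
```latex
\textbf{Plan.} The statement is a purely ``static'' (deterministic, no-noise) fact about the infinite-horizon optimal truncated linear controller versus the infinite-horizon optimal unconstrained linear controller for a fixed $\theta$. The hypothesis $K_{\mathrm{opt}}(\theta) \ge K_{D_U}^\theta$ says that at the optimal truncated gain, the ``reach point'' $P := \frac{D_U}{a - b K_{\mathrm{opt}}(\theta)}$ satisfies $P - D_U \le \bar w$; equivalently, starting from any state in the invariant region, the process $x_{t+1} = (a - b K_{\mathrm{opt}}(\theta)) x_t + w_t$ stays inside the strip where the truncation is inactive (since $|w_t| \le \bar w$). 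In other words, when $K_{\mathrm{opt}}(\theta) \ge K_{D_U}^\theta$, the truncated controller $C^\theta_{K_{\mathrm{opt}}(\theta)}$ \emph{never actually truncates} on its own trajectory in steady state, so it behaves exactly like the unconstrained linear controller with the same gain. The plan is to exploit this to compare $\bar J(\theta, C^\theta_{K_{\mathrm{opt}}(\theta)})$ with $\bar J(\theta, C^{\mathrm{unc}}_{K_{\mathrm{opt}}(\theta)})$ and then invoke optimality of $F_{\mathrm{opt}}(\theta)$.

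First I would make precise that for any gain $K$ with $K \ge K_{D_U}^\theta$ and $K \le a/b$ (equivalently $|a-bK| \le 1$, so the closed loop is stable), the steady-state trajectory of $C^\theta_K$ starting from $0$ remains in $[\tfrac{D_L}{a-bK}, \tfrac{D_U}{a-bK}]$ almost surely: indeed the steady-state magnitude is bounded by $\sum_{j\ge 0}|a-bK|^j \bar w = \frac{\bar w}{1-|a-bK|}$, and the condition $\frac{D_U}{a-bK} - D_U \ge \bar w$ rearranges (using $D_U \le |D_L|$ and $a-bK>0$) to exactly $\frac{\bar w}{1 - (a-bK)} \le \min(\tfrac{D_U}{a-bK}, \tfrac{|D_L|}{a-bK})$. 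On that trajectory $C^\theta_K(x) = -Kx = C^{\mathrm{unc}}_K(x)$ at every step, so $\bar J(\theta, C^\theta_K) = \bar J(\theta, C^{\mathrm{unc}}_K)$. (A small technical point: one must check the process actually reaches/stays in this region from $x_0 = 0$, which is immediate since $0$ is interior and the region is forward-invariant.)

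Apply this with $K = K_{\mathrm{opt}}(\theta)$, which is legal since by hypothesis $K_{\mathrm{opt}}(\theta) \ge K_{D_U}^\theta$ and by definition $K_{\mathrm{opt}}(\theta) \le a/b$. This gives $\bar J(\theta, C^\theta_{K_{\mathrm{opt}}(\theta)}) = \bar J(\theta, C^{\mathrm{unc}}_{K_{\mathrm{opt}}(\theta)})$. By definition of $F_{\mathrm{opt}}(\theta)$ as the minimizer of $\bar J(\theta, C^{\mathrm{unc}}_K)$ over all $K \in \mathbb{R}$, we get $\bar J(\theta, C^{\mathrm{unc}}_{F_{\mathrm{opt}}(\theta)}) \le \bar J(\theta, C^{\mathrm{unc}}_{K_{\mathrm{opt}}(\theta)}) = \bar J(\theta, C^\theta_{K_{\mathrm{opt}}(\theta)})$. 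On the other hand, $K_{\mathrm{opt}}(\theta)$ minimizes $\bar J(\theta, C^\theta_K)$ over the truncated class, and the truncated class is a superset of the safe linear controllers; since $F_{\mathrm{opt}}(\theta)$, \emph{if it lies in} $[\tfrac{a-1}{b}, \tfrac ab]$, gives a truncated controller whose steady-state behavior again (provided $F_{\mathrm{opt}}(\theta) \ge K_{D_U}^\theta$) coincides with the unconstrained one, we would get a contradiction with suboptimality unless $F_{\mathrm{opt}}(\theta) \ge K_{D_U}^\theta$. So the core of the argument is: suppose for contradiction $F_{\mathrm{opt}}(\theta) < K_{D_U}^\theta$; then show the unconstrained cost is strictly monotone in $K$ on the relevant range so that moving $F_{\mathrm{opt}}(\theta)$ up toward $K_{\mathrm{opt}}(\theta)$ strictly decreases (or does not increase) $\bar J(\theta, C^{\mathrm{unc}}_K)$, contradicting optimality of $F_{\mathrm{opt}}(\theta)$ — using that $\bar J(\theta, C^{\mathrm{unc}}_K)$ is a smooth convex-like function of $K$ with a unique unconstrained minimizer, and $K_{\mathrm{opt}}(\theta) \ge K_{D_U}^\theta > F_{\mathrm{opt}}(\theta)$ would place $F_{\mathrm{opt}}(\theta)$ on the wrong side.

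\textbf{Main obstacle.} The delicate step is the last one: ruling out $F_{\mathrm{opt}}(\theta) < K_{D_U}^\theta$ cleanly. This requires either (i) a monotonicity/convexity property of $K \mapsto \bar J(\theta, C^{\mathrm{unc}}_K)$ — which is classical for the scalar LQR Riccati cost, $\bar J(\theta, C^{\mathrm{unc}}_K) = \frac{q + rK^2}{1 - (a-bK)^2}$, a rational function with a single interior minimizer — together with the observation that $K_{D_U}^\theta \le K_{\mathrm{opt}}(\theta)$ forces $F_{\mathrm{opt}}(\theta)$ to be at least $K_{D_U}^\theta$ because otherwise $C^\theta_{K_{\mathrm{opt}}(\theta)}$ (which equals the unconstrained controller at gain $K_{\mathrm{opt}}(\theta)$) would have strictly higher cost than the truncated controller at some gain in $[K_{D_U}^\theta, K_{\mathrm{opt}}(\theta)]$ whose trajectory still never truncates, contradicting optimality of $K_{\mathrm{opt}}(\theta)$ within the truncated class; or (ii) directly computing $F_{\mathrm{opt}}(\theta)$ via the scalar Riccati equation and comparing it to the explicit formula for $K_{D_U}^\theta$. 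I expect approach (i), leveraging the explicit rational form of $\bar J(\theta,C^{\mathrm{unc}}_K)$ and its strict convexity in $K$ on $\{|a-bK|<1\}$, to be the cleanest, with the only real work being to confirm that ``trajectory never truncates'' genuinely makes $\bar J(\theta, C^\theta_K) = \bar J(\theta, C^{\mathrm{unc}}_K)$ for every $K \in [K_{D_U}^\theta, a/b]$ and then a short case analysis on the position of $F_{\mathrm{opt}}(\theta)$ relative to this interval.
```
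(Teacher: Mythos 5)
Your setup is on the right track and matches the paper's starting point: for $K \ge K_{D_U}^\theta$ the truncation never activates along the trajectory (using $D_U \le |D_L|$ WLOG), so $C_K^\theta = C_K^{\mathrm{unc}}$ and the explicit convex cost $\bar J(\theta,C_K^{\mathrm{unc}}) = \frac{\sigma_{\mathcal D}^2(q+rK^2)}{1-(a-bK)^2}$ (Lemma \ref{lemma:convexity}) is available on $[K_{D_U}^\theta, a/b]$. However, there is a genuine gap in how you close the argument. First, your main-text formulation of the contradiction is backwards: if $F_{\mathrm{opt}}(\theta) < K_{D_U}^\theta \le K_{\mathrm{opt}}(\theta)$, moving from $F_{\mathrm{opt}}(\theta)$ toward $K_{\mathrm{opt}}(\theta)$ \emph{increases} the unconstrained cost (you are moving away from its minimizer), so there is no contradiction with the optimality of $F_{\mathrm{opt}}(\theta)$; the contradiction must be with the optimality of $K_{\mathrm{opt}}(\theta)$ in the truncated class, as in your option (i). Second, and more importantly, option (i) only works when $K_{\mathrm{opt}}(\theta) > K_{D_U}^\theta$ strictly: then $C^\theta_{K_{D_U}^\theta}=C^{\mathrm{unc}}_{K_{D_U}^\theta}$ beats $C^\theta_{K_{\mathrm{opt}}(\theta)}$ by convexity, contradiction. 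In the boundary case $K_{\mathrm{opt}}(\theta) = K_{D_U}^\theta$ your interval $[K_{D_U}^\theta, K_{\mathrm{opt}}(\theta)]$ degenerates to a point and no contradiction is produced; your ``short case analysis'' cannot resolve this case while staying in the region where the truncation is inactive.

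To rule out $K_{\mathrm{opt}}(\theta) = K_{D_U}^\theta$ with $F_{\mathrm{opt}}(\theta) < K_{D_U}^\theta$, one must compare against a gain $K' = K_{D_U}^\theta - \epsilon$ strictly below $K_{D_U}^\theta$, where the truncated controller is genuinely nonlinear, and show that the truncation penalty is \emph{second order}: both the probability that the safety clamp is used and the size of the resulting control correction are $O(\epsilon)$, so $|\bar J(\theta, C^\theta_{K'}) - \bar J(\theta, C^{\mathrm{unc}}_{K'})| = O(\epsilon^2)$, while convexity and the strictly positive derivative of $\bar J(\theta,C_K^{\mathrm{unc}})$ at $K_{D_U}^\theta$ (since $F_{\mathrm{opt}}(\theta) < K_{D_U}^\theta$) give a first-order gain $\Omega(\epsilon)$; hence $\bar J(\theta,C^\theta_{K'}) < \bar J(\theta,C^\theta_{K_{D_U}^\theta})$ for small $\epsilon$ and $K_{\mathrm{opt}}(\theta) < K_{D_U}^\theta$. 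This quantitative $O(\epsilon^2)$-versus-$\Omega(\epsilon)$ estimate is the heart of the paper's proof (Lemma \ref{close_to_KDU}, built on Lemmas \ref{lemma:prob_of_using_v}, \ref{lemma:dt_and_control} and \ref{lemma:safety_of_C}) and is precisely the step your proposal defers as ``the only real work'' without supplying an argument, so as written the proof is incomplete exactly where the difficulty lies.
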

    The proof of Lemma \ref{K_equals_J} can be found in Appendix \ref{sec:proof_of_K_equals_J}.

    We also will need the following result.
    \begin{lemma}\label{j_bounded_from_0}
    Under Assumptions \ref{assum:constraints}--\ref{assum:initial} and \ref{assum_thm4}, there exists $c_{\mathrm{L}\ref{j_bounded_from_0}}= O_T(1)$ such that $c_{\mathrm{L}\ref{j_bounded_from_0}} > 0$ and for all $\theta \in \Theta$,
    \[
        1 - c_{\mathrm{L}\ref{j_bounded_from_0}} > a - bF_{\mathrm{opt}}(\theta) \ge c_{\mathrm{L}\ref{j_bounded_from_0}},
    \]
    \[
        a-bK_{\mathrm{opt}}(\theta) \ge c_{\mathrm{L}\ref{j_bounded_from_0}}.
    \]
\end{lemma}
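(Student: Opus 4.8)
The goal is to show that $a-bF_{\mathrm{opt}}(\theta)$ and $a-bK_{\mathrm{opt}}(\theta)$ are bounded away from both $0$ and $1$ uniformly over the compact set $\Theta$. The plan is to argue by a combination of stability/feasibility constraints and a continuity/compactness argument. First I would observe that since every truncated linear controller $C_K^\theta$ has $K \in [\frac{a-1}{b}, \frac{a}{b}]$, we automatically have $a - bK \in [0,1]$; the content of the lemma is that the \emph{optimal} $K$ (whether for the truncated class or the unconstrained class) stays strictly inside this interval, with a gap that is $\Omega_T(1)$, i.e. independent of $T$ and of $\theta \in \Theta$.

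The key steps, in order: (1) For the unconstrained optimum $F_{\mathrm{opt}}(\theta)$, recall that it is the stabilizing solution of the scalar discrete algebraic Riccati equation for dynamics $(a,b)$ with costs $q,r$; there is a closed-form expression for $a - bF_{\mathrm{opt}}(\theta)$ as a continuous (indeed smooth) function of $(a,b,q,r)$ that always lies in the open interval $(0,1)$ whenever $q,r>0$ and $b \ne 0$. I would write down this closed form, note that $\Theta = [\underline a,\bar a]\times[\underline b,\bar b]$ is compact with $\underline b > 0$, and conclude by continuity and compactness that $\inf_{\theta\in\Theta}(a-bF_{\mathrm{opt}}(\theta)) > 0$ and $\sup_{\theta\in\Theta}(a-bF_{\mathrm{opt}}(\theta)) < 1$. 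Setting $c_{\mathrm{L}\ref{j_bounded_from_0}}$ to be the smaller of that infimum and $1$ minus that supremum handles the first two displayed inequalities. (2) For $K_{\mathrm{opt}}(\theta)$, I would use the fact that $K_{\mathrm{opt}}(\theta) \in [\frac{a-1}{b}, \frac{a}{b}]$ already forces $a - bK_{\mathrm{opt}}(\theta) \ge 0$, so the only thing to rule out is that it equals (or approaches) $0$. If $a-bK = 0$ the truncated controller $C_K^\theta$ pushes the expected state to $0$ every step, which incurs control cost $\frac{a}{b}x$ in magnitude at every step; comparing this against the cost of, say, the unconstrained-optimal-clipped choice $K = F_{\mathrm{opt}}(\theta)$ (which is safe with probability $1$ after truncation and is a valid competitor in the class), one sees the cost of $K$ with $a-bK$ near $0$ is bounded below by a constant more than that of a $K$ with $a-bK$ bounded away from $0$, again using continuity of the cost in $K$ on the compact parameter range and the fact $q,r>0$, $\|D\|_\infty = O_T(1)$, $D_U - D_L \ge 1/\log T$. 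So $a-bK_{\mathrm{opt}}(\theta)$ is bounded away from $0$ uniformly. A clean way to make this uniform over $\Theta$ is to note that $\bar J(\theta, C_K^\theta)$ depends continuously on $(\theta, K)$ on a compact set, so its argmin-value and the associated $a-bK$ are (upper semi-)continuous, and a minimizer of $a-bK_{\mathrm{opt}}(\theta)$ over $\theta\in\Theta$ is attained, hence the infimum is positive unless it is attained at a point where $a-bK_{\mathrm{opt}}=0$, which we've just excluded.

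The main obstacle I anticipate is the uniformity over $\Theta$ combined with the dependence of the cost functional on $\log T$-scale quantities: $D_U - D_L$ can shrink like $1/\log T$, and one must be careful that the lower bound $c_{\mathrm{L}\ref{j_bounded_from_0}}$ really is $\Omega_T(1)$ and not something that degrades as $T\to\infty$. For the unconstrained optimum $F_{\mathrm{opt}}$ this is a non-issue since $F_{\mathrm{opt}}$ does not see $D$ at all. For $K_{\mathrm{opt}}$, the worry is whether, as $D_U - D_L \to 0$, the optimal truncated controller could be forced to have $a-bK$ very small. But truncation already guarantees safety regardless of $K$, so the optimizer is genuinely choosing $K$ to minimize a quadratic-type cost, and the cost of $a-bK \approx 0$ (constantly ``resetting'' the state to near $0$, paying large control) stays bounded below by a $T$-independent constant relative to the cost at, e.g., $K$ near $F_{\mathrm{opt}}(\theta)$; I would make this quantitative by lower-bounding $\bar J(\theta, C_K^\theta)$ by the one-step control cost $r \cdot (\text{control})^2$ on the event the noise is moderate, which for $a-bK$ small forces the control magnitude $\gtrsim \frac{a}{b}\cdot(\text{typical }|x|)$ up to constants. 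I would assemble these pieces and take $c_{\mathrm{L}\ref{j_bounded_from_0}}$ to be the minimum of the three resulting positive constants.
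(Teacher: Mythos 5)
Your treatment of $F_{\mathrm{opt}}$ is essentially the paper's argument: the paper writes down the explicit root $F_{\mathrm{opt}}(\theta) = \frac{a^2r - b^2q - r + \sqrt{(b^2q + r - a^2r)^2 + 4a^2b^2qr}}{2abr}$ of the first-order condition for $\frac{q+rK^2}{1-(a-bK)^2}$ and lower-bounds $\frac{a}{b}-F_{\mathrm{opt}}(\theta)$ and $F_{\mathrm{opt}}(\theta)-\frac{a-1}{b}$ by explicit positive constants depending only on $\underline{a},\bar{a},\underline{b},\bar{b},q,r$; your continuity-plus-compactness phrasing of the same closed form is fine, and you are right that no $\log T$ issue can enter here since $F_{\mathrm{opt}}$ does not depend on $D$.

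The gap is in the $K_{\mathrm{opt}}$ half. Your comparison ``$a-bK\approx 0$ forces control of magnitude $\approx \frac{a}{b}|x|$ every step, hence its cost exceeds that of the truncated $F_{\mathrm{opt}}$ competitor by a constant'' is not justified as stated: in the tight-constraint regime (when $F_{\mathrm{opt}}(\theta) < K_{D_U}^\theta$ of Definition~\ref{def:Kdu_theta}) the truncation of the competitor binds a constant fraction of the time, and exactly when it binds the competitor also pays a control of magnitude roughly $\frac{a}{b}|x|$ (it is $\frac{D_U-ax}{b}$), so a one-step control-cost bound does not separate the two costs by a $T$-independent constant; making such a comparison rigorous is precisely what costs the paper Lemmas~\ref{K_equals_J} and~\ref{close_to_KDU}. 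The observation your argument is missing, and which both fixes and simplifies it, is that for any $K$ with $a-bK \le \frac{D_U}{D_U+\bar{w}}$ (i.e.\ $K \ge K_{D_U}^\theta$) the untruncated controller $C_K^{\mathrm{unc}}$ started at $x_0=0$ keeps $|x_t| \le \frac{\bar{w}}{1-(a-bK)} \le D_U+\bar{w}$, hence its expected next state has magnitude at most $D_U \le |D_L|$ and the truncation never activates; so on that range $\bar{J}(\theta, C_K^\theta)$ equals the unconstrained quadratic $\frac{\sigma_{\mathcal{D}}^2(q+rK^2)}{1-(a-bK)^2}$ of Lemma~\ref{lemma:convexity}, and convexity of that function (with minimizer $F_{\mathrm{opt}}$ at distance $\Omega_T(1)$ from $\frac{a}{b}$) shows any $K$ with $a-bK < \min\left(c^F, \frac{D_U}{D_U+\bar{w}}\right)$ is strictly beaten by $\max\left(F_{\mathrm{opt}}(\theta), K_{D_U}^\theta\right)$. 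The paper reaches the same conclusion through the dichotomy of Lemmas~\ref{J_implies_K} and~\ref{K_equals_J} (either $K_{\mathrm{opt}}(\theta)=F_{\mathrm{opt}}(\theta)$ or $K_{\mathrm{opt}}(\theta) < K_{D_U}^\theta$), yielding $a-bK_{\mathrm{opt}}(\theta) \ge \min\left(c^F, \frac{D_U}{D_U+\bar{w}}\right)$, which is $T$-independent because $D_U$ and $\bar{w}$ are fixed problem parameters — this also answers your worry about the $1/\log T$ width of the constraint set. Finally, your fallback that ``$a-bK_{\mathrm{opt}}(\theta)$ is (upper semi-)continuous so its infimum over compact $\Theta$ is attained'' cannot simply be asserted: argmin correspondences are not semicontinuous in general and $K_{\mathrm{opt}}$ need not be unique; with the explicit threshold above this step is unnecessary.
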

    The proof of Lemma \ref{j_bounded_from_0} can be found in Appendix \ref{sec:proof_of_j_bounded_from_0}.

    Lemma \ref{K_equals_J} combined with Equation \eqref{eq:contrapos} give that $F_{\mathrm{opt}}(\theta^*) \ge K_{D_U}^{\theta^*}$, or equivalently that $\bar{w} + D_U - \frac{D_U}{a^*-b^*F_{\mathrm{opt}}(\theta^*)} \le 0$. Therefore, we have that for sufficiently large $T$ under event $\neg E_{\mathrm{E}\ref{eq:first_case}} \cap E$,
    \begin{align*}
       &\bar{w} +D_U - \frac{D_U}{\hat{a}-\hat{b}F_{\mathrm{opt}}(\hat{\theta}_{\mathrm{wu}})} \\
       &\le \bar{w} +D_U - \frac{D_U}{a^* - b^*F_{\mathrm{opt}}(\theta^*) + c_{\mathrm{E}\ref{eq:Fhat_approx}}T^{-1/4}}  && \text{Equation \eqref{eq:Fhat_approx}} \\
       &= \frac{c_{\mathrm{E}\ref{eq:Fhat_approx}}T^{-1/4}D_U}{(a^* - b^*F_{\mathrm{opt}}(\theta^*))(a^* - b^*F_{\mathrm{opt}}(\theta^*) + c_{\mathrm{E}\ref{eq:Fhat_approx}}T^{-1/4})} +\bar{w} +D_U - \frac{D_U}{a^*-b^*F_{\mathrm{opt}}(\theta^*)}  \\
       &\le \frac{c_{\mathrm{E}\ref{eq:Fhat_approx}}T^{-1/4} D_U}{(a^* - b^*F_{\mathrm{opt}}(\theta^*))(a^* - b^*F_{\mathrm{opt}}(\theta^*) + c_{\mathrm{E}\ref{eq:Fhat_approx}}T^{-1/4})}
     && \text{Lemma \ref{K_equals_J}, Eq \eqref{eq:contrapos}}\\
       &\le \left(\frac{c_{\mathrm{E}\ref{eq:Fhat_approx}}D_U}{c_{\mathrm{L}\ref{j_bounded_from_0}}^2}\right)T^{-1/4}  && \text{ Lemma \ref{j_bounded_from_0}}\\
       &= C_{\mathrm{switch}}T^{-1/4}.
    \end{align*}
    However, this contradicts event $\neg E_{\mathrm{E}\ref{eq:first_case}}$ and therefore we have a contradiction. This implies the desired result that if $\neg E_{\mathrm{E}\ref{eq:first_case}} \cap E$ holds, then $\epsilon^* > 0$.

\end{proof}

\subsection{Proof of Lemma \ref{sufficiently_many_boundaries}}\label{sec:proof_of_sufficiently_many_boundaries}
    \begin{proof}
        Define the event $E_2^s := \left\{\norm{\hat{\theta}_s^{\mathrm{pre}} - \theta^*}_{\infty} \le \epsilon_s = \tilde{O}_T(T^{-1/4})\right\}$. Define $G_i = (x_0,u_0,...,x_{i-1}, u_{i-1})$ and define
        \[
            S''_t = \Big\{i< t : u_i = u_i^{\mathrm{safeU}} \text{ and } \P(u_i = u_i^{\mathrm{safeU}} \mid G_i) \ge  \P_{w \sim \mathcal{D}}(w \ge \bar{w} - 3\epsilon^*/8) \Big\}.
        \]
        
        \begin{lemma}\label{eventually_go_to_boundary}
         Under Assumptions \ref{assum:constraints}--\ref{assum:initial} and \ref{assum_thm4} there exists a constant $p_\epsilon$ such that the following holds. For sufficiently large $T$ and any $s \in [0:s_e-1]$ and any $T_s \le j < T_{s+1} - \lceil \log(T)\rceil $, there exists an event $X_j$ that depends on $\{w_t\}_{t=j}^{j+\lceil \log(T) \rceil -1}$ such that $\P(X_j) \ge p_\epsilon$ and such that conditional on event $X_j \cap E_2^s \cap \neg E_{\mathrm{E}\ref{eq:first_case}}$, there exists an $\ell \in [j: j+\lceil \log(T) \rceil )$ such that $\ell \in S''_{T_{s+1}}$.

    \end{lemma}
        The proof of Lemma \ref{eventually_go_to_boundary} can be found in Appendix \ref{sec:proof_of_eventually_go_to_boundary}.

    Define 
    \[
        \mathcal{E}^s := \left\{\sum_{\ell=0}^{\lfloor T_s/\lfloor\log(T)\rfloor \rfloor - 1 } 1_{X_{T_s + \ell \lfloor\log(T)\rfloor}} \ge p_\epsilon \left\lfloor\frac{T_s}{\lfloor \log(T) \rfloor} \right \rfloor - \sqrt{ \left\lfloor\frac{T_s}{\lfloor \log(T) \rfloor} \right \rfloor}\log(T)\right\}.
    \]
    Note that $\sum_{\ell=0}^{k } \left(1_{X_{T_s + \ell \lfloor\log(T)\rfloor}} - p_\epsilon \right)$ is a submartingale. Therefore, by the Azuma--Hoeffding inequality, we have that $\P(\mathcal{E}^s) = 1- o_T(1/T^2)$. Define $\mathcal{E} = \cap_{s =0}^{s_e - 1} \mathcal{E}^s$. Then by a union bound $\P(\mathcal{E})  = 1-o_T(1/T)$.

    Conditional on $\mathcal{E} \cap E \cap \neg E_{\mathrm{E}\ref{eq:first_case}}$, we have that 
    \begin{align*}
        |S''_{T_{s+1}}| &\ge \sum_{\ell= 0}^{ \lfloor T_s/\lfloor\log(T)\rfloor \rfloor - 1 } 1_{X_{T_s + \ell \lfloor\log(T)\rfloor}} \\
        &\ge p_\epsilon \left\lfloor\frac{T_s}{\lfloor \log(T) \rfloor} \right \rfloor - \sqrt{ \left\lfloor\frac{T_s}{\lfloor \log(T) \rfloor} \right \rfloor}\log(T) && \text{Event $\mathcal{E}$}\\
        &\ge \frac{p_\epsilon T_s}{2\log(T)} - \sqrt{ \left\lfloor\frac{T_s}{\lfloor \log(T) \rfloor} \right \rfloor}\log(T) \\
        &\ge \frac{p_{\epsilon}}{4\log(T)} \cdot T_s && \text{Suff. large $T$}\\
        &= \frac{p_{\epsilon}}{8\log(T)} \cdot T_{s+1}. && \text{$T_{s+1} = 2T_s$} \numberthis \label{eq:STs}
    \end{align*}
    The following lemma is the same as Lemma \refgen{lemma:converting_to_s_prime}. The proof is the same as the proof of that lemma, as the proof of Lemma \refgen{lemma:converting_to_s_prime} does not depend on the algorithm and only uses that $\P(E) = 1 - o_T(1/T^2)$.
\begin{lemma}\label{lemma:converting_to_s_prime}
                Using the same notation and assumptions as in the proof of Lemma \ref{sufficiently_many_boundaries}, for any constant $c < 1$,
                \[
                    \P \Big(\forall i \in [0:t-1], \P(E \mid G_i) \ge c \Big) = 1- o_T(1/T).
                \]
            \end{lemma}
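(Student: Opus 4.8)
The plan is to use the Doob (L\'evy) martingale attached to the event $E$, invoking only that $\P(E) = 1 - o_T(1/T^2)$; this is exactly why the argument is insensitive to the algorithm. First I would observe that $E = E_1 \cap E_2 \cap E_{\mathrm{safe}}$ is a measurable function of the realized trajectory: $E_{\mathrm{safe}}$ and the regularized least-squares quantities $\hat{\theta}_s^{\mathrm{pre}}, \epsilon_s$ defining $E_2$ are explicit functions of $(x_0,u_0,\dots,u_{T-1},x_T)$, and each noise value $w_t = x_{t+1} - a^* x_t - b^* u_t$ (with $\theta^* = (a^*,b^*)$ a fixed constant) is likewise a function of the trajectory, so $E_1 \in \sigma(G_T)$. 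Hence $E$ is $\mathcal{F}_T$-measurable, where $\mathcal{F}_i := \sigma(G_i)$. I would then define $M_i := \P(E \mid \mathcal{F}_i)$ for $0 \le i \le T$; by construction $\{M_i\}$ is a $[0,1]$-valued martingale with respect to $\{\mathcal{F}_i\}$, and since $x_0 = 0$ is deterministic we have $M_0 = \P(E) = 1 - o_T(1/T^2)$.

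Next I would pass to the nonnegative martingale $N_i := 1 - M_i = \P(E^c \mid \mathcal{F}_i) \ge 0$, which satisfies $\E[N_i] = \E[N_0] = \P(E^c) = o_T(1/T^2)$ for every $i$. Applying Doob's maximal inequality to the nonnegative submartingale $\{N_i\}_{0 \le i \le t-1}$ at level $1-c > 0$ gives
\[
    \P\left( \max_{0 \le i \le t-1} N_i \ge 1-c \right) \le \frac{\E[N_{t-1}]}{1-c} = \frac{o_T(1/T^2)}{1-c}.
\]
Since $c < 1$ is a constant, $1-c = \Omega_T(1)$, so the right-hand side is $o_T(1/T^2)$, hence $o_T(1/T)$. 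On the complementary event $\{\max_{0 \le i \le t-1} N_i < 1-c\}$ one has $\P(E \mid G_i) = M_i > c \ge c$ for every $i \in [0:t-1]$, so
\[
    \P\big(\forall i \in [0:t-1],\ \P(E \mid G_i) \ge c\big) \ge 1 - \frac{o_T(1/T^2)}{1-c} = 1 - o_T(1/T),
\]
which is the claim.

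I expect no substantive obstacle: the argument is essentially a one-line application of the martingale maximal inequality. The only points requiring (routine) care are confirming that $E$ is genuinely determined by the trajectory, so that $\{M_i\}$ is a bona fide martingale closed by $\mathbf{1}_E$, and making sure Doob's inequality is invoked in the correct lower-deviation direction by working with the nonnegative martingale $N_i = 1 - M_i$ rather than with $M_i$ directly.
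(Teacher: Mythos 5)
Your argument is correct, and it rests on exactly the single fact the paper says its (deferred, companion-paper) proof uses, namely $\P(E) = 1 - o_T(1/T^2)$: the Doob martingale $N_i = \P(\neg E \mid G_i)$ together with the maximal inequality at level $1-c = \Omega_T(1)$ gives the claim, and the minor measurability quibble (whether $E$ is exactly $\sigma(G_T)$-measurable, given $E_1$ involves $w_{T-1}$ and hence $x_T$) is immaterial since you only need $\E[N_i] = \P(\neg E)$ via the tower property. Equivalently, a per-index Markov bound $\P(N_i \ge 1-c) \le \P(\neg E)/(1-c) = o_T(1/T^2)$ followed by a union bound over the at most $T$ indices would also land at $1 - o_T(1/T)$, which is presumably the intended elementary route; your maximal-inequality version is the same idea without the union bound.
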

    Define $E_{\mathrm{L}\ref{lemma:converting_to_s_prime}} = \{\forall i \in [T_0:T-1], \P(E \mid G_i) \ge  1 - \frac{\P_{w \sim \mathcal{D}}(w \ge \bar{w} - 3\epsilon^*/8)}{2} \}$. 
    
    By Lemma \ref{lemma:converting_to_s_prime}, $\P (E_{\mathrm{L}\ref{lemma:converting_to_s_prime}}) = 1- o_T(1/T)$. For any $i\in [T_0:T-1]$, conditional on $E_{\mathrm{L}\ref{lemma:converting_to_s_prime}} \cap \{\P(u_i = u_i^{\mathrm{safeU}} \mid G_i) \ge \P_{w \sim \mathcal{D}}(w \ge \bar{w} - 3\epsilon^*/8)\}$, by the law of total probability 
    \begin{align*}
       \P(u_i = u_i^{\mathrm{safeU}} \mid G_i) &=  \P(u_i = u_i^{\mathrm{safeU}} \mid G_i, E)\P(E \mid G_i ) +  \P(u_i = u_i^{\mathrm{safeU}} \mid G_i, \neg E)\P(\neg E \mid G_i)  \\
       &\le  \P(u_i = u_i^{\mathrm{safeU}} \mid G_i, E)\P(E \mid G_i) + \frac{\P_{w \sim \mathcal{D}}(w \ge \bar{w} - 3\epsilon^*/8)}{2}.
    \end{align*}
    Rearranging terms gives
    \begin{align*}
        \P(u_i = u_i^{\mathrm{safeU}} \mid G_i, E) &\ge \frac{ \P(u_i = u_i^{\mathrm{safeU}} \mid G_i) - \frac{\P_{w \sim \mathcal{D}}(w \ge \bar{w} - 3\epsilon^*/8)}{2}}{\P(E \mid G_i) } \\
        &\ge  \P(u_i = u_i^{\mathrm{safeU}} \mid G_i) - \frac{\P_{w \sim \mathcal{D}}(w \ge \bar{w} - 3\epsilon^*/8)}{2} \\
        &\ge \frac{\P_{w \sim \mathcal{D}}(w \ge \bar{w} - 3\epsilon^*/8)}{2}.
    \end{align*}
    Therefore we have shown that conditional on $E_{\mathrm{L}\ref{lemma:converting_to_s_prime}} \cap \{\P(u_i = u_i^{\mathrm{safeU}} \mid G_i) \ge \P_{w \sim \mathcal{D}}(w \ge \bar{w} - 3\epsilon^*/8)\}$, we also have $\P(u_i = u_i^{\mathrm{safeU}} \mid G_i, E) \ge \frac{\P_{w \sim \mathcal{D}}(w \ge \bar{w} - 3\epsilon^*/8)}{2}$. This implies that conditional on $E_{\mathrm{L}\ref{lemma:converting_to_s_prime}}$, for all $t \in [0:T]$,
    \[
        S''_{t} \subseteq S'_{t}.
    \]
    Combining this with Equation  \eqref{eq:STs}, conditional on $E_{\mathrm{L}\ref{lemma:converting_to_s_prime}} \cap \mathcal{E} \cap E \cap \neg E_{\mathrm{E}\ref{eq:first_case}}$,
    \[
        \max_{s \in [1:s_e]} \frac{T_s}{|S'_{T_s}|} \le \frac{8\log(T)}{p_\epsilon} = \tilde{O}_T(1).
    \]
    We therefore take $E_{\mathrm{L}\ref{sufficiently_many_boundaries}} =E_{\mathrm{L}\ref{lemma:converting_to_s_prime}} \cap \mathcal{E} \cap E $ to get the desired result because $\P(E_{\mathrm{L}\ref{sufficiently_many_boundaries}}) = 1-o_T(1/T)$ by a union bound.
    \end{proof}

\subsection{Proof of Lemma \ref{split}}\label{sec:proof_of_split}
\begin{proof}
Let $x_T$ be the state after starting at $x_0 = 0$ and using the controller $C_K^\theta$ for $T$ steps under dynamics $\theta$.  Therefore, because $C_K^\theta$ is safe for dynamics $\theta$, we must have that $|x_T| \le \max(D_U, |D_L|)  + \bar{w} \le 2\log^2(T)$ for sufficiently large $T$. Therefore, there must exist an $L \le 2\log^2(T)$ such that $\P(|x| \ge L)\E[x^2 \mid |x| \ge L] = o_T(1/T^{11})$. Define $W'= \{w_i\}_{i=0}^T$. We can apply Lemma \ref{offbyepsilon_exp} in the sixth line below to get that 
    \begin{align*}
        &\left| \bar{J}(\theta, C_K^\theta, 2T) - \bar{J}(\theta, C_K^\theta,T)\right|\\
        &= \left| \frac{T\cdot \bar{J}(\theta, C_K^\theta, T) + T\cdot \E\left[\bar{J}(\theta,C_K^\theta,T, x_T)\right]}{2T} - \bar{J}(\theta, C_K^\theta,T)\right| \\
        &=\left| \frac{ \E\left[\bar{J}(\theta,C_K^\theta,T, x_T)\right]}{2} - \frac{1}{2}\bar{J}(\theta, C_K^\theta,T)\right| \\
        &=\frac{1}{2T}\left| \E\left[T\bar{J}(\theta,C_K^\theta,T, x_T)\right] - T\bar{J}(\theta, C_K^\theta,T)\right|  \\
        &=\frac{1}{2T}\Big| \E\big[TJ(\theta,C_K^\theta,T, x_T, W') - TJ(\theta, C_K^\theta,T,0,  W')\big] \Big|  \\
        &\le \frac{1}{T}\tilde{O}_T\left(\E[|x_T|] + 0 + \frac{1}{T^2}\right) && \text{Lemma \ref{offbyepsilon_exp}} \\
        &\le \tilde{O}_T\left(\frac{1}{T}\right). && \text{$|x_T| \le \norm{D}_{\infty} + \bar{w} = \tilde{O}_T(1)$}
    \end{align*}
    The last line follows from the fact that $C_K^\theta$ is safe for dynamics $\theta$. Finally, we have that
    \begin{align*}
         |\bar{J}(\theta, C_K^{\theta}, T) - \bar{J}(\theta, C_K^{\theta})| &=  \left| \sum_{i=0}^{\infty}  \bar{J}(\theta, C_K^{\theta}, 2^{i}T) -  \bar{J}(\theta, C_K^{\theta}, 2^{i+1}T) \right| \\
         &\le \sum_{i=0}^{\infty}  \left| \bar{J}(\theta, C_K^{\theta}, 2^{i}T) -  \bar{J}(\theta, C_K^{\theta}, 2^{i+1}T) \right| \\
         &= \sum_{i=0}^{\infty} \tilde{O}_T \left(\frac{1}{T2^i}\right) \\
         &= \tilde{O}_T \left(\frac{1}{T}\right).
    \end{align*}
\end{proof}

\subsection{Proof of Lemma \ref{close_J}}\label{sec:proof_of_close_J}  
\begin{proof}
     By Lemma \ref{lemma:convexity}, the optimal unconstrained controller for dynamics $\theta$ is $C_{F_{\mathrm{opt}}(\theta)}^{\mathrm{unc}}$, where 
    \begin{equation}\label{eq:unconst_cont_cost}
        F_{\mathrm{opt}}(\theta) = \arg\min_F T \cdot \bar{J}(\theta, C_F^{\mathrm{unc}}) = \arg\min_F  \cdot \frac{q+rF^2}{1-(a-bF)^2}.
    \end{equation}
    We show in the proof of Lemma \ref{j_bounded_from_0} that
     \[
        F_{\mathrm{opt}}(\theta) = \frac{a^2r - b^2q - r + \sqrt{(b^2q + r - a^2r)^2 + 4a^2b^2qr}}{2abr}.
    \]
    Note that this is a differentiable function in both $a$ and $b$ for $\theta \in \Theta$. Under event $E_2^0$, $\norm{\theta^* - \hat{\theta}_{\mathrm{wu}}}_{\infty} =  \tilde{O}_T(T^{-1/4})$ where $\hat{\theta}_{\mathrm{wu}}$ is the estimate from Line \ref{line:thetawu} of Algorithm \ref{alg:cap3}. Therefore, a first order Taylor expansion of $F_{\mathrm{opt}}(\theta)$ around $\theta = \theta^*$ gives that for sufficiently large $T$, $|F_{\mathrm{opt}}(\theta^*) - F_{\mathrm{opt}}(\hat{\theta}_{\mathrm{wu}})| = O_T(\norm{\theta^* - \hat{\theta}_{\mathrm{wu}}}_{\infty}) = \tilde{O}_T(T^{-1/4}) = c_{\mathrm{L}\ref{close_J}}T^{-1/4}$ for some $c_{\mathrm{L}\ref{close_J}} = \tilde{O}_T(1)$.

\end{proof}

\subsection{Proof of Lemma \ref{K_equals_J}}\label{sec:proof_of_K_equals_J} 
    \begin{proof}
      We will prove the contrapositive, which is that if $F_{\mathrm{opt}}(\theta) < K_{D_U}^\theta$, then $K_{\mathrm{opt}}(\theta) < K_{D_U}^\theta$.

      The first tool we need is the following result about $F_{\mathrm{opt}}(\theta)$.
    \begin{lemma}\label{lemma:convexity}
        For any $\theta \in \Theta$ and $K \in (\frac{a-1}{b}, \frac{a}{b}]$,
        \[
        \bar{J}(\theta, C_K^{\mathrm{unc}}) = \lim_{T \rightarrow \infty}   \bar{J}(\theta, C_K^{\mathrm{unc}},T)  =  \frac{\sigma_{\mathcal{D}}^2(q + rK^2)}{1-(a-bK)^2}.
        \]
        This function is convex and twice differentiable for $K \in (\frac{a-1}{b}, \frac{a}{b}]$. Furthermore, if $1-(a-bK) > 0$, then $\left| \frac{d}{dK} \bar{J}(\theta, C_K^{\mathrm{unc}})\right|$ and $\left| \frac{d^2}{dK^2} \bar{J}(\theta, C_K^{\mathrm{unc}}) \right|$ are finite and $\frac{d^2}{dK^2} \bar{J}(\theta, C_K^{\mathrm{unc}}) > 0$.

        Finally, if $K = \frac{a-1}{b}$, then $\bar{J}(\theta, C_K^{\mathrm{unc}}) = \infty$.
    \end{lemma}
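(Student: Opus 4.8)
\textbf{Proof plan for Lemma \ref{lemma:convexity}.}

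The plan is to compute $\bar{J}(\theta, C_K^{\mathrm{unc}})$ in closed form by noting that the untruncated linear controller $C_K^{\mathrm{unc}}(x) = -Kx$ turns the dynamics into the scalar AR(1) recursion $x_{t+1} = (a-bK)x_t + w_t$. First I would argue that when $|a-bK| < 1$ (equivalently $K \in (\frac{a-1}{b}, \frac{a}{b}]$ together with $1-(a-bK)>0$, which pins down the relevant sign), the stationary variance of $x_t$ is $\frac{\sigma_{\mathcal{D}}^2}{1-(a-bK)^2}$, and the per-step cost $\E[qx_t^2 + rK^2x_t^2]$ converges to $\frac{\sigma_{\mathcal{D}}^2(q+rK^2)}{1-(a-bK)^2}$; averaging over $T$ steps and sending $T \to \infty$ gives the claimed formula. (A small transient term from $x_0 = 0$ vanishes in the Ces\`aro average.) For the boundary case $K = \frac{a-1}{b}$ we have $a-bK = 1$, so $x_t$ is a random walk with variance growing linearly in $t$, hence $\bar{J}(\theta, C_K^{\mathrm{unc}}, T) \to \infty$; I would make this precise by lower-bounding $\frac{1}{T}\sum_{t=0}^{T-1} \E[qx_t^2] \ge \frac{q\sigma_{\mathcal{D}}^2}{T}\sum_{t=0}^{T-1} t \to \infty$.

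Next I would treat the analytic claims. Write $g(K) := \frac{q+rK^2}{1-(a-bK)^2}$ (dropping the harmless constant $\sigma_{\mathcal{D}}^2 = 1$). On the open interval where $1-(a-bK)^2 > 0$ the denominator is bounded away from $0$ on compact subsets and both numerator and denominator are polynomials in $K$, so $g$ is $C^\infty$ there, with finite first and second derivatives — this handles the "finite $|\frac{d}{dK}\bar{J}|$ and $|\frac{d^2}{dK^2}\bar{J}|$" assertion. For convexity and strict positivity of the second derivative, the cleanest route is to observe that $g$ can be rewritten in a manifestly convex form. One standard trick: substitute $s = a - bK$, so $K = \frac{a-s}{b}$ and $g = \frac{q + r(a-s)^2/b^2}{1-s^2}$; then decompose using partial fractions in $s$ over $(-1,1)$. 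Each of $\frac{1}{1-s}$ and $\frac{1}{1+s}$ is convex on $(-1,1)$, and a short computation shows the numerator's contribution keeps the sum convex with strictly positive second derivative. Alternatively — and this may be the simplest to write — I would just differentiate $g$ twice directly and show the resulting rational expression has a positive numerator on the relevant interval; the numerator will be a sum of squares-like terms times $q, r > 0$.

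The main obstacle I anticipate is the convexity computation: the second derivative of $g(K)$ is a ratio whose numerator is a degree-$\le 4$ polynomial in $K$ (after clearing the $(1-(a-bK)^2)^3$ denominator), and showing it is strictly positive everywhere on $(\frac{a-1}{b}, \frac{a}{b}]$ (or wherever $1-(a-bK)>0$) requires either a clever regrouping into nonnegative terms or the change-of-variables/partial-fraction argument above. I would lean on the partial-fraction form, since there the convexity of $t \mapsto \frac{1}{1-t}$ and $t \mapsto \frac{1}{1+t}$ on $(-1,1)$ is immediate and the cross terms can be controlled. Everything else — the closed-form limit, the random-walk blow-up at the endpoint, and smoothness away from the singularity — is routine.
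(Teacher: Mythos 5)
Your proposal is correct and follows essentially the same route as the paper: the closed form comes from the AR(1) recursion $x_{t+1}=(a-bK)x_t+w_t$ and summing/limiting the geometric series for $\E[x_t^2]$, the endpoint $K=\frac{a-1}{b}$ gives a random walk with linearly growing variance, and convexity is obtained from the second derivative written (via the substitution $s=a-bK$, i.e.\ partial fractions in $s$) as a sum of terms proportional to $\frac{1}{(1-(a-bK))^3}$ and $\frac{1}{(1+(a-bK))^3}$ with nonnegative coefficients, which is exactly the identity the paper computes. The only difference is cosmetic (Ces\`aro convergence of the per-step cost versus the paper's explicit finite-$T$ summation), so no gap.
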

    The proof of Lemma \ref{lemma:convexity} can be found in Appendix \ref{sec:proof_of_lemma:convexity}.

        Lemma \ref{lemma:convexity} implies that the function $\bar{J}(\theta, C_K^{\mathrm{unc}})$ has a unique local minimum ($F_{\mathrm{opt}}(\theta)$) and is convex. Therefore, if $F_{\mathrm{opt}}(\theta) < K_{D_U}^\theta$, then for any $K' > K_{D_U}^\theta$,
        \begin{equation}\label{eq:unconstKD}
            \bar{J}(\theta, C^{\mathrm{unc}}_{K_{D_U}^\theta}) \le \bar{J}(\theta, C^{\mathrm{unc}}_{K'}).
        \end{equation}
        For any $K' \ge K_{D_U}^\theta$, the unconstrained and constrained controllers are the same, i.e. $C^{\mathrm{unc}}_{K'} = C^{\theta}_{K'}$. This is because for $K' \ge K_{D_U}^\theta$ the unconstrained controller will always satisfy the state constraints because we assumed WLOG that $D_U \le |D_L|$. This implies by Equation \eqref{eq:unconstKD} that for any $K' > K_{D_U}^\theta$,
        \[
            \bar{J}(\theta, C^{\theta}_{K_{D_U}^\theta}) \le \bar{J}(\theta, C^{\theta}_{K'}).
        \]
        Therefore, to prove that $K_{\mathrm{opt}}(\theta) < K_{D_U}^\theta$ it is sufficient to find some $K' < K_{D_U}^\theta$ such that
        \begin{equation}\label{eq:compKprime1}
            \bar{J}(\theta, C^{\theta}_{K_{D_U}^\theta}) > \bar{J}(\theta, C^{\theta}_{K'}).
        \end{equation}
         Let $K' = K_{D_U}^\theta - \epsilon$, where
        \begin{equation}\label{eq:eps_def}
       0 <  \epsilon \le   \min \left(\frac{4B_P}{(\bar{w} + D_U)^2}, \frac{\min(\bar{w}, D_U)/2}{  (\bar{w} + D_U)}\right) .
        \end{equation}
        We will show that $\bar{J}(\theta, C^{\theta}_{K'}) < \bar{J}(\theta, C^{\theta}_{K_{D_U}^\theta})$ which proves the desired contrapositive result.
        
       Because $a-bK_{D_U}^\theta = \frac{D_U}{D_U + \bar{w}} = 1 - \frac{\bar{w}}{D_U + \bar{w}}$, by Lemma \ref{lemma:convexity} the function $\bar{J}(\theta, C_K^{\mathrm{unc}})$ has a finite derivative at $K = K_{D_U}^\theta$. Furthermore, if $F_{\mathrm{opt}}(\theta) < K_{D_U}^\theta$, then Lemma \ref{lemma:convexity} implies that the derivative of  $\bar{J}(\theta, C_F^{\mathrm{unc}})$ is positive at $K = K_{D_U}^\theta$. Therefore, we can take a first order Taylor expansion around the point $K= K_{D_U}^\theta$ to get that for sufficiently small $\epsilon$,
        \begin{equation}\label{eq:compKprime2}
            \bar{J}(\theta, C^{\mathrm{unc}}_{K'}) - \bar{J}(\theta, C^{\mathrm{unc}}_{K_{D_U}^\theta}) \le -\Omega_T(\epsilon).
        \end{equation}
        Because $C^{\mathrm{unc}}_{K_{D_U}^\theta} = C^{\theta}_{K_{D_U}^\theta}$, Equation \eqref{eq:compKprime2} implies that
        \begin{equation}\label{eq:compKprime3}
            \bar{J}(\theta, C^{\mathrm{unc}}_{K'}) - \bar{J}(\theta, C^{\theta}_{K_{D_U}^\theta}) \le -\Omega_T(\epsilon).
        \end{equation}
        Note that in Equations \eqref{eq:compKprime2} and \eqref{eq:compKprime3}, the LHS is not a function of $T$. We use the notation $-\Omega_T(\epsilon)$ to indicate that the LHS is upper bounded by $-c\epsilon$ for some constant $c$.
        
        Now we will compare the cost of $C^{\mathrm{unc}}_{K'}$ and $C^{\theta}_{K'}$ using the following lemma. Note that this lemma is stated very generally so that it can also be used in future results.
        \begin{lemma}\label{close_to_KDU}
             For $\theta,\hat{\theta}_{\mathrm{L}\ref{close_to_KDU}} \in \Theta$, suppose $\beta \le \frac{1}{\log^2(T)}$ satisfies that $\theta \in \hat{\theta}_{\mathrm{L}\ref{close_to_KDU}}  \pm \beta$. Also, suppose $K'$ satisfies $K_{D_U}^\theta - K'\le \epsilon$ for some $\epsilon > 0$. Furthermore, suppose 
        \begin{equation}\label{eq:def_of_upsilon}
       \upsilon := (b\epsilon+\beta+|K'|\beta) \le  \min \left(\frac{4B_P}{(\bar{w} + D_U)^2}, \frac{\min(\bar{w}, D_U)/2}{  (\bar{w} + D_U)}\right) 
        \end{equation}
             Define the controller $C$ as follows. For any $t$, define $v_t^{\mathrm{safeU}}$ as the largest $u$ such that for all $\theta' \in \hat{\theta}_{\mathrm{L}\ref{close_to_KDU}}  \pm \beta$,
            \[
                a'x_t + b'u \le D_U,
            \]
            and define $v_t^{\mathrm{safeL}}$ as the smallest $u$ such that for all $\theta' \in \hat{\theta}_{\mathrm{L}\ref{close_to_KDU}}  \pm \beta $,
            \[
                D_L \le a'x_t + b'u.
            \]
            Define the controller $C$ as 
            \[
                C(x_t) = \max\left(\min\left(C_{K'}^{\mathrm{unc}}(x_t), v_t^{\mathrm{safeU}}\right), v_t^{\mathrm{safeL}}\right).
            \]
            Let  $|x_0| \le \norm{D}_{\infty} + \bar{w}$. Then under Assumptions \ref{assum:constraints}--\ref{assum:initial} and \ref{assum_thm4},
            \begin{equation}\label{eq:lemma47first}
                |\bar{J}(\theta, C, x_0) - \bar{J}(\theta, C_{K'}^{\mathrm{unc}}, x_0)| \le O_T(\upsilon^2).
            \end{equation}
            Furthermore, with probability $1-o_T(1/T^2)$, for any $\tau \le T$,
            \begin{equation}\label{eq:close_to_KDU_second_eq}
                |J(\theta, C, \tau, x_0, W') - J(\theta, C_{K'}^{\mathrm{unc}}, \tau, x_0, W')| \le O_T\left(\upsilon\log(1/\upsilon)\left(\upsilon + \frac{\log(T)}{\sqrt{\tau}}\right)\right).
            \end{equation}
        \end{lemma}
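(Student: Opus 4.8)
\textbf{Proof approach for Lemma \ref{close_to_KDU}.} The plan is a synchronous coupling: run the controller $C$ and the pure linear controller $C_{K'}^{\mathrm{unc}}$ from the same $x_0$ with the same noise sequence $W'$, write $x_t,u_t$ and $\tilde x_t,\tilde u_t$ for the two trajectories, and show that the clipping correction built into $C$ is both \emph{rare} and \emph{small}, so trajectories and costs stay within $O_T(\upsilon^2)$ (resp.\ $O_T(\upsilon N_\tau/\tau)$) of each other. First I would record the structural facts. Since $a-bK_{D_U}^\theta=\frac{D_U}{D_U+\bar w}$ and $K_{D_U}^\theta-K'\le\epsilon$, we get $a-bK'=\frac{D_U}{D_U+\bar w}+b(K_{D_U}^\theta-K')\in(0,1)$ with spectral gap $1-(a-bK')\ge \frac{\bar w}{D_U+\bar w}-b\epsilon\ge \frac{\bar w/2}{D_U+\bar w}=\Omega_T(1)$; here one uses that $\mathcal D$ has unit variance and bounded support so $\bar w\ge 1$, that $D_U=O_T(1)$, and that $\upsilon\le\frac{\min(\bar w,D_U)/2}{\bar w+D_U}$ forces $b\epsilon\le\upsilon$ small enough. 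Also $|K'|=O_T(1)$, so $\upsilon=\Theta_T(b\epsilon+\beta)$. Under $C$ the control always keeps $ax_t+bu_t\in[D_L,D_U]$ (using $\theta\in\hat\theta_{\mathrm{L}\ref{close_to_KDU}}\pm\beta$), hence $D_L-\bar w\le x_t\le D_U+\bar w$ deterministically; under $C_{K'}^{\mathrm{unc}}$ the contraction plus $|w_t|\le\bar w$ gives $|\tilde x_t|\le|x_0|+\frac{\bar w}{1-(a-bK')}=O_T(1)$. Using $D_U\le|D_L|$ one checks the lower threshold $\frac{D_L}{a-bK'}$ sits below $D_L-\bar w$, so the lower clip $v_t^{\mathrm{safeL}}$ never binds and only the upper clip matters.

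The quantitative heart is two claims about the upper clip. (i) The clip is active at step $t$ (i.e.\ $u_t\ne -K'x_t$) only when $x_t$ lies in an interval of width $O_T(\upsilon)$ just below $D_U+\bar w$, since activation needs $(a'-b'K')x_t>D_U$ for some $\theta'\in\hat\theta_{\mathrm{L}\ref{close_to_KDU}}\pm\beta$, i.e.\ $x_t\ge\frac{D_U}{a-bK'}-O_T(\upsilon)=D_U+\bar w-O_T(\upsilon)$. As $x_t$ equals a quantity in $[D_L,D_U]$ plus the independent increment $w_{t-1}$, and $\mathcal D$ has density $\le B_P$, the conditional probability of activation at any step is $O_T(B_P\upsilon)=O_T(\upsilon)$. (ii) On that interval $|u_t+K'x_t|=|v_t^{\mathrm{safeU}}+K'x_t|=O_T(\upsilon)$: the difference is $O_T(\beta)$ at the activation threshold and grows to at most $\frac{(a-bK')(D_U+\bar w)-D_U}{b}+O_T(\beta)=\epsilon(D_U+\bar w)+O_T(\beta)=O_T(\upsilon)$ at $x_t=D_U+\bar w$. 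Writing $d_t=|x_t-\tilde x_t|$ and expanding $x_{t+1}-\tilde x_{t+1}=(a-bK')(x_t-\tilde x_t)+b\,[u_t+K'x_t]$ yields the one-step bound $d_{t+1}\le(a-bK')d_t+b\,\mathbf 1\{\text{clip at }t\}\cdot O_T(\upsilon)$ with $d_0=0$.

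For \eqref{eq:lemma47first} take expectations: $\E[d_{t+1}]\le(a-bK')\E[d_t]+O_T(\upsilon)\cdot O_T(\upsilon)$, so $\E[d_t]=O_T(\upsilon^2)$ uniformly in $t$ by the $\Omega_T(1)$ gap. Then $|qx_t^2-q\tilde x_t^2|=qd_t|x_t+\tilde x_t|=O_T(d_t)$ and $|ru_t^2-r\tilde u_t^2|=O_T(|u_t-\tilde u_t|)=O_T(|u_t+K'x_t|+|K'|d_t)$, whose expectations are $O_T(\upsilon^2)$; averaging and letting the horizon go to infinity (the uniform bound on $\E[d_t]$ handles the transient) gives $|\bar J(\theta,C,x_0)-\bar J(\theta,C_{K'}^{\mathrm{unc}},x_0)|=O_T(\upsilon^2)$. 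For \eqref{eq:close_to_KDU_second_eq} unroll pathwise: $d_t\le O_T(\upsilon)\sum_{s<t}(a-bK')^{t-1-s}\mathbf 1\{\text{clip at }s\}$, so $\sum_{t<\tau}d_t\le O_T(\upsilon)\,N_\tau$ where $N_\tau$ counts clipping steps before $\tau$; since $|x_t|,|\tilde x_t|,|u_t|,|\tilde u_t|=O_T(1)$ pathwise, $\bigl|\tau J(\theta,C,\tau,x_0,W')-\tau J(\theta,C_{K'}^{\mathrm{unc}},\tau,x_0,W')\bigr|=O_T(\upsilon N_\tau)$ (including the terminal $qx_\tau^2$ term). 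Finally $N_\tau$ minus $O_T(\upsilon)\tau$ is a supermartingale with increments in $[-O_T(\upsilon),1]$, so Azuma--Hoeffding gives $N_\tau=O_T(\upsilon\tau+\sqrt\tau\log T)$ with probability $1-o_T(1/T^2)$; dividing by $\tau$ yields $O_T(\upsilon(\upsilon+\log(T)/\sqrt\tau))$, which lies within the claimed bound (the extra $\log(1/\upsilon)$ factor is slack).

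The main obstacle I expect is making the clip-probability estimate $O_T(\upsilon)$ hold at \emph{every} step rather than just in stationarity: this is exactly where the deterministic boundedness of the $C$-trajectory is essential, together with the fact that $x_t$ decomposes as a bounded predictable term plus an independent noise draw, so the bounded-density hypothesis on $\mathcal D$ applies directly. The companion subtlety is confirming the lower clip is inert (which is why the WLOG $D_U\le|D_L|$ is invoked); once these are in place, the coupling and geometric-series bookkeeping are routine.
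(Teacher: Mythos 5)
Your overall strategy is the same as the paper's: couple the two trajectories on the same noise, show the safety clip is activated with conditional probability $O_T(\upsilon)$ per step (via the bounded density of $\mathcal{D}$ and the fact that the activation threshold $\frac{D_U}{a_m-b_mK'}$ sits within $O_T(\upsilon)$ of the deterministic ceiling $D_U+\bar w$), show the clip correction is $O_T(\upsilon)$ in magnitude, run the contraction recursion on $d_t$, and finish with an expectation bound for \eqref{eq:lemma47first} and Azuma--Hoeffding on the number of clipping times for \eqref{eq:close_to_KDU_second_eq}. Your direct summation $\sum_t d_t \le O_T(\upsilon)N_\tau$ is in fact a bit cleaner than the paper's bookkeeping (which groups times into windows of length $\kappa=O(\log(1/\upsilon))$ after each clip and thereby picks up the $\log(1/\upsilon)$ factor in the stated bound); your version gives a slightly sharper estimate that still implies the claim.

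There is one genuine gap: the assertion that ``the lower clip $v_t^{\mathrm{safeL}}$ never binds.'' Your own computation shows $\frac{D_L}{a-bK'}\approx -|D_L|\cdot\frac{D_U+\bar w}{D_U}$, which sits \emph{strictly} below the reachable floor $D_L-\bar w$ only when $|D_L|>D_U$ by a margin exceeding the $O_T(\upsilon)$ perturbations coming from $\beta$ and $\epsilon$. When $|D_L|-D_U=O_T(\upsilon)$ (in particular the symmetric case $|D_L|=D_U$), the lower threshold can lie at or above $D_L-\bar w$ and the lower clip does activate with positive probability. The paper handles this by treating the lower clip symmetrically: it shows that \emph{if} $y_t\le\frac{D_L}{a_m-b_mK'}$ is reachable at all, then necessarily $\bigl||D_L|-D_U\bigr|=O_T(\upsilon)$, and under that constraint the lower-clip activation probability is again $O_T(\upsilon)$ and the control correction is again $O_T(\upsilon)$ (Lemmas \ref{lemma:prob_of_using_v} and \ref{lemma:dt_and_control}, Equations \eqref{eq:DL_to_DU}--\eqref{eq:diff_in_contr_bound2}). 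So the fix is to carry the lower clip through the same rare-and-small analysis rather than dismiss it. A second, more minor omission: the hypothesis on $K'$ is one-sided ($K_{D_U}^\theta-K'\le\epsilon$), so $K'$ may exceed $K_{D_U}^\theta$ by much more than $\epsilon$; your claim $a-bK'\in(0,1)$ and the ensuing contraction argument need the case split the paper makes (when $K'-K_{D_U}^\theta>\frac{(|K'|+1)\beta}{b}$ the clips never activate, $C$ coincides with $C_{K'}^{\mathrm{unc}}$, and the lemma is vacuous; otherwise $|K'-K_{D_U}^\theta|=O_T(\upsilon)$ and your estimates apply).
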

        The proof of Lemma \ref{close_to_KDU} can be found in Appendix \ref{sec:proof_of_close_to_KDU}.

        We will use Lemma \ref{close_to_KDU} with  the $\epsilon$ defined in Equation \eqref{eq:eps_def}, $K' = K_{D_U}^\theta - \epsilon$, $\theta = \theta$, $\hat{\theta}_{\mathrm{L}\ref{close_to_KDU}} = \theta$, $x_0 = 0$, and $\beta = 0$. Choosing $\hat{\theta}_{\mathrm{L}\ref{close_to_KDU}} = \theta$ and $\beta = 0$ makes the $C$ in Lemma \ref{close_to_KDU} equivalent to a truncated linear controller. Then, Equation \eqref{eq:lemma47first} of Lemma \ref{close_to_KDU} gives that
        \begin{equation}\label{eq:compKprime4}
            |\bar{J}(\theta, C^{\theta}_{K'}) - \bar{J}(\theta, C_{K'}^{\mathrm{unc}})| \le O_T(\epsilon^2).
        \end{equation}
        Putting together Equations \eqref{eq:compKprime3} and \eqref{eq:compKprime4}, for small enough $\epsilon$ we have that
        \begin{align*}
            &\bar{J}(\theta, C^{\theta}_{K'}) - \bar{J}(\theta, C^{\theta}_{K_{D_U}^\theta})
            \\
            &=  \bar{J}(\theta, C^{\theta}_{K'})  - \bar{J}(\theta, C_{K'}^{\mathrm{unc}}) + \bar{J}(\theta, C_{K'}^{\mathrm{unc}}) - \bar{J}(\theta, C^{\theta}_{K_{D_U}^\theta}) \\
            &\le O_T\left(\epsilon^2 \right)-\Omega_T\left(\epsilon\right).  && \text{Equations \eqref{eq:compKprime3}, \eqref{eq:compKprime4}} \\
            &< 0. && \text{For small enough $\epsilon$}
        \end{align*}
    We have shown that $C_{K'}^{\theta}$ has lower cost than $C_{K_{D_U}^\theta}^{\theta}$, and therefore we can conclude that $K_{\mathrm{opt}}(\theta) < K_{D_U}^\theta$, proving the contrapositive and our desired result.
    \end{proof}

\subsection{Proof of Lemma \ref{j_bounded_from_0}}\label{sec:proof_of_j_bounded_from_0}
\begin{proof}
    By Lemma \ref{lemma:convexity}, $F_{\mathrm{opt}}(\theta)$ is the value of $K \in \left(\frac{a-1}{b}, \frac{a}{b} \right]$ that minimizes the function $\frac{q+rK^2}{1-(a-bK)^2}$ (note that we ignore the constant $\sigma_{\mathcal{D}}^2$ as this is a positive constant and does not change the minimization problem). Taking the derivative of this function and equating to $0$, we have that $F_{\mathrm{opt}}(\theta)$ is the solution to
    \[
        \frac{2Kr(1-(a-bK)^2) - 2b(a-bK)(q+rK^2)}{(1-(a-bK)^2)^2} = 0.
    \]
    Simplifying, we have
    \[
        abrK^2 + (b^2q + r - a^2r)K - abq = 0
    \]
    Applying the quadratic formula, we get that the positive root is
    \[
        F_{\mathrm{opt}}(\theta) = \frac{a^2r - b^2q - r + \sqrt{(b^2q + r - a^2r)^2 + 4a^2b^2qr}}{2abr}.
    \]
    We also observe that
    \[
        \left(a^2r + b^2q + r \right)^2 - \left(\sqrt{(b^2q + r - a^2r)^2 + 4a^2b^2qr}\right)^2 =  4a^2r^2,
    \]
    which implies that
    \begin{align*}
     &\left(a^2r + b^2q + r \right) - \left(\sqrt{(b^2q + r - a^2r)^2 + 4a^2b^2qr}\right)\\
     &= \frac{4a^2r^2}{ \left(a^2r + b^2q + r \right)+  \left(\sqrt{(b^2q + r - a^2r)^2 + 4a^2b^2qr}\right)}.
    \end{align*}
    Because $\underline{a} \ge a \ge \bar{a}$, $\underline{b} \ge b \ge \bar{b}$,  and $r > 0$, this implies that there exists a constant $c^{F1}_{\mathrm{L}\ref{j_bounded_from_0}} > 0$ such that 
    \begin{align*}
        \frac{a}{b} - F_{\mathrm{opt}}(\theta)  &= \frac{a^2r + b^2q + r - \sqrt{(b^2q + r - a^2r)^2 + 4a^2b^2qr}}{2abr} \\
        &= \frac{4a^2r^2}{2abr \left(a^2r + b^2q + r + \sqrt{(b^2q + r - a^2r)^2 + 4a^2b^2qr}\right) } \\
        &\ge \frac{4\underline{a}^2r^2}{2\bar{a}\bar{b}r \left(\bar{a}^2r + \bar{b}^2q + r + \sqrt{(\bar{b}^2q + r - \underline{a}^2r)^2 + 4\bar{a}^2\bar{b}^2qr}\right) } \\
        &:=  c^{F1}_{\mathrm{L}\ref{j_bounded_from_0}} \\
        &> 0. 
    \end{align*}
    Similarly, we have that
        \[
        \left(r(a-1)^2 + b^2q \right)^2 - \left(\sqrt{(b^2q + r - a^2r)^2 + 4a^2b^2qr}\right)^2 =  -4ar\left((a-1)^2r + b^2q\right).
    \]
    which implies that
    \begin{align*}
         &\left(r(a-1)^2 + b^2q \right) - \left(\sqrt{(b^2q + r - a^2r)^2 + 4a^2b^2qr}\right)\\
         &=  \frac{-4ar\left((a-1)^2r + b^2q\right)}{ \left(r(a-1)^2 + b^2q \right) + \left(\sqrt{(b^2q + r - a^2r)^2 + 4a^2b^2qr}\right)}.
    \end{align*}
    Because $a \ge \bar{a}$ and $r > 0$, this implies that there exists a constant $c^{F2}_{\mathrm{L}\ref{j_bounded_from_0}} > 0$ such that 
    \begin{align*}
        \frac{a-1}{b} - F_{\mathrm{opt}}(\theta)  &= \frac{r(a-1)^2 +  b^2q - \sqrt{(b^2q + r - a^2r)^2 + 4a^2b^2qr}}{2abr} \\
        &= \frac{-4ar\left((a-1)^2r + b^2q\right)}{2abr \left(r(a-1)^2 + b^2q  + \sqrt{(b^2q + r - a^2r)^2 + 4a^2b^2qr}\right) } \\
        &\le -c^{F2}_{\mathrm{L}\ref{j_bounded_from_0}} \\
        &< 0, 
    \end{align*}
    where the constant $C^{F2}_{\mathrm{L}\ref{j_bounded_from_0}}$ depends on $\bar{a}, \underline{a}, \bar{b}, \underline{b}$.
    Taking $c^F_{\mathrm{L}\ref{j_bounded_from_0}} = \min(c^{F1}_{\mathrm{L}\ref{j_bounded_from_0}}, c^{F2}_{\mathrm{L}\ref{j_bounded_from_0}})$, we have that
    \begin{equation}\label{eq:Fbound_from_0}
        c^F_{\mathrm{L}\ref{j_bounded_from_0}} < a - bF_{\mathrm{opt}}(\theta) < 1 -  c^F_{\mathrm{L}\ref{j_bounded_from_0}}.
    \end{equation}
    To bound $K_{\mathrm{opt}}(\theta, T)$ away from $0$ we need the following lemma:
    \begin{lemma}\label{J_implies_K} 
        Under Assumptions \ref{assum:constraints}--\ref{assum:initial}, for any $\theta \in \Theta$, if $F_{\mathrm{opt}}(\theta) \ge K_{D_U}^\theta$, then $K_{\mathrm{opt}}(\theta) = F_{\mathrm{opt}}(\theta)$.
    \end{lemma}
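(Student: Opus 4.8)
The plan is to reduce the statement to the classical fact that the unconstrained, infinite‑horizon, average‑cost LQR problem with dynamics $\theta$ is solved optimally by the linear controller $C^{\mathrm{unc}}_{F_{\mathrm{opt}}(\theta)}$ \emph{over all policies}, not merely over linear ones. Given that, the truncated linear controller $C^{\theta}_{F_{\mathrm{opt}}(\theta)}$ is a legal policy, hence cannot beat it, and the hypothesis $F_{\mathrm{opt}}(\theta)\ge K_{D_U}^\theta$ is exactly what makes $C^{\theta}_{F_{\mathrm{opt}}(\theta)}$ equal to $C^{\mathrm{unc}}_{F_{\mathrm{opt}}(\theta)}$.

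First I would record the geometric consequence of the hypothesis. Set $\gamma^\ast := a-bF_{\mathrm{opt}}(\theta)$; by Lemma~\ref{j_bounded_from_0}, $\gamma^\ast\in(0,1)$, and $F_{\mathrm{opt}}(\theta)\ge K_{D_U}^\theta$ together with Definition~\ref{def:Kdu_theta} gives $\gamma^\ast \le a-bK_{D_U}^\theta = \tfrac{D_U}{D_U+\bar{w}}$. As in the proof of Lemma~\ref{K_equals_J} (the observation that for gains $\ge K_{D_U}^\theta$ the unconstrained and truncated controllers agree on any trajectory lying in $[D_L-\bar{w},D_U+\bar{w}]$), the trajectory of $C^{\mathrm{unc}}_{F_{\mathrm{opt}}(\theta)}$ started at $x_0=0$ satisfies $|x_t|\le \tfrac{\bar{w}}{1-\gamma^\ast}$, and the inequality $\gamma^\ast \le \tfrac{D_U}{D_U+\bar{w}} \le \tfrac{|D_L|}{|D_L|+\bar{w}}$ (using the normalization $D_U\le|D_L|$) is precisely $\tfrac{\bar{w}}{1-\gamma^\ast}\le \min\!\big(\tfrac{D_U}{\gamma^\ast},\tfrac{|D_L|}{\gamma^\ast}\big)$, so this trajectory never leaves the linear region of $C^{\theta}_{F_{\mathrm{opt}}(\theta)}$. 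Hence
\[
  \bar{J}\big(\theta, C^{\theta}_{F_{\mathrm{opt}}(\theta)}\big) = \bar{J}\big(\theta, C^{\mathrm{unc}}_{F_{\mathrm{opt}}(\theta)}\big).
\]

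Second I would invoke LQR optimality. The average‑cost Bellman equation for $x_{t+1}=ax_t+bu_t+w_t$ with per‑step cost $qx^2+ru^2$ admits the quadratic relative value function $h(x)=P^\ast x^2$, where $P^\ast$ is the stabilizing solution of the scalar discrete Riccati equation, and the per‑step minimization $\min_u\{qx^2+ru^2+P^\ast(ax+bu)^2\}$ is attained at a linear gain; the standard telescoping/verification argument then gives $\bar{J}(\theta,C)\ge \sigma_{\mathcal{D}}^2 P^\ast$ for every controller $C$, while $\sigma_{\mathcal{D}}^2 P^\ast = \bar{J}(\theta,C^{\mathrm{unc}}_{F_{\mathrm{opt}}(\theta)})$ since the minimizer of $\tfrac{\sigma_{\mathcal{D}}^2(q+rK^2)}{1-(a-bK)^2}$ from Lemma~\ref{lemma:convexity} is exactly the Riccati gain. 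Applying this with $C=C^{\theta}_K$ for arbitrary $K\in[\tfrac{a-1}{b},\tfrac{a}{b}]$ and combining with the display above yields $\bar{J}(\theta,C^{\theta}_K)\ge \bar{J}(\theta,C^{\theta}_{F_{\mathrm{opt}}(\theta)})$; since $F_{\mathrm{opt}}(\theta)\in(\tfrac{a-1}{b},\tfrac{a}{b})$ by Lemma~\ref{j_bounded_from_0}, $F_{\mathrm{opt}}(\theta)$ minimizes $K\mapsto \bar{J}(\theta,C^{\theta}_K)$ on the admissible interval. For uniqueness: when $K\ge K_{D_U}^\theta$ the objective equals the strictly convex $\bar{J}(\theta,C^{\mathrm{unc}}_K)$ of Lemma~\ref{lemma:convexity}, strictly larger than its value at $F_{\mathrm{opt}}(\theta)$ unless $K=F_{\mathrm{opt}}(\theta)$; when $K<K_{D_U}^\theta$, the chain $x_{t+1}=\mathrm{clip}\big((a-bK)x_t,D_L,D_U\big)+w_t$ puts positive stationary mass on the linear region, where $C^{\theta}_K$ applies the gain $-Kx\neq -F_{\mathrm{opt}}(\theta)x$, so $C^{\theta}_K$ differs from the unique optimal policy on a set of positive stationary measure and the inequality is strict. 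Therefore $K_{\mathrm{opt}}(\theta)=F_{\mathrm{opt}}(\theta)$.

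The main obstacle is making the average‑cost LQR optimality statement fully rigorous for general (here bounded sub‑Gaussian) noise: one must argue that non‑stabilizing policies incur infinite cost and that for stabilizing policies the boundary term $P^\ast\mathbb{E}[x_T^2]/T\to 0$, so the telescoping bound $\tfrac1T\mathbb{E}\big[\sum_{t<T}(qx_t^2+ru_t^2)\big]\ge P^\ast\sigma_{\mathcal{D}}^2 - \tfrac1T P^\ast\mathbb{E}[x_T^2]$ passes to the limit; this is classical but should be stated carefully. The only other point requiring care is the tie‑breaking bookkeeping in the last step, which is routine given Lemmas~\ref{lemma:convexity} and~\ref{j_bounded_from_0}.
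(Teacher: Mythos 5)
Your proposal is correct and follows essentially the same route as the paper's (three-line) proof: both rest on the classical fact that the globally optimal unconstrained average-cost LQR policy is the linear controller $C^{\mathrm{unc}}_{F_{\mathrm{opt}}(\theta)}$ (the paper cites Anderson--Moore for this), together with the observation that the hypothesis $F_{\mathrm{opt}}(\theta)\ge K_{D_U}^{\theta}$ forces the trajectory of $C^{\mathrm{unc}}_{F_{\mathrm{opt}}(\theta)}$ to remain in the linear region, so $C^{\theta}_{F_{\mathrm{opt}}(\theta)}$ and $C^{\mathrm{unc}}_{F_{\mathrm{opt}}(\theta)}$ coincide and the latter's global optimality transfers to the truncated class. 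Your write-up simply makes explicit the trajectory-containment computation and the uniqueness bookkeeping that the paper leaves implicit.
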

    \begin{proof}
        If $F_{\mathrm{opt}}(\theta) \ge K_{D_U}^\theta$, then $C_{F_{\mathrm{opt}}(\theta)}^{\mathrm{unc}} = C_{F_{\mathrm{opt}}(\theta)}^{\theta}$, i.e. the unconstrained linear controller for $F_{\mathrm{opt}}(\theta)$ is the same as the constrained linear controller for $F_{\mathrm{opt}}(\theta)$. Therefore, $C^{\mathrm{unc}}_{F_{\mathrm{opt}}(\theta)}$ is in the set of constrained controllers. Because the optimal unconstrained controller is linear \cite{anderson2007optimal}, $C^{\mathrm{unc}}_{F_{\mathrm{opt}}(\theta)}$ is the lowest cost unconstrained controller, and therefore it is also the lowest cost constrained controller.
    \end{proof}

   By Lemma \ref{J_implies_K} and the contrapositive of Lemma \ref{K_equals_J}, either $K_{\mathrm{opt}}(\theta) = F_{\mathrm{opt}}(\theta)$ or $K_{\mathrm{opt}}(\theta) < K_{D_U}^\theta$. By Equation \eqref{eq:Fbound_from_0} and the fact that $a-bK_{D_U}^\theta = \frac{D_U}{D_U + \bar{w}}$, we can conclude that
    \[
        a-bK_{\mathrm{opt}}(\theta) \ge \min\left(\frac{D_U}{D_U + \bar{w}}, c^F_{\mathrm{L}\ref{j_bounded_from_0}}\right) > 0.
    \]
    Therefore, taking $c_{\mathrm{L}\ref{j_bounded_from_0}} = \min\left(\frac{D_U}{D_U + \bar{w}}, c^F_{\mathrm{L}\ref{j_bounded_from_0}}\right)$ we have the desired result.
\end{proof}

\subsection{Proof of Lemma \ref{eventually_go_to_boundary}}\label{sec:proof_of_eventually_go_to_boundary}    
    \begin{proof}
    The structure of this proof is as follows. The bulk of the proof is split into two key lemmas. We then combine these two lemmas to show the desired result.
    Define
    \[
        \tau := \left \lceil 8\left(\frac{2+c_{\mathrm{L}\ref{j_bounded_from_0}}}{c_{\mathrm{L}\ref{j_bounded_from_0}}}\norm{D}_{\infty} + 2\bar{w}\right)/\epsilon^* \right \rceil,
    \]
    where $\epsilon^*$ is from Lemma \ref{cswitch_prop}.
    Now, we will define
    \[
        X_j := \left\{ \forall t \in [j: j+\tau], w_t \ge \bar{w} - \epsilon^*/4\right\}.
    \]
    Note that $\P(X_j) = \left(\P_{w \sim \mathcal{D}}(w \ge \bar{w}-\epsilon^*/4)\right)^{\tau+1} := p_\epsilon$, and for sufficiently large $T$, $\tau \le \lceil \log(T) \rceil$, therefore this $X_j$ has the desired properties.
    
    \begin{lemma}\label{lemma:first_keyequation}
            Using the assumptions and notation of Lemma \ref{eventually_go_to_boundary}, conditional on  $E_2^s \cap \neg E_{\mathrm{E}\ref{eq:first_case}} \cap X_j$, there exists an $\ell \in [j:j+\tau]$ such that $u_\ell = u_\ell^{\mathrm{safeU}}$.
        \end{lemma}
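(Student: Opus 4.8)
The plan is to argue by contradiction: suppose that on the event $E_2^s \cap \neg E_{\mathrm{E}\ref{eq:first_case}} \cap X_j$ we have $u_\ell \ne u_\ell^{\mathrm{safeU}}$ for every $\ell \in [j:j+\tau]$. I would first record what this means. Since $u_\ell = \max(\min(C_s^{\mathrm{alg}}(x_\ell), u_\ell^{\mathrm{safeU}}), u_\ell^{\mathrm{safeL}})$, and $u_\ell \ne u_\ell^{\mathrm{safeU}}$ forces $C_s^{\mathrm{alg}}(x_\ell) < u_\ell^{\mathrm{safeU}}$, the control used is either $C_s^{\mathrm{alg}}(x_\ell) = C^{\hat\theta_s}_{K_{\mathrm{opt}}(\hat\theta_s)}(x_\ell)$ or $u_\ell^{\mathrm{safeL}}$; in particular the resulting conditional expected next state $a^*x_\ell + b^*u_\ell$ stays below $D_U$ (by safety/Lemma \ref{lemma:L_less_than_U}), but more usefully it stays below the value it would take if the controller never truncated at the upper boundary. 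The key point I want to extract is a uniform \emph{drift} bound: when the control is $C^{\hat\theta_s}_{K_{\mathrm{opt}}(\hat\theta_s)}(x_\ell) = -K_{\mathrm{opt}}(\hat\theta_s)x_\ell$ on the linear piece (which must happen once $x_\ell$ is below $\frac{D_U}{\hat a_s - \hat b_s K_{\mathrm{opt}}(\hat\theta_s)}$), the conditional expected next state is $(a^* - b^*K_{\mathrm{opt}}(\hat\theta_s))x_\ell$, and by Lemma \ref{j_bounded_from_0} together with $E_2^s$ (so $\hat a_s - \hat b_s K_{\mathrm{opt}}(\hat\theta_s) \le 1 - \Omega_T(1)$ and $\|\hat\theta_s - \theta^*\|_\infty = \tilde O_T(T^{-1/4})$), we get $a^* - b^*K_{\mathrm{opt}}(\hat\theta_s) \le 1 - \Omega_T(1)$ as well, with the constant essentially $c_{\mathrm{L}\ref{j_bounded_from_0}}$. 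Thus on the linear piece, $x_{\ell+1} = (a^*-b^*K_{\mathrm{opt}}(\hat\theta_s))x_\ell + w_\ell \ge (1 - c_{\mathrm{L}\ref{j_bounded_from_0}}/2) x_\ell + (\bar w - \epsilon^*/4)$ under $X_j$ (for $x_\ell \ge 0$; I will also need to handle the case where $x_\ell$ lands below the lower-boundary truncation region, where $u_\ell = u_\ell^{\mathrm{safeL}}$ pushes the expected next state to near $D_L < 0$, but then one step later the noise $\bar w - \epsilon^*/4 > 0$ pulls it back up and the analysis restarts).

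Next I would run this recursion forward over $\ell \in [j:j+\tau]$. The idea is that under $X_j$ the state receives a positive ``kick'' of size at least $\bar w - \epsilon^*/4$ every step, damped by a contraction factor $\le 1 - \Omega_T(1)$ on the linear piece; this is a stable linear recursion with a strictly positive forcing term, so it converges geometrically (at a $T$-independent rate) to a fixed point of size $\Theta_T(\frac{\bar w - \epsilon^*/4}{c_{\mathrm{L}\ref{j_bounded_from_0}}}) = \Theta_T(\norm{D}_\infty/c_{\mathrm{L}\ref{j_bounded_from_0}} + \bar w)$. The choice $\tau = \lceil 8(\frac{2+c_{\mathrm{L}\ref{j_bounded_from_0}}}{c_{\mathrm{L}\ref{j_bounded_from_0}}}\norm{D}_\infty + 2\bar w)/\epsilon^*\rceil$ is precisely calibrated so that after $\tau$ steps the lower bound on $x_{j+\tau}$ exceeds $\frac{D_U}{a^* - b^*K_{\mathrm{opt}}(\hat\theta_s)}$, using the crucial inequality from $\neg E_{\mathrm{E}\ref{eq:first_case}}$ / Lemma \ref{cswitch_prop} that $\epsilon^* = \bar w - (\frac{D_U}{a^*-b^*K_{\mathrm{opt}}(\theta^*)} - D_U) > 0$, i.e. $\frac{D_U}{a^*-b^*K_{\mathrm{opt}}(\theta^*)} < D_U + \bar w - \epsilon^*$, so that the target boundary sits strictly below where the noise-driven drift will carry the state. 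Once $x_\ell$ reaches (or exceeds) $\frac{D_U}{\hat a_s - \hat b_s K_{\mathrm{opt}}(\hat\theta_s)}$ — where I need to be careful converting between $\theta^*$ and $\hat\theta_s$ thresholds, absorbing the $\tilde O_T(T^{-1/4})$ discrepancy into the slack provided by $\epsilon^*/4$ and $\epsilon^*/8$ in $\tau$ — the definition \eqref{eq:def_of_trunc_linear} of the truncated linear controller forces $C^{\hat\theta_s}_{K_{\mathrm{opt}}(\hat\theta_s)}(x_\ell) = \frac{D_U - \hat a_s x_\ell}{\hat b_s}$, and then $u_\ell^{\mathrm{safeU}}$ (which enforces safety for all $\theta$ within $\epsilon_s$ of $\hat\theta_s$) is at most this, so $u_\ell = \min(C_s^{\mathrm{alg}}(x_\ell), u_\ell^{\mathrm{safeU}}) = u_\ell^{\mathrm{safeU}}$, contradicting our assumption.

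The main obstacle I anticipate is bookkeeping the slack: I must ensure the ``one-step return-to-boundary'' drift argument survives (i) the gap between the $\hat\theta_s$-threshold and the $\theta^*$-threshold, which is $\tilde O_T(T^{-1/4})$ and hence negligible for large $T$ against the $\Omega_T(1)$ quantities $\epsilon^*$, $c_{\mathrm{L}\ref{j_bounded_from_0}}$, and (ii) the possibility that the state wanders into the lower-truncation region first, requiring a short ``warm-up'' before the positive drift regime kicks in — this only costs a bounded number of extra steps, which is why $\tau$ carries the extra additive $2\bar w$ and the factor $\frac{2+c_{\mathrm{L}\ref{j_bounded_from_0}}}{c_{\mathrm{L}\ref{j_bounded_from_0}}}$ rather than just $\frac{1}{c_{\mathrm{L}\ref{j_bounded_from_0}}}$. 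All of the per-step inequalities are elementary; the content is entirely in choosing $\tau$ large enough (done) and verifying the geometric recursion reaches past the boundary within that many steps, which is a routine but careful induction on $\ell$.
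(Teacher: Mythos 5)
Your high-level strategy matches the paper's: assume $u_\ell \ne u_\ell^{\mathrm{safeU}}$ throughout $[j:j+\tau]$, note that then $u_\ell \ge -K_{\mathrm{opt}}(\hat\theta_s)x'_\ell$, use $X_j$ to force the state upward, and conclude it must cross $\frac{D_U}{a^*-b^*K_{\mathrm{opt}}(\hat\theta_s)}$ within $\tau$ steps, at which point the algorithm is forced to use $u^{\mathrm{safeU}}$. But the central quantitative step — the per-step drift — is wrong as you have set it up. You claim $a^*-b^*K_{\mathrm{opt}}(\hat\theta_s) \le 1-\Omega_T(1)$ "by Lemma \ref{j_bounded_from_0}," but that lemma only gives the \emph{lower} bound $a-bK_{\mathrm{opt}}(\theta) \ge c_{\mathrm{L}\ref{j_bounded_from_0}}$ (its upper bound $<1-c$ is stated only for $F_{\mathrm{opt}}$, not $K_{\mathrm{opt}}$); in the $\neg E_{\mathrm{E}\ref{eq:first_case}}$ regime one only knows $a^*-b^*K_{\mathrm{opt}}(\hat\theta_s) \le 1+\tilde O_T(T^{-1/4})$, and it can be arbitrarily close to $1$. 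Your recursion $x_{\ell+1}\ge(1-c_{\mathrm{L}\ref{j_bounded_from_0}}/2)x_\ell+(\bar w-\epsilon^*/4)$ is therefore unjustified (and internally inconsistent: for $x_\ell\ge 0$ a lower bound on $\rho x_\ell$ needs a lower bound $\rho\ge 1-c/2$, which contradicts the $\le 1-\Omega_T(1)$ you just asserted). Worse, the "geometric convergence to a fixed point at a $T$-independent rate" mechanism cannot be repaired by plugging in the true lower bound $\rho\ge c_{\mathrm{L}\ref{j_bounded_from_0}}/2$: the fixed point of $x\mapsto \rho_{\min}x+(\bar w-\epsilon^*/4)$ with a worst-case constant $\rho_{\min}$ decoupled from the threshold need not exceed $\frac{D_U}{\rho}$ (e.g.\ when $D_U$ is large relative to $\bar w$), so the lower-bounding sequence stalls below the target and no contradiction is reached.

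What actually closes the argument is a uniform \emph{additive} drift, and it requires keeping $\rho:=a^*-b^*K_{\mathrm{opt}}(\hat\theta_s)$ coupled to the threshold $\frac{D_U}{\rho}$ via Lemma \ref{good_chosen_K} (which is where $\neg E_{\mathrm{E}\ref{eq:first_case}}$, $E_2^s$, and the definition $\hat\theta_s=\argmax_\theta\, a-bK_{\mathrm{opt}}(\theta)$ enter): since $\frac{D_U}{\rho}-D_U\le \bar w-\epsilon^*/2$, one gets for every $x\in[D_L-\bar w,\frac{D_U}{\rho}]$ that $\rho x+(\bar w-\epsilon^*/4)-x=(\bar w-\epsilon^*/4)-(1-\rho)x\ge(\bar w-\epsilon^*/4)-(\frac{D_U}{\rho}-D_U)-\tilde O_T(T^{-1/4})\ge \epsilon^*/8$. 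This drift holds at \emph{every} non-$u^{\mathrm{safeU}}$ step (including the lower-truncation case, since $u^{\mathrm{safeL}}\ge -K_{\mathrm{opt}}(\hat\theta_s)x$), so no "warm-up''/restart case analysis is needed; the state gains at least $\epsilon^*/8$ per step, the total distance to cover is at most $\frac{2+c_{\mathrm{L}\ref{j_bounded_from_0}}}{c_{\mathrm{L}\ref{j_bounded_from_0}}}\norm{D}_\infty+2\bar w$, and that ratio is exactly how $\tau$ is calibrated. You should replace your fixed-point/contraction argument with this additive-drift computation; without it the proof does not go through.
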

    \begin{proof}
        We will first show that conditional on $E_2^s \cap \neg E_{\mathrm{E}\ref{eq:first_case}}$, for any value of $x$ satisfying $D_L - \bar{w} \le x \le \frac{D_U}{a^*-b^*K_{\mathrm{opt}}(\hat{\theta}_s)}$, and for sufficiently large $T$, if $w \ge \bar{w} - \epsilon^*/4$, then
        \begin{equation}\label{eq:eventually2}
            (a^*-b^*K_{\mathrm{opt}}(\hat{\theta}_s))x + w \ge x+\frac{\epsilon^*}{8}.
        \end{equation}        
        Under event $E^s_2$, $\norm{\theta^* - \hat{\theta}_s}_{\infty} \le \tilde{O}_T(T^{-1/4})$, therefore under event $E^s_2$ we have the following results:
        \begin{align*}
            a^*-b^*K_{\mathrm{opt}}(\hat{\theta}_s) &\ge \hat{a}_s-\hat{b}_sK_{\mathrm{opt}}(\hat{\theta}_s) -  \tilde{O}_T(T^{-1/4}) && \text{$\norm{\theta^* - \hat{\theta}_s}_{\infty} \le \tilde{O}_T(T^{-1/4})$}\\
            &\ge  c_{\mathrm{L}\ref{j_bounded_from_0}} - \tilde{O}_T(T^{-1/4})&& \text{Lemma \ref{j_bounded_from_0}} \\
            &\ge   \frac{c_{\mathrm{L}\ref{j_bounded_from_0}}}{2} && \text{suff large $T$} \numberthis \label{eq:45_lower}
        \end{align*}
        and
        \begin{align*}
            a^*-b^*K_{\mathrm{opt}}(\hat{\theta}_s) &\le \hat{a}_s-\hat{b}_sK_{\mathrm{opt}}(\hat{\theta}_s) +  \tilde{O}_T(T^{-1/4}) && \text{$\norm{\theta^* - \hat{\theta}_s}_{\infty} \le \tilde{O}_T(T^{-1/4})$}\\
            &\le 1 + \tilde{O}_T(T^{-1/4}).&& \text{Lemma \ref{j_bounded_from_0}} \numberthis \label{eq:45_upper}
        \end{align*}        
        Equation \eqref{eq:45_lower} implies that for sufficiently large $T$, 
        \begin{equation}\label{eq:probspec}
            \frac{D_U}{a^*-b^*K_{\mathrm{opt}}(\hat{\theta}_s)} \le \frac{2D_U}{c_{\mathrm{L}\ref{j_bounded_from_0}}} = O_T(1).
        \end{equation}
    To prove Equation \eqref{eq:eventually2}, we will need the following result.
    \begin{lemma}\label{good_chosen_K}
          Under Assumptions \ref{assum:constraints}--\ref{assum:initial} and \ref{assum_thm4}, conditional on event $E_2^s \cap \neg  E_{\mathrm{E}\ref{eq:first_case}}$ and for sufficiently large $T$,
         \[
            \frac{D_U}{a^*-b^*K_{\mathrm{opt}}(\hat{\theta}_s)}\le D_U + \bar{w} - \epsilon^*/2.
         \]
    \end{lemma}
        The proof of Lemma \ref{good_chosen_K} can be found in Appendix \ref{sec:proof_of_good_chosen_K}.

        Conditional on event $E_2^s \cap \neg  E_{\mathrm{E}\ref{eq:first_case}}$, for sufficiently large $T$, and for any $D_L - \bar{w} \le x \le \frac{D_U}{a^*-b^*K_{\mathrm{opt}}(\hat{\theta}_s)}$,
        {\fontsize{10}{10}
        \begin{align*}
            &(a^*-b^*K_{\mathrm{opt}}(\hat{\theta}_s))x + \bar{w}-\epsilon^*/4 \\
            &= D_U+ \left(x - \frac{D_U}{a^*-b^*K_{\mathrm{opt}}(\hat{\theta}_s)}\right)(a^*-b^*K_{\mathrm{opt}}(\hat{\theta}_s)) + \bar{w} - \epsilon^*/2 + \epsilon^*/4  \\
            &\ge  \frac{D_U}{a^*-b^*K_{\mathrm{opt}}(\hat{\theta}_s)} +  \left(x - \frac{D_U}{a^*-b^*K_{\mathrm{opt}}(\hat{\theta}_s)}\right)(a^*-b^*K_{\mathrm{opt}}(\hat{\theta}_s)) + \epsilon^*/4 && \text{Lemma \ref{good_chosen_K}}\\
            &\ge  \frac{D_U}{a^*-b^*K_{\mathrm{opt}}(\hat{\theta}_s)} + \left(x - \frac{D_U}{a^*-b^*K_{\mathrm{opt}}(\hat{\theta}_s)}\right)(1 + \tilde{O}_T(T^{-1/4}))+ \epsilon^*/4 && \text{Eq \eqref{eq:45_upper}, $x \le \frac{D_U}{a^*-b^*K_{\mathrm{opt}}(\hat{\theta}_s)}$} \\
            &= x + \tilde{O}_T\left(T^{-1/4}\left(x - \frac{D_U}{a^*-b^*K_{\mathrm{opt}}(\hat{\theta}_s)}\right)\right)+ \epsilon^*/4  \\
            &\ge x - \tilde{O}_T\left(T^{-1/4}\left(|D_L| + \bar{w} + \frac{D_U}{a^*-b^*K_{\mathrm{opt}}(\hat{\theta}_s)}\right)\right)+ \epsilon^*/4 
 && \text{ $D_L - \bar{w} \le x \le \frac{D_U}{a^*-b^*K_{\mathrm{opt}}(\hat{\theta}_s)}$}\\
            &\ge x - \tilde{O}_T\left(T^{-1/4}\right)+ \epsilon^*/4 && \text{Eq \eqref{eq:probspec}, Assumption \ref{assum_thm4}}  \\
            &\ge x + \epsilon^*/8. && \text{For sufficiently large $T$}.
        \end{align*}
        }
        This in turn implies the statement containing Equation \eqref{eq:eventually2}.
        
        Recall that $u_i$ is the control at time $i$ of Algorithm \ref{alg:cap3} and $x'_{i}$ is the state of Algorithm \ref{alg:cap3} at time $i$. Under event $\neg E_{\mathrm{E}\ref{eq:first_case}}$, for any $i \in [T_s+1: T_{s+1}]$, if $u_{i-1} \ne u_{i-1}^{\mathrm{safeU}}$, then the control at time $i-1$ is either $u_{i-1} = -K_{\mathrm{opt}}(\hat{\theta}_s)x'_{i-1}$ or $u_{i-1} = u_{i-1}^{\mathrm{safeL}} \ge -K_{\mathrm{opt}}(\hat{\theta}_s)x'_{i-1}$. Therefore, under event $\neg E_{\mathrm{E}\ref{eq:first_case}}$, if $u_{i-1} \ne u_{i-1}^{\mathrm{safeU}}$ then
        \begin{equation}\label{eq:ui1}
                u_{i-1} \ge -K_{\mathrm{opt}}(\hat{\theta}_s)x'_{i-1}.
        \end{equation}
        Combining Equations \eqref{eq:eventually2} and \eqref{eq:ui1} gives that for any $i \in [T_s+1: 2T_s]$, conditional on the event  $\{u_{i-1} \ne u_{i-1}^{\mathrm{safeU}}\} \cap \left\{D_L - \bar{w} \le x'_{i-1} \le \frac{D_U}{a^*-b^*K_{\mathrm{opt}}(\hat{\theta}_s)}\right\} \cap  E_2^s \cap \neg E_{\mathrm{E}\ref{eq:first_case}} \cap X_j$, 
        \begin{align*}
            x'_i &= a^*x'_{i-1} + b^*u_{i-1} + w_{i-1} \\
            &\ge a^*x'_{i-1} - b^*K_{\mathrm{opt}}(\hat{\theta}_s)x'_{i-1} + w_{i-1} && \text{Equation \eqref{eq:ui1}}  \\
            &= (a^* - b^*K_{\mathrm{opt}}(\hat{\theta}_s))x'_{i-1} + w_{i-1} \\
            &\ge x'_{i-1} + \frac{\epsilon^*}{8}. && \text{Equation \eqref{eq:eventually2}} \numberthis \label{eq:eventually1}
        \end{align*}       
        If the control at time $j-1$ is safe (which is guaranteed by construction of the algorithm under event $E_2^s$), then $x'_j \ge D_L - \bar{w}$. Therefore by Equation \eqref{eq:probspec}, 
        \begin{equation}\label{eq:distance_upperbound}
            \frac{D_U}{a^*-b^*{K_{\mathrm{opt}}(\hat{\theta}_s)}} - x'_j \le \frac{D_U}{a^*-b^*{K_{\mathrm{opt}}(\hat{\theta}_s)}} + |D_L| + \bar{w} \le  \frac{2D_U}{c_{\mathrm{L}\ref{j_bounded_from_0}}} + |D_L| + \bar{w} \le \frac{2+c_{\mathrm{L}\ref{j_bounded_from_0}}}{c_{\mathrm{L}\ref{j_bounded_from_0}}}\norm{D}_{\infty} + \bar{w} = O_T(1).
        \end{equation}
       By Equation \eqref{eq:eventually1}, conditional on $E_2^s \cap \neg E_{\mathrm{E}\ref{eq:first_case}} \cap X_j$ the state will increase by $\epsilon^*/8$ at each step $\ell$ if $D_L - \bar{w} \le x_\ell \le \frac{D_U}{a^*-b^*K_{\mathrm{opt}}(\hat{\theta}_s)}$ and $u_\ell \ne u_{\ell}^{\mathrm{safeU}}$. Furthermore, by Equation \eqref{eq:distance_upperbound}, if the state increases by at least $\frac{2+c_{\mathrm{L}\ref{j_bounded_from_0}}}{c_{\mathrm{L}\ref{j_bounded_from_0}}}\norm{D}_{\infty} + 2\bar{w}$ from $x'_j$, then the state will be greater than $\frac{D_U}{a^*-b^*{K_{\mathrm{opt}}(\hat{\theta}_s)}}$. Increasing $\frac{2+c_{\mathrm{L}\ref{j_bounded_from_0}}}{c_{\mathrm{L}\ref{j_bounded_from_0}}}\norm{D}_{\infty} + 2\bar{w}$ state in increments of at least $\epsilon^*/8$ takes at most $\left\lceil \frac{8(\frac{2+c_{\mathrm{L}\ref{j_bounded_from_0}}}{c_{\mathrm{L}\ref{j_bounded_from_0}}}\norm{D}_{\infty} + 2\bar{w})}{\epsilon^*} \right \rceil = \tau$ steps. Putting this all together, conditional on $E_2^s \cap \neg E_{\mathrm{E}\ref{eq:first_case}} \cap X_j$, either $u_\ell = u_\ell^{\mathrm{safeU}}$ for some $\ell \in [j: j+\tau]$ or $x'_{\ell} \ge \frac{D_U}{a^*-b^*K_{\mathrm{opt}}(\hat{\theta}_s)}$ for some $\ell \in [j: j+\tau]$. Both of these alternatives imply that $u_{\ell} = u_{\ell}^{\mathrm{safeU}}$ for some $\ell \in [j:j+\tau]$, because if $x_\ell' \ge \frac{D_U}{a^*-b^*K_{\mathrm{opt}}(\hat{\theta}_s)}$, then by construction of the algorithm, $u_{\ell} = u_{\ell}^{\mathrm{safeU}}$. This is the desired result for this lemma.
\end{proof}
        The next key result is the following lemma.

        \begin{lemma}\label{lemma:second_keyequation}
            Using the notation and assumptions of the proof of Lemma \ref{eventually_go_to_boundary}, for sufficiently large $T$ and any $\ell \in [j: j+\tau]$, conditional on $\{u_{\ell} = u_{\ell}^{\mathrm{safeU}}\} \cap E_2^s \cap \neg E_{\mathrm{E}\ref{eq:first_case}} \cap X_j$, $\ell+1 \in S''_{T_{s+1}}$.
        \end{lemma}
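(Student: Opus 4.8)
The plan is to verify directly that $\ell+1$ satisfies the two defining conditions of $S''_{T_{s+1}}$, namely (i) $u_{\ell+1}=u_{\ell+1}^{\mathrm{safeU}}$ and (ii) $\P(u_{\ell+1}=u_{\ell+1}^{\mathrm{safeU}}\mid G_{\ell+1})\ge \P_{w\sim\mathcal{D}}(w\ge\bar w-3\epsilon^*/8)$ (the requirement $\ell+1<T_{s+1}$ is immediate from $\ell\le j+\tau$, $j<T_{s+1}-\lceil\log T\rceil$, and $\tau=O_T(1)<\lceil\log T\rceil$ for large $T$). Both (i) and (ii) will be deduced from a single deterministic \emph{trigger claim}: on the $G_{\ell+1}$-measurable event $\{u_\ell=u_\ell^{\mathrm{safeU}}\}\cap E_2^s\cap\neg E_{\mathrm{E}\ref{eq:first_case}}$, if the noise satisfies $w_\ell\ge \bar w-3\epsilon^*/8$, then $u_{\ell+1}=u_{\ell+1}^{\mathrm{safeU}}$. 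Granting the trigger claim: condition (i) follows by additionally conditioning on $X_j$ and using $\ell\in[j:j+\tau]$, which forces $w_\ell\ge\bar w-\epsilon^*/4\ge\bar w-3\epsilon^*/8$; condition (ii) follows because, since $\ell\ge T_s\ge T_0$, the event $\{u_\ell=u_\ell^{\mathrm{safeU}}\}\cap E_2^s\cap\neg E_{\mathrm{E}\ref{eq:first_case}}$ is $G_{\ell+1}$-measurable while $w_\ell$ is independent of $G_{\ell+1}$, so on that event the trigger claim gives $\P(u_{\ell+1}=u_{\ell+1}^{\mathrm{safeU}}\mid G_{\ell+1})\ge\P(w_\ell\ge\bar w-3\epsilon^*/8\mid G_{\ell+1})=\P_{w\sim\mathcal{D}}(w\ge\bar w-3\epsilon^*/8)$, and this inequality only improves after the further conditioning on $X_j$.

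To prove the trigger claim I would first control the post-$\mathrm{safeU}$ expected state. Since $u_\ell^{\mathrm{safeU}}$ is by construction the largest control keeping $ax'_\ell+bu\le D_U$ for every $\theta$ in the uncertainty ball around $\hat\theta_s$, and under $E_2^s$ that ball has radius $\tilde O_T(T^{-1/4})$ and contains a point within $\tilde O_T(T^{-1/4})$ of $\theta^*$, while $|x'_\ell|,|u_\ell^{\mathrm{safeU}}|=O_T(1)$ (this is precisely where boundedness of $\mathcal{D}$, Assumption \ref{assum_thm4}, replaces $E_1$), linearity in $(a,b)$ gives $a^*x'_\ell+b^*u_\ell^{\mathrm{safeU}}\ge D_U-\tilde O_T(T^{-1/4})$. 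Hence $x'_{\ell+1}=a^*x'_\ell+b^*u_\ell^{\mathrm{safeU}}+w_\ell$, so on $\{w_\ell\ge\bar w-3\epsilon^*/8\}$ we get $x'_{\ell+1}\ge D_U+\bar w-3\epsilon^*/8-\tilde O_T(T^{-1/4})$.

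Next I would compare $x'_{\ell+1}$ to $\frac{D_U}{\hat a_s-\hat b_sK_{\mathrm{opt}}(\hat\theta_s)}$. By Lemma \ref{good_chosen_K}, $\frac{D_U}{a^*-b^*K_{\mathrm{opt}}(\hat\theta_s)}\le D_U+\bar w-\epsilon^*/2$ under $E_2^s\cap\neg E_{\mathrm{E}\ref{eq:first_case}}$ (this is where $\neg E_{\mathrm{E}\ref{eq:first_case}}$ and $\epsilon^*>0$ from Lemma \ref{cswitch_prop} are used); since $\hat a_s-\hat b_sK_{\mathrm{opt}}(\hat\theta_s)$ and $a^*-b^*K_{\mathrm{opt}}(\hat\theta_s)$ are both bounded below by $c_{\mathrm{L}\ref{j_bounded_from_0}}>0$ (Lemma \ref{j_bounded_from_0}) and differ by $\tilde O_T(T^{-1/4})$, it follows that $\frac{D_U}{\hat a_s-\hat b_sK_{\mathrm{opt}}(\hat\theta_s)}\le D_U+\bar w-\epsilon^*/2+\tilde O_T(T^{-1/4})$. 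Comparing the two bounds, for $T$ large enough that all the $\tilde O_T(T^{-1/4})$ error terms are below $\epsilon^*/16$ we obtain $x'_{\ell+1}>\frac{D_U}{\hat a_s-\hat b_sK_{\mathrm{opt}}(\hat\theta_s)}$, i.e.\ $(\hat a_s-\hat b_sK_{\mathrm{opt}}(\hat\theta_s))x'_{\ell+1}>D_U$. By the definition \eqref{eq:def_of_trunc_linear} of the truncated linear controller (and since, under $\neg E_{\mathrm{E}\ref{eq:first_case}}$, $C_s^{\mathrm{alg}}=C^{\hat\theta_s}_{K_{\mathrm{opt}}(\hat\theta_s)}$), this puts $x'_{\ell+1}$ in the upper-truncation branch, so $C_s^{\mathrm{alg}}(x'_{\ell+1})=\frac{D_U-\hat a_sx'_{\ell+1}}{\hat b_s}$. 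Because $\hat\theta_s$ lies in the ball defining $u_{\ell+1}^{\mathrm{safeU}}$, we have $\hat a_sx'_{\ell+1}+\hat b_su_{\ell+1}^{\mathrm{safeU}}\le D_U$, hence $u_{\ell+1}^{\mathrm{safeU}}\le\frac{D_U-\hat a_sx'_{\ell+1}}{\hat b_s}=C_s^{\mathrm{alg}}(x'_{\ell+1})$; combining with $u_{\ell+1}^{\mathrm{safeL}}\le u_{\ell+1}^{\mathrm{safeU}}$ from Lemma \ref{lemma:L_less_than_U} (using that $E_1$ holds automatically for large $T$ under Assumption \ref{assum_thm4}), the rule $u_{\ell+1}=\max(\min(C_s^{\mathrm{alg}}(x'_{\ell+1}),u_{\ell+1}^{\mathrm{safeU}}),u_{\ell+1}^{\mathrm{safeL}})$ collapses to $u_{\ell+1}=u_{\ell+1}^{\mathrm{safeU}}$, establishing the trigger claim.

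The hard part is not conceptual but a careful accounting of slack: one must check that the various $\tilde O_T(T^{-1/4})$ terms (the gap $|a^*x'_\ell+b^*u_\ell^{\mathrm{safeU}}-D_U|$ and the perturbation of $\frac{D_U}{a-bK_{\mathrm{opt}}(\hat\theta_s)}$ in passing from $\theta^*$ to $\hat\theta_s$) genuinely vanish relative to the fixed constant $\epsilon^*$, so that the $\epsilon^*/4$ margin supplied by $X_j$ and the $\epsilon^*/2$ margin supplied by Lemma \ref{good_chosen_K} leave exactly the threshold $3\epsilon^*/8$; that every state and control involved is $O_T(1)$ without separately invoking $E_1$ (which is why boundedness of $\mathcal{D}$ is needed here); and that the measurability/independence setup for the conditional-probability bound in (ii) is handled correctly. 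All of these are routine once the argument above is in place, and the substantive inputs are Lemmas \ref{good_chosen_K}, \ref{j_bounded_from_0}, and \ref{cswitch_prop}.
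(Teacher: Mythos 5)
Your proposal is correct and follows essentially the same route as the paper's proof: both use the near-tightness of $u_\ell^{\mathrm{safeU}}$ under $E_2^s$ to get $a^*x'_\ell+b^*u_\ell\ge D_U-\tilde O_T(T^{-1/4})$, combine the $\epsilon^*/2$ margin from Lemma \ref{good_chosen_K} with the $\epsilon^*/4$ margin from $X_j$ to push $x'_{\ell+1}$ past the truncation threshold, and then exploit the independence of $w_\ell$ from $G_{\ell+1}$ to get the conditional-probability lower bound $\P_{w\sim\mathcal{D}}(w\ge\bar w-3\epsilon^*/8)$. Your only deviation is cosmetic but welcome: you compare against $D_U/(\hat a_s-\hat b_sK_{\mathrm{opt}}(\hat\theta_s))$ and spell out why exceeding it forces $u_{\ell+1}=u_{\ell+1}^{\mathrm{safeU}}$ (truncation branch plus $u^{\mathrm{safeU}}_{\ell+1}\le C_s^{\mathrm{alg}}(x'_{\ell+1})$ at $\theta=\hat\theta_s$), a step the paper asserts ``by construction'' with the threshold written in terms of $a^*-b^*K_{\mathrm{opt}}(\hat\theta_s)$.
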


        \begin{proof}
        Suppose $\ell \in \left[j: j +\tau \right]$. Under event $E_2^s$ the control at step $\ell-1$ is safe, and therefore by the same logic as in Equation \eqrefgen{eq:enforcingsafety3}, for sufficiently large $T$ we have that 
        \begin{equation}\label{eq:xell_bound}
            D_U - \epsilon^*/8 \le D_U - \tilde{O}_T(T^{-1/4}) \le D_U - 4B_x\epsilon_s \le a^*x'_\ell + b^*u_\ell^{\mathrm{safeU}}.
        \end{equation}
        Therefore, if $u_{\ell} = u_{\ell}^{\mathrm{safeU}}$, then
        \begin{equation}\label{eq:xell_bound2}
            a^*x'_\ell + b^*u_\ell \ge D_U - \epsilon^*/8.
        \end{equation}
        Therefore, conditional on $\{u_{\ell} = u_{\ell}^{\mathrm{safeU}}\} \cap E_2^s \cap\neg E_{\mathrm{E}\ref{eq:first_case}} \cap X_j$,
        \begin{align*}
            x'_{\ell+1} &= a^*x'_\ell + b^*u_\ell + w_\ell \\
            &\ge D_U - \epsilon^*/8 + w_\ell  && \text{Equation \eqref{eq:xell_bound2}} \\
            &\ge \frac{D_U}{a^*-b^*K_{\mathrm{opt}}(\hat{\theta}_s)}  + 3\epsilon^*/8 + w_\ell - \bar{w} && \text{Lemma \ref{good_chosen_K}} \\
            &= \frac{D_U}{a^*-b^*K_{\mathrm{opt}}(\hat{\theta}_s)} + w_\ell - (\bar{w}- 3\epsilon^*/8) \\
            &\ge \frac{D_U}{a^*-b^*K_{\mathrm{opt}}(\hat{\theta}_s)} && \text{Event $X_j$} \numberthis \label{eq:second_sdoubleprime}
        \end{align*}

        We also recall again that if $x'_{\ell+1} \ge \frac{D_U}{a^*-b^*{K_{\mathrm{opt}}(\hat{\theta}_s)}}$, then $u_{\ell+1} = u_{\ell+1}^{\mathrm{safeU}}$. Therefore, we have shown that conditional on $\{u_{\ell} = u_{\ell}^{\mathrm{safeU}}\} \cap E_2^s \cap\neg E_{\mathrm{E}\ref{eq:first_case}} \cap X_j$, $u_{\ell+1} = u_{\ell+1}^{\mathrm{safeU}}$. Furthermore, we have  for any $G_{\ell+1}$ that satisfies $ \{u_{\ell} = u_{\ell}^{\mathrm{safeU}}\} \cap E_2^s \cap\neg E_{\mathrm{E}\ref{eq:first_case}}$,
        \begin{align*}
            &\P\left( u_{\ell+1} = u_{\ell+1}^{\mathrm{safeU}} \cond  G_{\ell+1} \right) \\
            &\ge \P\left(x'_{\ell+1} \ge \frac{D_U}{a^*-b^*K_{\mathrm{opt}}(\hat{\theta}_s)} \cond G_{\ell+1}\right)\\ 
            &= \P\left(a^*x'_\ell + b^*u_\ell + w_\ell \ge \frac{D_U}{a^*-b^*K_{\mathrm{opt}}(\hat{\theta}_s)} \cond G_{\ell+1}\right) \\ 
            &\ge \P\left(D_U - \epsilon^*/8 + w_\ell \ge \frac{D_U}{a^*-b^*K_{\mathrm{opt}}(\hat{\theta}_s)} \cond G_{\ell+1}\right)  && \text{Equation \eqref{eq:xell_bound2}}\\ 
            &= \P\left(w_\ell \ge \frac{D_U}{a^*-b^*K_{\mathrm{opt}}(\hat{\theta}_s)} - D_U + \epsilon/8  \cond G_{\ell+1}\right) \\ 
            &\ge \P\left(w_\ell \ge \bar{w} - \epsilon^*/2 + \epsilon^*/8  \cond G_{\ell+1}\right) && \text{Lemma \ref{good_chosen_K}} \\ 
            &= \P_{w \sim \mathcal{D}}(w \ge \bar{w} - 3\epsilon^*/8). \numberthis \label{eq:utou}
        \end{align*}

        By Definition of $S''_t$, Equations \eqref{eq:second_sdoubleprime} and \eqref{eq:utou} imply the desired result that conditional on $ \{u_{\ell} = u_{\ell}^{\mathrm{safeU}}\} \cap E_2^s \cap\neg E_{\mathrm{E}\ref{eq:first_case}} \cap X_j$, we have that $\ell+1 \in S''_{T_{s+1}}$.

        \end{proof}

        Putting together the two lemmas, we have that conditional on $E_2^s \cap \neg E_{\mathrm{E}\ref{eq:first_case}} \cap X_j$, there exists an $\ell \in [j:j+\tau]$ such that $u_\ell = u_\ell^{\mathrm{safeU}}$, and for any $\ell \in [j: j+\tau]$, conditional on $\{u_{\ell} = u_{\ell}^{\mathrm{safeU}}\} \cap E_2^s \cap \neg E_{\mathrm{E}\ref{eq:first_case}} \cap X_j$, $\ell+1 \in S''_{T_{s+1}}$. Combining these two lemmas gives that conditional on $E_2^s \cap \neg E_{\mathrm{E}\ref{eq:first_case}} \cap X_j$, there exists an $\ell \in [j: j+\tau+1]$ such that $\ell \in S''_{T_{s+1}}$. For sufficiently large $T$, $\tau+1 \le \lceil \log(T) \rceil $, and therefore this is  exactly the desired result.
    \end{proof}
\subsection{Proof of Lemma \ref{lemma:convexity}}\label{sec:proof_of_lemma:convexity}

\begin{proof}
    Let $x_0,x_1,...,$ be the series of states when using controller $C_K^{\mathrm{unc}}$ under dynamics $\theta$ with $x_0 = 0$. Then we have the recursive relationship that $x_0 = 0$ and $x_{i+1} = (a-bK)x_{i} + w_i$ for all $i \ge 0$.  Using this recursive relationship, we have that
    \begin{equation}\label{eq:xexpr}
        x_t = \sum_{i=0}^{t-1} w_i(a-bK)^{t-1-i}.
    \end{equation}
    If $K = \frac{a-1}{b}$, then $a-bK = 1$. This implies that $ x_t^2 \longrightarrow \infty$, and therefore $\bar{J}(\theta, C_K^{\mathrm{unc}}, T) = \infty$.
    
     For the rest of this proof, assume $K \in (\frac{a-1}{b}, \frac{a}{b}]$. Recall that $u_i = -Kx_i$ for all $i \ge 0$. Define $\rho = (a-bK)^2$. Using the above expression for $x_t$, we have that
    \begin{align*}
        \bar{J}(\theta, C_K^{\mathrm{unc}}, T) &= \frac{1}{T}\E\left[qx_T^2 + \sum_{t=0}^{T-1} qx_t^2 + ru_t^2\right] \\
        &= \frac{1}{T}\left(q\E[x_T^2] + \sum_{t=1}^{T-1} (q+rK^2)\E[x_t^2] \right) && \text{[$x_0 = u_0 = 0$]}\\
        &= -\frac{rK^2\E[X_T^2]}{T} +   \frac{1}{T}\left(\sum_{t=1}^{T} (q+rK^2)\E[x_t^2] \right).
    \end{align*}
    Furthermore, we have
    \begin{align*}
        &\frac{1}{T}\left(\sum_{t=1}^{T} (q+rK^2)\E[x_t^2] \right) \\
        &= \frac{1}{T}\sum_{t=1}^T (q+rK^2) \E\Big[\Big(\sum_{i=0}^{t-1} w_i(a-bK)^{t-1-i}\Big)^2\Big] && \text{Equation \eqref{eq:xexpr}}\\
        &= \frac{1}{T}\sum_{t=1}^T (q+rK^2) \E\Big[\sum_{i=0}^{t-1} \sum_{j=0}^{t-1} w_iw_j (a-bK)^{t-1-i}(a-bK)^{t-1-j}\Big]\\
        &= \frac{1}{T}\sum_{t=1}^T (q+rK^2) \sum_{i=0}^{t-1}\sigma_{\mathcal{D}}^2 (a-bK)^{2(t-1-i)}\\
        &= \frac{\sigma_{\mathcal{D}}^2}{T}\sum_{t=1}^T (q + rK^2)\sum_{i=0}^{t-1} (a-bK)^{2i} \\
        &= \frac{\sigma_{\mathcal{D}}^2(q + rK^2)}{T}\sum_{t=1}^T \sum_{i=0}^{t-1} \rho^i \\ 
        &= \frac{\sigma_{\mathcal{D}}^2(q + rK^2)}{T}\sum_{t=1}^T \frac{1 -  \rho^t}{1- \rho}\\ 
        &= \frac{\sigma_{\mathcal{D}}^2(q + rK^2)}{T(1- \rho)}\left(T - \sum_{t=0}^{T-1} \rho^t\right)\\ 
        &= \frac{\sigma_{\mathcal{D}}^2(q + rK^2)}{1- \rho}\Big(1 - \frac{1- \rho^T}{T(1- \rho)}\Big). 
    \end{align*}
    By the same logic, we have that
    \begin{align*}
        \frac{rK^2\E[X_T^2]}{T} = \frac{rK^2\sigma_\mathcal{D}^2\frac{1-\rho^T}{1-\rho}}{T}.
    \end{align*}
    Therefore,
    \begin{align*}
        \bar{J}(\theta, C_K^{\mathrm{unc}}) &= \lim_{T \rightarrow \infty}   \bar{J}(\theta, C_K^{\mathrm{unc}},T) \\
        &=  \lim_{T \rightarrow \infty} - \frac{rK^2\sigma_\mathcal{D}^2\frac{1-\rho^T}{1-\rho}}{T} + \frac{\sigma_{\mathcal{D}}^2(q + rK^2)}{1- \rho}\Big(1 - \frac{1- \rho^T}{T(1- \rho)}\Big) \\
        &= \frac{\sigma_{\mathcal{D}}^2(q + rK^2)}{1-(a-bK)^2}.
    \end{align*}
    Now, we note the following derivatives:
\[
    \frac{d}{dK} \left(\frac{1}{1 - (a-bK)^2} \right)= \frac{2b(a-bK)}{(1 - (a-bK)^2)^2} 
\]
and
\[
     \frac{d}{dK} \left(\frac{K^2}{1-(a-bK)^2} \right) = \frac{2aK(1 - (a-bK))}{(1 - (a-bK)^2)^2}  .
\]
For $K \in (\frac{a-1}{b}, \frac{a}{b}]$, if $1-(a-bK) = c > 0$, then $1-(a-bK)^2 > c > 0$, and therefore these derivatives imply that
\begin{align*}
\left| \frac{d}{dK} \bar{J}(\theta, C_K^{\mathrm{unc}})\right| &= \left|\frac{d}{dK}  \frac{\sigma_{\mathcal{D}}^2(q + rK^2)}{1-(a-bK)^2} \right|\\
&= \left| \sigma_{\mathcal{D}}^2\left(q  \frac{2b(a-bK)}{(1 - (a-bK)^2)^2}+ r \frac{2aK(1 - (a-bK))}{(1 - (a-bK)^2)^2} \right) \right| \\
&\le \sigma_{\mathcal{D}}^2\left(q \frac{2b(a-bK)}{c^2}+ r \frac{2a|K|(1 - (a-bK))}{c^2} \right) \\
& < \infty.
\end{align*}
For all $K \in (\frac{a-1}{b}, \frac{a}{b}]$, we also have that
\[
    \frac{d^2}{dK^2} \left(\frac{1}{1 - (a-bK)^2} \right) =b^2 \Big(\frac{1}{(1 - (a-bK))^3} + \frac{1}{(1 + (a-bK))^3}\Big) > 0
\]
and
\[
    \frac{d^2}{dK^2} \left(\frac{K^2}{1-(a-bK)^2} \right) =b^2\Big(\frac{(a-1)^2}{(1 - (a-bK))^3} + \frac{(a+1)^2}{(1 + (a-bK))^3}\Big) > 0
\]
This implies that
\[
    \frac{d^2}{dK^2} \bar{J}(\theta, C_K^{\mathrm{unc}}) > 0.
\]
If $a-bK = 1-c < 1$, we also have that
\[
    \frac{d^2}{dK^2} \left(\frac{1}{1 - (a-bK)^2} \right) =b^2 \Big(\frac{1}{(1 - (a-bK))^3} + \frac{1}{(1 + (a-bK))^3}\Big) \le b^2\left(\frac{1}{c^3} + 1\right) < \infty
\]
and
\[
    \frac{d^2}{dK^2} \left(\frac{K^2}{1-(a-bK)^2} \right) =b^2\Big(\frac{(a-1)^2}{(1 - (a-bK))^3} + \frac{(a+1)^2}{(1 + (a-bK))^3}\Big) \le b^2\left(\frac{(a-1)^2}{c^3} + (a+1)^2\right)  < \infty.
\]
These two equations imply that for  $K \in (\frac{a-1}{b}, \frac{a}{b}]$,
\[
    \frac{d^2}{dK^2} \bar{J}(\theta, C_K^{\mathrm{unc}})  < \infty.
\]

\end{proof}

\subsection{Proof of Lemma \ref{close_to_KDU}}\label{sec:proof_of_close_to_KDU} 
\begin{proof}
    We first note the following bounds on $K'$ that we will use throughout this proof that come from the assumptions on $\epsilon$. For any $\theta' \in \hat{\theta}_{\mathrm{L}\ref{close_to_KDU}} \pm \beta$,
    \begin{align*}
       a' - b'K' &= a - bK_{D_U}^{\theta} + (a' - a) + b(K_{D_U}^\theta - K') + K'(b-b') \\
       &\le a-bK_{D_U}^\theta + b\epsilon +  \beta + \beta K'  \\
        &\le \frac{D_U}{\bar{w} + D_U} + \upsilon && \text{Def of $K_{D_U}^\theta$}\\
        &\le \frac{D_U + \bar{w}/2}{\bar{w} + D_U} &&\text{Equation \eqref{eq:def_of_upsilon}} \\
        &< 1. \numberthis \label{eq:boundfrom12}
    \end{align*}
    \begin{align*}
       a' - b'K' &\ge a-bK_{D_U}^\theta - b\epsilon -  \beta - \beta K'  \\
        &\ge \frac{D_U}{\bar{w} + D_U} -\upsilon && \text{Def of $K_{D_U}^\theta$}\\
        &\ge \frac{D_U/2}{\bar{w} + D_U} && \text{Equation \eqref{eq:def_of_upsilon}} \\
        &> 0.  \numberthis \label{eq:boundfrom0}
    \end{align*}

   Let $y_t$ be the state at time $t$ when using controller $C$ and starting at state $y_0 = x_0$ and $x_t$ be the state at time $t$ when using controller $C_{K'}^{\mathrm{unc}}$ and starting at state $x_0$. Define $d_t := |y_t - x_t|$. Define
    \begin{equation}
        \theta_m := \arg\max_{\norm{\theta' - \hat{\theta}_{\mathrm{L}\ref{close_to_KDU}}}_{\infty} \le \beta } a' - b'K'.
    \end{equation}
    Importantly, note that $\theta_m = \arg \min_{\theta' \in \hat{\theta}_{\mathrm{L}\ref{close_to_KDU}} \pm \beta} \frac{D_U}{a'-b'K'} = \arg \max_{\theta' \in \hat{\theta}_{\mathrm{L}\ref{close_to_KDU}} \pm \beta} \frac{D_L}{a'-b'K'}$. By construction this means that $C(y_t) = v_t^{\mathrm{safeU}}$ is used if and only if $y_t \ge \frac{D_{\mathrm{U}}}{a_m-b_mK'}$, and similarly $v_t^{\mathrm{safeL}}$ is used if and only if $y_t \le \frac{D_{\mathrm{L}}}{a_m-b_mK'}$.

    \begin{lemma}\label{lemma:prob_of_using_v}
        Define $H_t = (y_0,y_1,...,y_{t-1})$. Using the notation and assumptions in the proof of Lemma \ref{close_to_KDU}, for any $H_t$,
        \begin{equation}
        \P\left(C(y_t) = v_t^{\mathrm{safeU}} \cond H_t\right) = O_T(\upsilon) \cdot 1_{K' - K_{D_U}^\theta \le \frac{(|K'|+1)\beta}{b}}.
        \end{equation} 
        Furthermore, 
        \begin{equation}
            \P\left(C(y_{t}) = v_t^{\mathrm{safeL}} \cond H_t\right) = O_T(\upsilon) \cdot  1_{K' - K_{D_U}^\theta \le \frac{(|K'|+1)\beta}{b}}.
        \end{equation}
    \end{lemma}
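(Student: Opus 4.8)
The plan is to reduce the statement to a one‑step noise estimate. For $t\ge 1$, conditioned on $H_t$ the state is $y_t=\mu_{t-1}+w_{t-1}$ where $\mu_{t-1}:=ay_{t-1}+b\,C(y_{t-1})$ is $H_t$‑measurable and $w_{t-1}\sim\mathcal D$ is independent of $H_t$; since $\mathcal D$ has density bounded by $B_P$ and support in $[-\bar w,\bar w]$, the probability that $y_t$ lands in the narrow window where the upper cap $v_t^{\mathrm{safeU}}$ is active is at most $B_P$ times the width of that window, which I will show is $O_T(\upsilon)$ and vanishes entirely once $K'$ exceeds $K_{D_U}^\theta$ by more than the indicated threshold. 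The case $t=0$ is immediate: $y_0=x_0$ is deterministic and the bound on $|x_0|$ places it (below the extreme) on the side where neither cap is active, so the probability is $0$; in particular this is vacuous in the application to Lemma~\ref{K_equals_J}, where $x_0=0$.

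Step 1: characterize the caps. Using \eqref{eq:def_of_upsilon} together with the bounds \eqref{eq:boundfrom0}--\eqref{eq:boundfrom12}, for $\upsilon$ small the buffered safe set is nonempty, so $v_t^{\mathrm{safeL}}\le v_t^{\mathrm{safeU}}$ and hence $C(y_t)=v_t^{\mathrm{safeU}}$ iff $-K'y_t\ge v_t^{\mathrm{safeU}}=\min_{\theta'\in\hat{\theta}_{\mathrm{L}\ref{close_to_KDU}}\pm\beta}\frac{D_U-a'y_t}{b'}$; multiplying through by $b'>0$ and taking the maximizing $\theta'$, this is (for $y_t>0$, and never for $y_t\le 0$ since $a'-b'K'>0$) equivalent to $y_t\ge c_U:=\frac{D_U}{a_m-b_mK'}$, and symmetrically $C(y_t)=v_t^{\mathrm{safeL}}$ iff $y_t\le c_L:=\frac{D_L}{a_m-b_mK'}<0$. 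Step 2: the mean stays in $[D_L,D_U]$. Whichever of the three branches $C(y_{t-1})$ takes, evaluating the defining inequalities of $v_{t-1}^{\mathrm{safeU}},v_{t-1}^{\mathrm{safeL}}$ at the admissible $\theta'=\theta$ (recall $\theta\in\hat{\theta}_{\mathrm{L}\ref{close_to_KDU}}\pm\beta$), together with $v_{t-1}^{\mathrm{safeL}}\le v_{t-1}^{\mathrm{safeU}}$, forces $\mu_{t-1}\in[D_L,D_U]$ (on the linear branch one uses $v_{t-1}^{\mathrm{safeL}}\le -K'y_{t-1}\le v_{t-1}^{\mathrm{safeU}}$ directly). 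Hence $\P(C(y_t)=v_t^{\mathrm{safeU}}\mid H_t)=\P(w_{t-1}\ge c_U-\mu_{t-1})\le\P(w_{t-1}\ge c_U-D_U)$, which is $0$ if $c_U-D_U\ge\bar w$ and at most $B_P\,(D_U+\bar w-c_U)$ otherwise; the lower‑cap case is identical with $\mu_{t-1}\ge D_L$ and $w_{t-1}\ge-\bar w$.

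Step 3: bound $D_U+\bar w-c_U$. By Definition~\ref{def:Kdu_theta}, $a-bK_{D_U}^\theta=\frac{D_U}{D_U+\bar w}$. Writing out the maximizer $\theta_m$ and using $\norm{\theta-\hat{\theta}_{\mathrm{L}\ref{close_to_KDU}}}_\infty,\norm{\theta_m-\hat{\theta}_{\mathrm{L}\ref{close_to_KDU}}}_\infty\le\beta$ gives $a-bK'\le a_m-b_mK'\le a-bK'+2\beta(1+|K'|)$, so $c_U\ge D_U/\big(a-bK'+2\beta(1+|K'|)\big)$. Setting $\delta:=b(K_{D_U}^\theta-K')+2\beta(1+|K'|)$, so that $a-bK'+2\beta(1+|K'|)=\frac{D_U}{D_U+\bar w}+\delta$, a first‑order expansion of $x\mapsto D_U/x$ at $x=\frac{D_U}{D_U+\bar w}$ yields $D_U+\bar w-c_U\le D_U+\bar w-\frac{D_U}{\frac{D_U}{D_U+\bar w}+\delta}=O_T(\max(\delta,0))$. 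Since $K_{D_U}^\theta-K'\le\epsilon$, $\delta\le b\epsilon+2\beta(1+|K'|)=O_T(\upsilon)$, giving $D_U+\bar w-c_U\le O_T(\upsilon)$; and $\delta\le 0$, i.e.\ $K'-K_{D_U}^\theta\gtrsim\beta(1+|K'|)/b$, forces $c_U\ge D_U+\bar w$ and hence probability $0$. Combining with Step 2, $\P(C(y_t)=v_t^{\mathrm{safeU}}\mid H_t)\le B_P\cdot O_T(\upsilon)\cdot 1_{K'-K_{D_U}^\theta\le (|K'|+1)\beta/b}=O_T(\upsilon)\cdot 1_{K'-K_{D_U}^\theta\le (|K'|+1)\beta/b}$, and the lower‑cap estimate runs identically (with $D_L$ in place of $D_U$; the region at the lower boundary is if anything narrower since $|D_L|\ge D_U$).

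The main obstacle is Step 3: relating $a_m-b_mK'$ to $a-bK_{D_U}^\theta$ through the $\beta$‑ball around $\hat{\theta}_{\mathrm{L}\ref{close_to_KDU}}$ with enough precision to pin the constant in the indicator threshold, and checking that this bookkeeping transfers cleanly to the lower boundary and to the boundary case $t=0$ under only $|x_0|\le\norm{D}_\infty+\bar w$. Everything else is a routine conditioning argument plus the density‑ and support‑bounds on $\mathcal D$.
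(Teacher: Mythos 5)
Your proof follows essentially the same route as the paper's: reduce the event $\{C(y_t)=v_t^{\mathrm{safeU}}\}$ to $y_t$ crossing $\frac{D_U}{a_m-b_mK'}$ (resp.\ $\frac{D_L}{a_m-b_mK'}$ for the lower cap), use safety of the previous control to get $ay_{t-1}+bC(y_{t-1})\in[D_L,D_U]$ (the content of the paper's Lemma~\ref{lemma:safety_of_C}, which you re-derive inline and which does require the inductive boundedness of $|y_{t-1}|$ to guarantee $v_{t-1}^{\mathrm{safeL}}\le v_{t-1}^{\mathrm{safeU}}$), and then bound the one-step tail $\P(w_{t-1}\ge c_U-D_U)$ by $B_P$ times a window of width $O_T(\upsilon)$ controlled through the identity $a-bK_{D_U}^\theta=\frac{D_U}{D_U+\bar w}$. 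The only substantive deviation is that your (correct) bound $|a-a_m|\le 2\beta$ yields the indicator threshold $\frac{2(|K'|+1)\beta}{b}$ rather than the stated $\frac{(|K'|+1)\beta}{b}$ — the paper's Equation~\eqref{eq:a_to_m} uses $|a-a_m|\le\beta$, which is loose by the same factor since both $\theta$ and $\theta_m$ range over the $\beta$-ball around $\hat{\theta}_{\mathrm{L}\ref{close_to_KDU}}$ — but this constant is immaterial to every downstream use of the lemma.
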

        The proof of Lemma \ref{lemma:prob_of_using_v} can be found in Appendix \ref{app:proof_of_prob_of_using_v}.
        Because the equations in Lemma \ref{lemma:prob_of_using_v} hold for any $H_t$, this lemma implies that
        \begin{equation}\label{eq:lowprobexc}
        \P\left(C(y_t) = v_t^{\mathrm{safeU}} \right) = O_T(\upsilon) \cdot 1_{K' - K_{D_U}^\theta \le \frac{(|K'|+1)\beta}{b}}
        \end{equation} 
        and
        \begin{equation}\label{eq:lowprobexc2}
            \P\left(C(y_{t}) = v_t^{\mathrm{safeL}} \right) = O_T(\upsilon) \cdot  1_{K' - K_{D_U}^\theta \le \frac{(|K'|+1)\beta}{b}}.
        \end{equation}

        By Lemma \ref{lemma:prob_of_using_v}, if $K' - K_{D_U}^\theta > \frac{(|K'|+1)\beta}{b}$, then for all $t$,
        \[
            \P\left(C(y_t) = v_t^{\mathrm{safeU}}\text{ or } C(y_{t}) = v_t^{\mathrm{safeL}} \right) = 0.
        \]
        Therefore in this case, the controllers $C$ and $C_{K'}^{\mathrm{unc}}$ are equivalent, which implies all of the desired results. For the rest of the proof, we will address the case when $K' - K_{D_U}^\theta \le \frac{(|K'|+1)\beta}{b}$. This combined with the definition of $\epsilon$ gives that 
        \begin{equation}\label{eq:assum_on_K}
             |K' - K_{D_U}^\theta| \le \min\left(\frac{(|K'|+1)\beta}{b}, \epsilon\right) =  O_T(\upsilon).
        \end{equation}

        \begin{lemma}\label{lemma:dt_and_control}
            Using the notation and assumptions in the proof of Lemma \ref{close_to_KDU}, if Equation \eqref{eq:assum_on_K} holds then for all $t \ge 0$,
            \begin{equation}\label{eq:recurs_dt}
            d_{t+1} =  \begin{cases}
           (a-bK')d_{t} & \text{if } \frac{D_L}{a_m-b_mK'} \le y_{t} \le \frac{D_U}{a_m-b_mK'}\\
            (a-bK')d_{t} + O_T(\upsilon) & \text{otherwise},
        \end{cases}
            \end{equation}
                    
            and
        \begin{equation}\label{eq:diff_in_controls_K}
            \left|C_{K'}^{\mathrm{unc}}(x_t) - C(y_t)\right| =  \begin{cases}
           |K'|d_{t}   & \text{if } \frac{D_L}{a_m-b_mK'} \le y_{t} \le \frac{D_U}{a_m-b_mK'}\\
          O_T(\upsilon) & \text{otherwise}. 
        \end{cases}
        \end{equation}
        \end{lemma}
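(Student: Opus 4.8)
The plan is to track $d_t := |y_t-x_t|$ one step at a time, using two ingredients: that the $C$-trajectory stays in a bounded region because $C$ enforces the buffered safety constraint, and that the truncation thresholds of $C$ sit within $O_T(\upsilon)$ of the boundary of that region. Throughout I would use the bounds already established in \eqref{eq:boundfrom0} and \eqref{eq:boundfrom12} (applied both to $\theta$ and to $\theta_m$): $a-bK'$ and $a_m-b_mK'$ lie in $\bigl(\tfrac{D_U/2}{D_U+\bar{w}},\tfrac{D_U+\bar{w}/2}{D_U+\bar{w}}\bigr)$, and $b_m\ge\underline{b}-\beta$, so $a_m-b_mK'$ and $b_m$ are bounded above and below by positive constants.

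First I would show the $C$-trajectory is bounded: any control $C(y_t)\in[v_t^{\mathrm{safeL}},v_t^{\mathrm{safeU}}]$ satisfies $D_L\le a'y_t+b'C(y_t)\le D_U$ for every $\theta'\in\hat{\theta}_{\mathrm{L}\ref{close_to_KDU}}\pm\beta$, hence for the true dynamics $\theta$, so $ay_t+bC(y_t)\in[D_L,D_U]$ and $y_{t+1}\in[D_L-\bar{w},D_U+\bar{w}]$ for all $t\ge0$. Next I would pin down the thresholds: from $a-bK_{D_U}^\theta=\tfrac{D_U}{D_U+\bar{w}}$ (Definition~\ref{def:Kdu_theta}), $\norm{\theta_m-\theta}_\infty\le2\beta=O_T(\upsilon)$, and $|K'-K_{D_U}^\theta|=O_T(\upsilon)$ from \eqref{eq:assum_on_K}, one gets $a_m-b_mK'=\tfrac{D_U}{D_U+\bar{w}}+O_T(\upsilon)$, whence $\tfrac{D_U}{a_m-b_mK'}=D_U+\bar{w}+O_T(\upsilon)$; and since $D_U\le|D_L|$ a parallel computation gives $\tfrac{D_L}{a_m-b_mK'}\le D_L-\bar{w}+O_T(\upsilon)$. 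Combined with the boundedness of $y_t$ for $t\ge1$, this shows that whenever $y_t$ lies in the ``otherwise'' region, its distance to the relevant threshold is $O_T(\upsilon)$.

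The recursion is then immediate. Since $a-bK'>0$,
\[
d_{t+1}=\bigl|(a-bK')(y_t-x_t)+b\bigl(C(y_t)+K'y_t\bigr)\bigr|=(a-bK')d_t\ \pm\ b\,\bigl|C(y_t)+K'y_t\bigr|.
\]
In the linear region $C(y_t)=-K'y_t$, so the error term vanishes, giving $d_{t+1}=(a-bK')d_t$ and $|C_{K'}^{\mathrm{unc}}(x_t)-C(y_t)|=|{-K'x_t}+K'y_t|=|K'|d_t$. In the otherwise region, say $y_t>\tfrac{D_U}{a_m-b_mK'}$ (the lower case being symmetric), $C(y_t)=v_t^{\mathrm{safeU}}=\tfrac{D_U-a_my_t}{b_m}$, so $C(y_t)+K'y_t=-\tfrac{a_m-b_mK'}{b_m}\bigl(y_t-\tfrac{D_U}{a_m-b_mK'}\bigr)$, which by the previous paragraph has magnitude $O_T(\upsilon)$; hence $d_{t+1}=(a-bK')d_t+O_T(\upsilon)$. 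Iterating from $d_0=0$ and using that $1-(a-bK')\ge\tfrac{\bar{w}/2}{D_U+\bar{w}}$ is bounded below by a constant gives $d_t=O_T(\upsilon)$ for all $t$, and therefore $|C_{K'}^{\mathrm{unc}}(x_t)-C(y_t)|\le|K'|d_t+|C(y_t)+K'y_t|=O_T(\upsilon)$ in the otherwise region, using $|K'|=O_T(1)$.

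The hard part is the second paragraph: locating the truncation threshold $\tfrac{D_U}{a_m-b_mK'}$ within $O_T(\upsilon)$ of the a priori state bound $D_U+\bar{w}$, so that a clipping event can push $y_t$ only $O_T(\upsilon)$ past the threshold. This requires carefully accounting for the three $O_T(\upsilon)$-scale perturbations ($\beta$ from the buffer, and $b\epsilon$ and $|K'|\beta$ from $K'-K_{D_U}^\theta$, all bundled into $\upsilon$ via \eqref{eq:def_of_upsilon}) and invoking $D_U\le|D_L|$ to control the lower threshold. I would also note one harmless edge case: if $x_0\in(D_U+\bar{w},\norm{D}_\infty+\bar{w}]$ then $d_1$ is only $O_T(1)$, a single term that decays geometrically under the recursion and is absorbed by the error bounds of Lemma~\ref{close_to_KDU}.
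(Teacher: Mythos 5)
Your proposal is correct and follows essentially the same route as the paper: use the safety of $C$ (Lemma~\ref{lemma:safety_of_C}) to confine $y_t$ to $[D_L-\bar{w},D_U+\bar{w}]$, locate the truncation thresholds within $O_T(\upsilon)$ of these endpoints via $a-bK_{D_U}^\theta=\tfrac{D_U}{D_U+\bar{w}}$ and the perturbation bound \eqref{eq:a_to_m_abs}, and feed the resulting $O_T(\upsilon)$ bound on $|C(y_t)+K'y_t|$ into the one-step identity $y_{t+1}-x_{t+1}=(a-bK')(y_t-x_t)+b(C(y_t)+K'y_t)$. Your handling of the lower threshold is slightly more direct than the paper's (which detours through showing $||D_L|-D_U|=O_T(\upsilon)$ whenever the lower clip fires), your identification of $v_t^{\mathrm{safeU}}$ with the $\theta_m$ formula is off by only an $O_T(\beta)=O_T(\upsilon)$ term and hence harmless, and the $t=0$ edge case you flag is a real (if inconsequential) imprecision that the paper itself glosses over.
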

        The proof of Lemma \ref{lemma:dt_and_control} can be found in Appendix \ref{app:proof_of_dt_and_control}.
        
         This recursive relationship for $d_t$ in Lemma \ref{lemma:dt_and_control} implies  that
        \begin{align*}
            d_t &= |x_t - y_t| \\
            &\le \sum_{i=1}^{t} (a-bK')^{i-1} O_T(\upsilon)  1_{y_{t-i} \ge \frac{D_U}{a_m-b_mK'} \text{ or } y_{t-i} \le \frac{D_L}{a_m-b_mK'}}  && \text{Lemma \ref{lemma:dt_and_control}}\\
            &\le O_T(\upsilon)\sum_{i=0}^\infty (a-bK')^i \\
            &\le \frac{O_T(\upsilon)}{1-(a-bK')}  \\
            &\le O_T(\upsilon). &&  \text{Equation \eqref{eq:boundfrom12}}\numberthis \label{eq:dt_bound} 
            \end{align*}
        Note that $y_t$ is by construction safe with respect to dynamics $\theta_m$. Therefore, $|a_my_t + b_mC(y_t)| \le \norm{D}_{\infty}$ and $|y_t| \le \norm{D}_{\infty} + \bar{w}$, which together imply that
        \begin{equation}\numberthis \label{eq:cyt_bound}
            |C(y_t)| \le \frac{\norm{D}_{\infty} + a_m|y_t|}{b_m} = O_T(1). 
        \end{equation}
        
        Now we can bound the difference in cost at time $t \ge 0$ as follows:
        \begin{align*}
            &|qx_t^2 - qy_t^2| + |rC_{K'}^{\mathrm{unc}}(x_t)^2 - rC(y_t)^2| \\
            &\le 2q|y_t|d_t + qd_t^2 + \left(2r|C(y_t)| \left|C_{K'}^{\mathrm{unc}}(x_t) - C(y_t)\right| + r \left|C_{K'}^{\mathrm{unc}}(x_t) - C(y_t)\right|^2 \right)\\
            &\le 2q|y_t|d_t + qd_t^2 + \left(2rO_T(1)| \left|C_{K'}^{\mathrm{unc}}(x_t) - C(y_t)\right| + r \left|C_{K'}^{\mathrm{unc}}(x_t) - C(y_t)\right|^2 \right) && \text{Equation \eqref{eq:cyt_bound}}\\
            &\le 2q(\norm{D}_{\infty} + \bar{w})d_t + qd_t^2 + \Big(2rO_T(1)\left(|K'|d_t + O_T(\upsilon)1_{y_{t} \ge \frac{D_U}{a_m-b_mK'} \text{ or } y_{t} \le \frac{D_L}{a_m-b_mK'}}\right) \\
            &\quad \quad \quad \quad \quad \quad+ r\left(|K'|d_t + O_T(\upsilon)1_{y_{t} \ge \frac{D_U}{a_m-b_mK'} \text{ or } y_{t} \le \frac{D_L}{a_m-b_mK'}}\Big)^2\right) && \text{Equation \eqref{eq:diff_in_controls_K}}\\
            &= O_T\left(d_t + \upsilon^2 + \upsilon 1_{y_{t} \ge \frac{D_U}{a_m-b_mK'} \text{ or } y_{t} \le \frac{D_L}{a_m-b_mK'}}\right). && \text{ Equation \eqref{eq:dt_bound}} \numberthis \label{eq:onestepbound1} 
        \end{align*}
        We will now show that $\E[d_t] \le O_T(\upsilon^2)$. Importantly, we use that the event 
        \[
        \{{y_{i-1} \ge \frac{D_U}{a_m-b_mK'} \text{ or } y_{i-1} \le \frac{D_L}{a_m-b_mK'}}\}
        \]
        is equivalent to the event that $C(y_{i-1}) \in \{v_t^{\mathrm{safeU}}, v_t^{\mathrm{safeL}}\}$, which allows us to apply Lemma \ref{lemma:prob_of_using_v} in the second line.
        \begin{align*}
            \E[d_t] &\le O_T(\upsilon)\sum_{i=1}^t (a-bK')^{t-i}  \E[1_{y_{i-1} \ge \frac{D_U}{a_m-b_mK'} \text{ or } y_{i-1} \le \frac{D_L}{a_m-b_mK'}}] && \text{Lemma \ref{lemma:dt_and_control}}\\
            &\le O_T(\upsilon) \sum_{i=1}^t (a-bK')^{t-i} O_T(\upsilon) && \text{Lemma \ref{lemma:prob_of_using_v}}\\ 
            &\le O_t(\upsilon^2) \sum_{i=0}^{\infty} (a-bK')^{t-i} \\
            &\le \frac{O_T(\upsilon^2)}{1 - (a -bK')} \\
            &\le O_T(\upsilon^2). && \text{Equation \eqref{eq:boundfrom12}} \numberthis \label{eq:onestepbound2}
        \end{align*}
        Therefore,
        \begin{align*}
            &|\bar{J}(\theta, C, \tau, x_0) - \bar{J}(\theta, C_{K'}^{\mathrm{unc}}, \tau, x_0)|\\
            &\le \E\left[\frac{1}{\tau}\left(q|x_\tau^2- y_\tau^2| + \sum_{t=0}^{\tau-1} |qx_t^2 - qy_t^2| + |rC_{K'}^{\mathrm{unc}}(x_t)^2 - rC(y_t)^2|\right)\right] \\
            &\le \E\left[\frac{1}{\tau}\sum_{t=0}^{\tau} O_T\left(d_t + \upsilon^2 + \upsilon 1_{y_{t} \ge \frac{D_U}{a_m-b_mK'} \text{ or } y_{t} \le \frac{D_L}{a_m-b_mK'}}\right)\right] && \text{Equation \eqref{eq:onestepbound1} } \\
            &\le \frac{1}{\tau}\sum_{t=0}^{\tau} O_T\left(\E[d_t] + \upsilon^2 + \upsilon \E\left[1_{y_{t} \ge \frac{D_U}{a_m-b_mK'} \text{ or } y_{t} \le \frac{D_L}{a_m-b_mK'}}\right]\right)  \\
            &\le \frac{1}{\tau} \sum_{t=0}^{\tau} O_T(\upsilon^2) && \text{Equation \eqref{eq:onestepbound2}, Lemma \ref{lemma:prob_of_using_v}} \\
            &\le O_T(\upsilon^2).
        \end{align*}
        Taking a limit as $ \tau \rightarrow \infty$ of the above equation (where nothing on the right side depends on $\tau$) gives the first desired  equation that
        \[
                        |\bar{J}(\theta, C, x_0) - \bar{J}(\theta, C_{K'}^{\mathrm{unc}}, x_0)| \le O_T(\upsilon^2).
                        \]
        Now we want to bound the difference in cost with high probability instead of in expectation. Let $X$ be the set of times $t \in [0:\tau]$ such that $C(y_t) \ne -K'y_t$ (i.e. $C(y_t) \in \{v_t^{\mathrm{safeL}}, v_t^{\mathrm{safeU}}\}$). Note that the event $\{t \in X\}$ is the same as the event $\{y_{t} \ge \frac{D_U}{a_m-b_mK'} \text{ or } y_{t} \le \frac{D_L}{a_m-b_mK'}\}$.
        
        By Lemma \ref{lemma:prob_of_using_v}, $\P(t \in X \mid H_t) \le c\upsilon$ for some constant $c > 0$ for all $t$. Therefore, $M_k = \sum_{t=0}^{\tau} \left(1_{t \in X} - c\upsilon \right)$ is a supermartingale. By Azuma--Hoeffding's inequality, with probability $1-o_T(1/T^{10})$,
        \[
            |X| \le  O_T(\upsilon \tau) + \log(T)\sqrt{\tau}.
        \]
        Define $A$ as the event that $|X| \le O_T(\upsilon \tau) + \log(T)\sqrt{\tau}$. Define $\kappa = \lceil \log_{a-bK'}(\upsilon) \rceil$. Note that
        \begin{align*}
            \kappa &= \lceil \log_{a-bK'}(\upsilon) \rceil \\
            &\le \left\lceil \frac{\log(\upsilon)}{\log(a-bK')} \right\rceil\\
            &= O(\log(\upsilon)) && \text{Lemma \ref{eq:boundfrom12}} \numberthis \label{eq:bound_on_upsilon}
        \end{align*}
        
        Define
        \[
            G = \left\{ t \in [0:\tau] : \exists i \in [t-\kappa: t] \text{ such that } C(y_i) \ne -K'y_i \right\}.
        \]
        Under event $A$, 
        \begin{equation}\label{eq:under_event_A_G_bounded}
            \left| G \right| \le |X| \cdot (\kappa+1) \le (O_T(\upsilon \tau) + \log(T)\sqrt{\tau})(\kappa+1).
        \end{equation}
        By Lemma \ref{lemma:dt_and_control}, if $t \not\in G$, then 
        \begin{align*}
            d_t &\le  O_T(\upsilon)\sum_{i=1}^t (a-bK')^{t-i}  1_{y_{i-1} \ge \frac{D_U}{a_m-b_mK'} \text{ or } y_{i-1} \le \frac{D_L}{a_m-b_mK'}} && \text{Lemma \ref{lemma:dt_and_control}}  \\
             &\le  O_T(\upsilon)\sum_{i=1}^{t-\kappa} (a-bK')^{t-i}  1_{y_{i-1} \ge \frac{D_U}{a_m-b_mK'} \text{ or } y_{i-1} \le \frac{D_L}{a_m-b_mK'}}  && \text{$t \not\in G$}\\
             &\le  O_T(\upsilon)(a-bK')^\kappa\sum_{i=1}^{t-\kappa} (a-bK')^{t-i-\kappa}  1_{y_{i-1 } \ge \frac{D_U}{a_m-b_mK'} \text{ or } y_{i-1} \le \frac{D_L}{a_m-b_mK'}} \\
            &\le O_T(\upsilon)(a-bK')^\kappa \sum_{i=0}^\infty (a-bK')^i \\
            &\le O_T(\upsilon^2) \sum_{i=1}^\infty (a-bK')^i && \text{Definition of $\kappa$} \\
            &= \frac{O_T(\upsilon^2)}{1-(a-bK')} \\
            &= O_T(\upsilon^2). && \text{Equation \eqref{eq:boundfrom12}} \numberthis \label{eq:highprobrec}
        \end{align*}
        Recall that by Equation \eqref{eq:dt_bound}, for any $t \in G$,  $d_t \le O_T(\upsilon)$, therefore Equation \eqref{eq:highprobrec} implies that
        \begin{equation}\label{eq:highprobrec2}
            d_t = O_T\left(\upsilon 1_{t \in G} + \upsilon^2\right).
        \end{equation}
        Using that $t \in G$  for all $t$ satisfying $y_{t} \ge \frac{D_U}{a_m-b_mK'} \text{ or } y_{t} \le \frac{D_L}{a_m-b_mK'}$, we have that under event $A$,
        \begin{align*}
            &|J(\theta, C, \tau,x_0,W') - J(\theta, C_{K'}^{\mathrm{unc}}, \tau,x_0,W')|\\
            &\le \frac{1}{\tau}\sum_{t=0}^{\tau} |qx_t^2 - qy_t^2| + |rC_{K'}^{\mathrm{unc}}(x_t)^2 - rC(y_t)^2| \\
            &= \frac{1}{\tau}\sum_{t=0}^{\tau} O_T\left(d_t + \upsilon^2 + \upsilon 1_{y_{t} \ge \frac{D_U}{a_m-b_mK'} \text{ or } y_{t} \le \frac{D_L}{a_m-b_mK'}}\right)&& \text{Equation \eqref{eq:onestepbound1} } \\
            &= \frac{1}{\tau}\sum_{t=0}^{\tau} O_T\left( \upsilon \cdot 1_{t\in G} + \upsilon^2 + \upsilon 1_{y_{t} \ge \frac{D_U}{a_m-b_mK'} \text{ or } y_{t} \le \frac{D_L}{a_m-b_mK'}}\right)&& \text{Equation \eqref{eq:highprobrec2} } \\
            &= O_T(\upsilon^2) + \frac{1}{\tau} \sum_{t=0}^{\tau} O_T\left(\upsilon \right)\cdot  1_{t\in G} && \\
            &= O_T(\upsilon^2) + O_T\left(\frac{\upsilon \cdot (O_T(\upsilon \tau) + \log(T)\sqrt{\tau})(\kappa+1)}{\tau}\right) && \text{Equation \eqref{eq:under_event_A_G_bounded}} \\
            &= O_T\left(\upsilon\log(1/\upsilon)\left(\upsilon + \frac{\log(T)}{\sqrt{\tau}}\right)\right). && \text{Equation \eqref{eq:bound_on_upsilon}}
        \end{align*}
        Since this holds under event $A$ and $\P(A) \ge 1-o_T(1/T^{10})$, this completes the proof.
\end{proof}

    \subsection{Proof of Lemma \ref{good_chosen_K}}\label{sec:proof_of_good_chosen_K}  
    \begin{proof}
        In Algorithm \ref{alg:cap3}, $\hat{\theta}_s$ satisfies 
        \[
          \hat{\theta}_s = \argmax_{\norm{\theta - \hat{\theta}^{\text{pre}}_s} \le \epsilon_s}  a - bK_{\mathrm{opt}}(\theta).
        \]
        Under event $E_2^s$, we have that $\norm{\hat{\theta}_s^{\mathrm{pre}} - \theta^*} \le \epsilon_s$, which implies that
        \[
                \hat{a}_s - \hat{b}_sK_{\mathrm{opt}}(\hat{\theta}_s) \ge a^* -b^*K_{\mathrm{opt}}(\theta^*).
        \]
        Therefore, we have that (using Lemma \ref{cswitch_prop} in the equality)
        \begin{equation}\label{eq:good_chosen}
            \frac{D_U}{\hat{a}_s - \hat{b}_sK_{\mathrm{opt}}(\hat{\theta}_s)} - D_U \le \frac{D_U}{a^*-b^*K_{\mathrm{opt}}(\theta^*)} - D_U =  \bar{w} - \epsilon^*.
        \end{equation}
        Under event $E_2^s$, we also have that $\norm{\hat{\theta}_s - \theta^*}_{\infty} \le \tilde{O}_T(T^{-1/4})$, therefore
    \begin{align*}
           & \frac{D_U}{a^*-b^*K_{\mathrm{opt}}(\hat{\theta}_s)} - D_U \\
           &=  \frac{D_U}{\hat{a}_s - \hat{b}_sK_{\mathrm{opt}}(\hat{\theta}_s)} - D_U + \frac{D_U}{a^*-b^*K_{\mathrm{opt}}(\hat{\theta}_s)}  -  \frac{D_U}{\hat{a}_s - \hat{b}_sK_{\mathrm{opt}}(\hat{\theta}_s)}  \\
            &\le \bar{w} - \epsilon^* + D_U\frac{(\hat{a}_s - a^*) + (b^* - \hat{b}_s)K_{\mathrm{opt}}(\hat{\theta}_s)}{(a^*-b^*K_{\mathrm{opt}}(\hat{\theta}_s))(\hat{a}_s - \hat{b}_sK_{\mathrm{opt}}(\hat{\theta}_s))}  && \text{Eq \eqref{eq:good_chosen}}\\
            &= \bar{w} -\epsilon^* + D_U\frac{(\hat{a}_s - a^*) + (b^* - \hat{b}_s)K_{\mathrm{opt}}(\hat{\theta}_s)}{\left(\hat{a}_s-\hat{b}_sK_{\mathrm{opt}}(\hat{\theta}_s) - (\hat{a}_s - a^*) - (b^* - \hat{b}_s)K_{\mathrm{opt}}(\hat{\theta}_s)\right)(\hat{a}_s - \hat{b}_sK_{\mathrm{opt}}(\hat{\theta}_s))} \\
            &\le \bar{w} -\epsilon^* + D_U\frac{\norm{\theta^* - \hat{\theta}_s}_{\infty} \left(1 +|K_{\mathrm{opt}}(\hat{\theta}_s)|\right)}{(\hat{a}_s-\hat{b}_sK_{\mathrm{opt}}(\hat{\theta}_s) - \norm{\theta^* - \hat{\theta}_s}_{\infty} (1+|K_{\mathrm{opt}}(\hat{\theta}_s)|))(\hat{a}_s - \hat{b}_sK_{\mathrm{opt}}(\hat{\theta}_s))} \\
            &\le \bar{w} -\epsilon^* + \frac{D_U\tilde{O}_T(T^{-1/4})\left(1 + |K_{\mathrm{opt}}(\hat{\theta}_s)|\right)}{(\hat{a}_s-\hat{b}_sK_{\mathrm{opt}}(\hat{\theta}_s) - \tilde{O}_T(T^{-1/4})(1+|K_{\mathrm{opt}}(\hat{\theta}_s)|))(\hat{a}_s - \hat{b}_sK_{\mathrm{opt}}(\hat{\theta}_s))} \\
            &\le \bar{w} - \epsilon^*/2. && \text{Eq \eqref{eq:to_use_above}} \numberthis \label{eq:du_hat_star}
    \end{align*}
    To see the last inequality, note that Lemma \ref{j_bounded_from_0} gives that $1 > \hat{a}_s - \hat{b}_sK_{\mathrm{opt}}(\hat{\theta}_s) \ge c_{\mathrm{L}\ref{j_bounded_from_0}}$. This implies that $|K_{\mathrm{opt}}(\hat{\theta}_s)| = O_T(1)$, and therefore for sufficiently large $T$ we have that
    \begin{align*}
        &\frac{D_U\tilde{O}_T(T^{-1/4})\left(1 + K_{\mathrm{opt}}(\hat{\theta}_s)\right)}{(\hat{a}_s-\hat{b}_sK_{\mathrm{opt}}(\hat{\theta}_s) - \tilde{O}_T(T^{-1/4})(1+K_{\mathrm{opt}}(\hat{\theta}_s)))(\hat{a}_s - \hat{b}_sK_{\mathrm{opt}}(\hat{\theta}_s))} \\
        &\le \frac{\tilde{O}_T(T^{-1/4})D_U\left(1 + O_T(1)\right)}{(c_{\mathrm{L}\ref{j_bounded_from_0}} - \tilde{O}_T(T^{-1/4})(1+O_T(1)))c_{\mathrm{L}\ref{j_bounded_from_0}}}\\
        &\le \epsilon^*/2. \numberthis \label{eq:to_use_above}
    \end{align*}
    Finally, rearranging Equation \eqref{eq:du_hat_star} gives exactly the desired result.
    \end{proof}

    \subsection{Proof of Lemma \ref{lemma:prob_of_using_v}}\label{app:proof_of_prob_of_using_v}

    \begin{lemma}\label{lemma:safety_of_C}
        Using the same notation and assumptions of Lemma \ref{lemma:prob_of_using_v}, for all $\theta' \in \hat{\theta}_{\mathrm{L}\ref{close_to_KDU}}  \pm \beta$, the controls used by controller $C$ are safe for dynamics $\theta'$ for all $t \in [0:T-1]$.
    \end{lemma}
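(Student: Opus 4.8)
The plan is to prove Lemma~\ref{lemma:safety_of_C} by induction on $t$, carrying through the joint invariant that the state produced by $C$ stays in $[-(\norm{D}_\infty+\bar w),\,\norm{D}_\infty+\bar w]$ and that every control used up to time $t$ is safe for every $\theta'\in\hat{\theta}_{\mathrm{L}\ref{close_to_KDU}}\pm\beta$. The key structural observation is that, by its very definition, $C(y_t)=\max\!\big(\min(C_{K'}^{\mathrm{unc}}(y_t),v_t^{\mathrm{safeU}}),\,v_t^{\mathrm{safeL}}\big)$ always lies in $[v_t^{\mathrm{safeL}},v_t^{\mathrm{safeU}}]$ as soon as $v_t^{\mathrm{safeL}}\le v_t^{\mathrm{safeU}}$; and once $C(y_t)\in[v_t^{\mathrm{safeL}},v_t^{\mathrm{safeU}}]$, the positivity $b'>0$ for all $\theta'$ (Assumption~\ref{assum_init}) together with the defining properties of $v_t^{\mathrm{safeU}},v_t^{\mathrm{safeL}}$ immediately gives $D_L\le a'y_t+b'v_t^{\mathrm{safeL}}\le a'y_t+b'C(y_t)\le a'y_t+b'v_t^{\mathrm{safeU}}\le D_U$ for every $\theta'$ in the box. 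So the whole lemma reduces to establishing the single inequality $v_t^{\mathrm{safeL}}\le v_t^{\mathrm{safeU}}$.

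To prove $v_t^{\mathrm{safeL}}\le v_t^{\mathrm{safeU}}$, assuming the inductive bound $|y_t|\le\norm{D}_\infty+\bar w=O_T(1)$, I would exhibit an explicit control $u_0$ that is simultaneously safe for all $\theta'$ in the box; since $v_t^{\mathrm{safeL}}$ and $v_t^{\mathrm{safeU}}$ are respectively the largest lower bound and smallest upper bound over the box, the existence of such a $u_0$ forces $v_t^{\mathrm{safeL}}\le u_0\le v_t^{\mathrm{safeU}}$. A convenient choice is $u_0=\big((D_U+D_L)/2-\hat a\,y_t\big)/\hat b$ with $\hat\theta=\hat{\theta}_{\mathrm{L}\ref{close_to_KDU}}$, so that $\hat a\,y_t+\hat b\,u_0=(D_U+D_L)/2$ sits at the center of the safe band with slack $(D_U-D_L)/2$. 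Since $\hat\theta\in\Theta$ we have $|\hat a|\le\bar a$ and $\hat b\ge\underline b>0$, so $|u_0|=O_T(1)$, and for any $\theta'\in\hat\theta\pm\beta$, $|a'y_t+b'u_0-(D_U+D_L)/2|\le\beta(|y_t|+|u_0|)=O_T(\beta)=O_T(1/\log^2 T)$. By Assumption~\ref{assum:constraints}, $(D_U-D_L)/2\ge 1/(2\log T)$, which dominates $O_T(1/\log^2 T)$ for large $T$; hence $D_L<a'y_t+b'u_0<D_U$ for all $\theta'$ in the box, giving $v_t^{\mathrm{safeL}}\le v_t^{\mathrm{safeU}}$ as needed.

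To close the induction I would use that the trajectory of $C$ evolves under dynamics $\theta\in\hat\theta\pm\beta$ (which is itself in the box, by the hypotheses of Lemma~\ref{close_to_KDU}), so the safety step just proved yields $|a y_t+b C(y_t)|\le\norm{D}_\infty$, and with $|w_t|\le\bar w$ almost surely (Assumption~\ref{assum_thm4}) we get $|y_{t+1}|=|a y_t+b C(y_t)+w_t|\le\norm{D}_\infty+\bar w$, restoring the invariant; the base case $|y_0|\le\norm{D}_\infty+\bar w$ is assumed in Lemma~\ref{close_to_KDU}. The main obstacle — and the reason the boundedness invariant must be interleaved with the safety argument rather than proved separately — is exactly the inequality $v_t^{\mathrm{safeL}}\le v_t^{\mathrm{safeU}}$: it genuinely fails for large $|y_t|$, because the spread of $a'y_t$ over the box is of order $\beta|y_t|$ and eventually exceeds $D_U-D_L$, so the argument is valid only as long as the state remains $O_T(1)$.
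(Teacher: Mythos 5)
Your proof is correct and follows essentially the same route as the paper's: an induction that interleaves the boundedness invariant $|y_t|\le\norm{D}_{\infty}+\bar w$ with the key inequality $v_t^{\mathrm{safeL}}\le v_t^{\mathrm{safeU}}$, established by exhibiting a control that is feasible for every $\theta'$ in the box using $\beta\le 1/\log^2(T)$ against the band width $D_U-D_L\ge 1/\log(T)$. The only (cosmetic) difference is that you use a single witness control aimed at the center of the safe band, whereas the paper bounds $v_t^{\mathrm{safeU}}$ from below and $v_t^{\mathrm{safeL}}$ from above by two separate near-boundary controls.
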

    The proof of Lemma \ref{lemma:safety_of_C} can be found in Appendix \ref{app:proof_of_safety_of_C}

    By definition, $C(y_t) = v_t^{\mathrm{safeU}}$ if and only if there exists a $\theta' \in \hat{\theta}_{\mathrm{L}\ref{close_to_KDU}} \pm \beta$ such that $y_t \ge \frac{D_U}{a'-b'K'}$.  Equivalently, $C(y_t) = v_t^{\mathrm{safeU}}$ if and only if $y_t \ge \frac{D_U}{a_m-b_mK'}$. We also note that
    \begin{align*}
      (a-bK_{D_U}^\theta) - (a_m -b_mK') &= (a - a_m) +b (K' - K_{D_U}^{\theta}) + K'(b_m - b) \\
        &\ge -\beta + b(K'- K_{D_U}^{\theta}) - |K'| \beta \\
        &= b(K'- K_{D_U}^{\theta}) - (1 + |K'|) \beta \\
        &\ge  \Big( b(K'- K_{D_U}^{\theta}) - (|K'| + 1) \beta \Big)1_{K' - K_{D_U}^\theta \le \frac{(|K'|+1)\beta}{b}} \\
        &\ge  -(b\epsilon  + (|K'|+1)\beta)1_{K' - K_{D_U}^\theta \le \frac{(|K'|+1)\beta}{b}}        \\
        &= -\upsilon 1_{K' - K_{D_U}^\theta \le \frac{(|K'|+1)\beta}{b}} .\numberthis \label{eq:a_to_m}
    \end{align*}
Therefore,    
    \begin{align*}
            &\P\left(C(y_t) = v_t^{\mathrm{safeU}} \mid H_t\right)\\
            &= \P\left(y_t \ge \frac{D_U}{a_m-b_mK'} \mid H_t\right) \\
            &= \P\left(ay_{t-1} + bC(y_{t-1}) + w_{t-1} \ge \frac{D_U}{a_m-b_mK'} \mid H_t\right) \\
            &\le \P\left(|w_{t-1}| \ge \frac{D_U}{a_m-b_mK'} - D_U \mid H_t\right) &&  \text{Lemma \ref{lemma:safety_of_C}}\\
             &= \P\left(|w_{t-1}| \ge \frac{D_U}{a_m-b_mK'} + \bar{w} - \frac{D_U}{a - bK_{D_U}^\theta}\right) && \text{Definition of $K_{D_U}^\theta$} \\
            &= \P\left(|w_{t-1}| \ge \bar{w} + \frac{D_U(a-bK_{D_U}^\theta) - D_U(a_m - b_mK')}{(a-bK_{D_U}^\theta)(a_m-b_mK')}\right) \\
            &\le \P\left(|w_{t-1}| \ge \bar{w} - \frac{D_U\upsilon }{(a-bK_{D_U}^\theta)(a_m-b_mK')}\right)1_{K' - K_{D_U}^\theta \le \frac{(|K'|+1)\beta}{b}} 
 && \text{Equation \eqref{eq:a_to_m}}\\
            &\le \P\left(|w_{t-1}| \ge \bar{w} - \frac{ D_U\upsilon}{(a-bK_{D_U}^\theta)(a-bK_{D_U}^\theta - \upsilon)}\right) 1_{K' - K_{D_U}^\theta \le \frac{(|K'|+1)\beta}{b}} && \text{Equation \eqref{eq:a_to_m}}\\
            &\le \P\left(|w_{t-1}| \ge \bar{w} - \frac{ D_U \upsilon}{(\frac{D_U}{\bar{w} + D_U})(\frac{D_U}{\bar{w} + D_U} - \frac{D_U}{2(\bar{w} + D_U)})}\right) 1_{K' - K_{D_U}^\theta \le \frac{(|K'|+1)\beta}{b}} && \text{Def \ref{def:Kdu_theta}, $\upsilon \le \frac{(D_U/2)}{(\bar{w} + D_U)}$}\\
            &= \P\left(|w_{t-1}| \ge \bar{w} - \frac{2\upsilon(\bar{w} + D_U)^2}{D_U}\right)1_{K' - K_{D_U}^\theta \le \frac{(|K'|+1)\beta}{b}}  \\
            &\le \frac{4B_P\upsilon(\bar{w} + D_U)^2}{D_U} 1_{K' - K_{D_U}^\theta \le \frac{(|K'|+1)\beta}{b}} &&  \text{$\mathcal{D}$ pdf bounded by $B_P$}\\
            &= O_T(\upsilon) \cdot 1_{K' - K_{D_U}^\theta \le \frac{(|K'|+1)\epsilon}{b}}. \numberthis \label{eq:lowprobexc_pre}
        \end{align*}
        Therefore, the safety truncation $v_{t}^{\mathrm{safeU}}$ is only applied with probability at most $O_T(\upsilon) $ at every time step. By definition, $C(y_t) = v_t^{\mathrm{safeL}}$ if and only if there exists a $\theta' \in \hat{\theta}_{\mathrm{L}\ref{close_to_KDU}} \pm \beta$ such that $y_t \le \frac{D_L}{a'-b'K'}$. This only happens if and only if $y_t \le \frac{D_L}{a_m-b_mK'}$. We also have by Equations \eqref{eq:boundfrom12} and \eqref{eq:boundfrom0} that because $D_L < 0$,
        \begin{equation}\label{eq:DL_to_DUa}
            \left| \frac{D_L}{a_m-b_mK'} - D_L\right| = \frac{|D_L|}{a_m-b_mK'} - |D_L|.
        \end{equation}
        Also by Equations \eqref{eq:boundfrom12} and \eqref{eq:boundfrom0}, we have because $D_U \le |D_L|$ that 
        \begin{equation}\label{eq:DL_to_DUb}
            \frac{|D_L|}{a_m-b_mK'} - |D_L| \ge \frac{D_U}{a_m-b_mK'} - D_U.
        \end{equation}
        Therefore, 
        \begin{align*}
            &\P\left(C(y_{t}) = v_t^{\mathrm{safeL}} \cond H_t\right) \\
            &= \P\left(y_t \le \frac{D_L}{a_m-b_mK'}\cond H_t\right) \\
            &= \P\left(ay_{t-1} + bC(y_{t-1}) + w_{t-1} \le \frac{D_L}{a_m-b_mK'}\cond H_t\right) \\
            &\le \P\left(w_{t-1} \le \frac{D_L}{a_m-b_mK'} - D_L\cond H_t\right) && \text{Lemma \ref{lemma:safety_of_C}} \\
            &\le \P\left(|w_{t-1}| \ge \frac{|D_L|}{a_m-b_mK'} - |D_L|\cond H_t\right) && \text{Equation \eqref{eq:DL_to_DUa}} \\
            & \le \P\left(|w_{t-1}| \ge \frac{D_U}{a_m-b_mK'} - D_U\cond H_t\right)   && \text{Equation \eqref{eq:DL_to_DUb}} \\
            &\le O_T(\upsilon) \cdot  1_{K' - K_{D_U}^\theta \le \frac{(|K'|+1)\epsilon}{b}} \cdot  . && \text{Equation \eqref{eq:lowprobexc_pre}} \numberthis \label{eq:lowprobexc2_pre}
        \end{align*}     
        This is exactly the second result we need and therefore we are done.
        \subsection{Proof of Lemma \ref{lemma:dt_and_control}}\label{app:proof_of_dt_and_control}
        If $\frac{D_L}{a_m-b_mK'} \le y_{t} \le \frac{D_U}{a_m-b_mK'}$, then $C(y_{t}) = -K'y_{t}$, and therefore
        \begin{equation}\label{eq:diff_in_contr_bound_middle}
            \left|C(y_{t}) - C_{K'}^{\mathrm{unc}}(x_{t}) \right| = |K'| d_t
        \end{equation}
        and
        \begin{equation}\label{eq:recurs1}
            d_{t+1} = |ay_t +bC(y_t) + w_t - (ax_t + bC_{K'}^{\mathrm{unc}}(x_t) +w_t)| = (a-bK')d_{t}.
        \end{equation}
        This proves the first case of both equations in Lemma \ref{lemma:dt_and_control}. Now we will prove the second case of both equations.

        Under Equation \eqref{eq:assum_on_K}, we have that for any $\theta' \in \hat{\theta}_{\mathrm{L}\ref{close_to_KDU}}  \pm \beta$
        \begin{align*}
                  \left| (a-bK_{D_U}^\theta) - (a' -b'K') \right| &\le  |a - a'| +b |K' - K_{D_U}^{\theta}| + |K'||b' - b| \\
                  &\le (\beta+bO_T(\upsilon)+|K'|\beta) \\
                  &= O_T(\upsilon). \numberthis \label{eq:a_to_m_abs}
        \end{align*}
        If $y_{t} > \frac{D_U}{a_m - b_mK'}$, then for some $\theta' \in \hat{\theta}_{\mathrm{L}\ref{close_to_KDU}}  \pm \beta$, $C(y_t) = \frac{D_U - a'y_t}{b'}$. Therefore,
        \begin{align*}
            &\left|C(y_{t}) - C_{K'}^{\mathrm{unc}}(y_{t}) \right| \\
            &=\left|C(y_{t}) + K'y_{t} \right| \\
            &= \left|\frac{D_U - a'y_{t}}{b'} + K'y_{t}\right| \\
            &=\frac{1}{b'} \left| D_U - \left(a' - b'K'\right)y_{t}\right| \\
            &=\frac{a'-b'K'}{b'} \left| \frac{D_U}{a' - b'K'} - y_{t}\right| && \text{Equations \eqref{eq:boundfrom12}, \eqref{eq:boundfrom0}} \\
            &\le \frac{a'-b'K'}{b'} \left| \frac{D_U}{a'-b'K'} - (D_U + \bar{w})\right|  &&  \text{$\frac{D_U}{a' - b'K'} \le y_{t} \le D_U + \bar{w}$ by Lemma \ref{lemma:safety_of_C}}\\
            &=  \frac{a'-b'K'}{b'} \left| \frac{D_U}{a'-b'K'} - \frac{D_U}{a-bK_{D_U}^\theta}\right|\\
            &= \frac{D_U}{b'} \left| \frac{(a-bK_{D_U}^\theta) - \left(a' - b'K'\right)}{a-bK_{D_U}^\theta}\right|\\
            &\le \frac{D_U}{b'} \left( \frac{O_T(\upsilon)  }{a-bK_{D_U}^\theta}\right) && \text{Equation \eqref{eq:a_to_m_abs}, Equation \eqref{eq:boundfrom0}}\\
            &=  \frac{(D_U + \bar{w})O_T(\upsilon)}{b'}  && \text{$a-bK_{D_U}^\theta = \frac{D_U}{D_U + \bar{w}}$} \\
            &= O_T(\upsilon). \numberthis \label{eq:diff_in_contr_bound}
        \end{align*}
        Because the controls used by $C$ are safe with respect to $\theta$ by Lemma \ref{lemma:safety_of_C}, if $D_L - \frac{D_L}{a_m - b_mK'} > \bar{w}$, then $\P\left(y_t \le \frac{D_L}{a_m -b_mK'}\right) = 0$. Therefore, if $y_t \le \frac{D_L}{a_m -b_mK'}$ then it also must be the case that $D_L - \frac{D_L}{a_m - b_mK'} \le \bar{w}$. By Equations \eqref{eq:boundfrom12} and \eqref{eq:boundfrom0}, we have that $a_m - b_mK' \le \frac{D_U}{D_U + \bar{w}} + O_T(\upsilon)$ and $a_m - b_mK' \ge \frac{D_U}{D_U + \bar{w}} - O_T(\upsilon)$. Therefore, if  $y_t \le \frac{D_L}{a_m -b_mK'}$, then $D_L - \frac{D_L}{a_m - b_mK'} \le \bar{w}$, which implies that
        \begin{align*}
            D_L &\ge \frac{\bar{w}}{ 1- \frac{1}{a_m - b_mK'}}\\
            &= \bar{w}\frac{a_m- b_mK'}{a_m- b_mK' - 1} \\
            &\ge  \bar{w}\frac{\frac{D_U}{D_U + \bar{w}} - O_T(\upsilon) }{\frac{D_U}{D_U + \bar{w}} + O_T(\upsilon) - 1} \\
            &= \bar{w} \left(\frac{\frac{D_U}{D_U+\bar{w}}}{\frac{-\bar{w}}{D_U+\bar{w}}} - O_T(\upsilon)\right)\\
            &= -D_U - O_T(\upsilon).
        \end{align*}
        This combined with the fact that $D_U \le |D_L|$  by Assumption \ref{assum_thm4}, we have that if $y_t \le \frac{D_L}{a_m -b_mK'}$, then
        \begin{equation}\label{eq:DL_to_DU}
        ||D_L|-D_U| \le O_T(\upsilon).
        \end{equation} 
        Therefore, if $y_t < \frac{D_L}{a_m- b_mK'}$, then for some $\theta' \in \hat{\theta}_{\mathrm{L}\ref{close_to_KDU}}  \pm \beta$, $C(y_t) = \frac{D_L - a'y_t}{b'}$. Therefore, 
        {\fontsize{10}{10}
        \begin{align*}
             &\left|C(y_{t}) - C_{K'}^{\mathrm{unc}}(y_{t}) \right|  \\
             &=\left|C(y_{t}) + K'y_{t} \right| \\
            &= \left|\frac{D_L - a'y_{t}}{b'} + K'y_{t}\right| \\
            &=\frac{1}{b'} \left| D_L - \left(a'- b'K'\right)y_{t}\right| \\
            &\le \frac{1}{b'} \left| D_L - \left(a'- b'K'\right)(D_L - \bar{w})\right|   && \text{$D_L - \bar{w} \le y_t \le \frac{D_L}{a'- b'K'}$, Eq \eqref{eq:boundfrom0}}\\
            &=  \frac{1}{b'} \left| |D_L| - \left(a'- b'K'\right)(|D_L| + \bar{w})\right| \\
            &\le  \frac{1}{b'} \left| D_U - \left(a'- b'K'\right)(D_U + \bar{w})\right| + |D_U - |D_L||\\
            &\quad \quad + |(a' - b'K')(D_U - |D_L|)|\\
            &\le  \frac{1}{b'} \left| D_U - \left(a'- b'K'\right)(D_U + \bar{w})\right| + |D_U - |D_L|| + |D_U - |D_L|| && \text{Equation \eqref{eq:boundfrom0}, \eqref{eq:boundfrom12}}\\
            &\le  \frac{1}{b'} \left| D_U - \left(a'- b'K'\right)(D_U + \bar{w})\right| + O_T(\upsilon) && \text{Equation \eqref{eq:DL_to_DU}}\\
            &\le  \frac{\left(a'- b'K'\right)}{b'} \left| \frac{D_U}{\left(a'- b'K'\right)} - (D_U + \bar{w})\right| + O_T(\upsilon) \\
            &= O_T(\upsilon). && \text{As in Equation \eqref{eq:diff_in_contr_bound}} \numberthis \label{eq:diff_in_contr_bound2}        
            \end{align*}
            }
        Combining Equations \eqref{eq:diff_in_contr_bound} and \eqref{eq:diff_in_contr_bound2} gives that if $y_{t} > \frac{D_U}{a_m-b_mK'} \text{ or } y_{t} < \frac{D_L}{a_m-b_mK'}$,
        \begin{align*}
            \left|C_{K'}^{\mathrm{unc}}(x_t) - C(y_t)\right| &= \left|-K'x_t + K'y_t - K'y_t -  C(y_t)\right| \\
            &=  \left|K'x_t - K'y_t\right|  + \left|K'y_t +  C(y_t)\right|  \\
            &\le K'd_t + O_T(\upsilon) \\
            &\le O_T(\upsilon). && \text{Equation \eqref{eq:dt_bound}} \numberthis \label{eq:diff_in_controls_K_proof}
        \end{align*}
        
         Now we can use this to bound the value of $d_{t+1}$  as follows:
        \begin{align*}
            d_{t+1}  &= |(a-bK')x_{t} - (ay_{t} - bC(y_{t_1})| \\
            &= |(a-bK')x_{t} - (a - bK')y_{t} +  bK'y_{t}  - bC(y_{t})| \\
            &\le |(a-bK')x_{t} - (a - bK')y_{t}| + |bK'y_{t}  - bC(y_{t})|   \\
            &\le (a-bK')d_{t} + bO_T(\upsilon) & \text{Equations \eqref{eq:diff_in_contr_bound} and \eqref{eq:diff_in_contr_bound2}} \\
            &\le (a-bK')d_{t} + O_T(\upsilon).  \numberthis \label{eq:recurs2}
        \end{align*}

        Equations \eqref{eq:diff_in_controls_K_proof} and \eqref{eq:recurs2} give the second half of both desired piecewise equations.
\subsection{Proof of Lemma \ref{lemma:safety_of_C}}\label{app:proof_of_safety_of_C}
    \begin{proof}
        We will proceed by induction. For the base case, we have that $y_0 = x_0$ satisfies $|y_0| \le \norm{D}_{\infty} + \bar{w}$. Define $z := \frac{D_U - ay_0 - 2\beta(\norm{D}_{\infty} + \bar{w} + \log(T))}{b}$. For sufficiently large $T$, because $\beta \le 1/\log^2(T)$ and $\norm{D}_{\infty} = O_T(1)$, we have that
        \[
            |z| \le \frac{D_U + a(\norm{D}_{\infty} + \bar{w})+2\beta(\norm{D}_{\infty} + \bar{w} + \log(T))}{b}  \le \frac{D_U + a(\norm{D}_{\infty} + \bar{w})+\frac{2(\norm{D}_{\infty} + \bar{w} + \log(T))}{\log^2(T)}}{b} \le \log(T).
        \]
        Because $\theta \in \hat{\theta}_{\mathrm{L}\ref{close_to_KDU}}  \pm \beta$, 
        \begin{align*}
            \max_{\theta' \in \hat{\theta}_{\mathrm{L}\ref{close_to_KDU}}  \pm \beta} a'y_0 + b'z &\le ay_0 + bz + 2\beta |y_0| + 2\beta|z| \\
            &\le ay_0 + bz + 2\beta(\norm{D}_{\infty} + \bar{w} + \log(T))\\
            &=  ay_0 + D_U - ay_0 - 2\beta(\norm{D}_{\infty} + \bar{w} + \log(T)) + 2\beta(\norm{D}_{\infty} + \bar{w} + \log(T)) \\
            &= D_U.
        \end{align*}
        Therefore, 
        \[
            v_t^{\mathrm{safeU}} \ge z = \frac{D_U - ay_0 - 2\beta(D_U + \bar{w} + \log(T))}{b}.
        \]
        By similar logic, we have that
        \[
            v_t^{\mathrm{safeL}} \le \frac{D_L - ay_0 + 2\beta(\norm{D}_{\infty} + \bar{w} + \log(T))}{b}.
        \]
        For sufficiently large $T$, $4\beta(\norm{D}_{\infty} + \bar{w} + \log(T)) \le \frac{1}{\log(T)}$. Therefore, because $D_U \ge D_L + \frac{1}{\log(T)}$, we have that 
        \[
            v_t^{\mathrm{safeL}} \le v_t^{\mathrm{safeU}}.
        \]
        Finally, this implies by construction of the controller $C$ that the control $C(y_0)$ will be safe for all $\theta' \in \hat{\theta}_{\mathrm{L}\ref{close_to_KDU}}  \pm \beta$. This completes the base case. 

        For the inductive step, we note that if $C(y_{t-1})$ is safe for all $\theta' \in \hat{\theta}_{\mathrm{L}\ref{close_to_KDU}}  \pm \beta$, then it is safe for $\theta$. This implies that $D_L \le ay_{t-1} + bC(y_{t-1}) \le D_U$, which implies that $|y_t| \le \norm{D}_{\infty} + \bar{w}$. We can therefore use the exact same logic as in the base case to get that $C(y_t)$ will be safe for all $\theta' \in \hat{\theta}_{\mathrm{L}\ref{close_to_KDU}}  \pm \beta$. This completes the proof by induction.
    \end{proof}
\section{Proofs from Appendix \ref{sec:proof_of_prop_Kcase}}
\subsection{Proof of Proposition \ref{close_J2}}\label{sec:proof_of_close_J2} 
\begin{proof}
    By Lemma \ref{j_bounded_from_0}, $a^*-b^*F_{\mathrm{opt}}(\theta^*) < 1 - c_{\mathrm{L}\ref{j_bounded_from_0}}$, which implies by Lemma \ref{lemma:convexity} that $\bar{J}(\theta^*, C_F^{\mathrm{unc}})$ is twice differentiable at the point $F = F_{\mathrm{opt}}(\theta^*)$ with first and second derivatives that are both finite and independent of $T$.  We also have by Lemma \ref{close_J} that  $|F_{\mathrm{opt}}(\hat{\theta}_{\mathrm{wu}}) - F_{\mathrm{opt}}(\theta^*)| \le \tilde{O}_T(T^{-1/4})$ conditional on event $E_2^0$. Therefore,  conditional on event $E_2^0$ and for sufficiently large $T$, we can do a second order Taylor expansion of $\bar{J}(\theta^*, C_F^{\mathrm{unc}})$ around $F = F_{\mathrm{opt}}(\theta^*)$ to get that
    \begin{equation}\label{close_J2a}
        \left|T \cdot \bar{J}\left(\theta^*, C_{F_{\mathrm{opt}}(\hat{\theta}_{\mathrm{wu}})}^{\mathrm{unc}}\right) -  T \cdot \bar{J}\left(\theta^*, C_{F_{\mathrm{opt}}(\theta^*)}^{\mathrm{unc}}\right)\right| = \tilde{O}_T(\sqrt{T}).
    \end{equation}
    Because the lowest-cost unconstrained linear controller $C_{F_{\mathrm{opt}}(\theta^*)}^{\mathrm{unc}}$ has the lowest cost among all unconstrained controllers \cite{anderson2007optimal}, 
    \begin{equation}\label{close_J2b}
        T \cdot \bar{J}(\theta^*, C_{F_{\mathrm{opt}}(\theta^*)}^{\mathrm{unc}}) -  T \cdot \bar{J}(\theta^*, C_{K_{\mathrm{opt}}(\theta^*, T)}^{\theta^*}) \le 0.
    \end{equation}
    Combining Equations \eqref{close_J2a} and \eqref{close_J2b} and multiplying by $(T-T_0)/T$, we have
    \begin{equation}\label{close_J2c}
      (T-T_0) \cdot \bar{J}(\theta^*, C_{F_{\mathrm{opt}}(\hat{\theta}_{\mathrm{wu}})}^{\mathrm{unc}}) -  (T-T_0) \cdot \bar{J}(\theta^*, C_{K_{\mathrm{opt}}(\theta^*, T)}^{\theta^*}) = \tilde{O}_T(\sqrt{T}).
    \end{equation}
    Now we just need to convert this to a result about finite time cost rather than infinite cost which requires the following lemma.    
    \begin{lemma}\label{split2}
         Under Assumptions \ref{assum:constraints}--\ref{assum:initial} and \ref{assum_thm4}, for any $\theta \in \Theta$ and $K$ satisfying $1-(a-bK) = \epsilon > 0$ for some $\epsilon = \Omega_T(1)$,
        \[
            |\bar{J}(\theta, C_K^{\mathrm{unc}}, T) - \bar{J}(\theta, C_K^{\mathrm{unc}})|  = O_T\left(\frac{1}{T}\right).
        \]
    \end{lemma}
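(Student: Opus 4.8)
The plan is to avoid any asymptotic estimate on the trajectory and work directly from the closed-form expressions derived in the proof of Lemma~\ref{lemma:convexity}. Writing $\rho := (a-bK)^2$, that proof shows that whenever $|a-bK|<1$,
\begin{equation*}
\bar{J}(\theta, C_K^{\mathrm{unc}}, T) = \frac{\sigma_{\mathcal{D}}^2(q + rK^2)}{1-\rho}\left(1 - \frac{1-\rho^T}{T(1-\rho)}\right) - \frac{rK^2\sigma_{\mathcal{D}}^2(1-\rho^T)}{T(1-\rho)}, \qquad \bar{J}(\theta, C_K^{\mathrm{unc}}) = \frac{\sigma_{\mathcal{D}}^2(q+rK^2)}{1-\rho}.
\end{equation*}
So the first step is simply to subtract these two expressions: the term $\frac{\sigma_{\mathcal{D}}^2(q+rK^2)}{1-\rho}$ cancels, leaving
\begin{equation*}
\bar{J}(\theta, C_K^{\mathrm{unc}}, T) - \bar{J}(\theta, C_K^{\mathrm{unc}}) = -\frac{\sigma_{\mathcal{D}}^2(q+rK^2)(1-\rho^T)}{T(1-\rho)^2} - \frac{rK^2\sigma_{\mathcal{D}}^2(1-\rho^T)}{T(1-\rho)}.
\end{equation*}

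Next I would record the elementary bounds that make these two error terms $O_T(1/T)$. The hypothesis $1-(a-bK) = \epsilon = \Omega_T(1)$, together with finiteness of $\bar{J}(\theta, C_K^{\mathrm{unc}})$ (equivalently $|a-bK|<1$, which is precisely the regime of Lemma~\ref{j_bounded_from_0}, where this lemma is applied, namely with $K = F_{\mathrm{opt}}(\hat{\theta}_{\mathrm{wu}})$), gives $a-bK \in [0,1-\epsilon]$, hence $\rho\in[0,1)$, $0 \le 1-\rho^T \le 1$, and
\begin{equation*}
1-\rho = \bigl(1-(a-bK)\bigr)\bigl(1+(a-bK)\bigr) \ge \epsilon = \Omega_T(1),
\end{equation*}
so $\frac{1}{1-\rho}$ and $\frac{1}{(1-\rho)^2}$ are $O_T(1)$. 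Also $bK = a-1+\epsilon$ with $|a-bK|<1$ forces $K \in \bigl(\frac{a-1}{b}, \frac{a+1}{b}\bigr)$, so $|K| \le \frac{\bar a+1}{\underline b} = O_T(1)$, while $q$, $r$, $\sigma_{\mathcal{D}}^2$ are constants independent of $T$. Substituting these into the displayed difference, each of the two terms is bounded by $O_T(1/T)$, which yields $|\bar{J}(\theta, C_K^{\mathrm{unc}}, T) - \bar{J}(\theta, C_K^{\mathrm{unc}})| = O_T(1/T)$, as desired.

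This argument has no genuinely hard step: it is essentially a one-line computation once the closed forms from Lemma~\ref{lemma:convexity} are in hand. The only subtlety worth being explicit about is ensuring that $1-\rho$ is bounded away from $0$ — i.e.\ that $a-bK$ is bounded away from both $1$ and $-1$; the first follows from $\epsilon = \Omega_T(1)$ and the second from finiteness of the infinite-horizon cost (or, concretely, from Lemma~\ref{j_bounded_from_0} in the context where the lemma is used). As an alternative route one could instead imitate the doubling argument in the proof of Lemma~\ref{split}, bounding $|\bar{J}(\theta, C_K^{\mathrm{unc}}, 2T) - \bar{J}(\theta, C_K^{\mathrm{unc}}, T)| = \tilde{O}_T(1/T)$ and summing over doublings, but the explicit formula makes that detour unnecessary.
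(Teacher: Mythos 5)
Your proof is correct, but it takes a genuinely different route from the paper's. The paper proves Lemma~\ref{split2} by mirroring the doubling argument of Lemma~\ref{split}: it first invokes Lemma~\ref{unc_start_invariant} to get $|x_T|\le \bar{w}/\epsilon$ and a quadratic bound on the cost difference between two starting states, deduces $|\bar{J}(\theta,C_K^{\mathrm{unc}},2T)-\bar{J}(\theta,C_K^{\mathrm{unc}},T)| = O_T(1/T)$, and then telescopes over horizons $2^iT$ to reach the infinite-horizon limit. You instead subtract the exact closed-form expressions for $\bar{J}(\theta,C_K^{\mathrm{unc}},T)$ and $\bar{J}(\theta,C_K^{\mathrm{unc}})$ already computed in the proof of Lemma~\ref{lemma:convexity}; the resulting two remainder terms each carry an explicit factor of $1/T$, and the bound follows from $1-\rho \ge \epsilon = \Omega_T(1)$ and $|K| = O_T(1)$. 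Your computation of the difference is right, and your approach is shorter and more transparent since the closed form is already available for unconstrained linear controllers. What the paper's route buys is uniformity with Lemma~\ref{split}, where the truncated controller admits no closed form and the doubling argument is the only option. One point both proofs share and that you were right to flag explicitly: the stated hypothesis $1-(a-bK)=\epsilon=\Omega_T(1)$ only bounds $a-bK$ away from $1$ from below, not away from $-1$; both your argument and the paper's geometric-series bound in Lemma~\ref{unc_start_invariant} implicitly need $|a-bK|<1$, which holds in the only place the lemma is applied (with $K=F_{\mathrm{opt}}(\hat{\theta}_{\mathrm{wu}})$, where Lemma~\ref{j_bounded_from_0} gives $a-bK\in(c_{\mathrm{L}\ref{j_bounded_from_0}}/2,\,1-c_{\mathrm{L}\ref{j_bounded_from_0}}/2)$).
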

           The proof of Lemma \ref{split2} can be found in Appendix \ref{sec:proof_of_split2}.

    For sufficiently large $T$, conditional on event $E_2^0$,
    \begin{align*}
      1 -  ( a^*-b^*F_{\mathrm{opt}}(\hat{\theta}_{\mathrm{wu}})) &\ge 1  - \left(a^*-b^*F_{\mathrm{opt}}(\theta^*) - \tilde{O}_T(T^{-1/4})  \right)&& \text{Lemma \ref{close_J}} \\
        &>c_{\mathrm{L}\ref{j_bounded_from_0}}/2. && \text{Lemma \ref{j_bounded_from_0}}
    \end{align*}
    Therefore, we can apply Lemmas \ref{split} and \ref{split2} to Equation \eqref{close_J2c} to get the desired result that conditional on event $E_2^0$,
    \[
      (T-T_0) \cdot \bar{J}(\theta^*, C_{F_{\mathrm{opt}}(\hat{\theta}_{\mathrm{wu}})}^{\mathrm{unc}}, T-T_0) -  (T-T_0) \cdot \bar{J}(\theta^*, C_{K_{\mathrm{opt}}(\theta^*, T)}^{\theta^*}, T-T_0) = \tilde{O}_T(\sqrt{T}).
    \]
\end{proof}
\subsection{Proof of Proposition \ref{close_safe_J}}\label{sec:proof_of_close_safe_J} 
\begin{proof}
We will apply the standard McDiarmid's inequality to the function 
\[
f(\{w_{t}\}_{t=T_0}^{T-1}) = (T-T_0)J(\theta^*, C_{F_{\mathrm{opt}}(\hat{\theta}_{\mathrm{wu}})}^{\mathrm{unc}}, T-T_0, 0, W').
\]
To do this, we need a bounded difference inequality. We will show the following.

\begin{lemma}\label{lemma:mcdiarmid_standard}
    For $i \in [T_0:T-1]$, let $\{w'_{t}\}_{t=T_0}^{T-1}$ be such that $w'_t = w_t$ for $t \ne i$ and $w'_i \sim \mathcal{D}$ is independent of $\{w_{t}\}_{t=T_0}^{T-1}$. If $|F_{\mathrm{opt}}(\hat{\theta}_{\mathrm{wu}}) - F_{\mathrm{opt}}(\theta^*)| \le \tilde{O}_T(T^{-1/4})$, then for sufficiently large $T$,
    \[
        |(T-T_0) \cdot J(\theta^*, C_{F_{\mathrm{opt}}(\hat{\theta}_{\mathrm{wu}})}^{\mathrm{unc}}, T-T_0, 0, \{w_{t}\}_{t=T_0}^{T-1}) - (T-T_0) \cdot J(\theta^*, C_{F_{\mathrm{opt}}(\hat{\theta}_{\mathrm{wu}})}^{\mathrm{unc}}, T-T_0,0, \{w'_{t}\}_{t=T_0}^{T-1})| \le c.
    \]
    for some $c = \tilde{O}_T(1)$.
\end{lemma}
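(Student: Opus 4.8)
The plan is to establish the bounded-difference property by tracking how a single resampled noise variable $w_i$ propagates through the trajectory of the \emph{linear} (untruncated) controller $C_{F_{\mathrm{opt}}(\hat{\theta}_{\mathrm{wu}})}^{\mathrm{unc}}$. Write $F := F_{\mathrm{opt}}(\hat{\theta}_{\mathrm{wu}})$ and $\rho := a^* - b^*F$. First I would note that conditional on $|F_{\mathrm{opt}}(\hat{\theta}_{\mathrm{wu}}) - F_{\mathrm{opt}}(\theta^*)| \le \tilde{O}_T(T^{-1/4})$, Lemma~\ref{j_bounded_from_0} gives $a^* - b^*F_{\mathrm{opt}}(\theta^*) < 1 - c_{\mathrm{L}\ref{j_bounded_from_0}}$, so for sufficiently large $T$ we have $|\rho| \le 1 - c_{\mathrm{L}\ref{j_bounded_from_0}}/2 < 1$; i.e.\ the closed-loop map is a strict contraction with a $T$-independent rate. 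Because the controller is exactly linear, the state at time $T_0 + k$ has the closed form $x_{T_0+k} = \sum_{j=0}^{k-1} \rho^{k-1-j} w_{T_0+j}$, and replacing $w_i$ by $w_i'$ changes $x_{T_0+k}$ by exactly $\rho^{\,T_0+k-1-i}(w_i - w_i')$ for $k > i - T_0$ and by $0$ otherwise. Since $\mathcal{D}$ is bounded (Assumption~\ref{assum_thm4}), $|w_i - w_i'| \le 2\bar{w} = \tilde{O}_T(1)$, so the perturbation in each state is at most $|\rho|^{\,T_0+k-1-i} \cdot 2\bar{w}$.

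Next I would bound the change in the total cost $f = \sum_{t=T_0}^{T-1}\big(q x_t^2 + r u_t^2\big) + q x_{T-1}^2$ (with $u_t = -F x_t$, so $u_t^2 = F^2 x_t^2$ and $F = O_T(1)$ by Lemma~\ref{j_bounded_from_0}). Writing $\delta_t$ for the perturbation in $x_t$, the change in $q x_t^2$ is at most $2q|x_t||\delta_t| + q\delta_t^2$, and similarly for the control term. To control $|x_t|$ I would argue that under the high-probability event $E_1$ (all noise bounded, which follows trivially here since $\mathcal{D}$ is bounded by $\bar{w}$) we have $|x_t| \le \sum_{j}|\rho|^{\cdots}\bar{w} \le \bar{w}/(1-|\rho|) = \tilde{O}_T(1)$ for all $t$ — in fact since $\mathcal{D}$ is bounded this holds deterministically. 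Then summing: $\sum_{t} |\delta_t| \le 2\bar{w}\sum_{k \ge 0}|\rho|^k = 2\bar{w}/(1-|\rho|) = \tilde{O}_T(1)$, and likewise $\sum_t \delta_t^2 = \tilde{O}_T(1)$. Combining, $|f(\{w_t\}) - f(\{w_t'\})| \le \tilde{O}_T(1)\cdot\big(\sum_t(|\delta_t| + \delta_t^2)\big) = \tilde{O}_T(1) =: c$, which is the claimed bounded-difference constant. The resampled-noise term $w_i'$ only adds a bounded quantity, so the same argument applies symmetrically.

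The main obstacle — and it is a mild one here — is making sure all the $\tilde{O}_T(1)$ bounds genuinely avoid hidden $T$-dependence: specifically one must use that $1 - |\rho|$ is bounded below by a $T$-independent constant (from Lemma~\ref{j_bounded_from_0} plus the $\tilde{O}_T(T^{-1/4})$ closeness of $F_{\mathrm{opt}}(\hat{\theta}_{\mathrm{wu}})$ to $F_{\mathrm{opt}}(\theta^*)$, for sufficiently large $T$), and that $F = O_T(1)$ and $\bar w, \norm{D}_\infty = \tilde O_T(1)$. Everything else is the standard geometric-series estimate for a contractive linear recursion. Once Lemma~\ref{lemma:mcdiarmid_standard} is in hand, the proposition follows from the standard McDiarmid inequality: with $n = T - T_0 \le T$ coordinates each with bounded difference $c = \tilde{O}_T(1)$, we get $|f - \E f| \le c\sqrt{(n/2)\log(T^2)} = \tilde{O}_T(\sqrt{T})$ with probability $1 - o_T(1/T)$; calling this event $E_{\mathrm{P}\ref{close_safe_J}}$ and noting $\E f = (T-T_0)\cdot \bar J(\theta^*, C_{F_{\mathrm{opt}}(\hat{\theta}_{\mathrm{wu}})}^{\mathrm{unc}}, T-T_0)$ gives \eqref{eq:close_safe_J}. (One subtlety: $\hat\theta_{\mathrm{wu}}$ is itself random, so strictly one conditions on $\hat\theta_{\mathrm{wu}}$ — equivalently works on the event $E_2^0$ where $|F_{\mathrm{opt}}(\hat\theta_{\mathrm{wu}}) - F_{\mathrm{opt}}(\theta^*)| \le \tilde O_T(T^{-1/4})$ by Lemma~\ref{close_J} — before applying McDiarmid to the independent $\{w_t\}_{t=T_0}^{T-1}$, then takes a union bound.)
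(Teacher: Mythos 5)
Your proposal is correct and follows essentially the same argument as the paper: the resampled $w_i$ leaves the trajectory unchanged through time $i$, perturbs $x_{i+1}$ by at most $2\bar{w}$, and the perturbation then decays geometrically because $1-(a^*-b^*F_{\mathrm{opt}}(\hat{\theta}_{\mathrm{wu}}))$ is bounded below by a $T$-independent constant via Lemma~\ref{j_bounded_from_0} and the $\tilde{O}_T(T^{-1/4})$ closeness of $F_{\mathrm{opt}}(\hat{\theta}_{\mathrm{wu}})$ to $F_{\mathrm{opt}}(\theta^*)$. The only cosmetic difference is that you inline the geometric-series propagation via the closed form of the linear recursion, whereas the paper packages the tail-of-trajectory comparison as an application of Lemma~\ref{unc_start_invariant} (two starting states, same noise); the estimates and the constants involved are the same.
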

The proof of Lemma \ref{lemma:mcdiarmid_standard} can be found in Appendix \ref{sec:proof_of_lemma:mcdiarmid_standard}.

Under event $E_2^0$, by Lemma \ref{close_J} we have that $|F_{\mathrm{opt}}(\hat{\theta}_{\mathrm{wu}}) - F_{\mathrm{opt}}(\theta^*)| \le \tilde{O}_T(T^{-1/4})$. Furthermore, conditional on $E_2^0$ and $\hat{\theta}_{\mathrm{wu}}$ the random variables $\{w_{t}\}_{t=T_0}^{T-1}$ are still i.i.d. because the noise random variables are independent of the history. Therefore, conditional on event $E_2^0$, we can use Lemma \ref{lemma:mcdiarmid_standard} with the standard McDiarmid's inequality \cite{mcdiarmid1989method} and get
{\fontsize{10}{10}
\begin{align*}
    &\P\left(|(T-T_0) \cdot J(\theta^*, C_{F_{\mathrm{opt}}(\hat{\theta}_{\mathrm{wu}})}^{\mathrm{unc}}, T-T_0, 0, \{w_{t}\}_{t=T_0}^{T-1}) - \E[(T-T_0) \cdot J(\theta^*, C_{F_{\mathrm{opt}}(\hat{\theta}_{\mathrm{wu}})}^{\mathrm{unc}}, T-T_0, 0, \{w_{t}\}_{t=T_0}^{T-1})]|  \ge \epsilon \cond \hat{\theta}_{\mathrm{wu}} \right) \\
    &\le 2\exp\left(-2\frac{\epsilon^2}{c^2(T-T_0)}\right).
\end{align*}
}
Because 
\[
\E[(T-T_0) \cdot J(\theta^*, C_{F_{\mathrm{opt}}(\hat{\theta}_{\mathrm{wu}})}^{\mathrm{unc}}, T-T_0, 0, \{w_{t}\}_{t=T_0}^{T-1})] = (T-T_0) \cdot \bar{J}(\theta^*, C_{F_{\mathrm{opt}}(\hat{\theta}_{\mathrm{wu}})}^{\mathrm{unc}}, T-T_0),
\]
taking $\epsilon = \sqrt{T}c\log(T)$ gives conditional on $E_2^0$, 
{\fontsize{10}{10}
\begin{align*}
    &\P\left(|(T-T_0) \cdot J(\theta^*, C_{F_{\mathrm{opt}}(\hat{\theta}_{\mathrm{wu}})}^{\mathrm{unc}}, T-T_0, 0, \{w_{t}\}_{t=T_0}^{T-1}) - T-T_0) \cdot \bar{J}(\theta^*, C_{F_{\mathrm{opt}}(\hat{\theta}_{\mathrm{wu}})}^{\mathrm{unc}}, T-T_0)|  \ge \sqrt{T}c\log(T) \cond \hat{\theta}_{\mathrm{wu}} \right) \\
    &= o_T(1/T). \numberthis \label{eq:E20csqrtT}
\end{align*}
}
Define 
{\fontsize{10}{10}
\[
    E_{\mathrm{P}\ref{close_safe_J}} := \left\{ |(T-T_0) \cdot J(\theta^*, C_{F_{\mathrm{opt}}(\hat{\theta}_{\mathrm{wu}})}^{\mathrm{unc}}, T-T_0, 0, \{w_{t}\}_{t=T_0}^{T-1}) - T-T_0) \cdot \bar{J}(\theta^*, C_{F_{\mathrm{opt}}(\hat{\theta}_{\mathrm{wu}})}^{\mathrm{unc}}, T-T_0)|  < \sqrt{T}c\log(T) \right\}.
\]
}
By the law of total expectation, Equation \eqref{eq:E20csqrtT} implies that
\[
    \P(E_{\mathrm{P}\ref{close_safe_J}} \mid E_2^0) = \E[\P(E_{\mathrm{P}\ref{close_safe_J}} \mid \hat{\theta}_{\mathrm{wu}}, E_2^0) \mid E_2^0] \ge 1 - o_T(1/T).
\]

Because $\P(E_2^0) \ge \P(E) = 1-o_T(1/T)$, we therefore have that 
\[
\P(E_{\mathrm{P}\ref{close_safe_J}}) \ge \P(E_{\mathrm{P}\ref{close_safe_J}} \mid E_2^0)\P(E_2^0) = 1 - o_T(1/T)
\]
as desired.

\end{proof}

\subsection{Proof of Proposition \ref{F_start}}\label{sec:proof_of_F_start} 
\begin{proof}   
    \begin{lemma}\label{unc_start_invariant}
        Under Assumptions \ref{assum:constraints}--\ref{assum:initial} and \ref{assum_thm4}, for any $\theta \in \Theta$ and any $K \in [\frac{a-1}{b}, \frac{a}{b}]$, when using controller $C_K^{\mathrm{unc}}$ under dynamics $\theta$ where $1 - (a-bK) = \epsilon = \Omega_T(1)$ and starting at state $x_0 = x$, then for all $t$, the state $x_t$ at time $t$ satisfies
        \[
            |x_t| \le |x| + \frac{\bar{w}}{\epsilon}.
        \]
        Furthermore, for any $x,y,W'$ and $\tau \le T$,
        \begin{align*}
            |\tau J(\theta, C_K^{\mathrm{unc}}, \tau, x, W') - \tau J(\theta, C_K^{\mathrm{unc}}, \tau, y, W')| &\le \frac{(q+rK^2)(x-y)^2+2(q+rK^2)\left(|x| + \frac{\bar{w}}{\epsilon}\right)|x-y|}{\epsilon } \\
            &= O_T\left((x-y)^2 + |x(x-y)|\right).
        \end{align*}
    \end{lemma}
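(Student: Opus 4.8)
The plan is to prove the two assertions by direct computation, exploiting that under $C_K^{\mathrm{unc}}$ the closed loop is a stable scalar autoregression. Set $\rho := a - bK$. From $K \in [\tfrac{a-1}{b}, \tfrac{a}{b}]$ and $b>0$ we get $\rho \in [0,1]$, and the hypothesis $1-(a-bK)=\epsilon$ with $\epsilon=\Omega_T(1)>0$ gives $\rho = 1-\epsilon \in [0,1)$; under Assumption \ref{assum_thm4} every process noise value satisfies $|w_t|\le \bar w$. Since $C_K^{\mathrm{unc}}(x_t) = -Kx_t$, the state recursion is $x_{t+1} = \rho x_t + w_t$, and I would prove the first bound by induction on $t$: the base case $|x_0| = |x| \le |x| + \tfrac{\bar w}{\epsilon}$ is trivial, and for the inductive step
\[
|x_{t+1}| \le \rho|x_t| + \bar w \le \rho\Bigl(|x| + \tfrac{\bar w}{\epsilon}\Bigr) + \bar w = \rho|x| + \frac{\bar w(\rho+\epsilon)}{\epsilon} \le |x| + \frac{\bar w}{\epsilon},
\]
where the final step uses $\rho<1$ and $\rho+\epsilon=1$.

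For the second assertion, let $x_0=x,x_1,\dots$ and $y_0=y,y_1,\dots$ be the trajectories driven by the same $W'$ under $C_K^{\mathrm{unc}}$. Subtracting the recursions gives $x_{t+1}-y_{t+1} = \rho(x_t-y_t)$, hence $x_t - y_t = \rho^t(x-y)$ and $d_t := |x_t-y_t| = \rho^t|x-y|$. Because the running cost at step $t$ is $qx_t^2 + rK^2x_t^2 = (q+rK^2)x_t^2$ (and the terminal cost is $qx_\tau^2$), I would write
\[
\bigl|\tau J(\theta,C_K^{\mathrm{unc}},\tau,x,W') - \tau J(\theta,C_K^{\mathrm{unc}},\tau,y,W')\bigr| \le (q+rK^2)\sum_{t=0}^{\tau}\bigl|x_t^2 - y_t^2\bigr|,
\]
then bound $|x_t^2-y_t^2| = d_t|x_t+y_t| \le d_t\bigl(2|x_t| + d_t\bigr) \le 2\bigl(|x| + \tfrac{\bar w}{\epsilon}\bigr)d_t + d_t^2$ using the first assertion, and finally sum the geometric series $\sum_{t\ge 0}\rho^t = \tfrac{1}{1-\rho} = \tfrac1\epsilon$ and $\sum_{t\ge 0}\rho^{2t} = \tfrac{1}{1-\rho^2}\le \tfrac1\epsilon$ (the last inequality since $\rho\in[0,1)$). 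This yields exactly
\[
\frac{(q+rK^2)(x-y)^2 + 2(q+rK^2)\bigl(|x| + \tfrac{\bar w}{\epsilon}\bigr)|x-y|}{\epsilon},
\]
and since $q,r,|K|,\bar w$ and $1/\epsilon$ are all $O_T(1)$ (using $\epsilon=\Omega_T(1)$, $K\in[\tfrac{a-1}{b},\tfrac{a}{b}]$, and Assumptions \ref{assum_init} and \ref{assum_thm4}) this is $O_T\bigl((x-y)^2 + |x(x-y)|\bigr)$.

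This lemma has no real difficulty; the only points needing care are (i) verifying $\rho\in[0,1)$ so that the induction closes and the geometric sums converge — this is precisely where both hypotheses $K\in[\tfrac{a-1}{b},\tfrac{a}{b}]$ and $1-(a-bK)=\epsilon>0$ enter — and (ii) checking that all the constants collected into the $O_T(\cdot)$ are genuinely independent of $T$, which is immediate from the boundedness of $\Theta$ (Assumption \ref{assum_init}) and of $\mathcal{D}$ (Assumption \ref{assum_thm4}).
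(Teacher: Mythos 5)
Your proposal is correct and follows essentially the same route as the paper's proof: contract the scalar recursion $x_{t+1}=(a-bK)x_t+w_t$ with $\rho=1-\epsilon<1$, observe $d_t=\rho^t|x-y|$, bound the per-step cost difference via $|x_t^2-y_t^2|\le 2(|x|+\bar w/\epsilon)d_t+d_t^2$, and sum the geometric series. The only cosmetic differences are that you prove the state bound by induction rather than by unrolling the recursion, and you merge the $q$ and $rK^2$ terms before summing, neither of which changes the argument.
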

           The proof of Lemma \ref{unc_start_invariant} can be found in Appendix \ref{sec:proof_of_unc_start_invariant}.
           
    By Lemma \ref{close_J}, under event $E \subseteq E_2^0$, we have $|F_{\mathrm{opt}}(\hat{\theta}_{\mathrm{wu}}) - F_{\mathrm{opt}}(\theta^*)| \le O_T(T^{-1/4})$. Therefore, by Lemma \ref{j_bounded_from_0}, under event $E$ and for large enough $T$, 
    \[
    1 - (a^*-b^*F_{\mathrm{opt}}(\hat{\theta}_{\mathrm{wu}})) \ge c_{\mathrm{L}\ref{j_bounded_from_0}} - b^*|F_{\mathrm{opt}}(\hat{\theta}_{\mathrm{wu}}) - F_{\mathrm{opt}}(\theta^*)| \ge c_{\mathrm{L}\ref{j_bounded_from_0}}/2.
    \]
    Conditional on event $E$, $C^{\mathrm{alg}}$ is safe for dynamics $\theta^*$, and therefore by Lemma \ref{bounded_approx}, the  state of $C^{\mathrm{alg}}$ at time $T_0$ satisfies $|x'_{T_0}| \le B_x = \tilde{O}_T(1)$ conditional on $E$. Therefore, by Lemma \ref{unc_start_invariant}, conditional on $E$,
    \[
        (T-T_0)\cdot J(\theta^*, C_{F_{\mathrm{opt}}(\hat{\theta}_{\mathrm{wu}})}^{\mathrm{unc}}, T-T_0, x'_{T_0}, \{w_{t}\}_{t=T_0}^{T-1}) - (T-T_0) \cdot J(\theta^*, C_{F_{\mathrm{opt}}(\hat{\theta}_{\mathrm{wu}})}^{\mathrm{unc}}, T-T_0, 0, \{w_{t}\}_{t=T_0}^{T-1}) = \tilde{O}_T(1).
    \]
\end{proof}

\subsection{Proof of Proposition \ref{safety_is_cheap}}\label{sec:proof_of_safety_is_cheap} 
\begin{proof}
    Under event $E_2^0$, we have that for sufficiently large $T$,
    \begin{equation}\label{eq:bound_from_0_hat}
    \hat{a}-\hat{b}F_{\mathrm{opt}}(\hat{\theta}_{\mathrm{wu}}) \ge a^*-b^*F_{\mathrm{opt}}(\theta^*) - \tilde{O}_T(T^{-1/4}) > 0
    \end{equation}
    by Lemma \ref{j_bounded_from_0} and Lemma \ref{close_J}. Conditional on event $E_2^0 \cap E_{\mathrm{E}\ref{eq:first_case}}$ and for sufficiently large $T$ we have the following result:
    \begin{align*}
        &\tilde{O}_T(T^{-1/4}) \\
        &\ge D_U + \bar{w}  - \frac{D_U}{\hat{a}- \hat{b}F_{\mathrm{opt}}(\hat{\theta}_{\mathrm{wu}})} && \text{Equation \eqref{eq:first_case}} \\
        &= \frac{D_U}{a^* -b^*K_{D_U}^{\theta^*}} -  \frac{D_U}{\hat{a}- \hat{b}F_{\mathrm{opt}}(\hat{\theta}_{\mathrm{wu}})} && \text{Definition of $K_{D_U}^{\theta^*}$}\\
        &\ge \frac{D_U}{a^* -b^*K_{D_U}^{\theta^*}} - \frac{D_U}{a^* - b^*F_{\mathrm{opt}}(\theta^*) - \tilde{O}_T(T^{-1/4})}  && \text{Equation \eqref{eq:bound_from_0_hat}} \\
        &= \frac{-D_U\tilde{O}_T(T^{-1/4})+b^*D_U(K_{D_U}^{\theta^*}-F_{\mathrm{opt}}(\theta^*))}{(a^* - b^*F_{\mathrm{opt}}(\theta^*) - \tilde{O}_T(T^{-1/4}))(a^* - b^*K_{D_U}^{\theta^*})} \\
        &\ge \frac{-D_U\tilde{O}_T(T^{-1/4})+b^*D_U(K_{D_U}^{\theta^*}-F_{\mathrm{opt}}(\theta^*))}{(a^* - b^*F_{\mathrm{opt}}(\theta^*))(a^* - b^*K_{D_U}^{\theta^*})}   \\
        &= (K_{D_U}^{\theta^*}-F_{\mathrm{opt}}(\theta^*))\frac{b^*D_U}{(a^* - b^*F_{\mathrm{opt}}(\theta^*))(a^* - b^*K_{D_U}^{\theta^*})} \\
        &\quad \quad - \frac{D_U\tilde{O}_T(T^{-1/4})}{(a^* - b^*F_{\mathrm{opt}}(\theta^*))(a^* - b^*K_{D_U}^{\theta^*})}. \numberthis \label{eq:hatF_to_Kdu}
    \end{align*}
    Because $\theta^*, D_U, F_{\mathrm{opt}}(\theta^*), K_{D_U}^{\theta^*}$ are all independent of $T$, we can rearrange Equation \eqref{eq:hatF_to_Kdu} to get
    \[
        K_{D_U}^{\theta^*} - F_{\mathrm{opt}}(\theta^*) \le  \tilde{O}_T(T^{-1/4}).
    \]
    Combining this with Lemma \ref{close_J} which states that $|F_{\mathrm{opt}}(\hat{\theta}_{\mathrm{wu}}) - F_{\mathrm{opt}}(\theta^*)| = \tilde{O}_T(T^{-1/4})$ we have that
    \begin{equation}\label{eq:def_of_epsilonstar}
         K_{D_U}^{\theta^*} - F_{\mathrm{opt}}(\hat{\theta}_{\mathrm{wu}}) \le \tilde{O}_T(T^{-1/4}).
    \end{equation}
    Conditional on event $E_2^0 \cap E_{\mathrm{safe}}^{\mathrm{wu}}$, $\norm{\hat{\theta}_{\mathrm{wu}}- \theta^*}_{\infty} \le \epsilon_0 = \tilde{O}_T(T^{-1/4})$ and $|x'_{T_0}| \le \norm{D}_{\infty} + \bar{w}$. Conditional on $E_2^0 \cap E_{\mathrm{safe}}^{\mathrm{wu}}$, we can apply Lemma \ref{close_to_KDU} with $\theta = \theta^*$, $\hat{\theta}_{\mathrm{L}\ref{close_to_KDU}} = \hat{\theta}_{\mathrm{wu}}$, $K' = F_{\mathrm{opt}}(\hat{\theta}_{\mathrm{wu}})$, $\epsilon$ as the right hand side of Equation \eqref{eq:def_of_epsilonstar}, $\beta = \epsilon_0$, $\tau = T-T_0$, and $x_0 = x'_{T_0}$. With this choice of parameters, the controller $C$ in Lemma \ref{close_to_KDU} is exactly equivalent to $C^{\mathrm{alg}'}$ under event $E_{\mathrm{E}\ref{eq:first_case}}$. Conditional on $E_2^0$, $\epsilon$ and $\beta$ satisfy the necessary inequality for Lemma \ref{close_to_KDU} as both are $\tilde{O}_T(T^{-1/4})$. 
    
   The event $E_2^0 \cap E_{\mathrm{safe}}^{\mathrm{wu}} \cap E_{\mathrm{E}\ref{eq:first_case}}$ depends only on noise random variables before time $T_0$, which means we can apply Lemma \ref{close_to_KDU} conditional on these events. Equation \eqref{eq:close_to_KDU_second_eq} of Lemma \ref{close_to_KDU} gives that for sufficiently large $T$,  conditional on $E_2^0 \cap E_{\mathrm{safe}}^{\mathrm{wu}} \cap E_{\mathrm{E}\ref{eq:first_case}}$, and with conditional probability $1-o_T(1/T)$,
    \begin{align*}
        &\left|(T-T_0) \cdot J(\theta^*, C^{\mathrm{alg}'}, T-T_0, x'_{T_0}, W') - (T-T_0) \cdot J(\theta^*, C_{K'}^{\mathrm{unc}}, T-T_0, x'_{T_0}, W') \right| \\
        &\le (T-T_0) O_T \Big((b\epsilon + \epsilon_0 + |F_{\mathrm{opt}}(\hat{\theta}_{\mathrm{wu}})|\epsilon_0) \log\left(\frac{1}{b\epsilon  + \epsilon_0 + |F_{\mathrm{opt}}(\hat{\theta}_{\mathrm{wu}})|\epsilon_0}\right) \\
        &\quad \quad \times \left((b\epsilon  + \epsilon_0 + |F_{\mathrm{opt}}(\hat{\theta}_{\mathrm{wu}})|\epsilon_0) + \frac{\log(T)}{\sqrt{T-T_0}}\right)\Big) \\
        &\le (T-T_0) O_T \left(\tilde{O}_T(T^{-1/4})\log(1/\tilde{\Omega}_T(T^{-1/4}))\left(\tilde{O}_T(T^{-1/4}) + \frac{\log(T)}{\sqrt{T-T_0}}\right)\right) \quad \quad \quad \quad \quad \quad \text{[$\epsilon_0 = \tilde{\Omega}_T(1)$]}\\
        &= \tilde{O}_T(\sqrt{T}). \numberthis \label{eq:def_of_E24} 
    \end{align*}
    Taking $E_{\mathrm{P}\ref{safety_is_cheap}}$ to be the event that Equation \eqref{eq:def_of_E24} holds gives the desired result that $\P(E_{\mathrm{P}\ref{safety_is_cheap}} \mid E_2^0 \cap E_{\mathrm{safe}}^{\mathrm{wu}} \cap E_{\mathrm{E}\ref{eq:first_case}})  = 1-o_T(1/T)$.
\end{proof}

\subsection{Proof of Lemma \ref{split2}}\label{sec:proof_of_split2}
\begin{proof}
    By Lemma \ref{unc_start_invariant}, when starting at $x_0 = 0$ and using controller $C_K^{\mathrm{unc}}$ we have that 
    \begin{equation}\label{eq:xT_bound}
        |x_T| \le \frac{\bar{w}}{\epsilon}.
    \end{equation}
    Therefore, we can conclude that (for $W' = \{w_t\}_{t=0}^{T-1}$):
    \begin{align*}
        &\left| \bar{J}(\theta, C_K^{\mathrm{unc}}, 2T) - \bar{J}(\theta, C_K^{\mathrm{unc}},T)\right| \\
        &= \left| \frac{T\cdot \bar{J}(\theta, C_K^{\mathrm{unc}}, T) + T\cdot \E\left[\bar{J}(\theta,C_K^{\mathrm{unc}},T, x_T)\right]}{2T} - \bar{J}(\theta, C_K^{\mathrm{unc}},T)\right| \\
        &=\left| \frac{ \E\left[\bar{J}(\theta,C_K^{\mathrm{unc}},T, x_T)\right]}{2} - \frac{1}{2}\bar{J}(\theta, C_K^{\mathrm{unc}},T)\right| \\
        &=\frac{1}{2T}\left| \E\left[T\bar{J}(\theta,C_K^{\mathrm{unc}},T, x_T)\right] - T\bar{J}(\theta, C_K^{\mathrm{unc}},T)\right|  \\
        &=\frac{1}{2T}\left| \E\Big[\E\big[TJ(\theta,C_K^{\mathrm{unc}},T, x_T, W') - TJ(\theta, C_K^{\mathrm{unc}},T,0,W') \cond x_T \big]\Big]\right|  \\
        &=\frac{1}{2T}\left| \E\Big[O_T\left(x_T^2\right)\Big]\right|  && \text{Lemma \ref{unc_start_invariant}} \\
        &= O_T\left(\frac{1}{T}\E[x_T^2] \right) \\
        &= O_T\left( \frac{1}{T} \E\left[\frac{\bar{w}^2}{\epsilon^2}\right]\right)  && \text{Equation \eqref{eq:xT_bound}} \\
        &= O_T\left(\frac{1}{T}\right).
    \end{align*}

    Furthermore, we have that
    \begin{align*}
         |\bar{J}(\theta, C_K^{\mathrm{unc}}, T) -\bar{J}(\theta, C_K^{\mathrm{unc}})| &=  \left| \sum_{i=0}^{\infty} \bar{J}(\theta, C_K^{\mathrm{unc}}, 2^{i}T) - \bar{J}(\theta, C_K^{\mathrm{unc}}, 2^{i+1}T) \right| \\
         &\le \sum_{i=0}^{\infty}  \left|  \bar{J}(\theta, C_K^{\mathrm{unc}}, 2^{i}T) -  \bar{J}(\theta, C_K^{\mathrm{unc}}, 2^{i+1}T) \right| \\
         &= \sum_{i=0}^{\infty} O_T \left(\frac{1}{T2^i}\right) \\
         &= O_T \left(\frac{1}{T}\right).
    \end{align*}
\end{proof}
\subsection{Proof of Lemma \ref{lemma:mcdiarmid_standard}}\label{sec:proof_of_lemma:mcdiarmid_standard} 
\begin{proof}
    Define $x_{T_0},..,x_{T}$ as the states with noise $\{w_{t}\}_{t=T_0}^{T-1}$ when using controller $C^{\mathrm{unc}}_{F_{\mathrm{opt}}(\hat{\theta}_{\mathrm{wu}})}$ starting at $x_{T_0} = 0$ and define $y_{T_0},...,y_T$ as the states with noise $\{w'_{t}\}_{t=T_0}^{T-1}$ when using controller $C^{\mathrm{unc}}_{F_{\mathrm{opt}}(\hat{\theta}_{\mathrm{wu}})}$starting at $y_{T_0} = 0$. By construction, the cost up until time $i$ is the same for both trajectories. At time $i+1$, we have that
    \begin{equation}\label{eq:bound_diff_in_x_y}
        |y_{i+1} - x_{i+1}| = |w_i - w_i'| \le 2\bar{w} = O_T(1).
    \end{equation}
    The remaining difference in cost is simply the difference in cost of two length $T' = T-i-1$ trajectories using controller $C_{F_{\mathrm{opt}}(\hat{\theta}_{\mathrm{wu}})}^{\mathrm{unc}}$ starting at states $y_{i+1}$ and $x_{i+1}$ respectively. By the assumption of this lemma on $F_{\mathrm{opt}}(\hat{\theta}_{\mathrm{wu}})$ and Lemma \ref{j_bounded_from_0}, we have that for sufficiently large $T$,
    \[
    1-(a^*-b^*F_{\mathrm{opt}}(\hat{\theta}_{\mathrm{wu}})) \ge 1-(a^*-b^*F_{\mathrm{opt}}(\theta^*)) - \tilde{O}_T(T^{-1/4}) \ge c_{\mathrm{L}\ref{j_bounded_from_0}}/2.
    \]
    Therefore we can combine Lemma \ref{unc_start_invariant} and Equation \eqref{eq:bound_diff_in_x_y} to get that the difference in the cost from time $i+1$ onward is upper bounded by
    \begin{equation}\label{eq:app_of_start_invariant}
       |T' \cdot J(\theta^*, C_{F_{\mathrm{opt}}(\hat{\theta}_{\mathrm{wu}})}^{\mathrm{unc}},  T', x_{i+1}, \{w_{t}\}_{t=i+1}^{T-1}) - T' \cdot J(\theta^*, C_{F_{\mathrm{opt}}(\hat{\theta}_{\mathrm{wu}})}^{\mathrm{unc}}, T', y_{i+1}, \{w_{t}\}_{t=i+1}^{T-1})| = O_T(1).
    \end{equation}
    Therefore, we have that (see below for justification)
    \begin{align*}
        &|(T-T_0) J(\theta^*, C_{F_{\mathrm{opt}}(\hat{\theta}_{\mathrm{wu}})}^{\mathrm{unc}}, T-T_0,0, \{w_{t}\}_{t=T_0}^{T-1}) - (T-T_0) J(\theta^*, C_{F_{\mathrm{opt}}(\hat{\theta}_{\mathrm{wu}})}^{\mathrm{unc}}, T-T_0,0, \{w'_{t}\}_{t=T_0}^{T-1})| \\
        &=  |(i- T_0) J(\theta^*, C_{F_{\mathrm{opt}}(\hat{\theta}_{\mathrm{wu}})}^{\mathrm{unc}}, i-T_0, \{w_{t}\}_{t=T_0}^{i-1})  - (i-T_0) J(\theta^*, C_{F_{\mathrm{opt}}(\hat{\theta}_{\mathrm{wu}})}^{\mathrm{unc}}, i - T_0, \{w'_{t}\}_{t=T_0}^{i-1})| +\\
        & \quad \quad \quad \quad  |T' J(\theta^*, C_{F_{\mathrm{opt}}(\hat{\theta}_{\mathrm{wu}})}^{\mathrm{unc}}, T', x_{i+1}, \{w_{t}\}_{t=i+1}^{T-1}) - T' J(\theta^*, C_{F_{\mathrm{opt}}(\hat{\theta}_{\mathrm{wu}})}^{\mathrm{unc}}, T', y_{i+1}, \{w_{t}\}_{t=i+1}^{T-1})| \\
        &=  |J(\theta^*, C_{F_{\mathrm{opt}}(\hat{\theta}_{\mathrm{wu}})}^{\mathrm{unc}}, T-i-1, x_{i+1}, \{w_{t}\}_{t=i+1}^{T-1}) - J(\theta^*, C_{F_{\mathrm{opt}}(\hat{\theta}_{\mathrm{wu}})}^{\mathrm{unc}}, T-i-1, y_{i+1}, \{w_{t}\}_{t=i+1}^{T-1})| \\
        &= O_T(1).
    \end{align*}
    Note that in the first equality we also cancelled out the controls at time $i$ which are the same for both trajectories. In the second equality, we used the fact that $\{w_{t}\}_{t=T_0}^{i-1} = \{w'_{t}\}_{t=T_0}^{i-1}$, and in the final line we used Equation \eqref{eq:app_of_start_invariant}.
\end{proof}

  \subsection{Proof of Lemma \ref{unc_start_invariant}}\label{sec:proof_of_unc_start_invariant} 
    \begin{proof}
        
    By construction, when using $C_K^{\mathrm{unc}}$ we have the recursive relationship that $x_t = (a-bK)x_{t-1} + w_{t-1}$. Because we assume that $a-bK = 1-\epsilon < 1$, we have that
    \[
        |x_t| \le |x| + \sum_{i=0}^\infty (a-bK)^i\bar{w} = |x| + \frac{\bar{w}}{1-(a-bK)} = |x| + \frac{\bar{w}}{\epsilon} = \beta,
    \]
    where we define $\beta =  |x| + \frac{\bar{w}}{\epsilon}$.
    This proves the first part of the lemma. Furthermore, this implies that the magnitude of the control is never greater than
    \[
        |u_t| = |K||x_t| \le |K|\beta.
    \]
    Using controller $C_K^{\mathrm{unc}}$, let $x_0,x_1,...,x_T$ be the sequence of states starting at $x_0 = x$ and let $y_0,y_1,...,y_T$ be the series of states starting at $y_0 = y$. Define $d_t = |x_t - y_t|$. Note that $d_0 = |x-y|$. Furthermore, for all $t$,
    \[
        d_t = (a-bK)d_{t-1}.
    \]
    and
    \begin{align*}
        |C_K^{\mathrm{unc}}(x_t) - C_K^{\mathrm{unc}}(y_t)| = Kd_t.
    \end{align*}
    Therefore, we have the following bound.
    \begin{align*}
        &\left|  TJ(\theta, C_K^{\mathrm{unc}},T, x, W') - T J(\theta, C_K^{\mathrm{unc}},T, y, W')\right| \\
        &= \left|(qx_T^2 - qy_T^2) + \sum_{t=0}^{T-1} qx_t^2 - qy_t^2 + r(Kx_t)^2 - r(Ky_t)^2 \right| \\
        &\le \sum_{t=0}^{T} 2q|x_t|d_t + qd_t^2 + 2r|Kx_t||Kd_t| + rK^2d_t^2 \\
        &\le (2q + 2rK^2)\beta \sum_{t=0}^{T} d_t + (q+rK^2)\sum_{t=0}^{T} d_t^2  && \text{$|x_t| \le \beta$}\\
        &\le (2q + 2rK^2)\beta \sum_{t=0}^{\infty}  (a-bK)^{t}d_0 + (q+rK^2)\sum_{t=0}^{\infty} (a-bK)^{2t}d_0^2\\
        &= 2(q+rK^2)\beta\frac{|x-y|}{1-(a-bK)} + \frac{(q+rK^2)(x-y)^2}{1-(a-bK)^2}   \\
        &\le 2(q+rK^2)\beta\frac{|x-y|}{1-(a-bK)} + \frac{(q+rK^2)(x-y)^2}{1-(a-bK)} && \text{$a-bK < 1$}  \\
        &\le   \frac{2(q+rK^2)\left(|x| + \frac{\bar{w}}{\epsilon}\right)|x-y| + (q+rK^2)(x-y)^2}{\epsilon }. \numberthis \label{eq:Jstartdiffpos}
    \end{align*}
    This is exactly the desired result of the second equation of Lemma  \ref{unc_start_invariant}.
    \end{proof}

\section{General Lemmas}

The following four lemmas are used throughout the appendix and follow directly from results in \cite{schiffer2024stronger}.
    \begin{lemma}[Lemma \refgen{lemma:subgaussian_tail}]\label{lemma:subgaussian_tail}
        Suppose $w_t$ for $t < T$ are sub-Gaussian and $F$ is an event such that $\P(F) = 1-o_T(1/T^{11})$. Then 
        \[
            \E[\max_{i \le t} w_i^2 \mid \neg F]\P( \neg F) = o_T\left(\frac{1}{T^{10}}\right).
        \]
    \end{lemma}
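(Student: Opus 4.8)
\textbf{Proof plan for Lemma~\ref{lemma:subgaussian_tail}.}
The plan is to bound $\E[\max_{i \le t} w_i^2 \mid \neg F]\P(\neg F)$ by an application of Cauchy--Schwarz together with the standard sub-Gaussian moment bounds. First I would write
\[
\E\!\left[\max_{i \le t} w_i^2 \cdot 1_{\neg F}\right] \le \sqrt{\E\!\left[\left(\max_{i \le t} w_i^2\right)^2\right]}\cdot \sqrt{\P(\neg F)},
\]
so that the whole quantity is at most $\sqrt{\E[(\max_{i\le t}w_i^2)^2]}\cdot \P(\neg F)^{1/2} \cdot \P(\neg F)^{1/2}$; wait --- more carefully, $\E[\max w_i^2 1_{\neg F}] \le \sqrt{\E[(\max w_i^2)^2]}\sqrt{\P(\neg F)}$, and since $\P(\neg F) = o_T(1/T^{11})$ this gives a factor $o_T(1/T^{5.5})$, which already suffices once we control the fourth-moment-type term. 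So the key remaining step is to show $\E[(\max_{i\le t} w_i^2)^2] = \E[\max_{i \le t} w_i^4] = \tilde{O}_T(1)$, or at worst polynomial in $T$.

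The second step is exactly that moment bound. Since each $w_i$ is sub-Gaussian with parameter $\alpha$ (from the assumption $\E_{w\sim\mathcal D}[\exp(\gamma w)]\le \exp(\gamma^2\alpha^2/2)$), the tail bound $\P(|w_i| \ge u) \le 2\exp(-u^2/(2\alpha^2))$ holds, and a union bound over $i \le t \le T$ gives $\P(\max_{i\le t}|w_i| \ge u) \le 2T\exp(-u^2/(2\alpha^2))$. Integrating the tail of $\max_{i\le t} w_i^4$:
\[
\E\!\left[\max_{i\le t} w_i^4\right] = \int_0^\infty \P\!\left(\max_{i\le t} w_i^4 \ge s\right)\, ds \le (\log T)^4\cdot C + \int_{(\log T)^4 C}^\infty 2T\exp\!\left(-\tfrac{\sqrt{s}}{2\alpha^2}\right) ds = \tilde{O}_T(1),
\]
for an appropriate constant $C$ depending on $\alpha$; the integral over the tail decays faster than any polynomial in $T$ because the threshold grows like $(\log T)^4$. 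Hence $\E[(\max_{i\le t}w_i^2)^2] = \tilde O_T(1)$.

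Combining the two steps: $\E[\max_{i\le t}w_i^2 \mid \neg F]\P(\neg F) = \E[\max_{i\le t}w_i^2 1_{\neg F}] \le \sqrt{\tilde O_T(1)}\cdot\sqrt{o_T(1/T^{11})} = o_T(1/T^{5})$, which is $o_T(1/T^{10})$ --- actually to get the sharper $o_T(1/T^{10})$ claimed I would instead split $\E[\max w_i^2 1_{\neg F}]$ directly at a threshold $u_0 = c\log T$: on $\{\max|w_i|\le u_0\}\cap\neg F$ the contribution is at most $u_0^2 \P(\neg F) = \tilde O_T(1)\cdot o_T(1/T^{11}) = o_T(1/T^{10})$, while on $\{\max|w_i| > u_0\}$ the contribution is $\E[\max w_i^2\, 1_{\max|w_i|>u_0}] = o_T(1/T^{10})$ directly from the sub-Gaussian tail with $u_0$ chosen so that $2T\exp(-u_0^2/(2\alpha^2))$ beats any fixed power of $T$. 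I expect the main (though still routine) obstacle to be bookkeeping the constants so the final decay is genuinely $o_T(1/T^{10})$ rather than merely $o_T(1/T^{k})$ for some smaller $k$; this is purely a matter of picking the threshold $u_0$ large enough (a constant multiple of $\log T$ with the right constant), and otherwise the argument is a standard sub-Gaussian maximal-inequality computation.
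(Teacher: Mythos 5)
Your proposal is correct, and the decisive argument is the second one (the threshold split), not the Cauchy--Schwarz bound you open with. Note that this paper does not actually prove the lemma: it is imported verbatim from the companion paper \cite{schiffer2024stronger}, so there is no in-document proof to compare against; your write-up is a valid self-contained substitute. You correctly diagnose that the naive Cauchy--Schwarz route only yields $o_T(1/T^{5})$ (half of the $\P(\neg F)$ exponent is lost to the square root) and that one must instead write $\E[\max_{i\le t} w_i^2\, 1_{\neg F}] \le u_0^2\,\P(\neg F) + \E[\max_{i\le t} w_i^2\, 1_{\max_i|w_i|>u_0}]$ with $u_0 = c\log T$: the first term is $O(\log^2 T)\cdot o_T(1/T^{11}) = o_T(1/T^{10})$, and the second decays super-polynomially since $\P(\max_i|w_i|>u_0)\le 2T\exp(-c^2\log^2 T/(2\alpha^2))$ and the fourth-moment bound $\E[\max_i w_i^4]=\tilde O_T(1)$ (which you establish correctly) controls the integrand via one more Cauchy--Schwarz. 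The only cosmetic remark is that with $u_0$ any constant multiple of $\log T$ the tail term already beats every fixed power of $T$, so the "bookkeeping of constants" you worry about at the end is not actually an obstacle.
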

    \begin{lemma}[Lemma \refgen{offbyepsilon_exp}]\label{offbyepsilon_exp}
 Let $x,y$ be two random variables independent of noises $W' = \{w_{i}'\}_{i=0}^{t-1}$ such that for some $L = \tilde{O}_T(1)$, both $\P(|x| \ge L)\E[x^2 \mid |x| \ge L] = o_T\left(\frac{1}{T^{10}}\right)$ and $ \P(|y| \ge L)\E[y^2 \mid |y| \ge L] = o_T\left(\frac{1}{T^{10}}\right)$ and $\P(|x| \le 4\log^2(T)) = 1-o_T(1/T^{11})$ and $\P(|y| \le 4\log^2(T))=1-o_T(1/T^{11})$. Then under Assumptions \ref{assum:constraints}--\ref{assum:initial}, if $\norm{\theta - \theta^*}_{\infty} = \epsilon \le  \epsilon_{\mathrm{L}\ref{parameterization_assum3}}$, then for any $K \in (\frac{a-1}{b}, \frac{a}{b})$ and $t \le T$,
    \begin{align*}
        &\left|\E\left[t \cdot J(\theta^*,C_{K}^\theta,t,x, W') - t \cdot J(\theta^*,C_{K}^\theta,t,y, W')\right] \right| =  \tilde{O}_T \left(\E[|x-y|]+\epsilon  + \frac{1}{T^2}\right). \numberthis \label{eq:ofbyepsilon2}
    \end{align*}
\end{lemma}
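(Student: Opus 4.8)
The plan is to reduce the expectation to a pointwise estimate on a high-probability ``good'' event and to show that its complement contributes only $\tilde{O}_T(1/T^2)$. Since $|\E[t\cdot J(\theta^*,C_K^\theta,t,x,W') - t\cdot J(\theta^*,C_K^\theta,t,y,W')]| \le \E[D]$ for $D := |t\cdot J(\theta^*,C_K^\theta,t,x,W') - t\cdot J(\theta^*,C_K^\theta,t,y,W')|$, it suffices to bound $\E[D]$. Let $E_{\mathrm{good}} := E_{\mathrm{L}\ref{parameterization_assum3}}(C_K^\theta,W') \cap \{|x|\le 4\log^2(T)\}\cap\{|y|\le 4\log^2(T)\}$, where $E_{\mathrm{L}\ref{parameterization_assum3}}(C_K^\theta,W')$ is the event from Lemma \ref{parameterization_assum3}; crucially this event depends only on $C_K^\theta$ and $W'$ (not on $x$ or $y$) and has probability $1-o_T(1/T^{10})$, so by the hypotheses on $x,y$, independence of $(x,y)$ from $W'$, and a union bound, $\P(\neg E_{\mathrm{good}}) = o_T(1/T^{10})$. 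Split $\E[D] = \E[D\,1_{E_{\mathrm{good}}}] + \E[D\,1_{\neg E_{\mathrm{good}}}]$.

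For $\E[D\,1_{E_{\mathrm{good}}}]$ the idea is a chaining argument that globalizes the local estimate of Lemma \ref{parameterization_assum3}, which applies only when the two start states differ by at most $\delta_{\mathrm{L}\ref{parameterization_assum3}} = 1/\log^{10}(T)$. On $E_{\mathrm{good}}$ we have $|x-y|\le 8\log^2(T)$, so we may pick points $x = z_0, z_1, \dots, z_m = y$ on the segment between $x$ and $y$ with $|z_i - z_{i+1}| \le \delta_{\mathrm{L}\ref{parameterization_assum3}}$, with every $|z_i|\le 4\log^2(T)$, and with $m \le \lceil |x-y|/\delta_{\mathrm{L}\ref{parameterization_assum3}}\rceil + 1 = \tilde{O}_T(1)$. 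Applying Lemma \ref{parameterization_assum3} to each consecutive pair (legitimate because $E_{\mathrm{L}\ref{parameterization_assum3}}(C_K^\theta,W')$ holds and its conclusion is uniform over all admissible start-state pairs) and using the triangle inequality,
\[
D \le \sum_{i=0}^{m-1} \tilde{O}_T\!\left(|z_i - z_{i+1}| + \epsilon\right) = \tilde{O}_T\!\left(|x-y| + m\epsilon\right) = \tilde{O}_T(|x-y| + \epsilon)
\]
on $E_{\mathrm{good}}$, where the displacements telescope to $|x-y|$ and $m=\tilde{O}_T(1)$. Taking expectations gives $\E[D\,1_{E_{\mathrm{good}}}] = \tilde{O}_T(\E[|x-y|] + \epsilon)$.

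For $\E[D\,1_{\neg E_{\mathrm{good}}}]$ I would first bound $D$ deterministically: by the state bound for truncated linear controllers (Lemma \ref{bounded_pos_cont}), every state of the trajectory started at $x$ has $|x_s| = O_T(|x| + \norm{D}_\infty + \max_{i<t}|w_i|)$, with the analogous bound for the controls and for the trajectory started at $y$, so $D = O_T\big(t(x^2 + y^2 + \log^4(T) + \max_{i<t}w_i^2)\big)$. Since $1_{\neg E_{\mathrm{good}}} \le 1_{\neg E_{\mathrm{L}\ref{parameterization_assum3}}} + 1_{|x|>4\log^2(T)} + 1_{|y|>4\log^2(T)}$ and $(x,y)$ is independent of $W'$, I split the expectation along these three events. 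The hypothesis $\P(|x|\ge L)\E[x^2\mid |x|\ge L] = o_T(1/T^{10})$ with $L=\tilde{O}_T(1)$ gives $\E[x^2]=\tilde{O}_T(1)$ and $\E[x^2 1_{|x|>4\log^2(T)}] = o_T(1/T^{10})$ (using also $\P(|x|>4\log^2(T)) = o_T(1/T^{11})$), and likewise for $y$; sub-Gaussianity gives $\E[\max_{i<t}w_i^2]=\tilde{O}_T(1)$ and, by Cauchy--Schwarz together with $\E[\max_{i<t}w_i^4]=\tilde{O}_T(1)$, $\E[\max_{i<t}w_i^2\,1_{\neg E_{\mathrm{L}\ref{parameterization_assum3}}}] \le \tilde{O}_T(1)\sqrt{\P(\neg E_{\mathrm{L}\ref{parameterization_assum3}})} = o_T(1/T^{5})$. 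Combining these with $t\le T$ yields $\E[D\,1_{\neg E_{\mathrm{good}}}] = o_T(1/T^{9}) = \tilde{O}_T(1/T^2)$, completing the bound.

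The main obstacle is the chaining step: Lemma \ref{parameterization_assum3} is purely local, so the argument succeeds only because the relevant start states can be confined to $[-4\log^2(T),4\log^2(T)]$ while $\delta_{\mathrm{L}\ref{parameterization_assum3}}$ is merely poly-logarithmically small, so $\tilde{O}_T(1)$ links suffice and each intermediate $z_i$ is itself an admissible start state. The remaining bookkeeping — controlling the rare-event term at the $1/T^2$ scale via the moment hypotheses on $x,y$ and the sub-Gaussian tail lemma, and checking that the various polynomial-in-$T$ error exponents line up — is routine but must be done carefully.
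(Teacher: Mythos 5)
Your proposal is correct, but note that the paper does not actually prove this lemma in-house: its entire proof is a one-line citation of the general version (Lemma~14 of \citet{schiffer2024stronger}), whose hypotheses are certified by Lemmas~\ref{parameterization_assum2} and~\ref{parameterization_assum3}. What you have written is, in effect, a self-contained reconstruction of that outsourced argument, and it hits the right points: the crucial observation that the good event of Lemma~\ref{parameterization_assum3} depends only on $C_K^\theta$ and $W'$ (so a single conditioning works uniformly over all admissible start-state pairs), the chaining through $\tilde{O}_T(1)$ intermediate points of spacing $\delta_{\mathrm{L}\ref{parameterization_assum3}}=1/\log^{10}(T)$ to globalize the purely local Lipschitz estimate (with the displacements telescoping to $|x-y|$ and the $m\epsilon$ term absorbed since $m=\tilde{O}_T(1)$), and the truncation of the complement using Lemma~\ref{bounded_pos_cont} together with the second-moment tail hypotheses on $x,y$ and sub-Gaussianity of the noise. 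Two small bookkeeping remarks: your claimed $o_T(1/T^9)$ for the bad-event contribution is too optimistic, since the Cauchy--Schwarz step $\E[\max_i w_i^2\,1_{\neg E}]\le\tilde{O}_T(1)\sqrt{\P(\neg E)}=o_T(1/T^5)$ only yields $o_T(1/T^4)$ after multiplying by $t\le T$ --- still comfortably $\tilde{O}_T(1/T^2)$, so nothing breaks; and since $L=\tilde{O}_T(1)$ need not be smaller than $4\log^2(T)$, the bound on $\E[x^2 1_{|x|>4\log^2(T)}]$ requires the extra split you allude to parenthetically (combining $L^2\P(|x|>4\log^2(T))=o_T(1/T^{10})$ with $\E[x^2 1_{|x|\ge L}]=o_T(1/T^{10})$), and the cross terms such as $\E[y^2 1_{|x|>4\log^2(T)}]$ need the same split on $|y|\ge L$ since $x$ and $y$ may be dependent. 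Both are routine and your argument accommodates them.
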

\begin{proof}
    This follows directly by Lemma \refgen{offbyepsilon_exp} and Lemmas \ref{parameterization_assum2} and \ref{parameterization_assum3}.
\end{proof}

            \begin{lemma}[Lemma \refgen{bounded_pos_cont}]\label{bounded_pos_cont}
    Let $x_0,x_1,...x_T$ be the sequences of states when starting at state $x_0 = x$ and using controller $C_t$ at time $t$. Suppose that the control $C_t(x_t)$ is safe for dynamics $\theta_t$ and $\norm{\theta_t - \theta^*} \le \frac{1}{\log(T)}$ for all $t < T$. For sufficiently large $T$,
    \[
        \forall t \le T, \: |x_t| = O_T(|x| + \norm{D}_{\infty} + \max_{i \le t-1} |w_i|).
    \]
    \[
        \forall t < T, \: |C_t(x_t)| = O_T(|x| + \norm{D}_{\infty} + \max_{i \le t-1} |w_i|).
    \]
\end{lemma}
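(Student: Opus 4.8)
\textbf{Proof proposal for Lemma \ref{bounded_pos_cont}.}

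The plan is to prove both bounds simultaneously by induction on $t$, tracking a recursively-defined bound $M_t$ on $|x_t|$ and then deducing the control bound from it. First I would set $W^* := \max_{i \le t-1}|w_i|$ and observe that, since $C_t(x_t)$ is safe for $\theta_t$, we have $D_{\mathrm L} \le a_t x_t + b_t C_t(x_t) \le D_{\mathrm U}$, hence $|a_t x_t + b_t C_t(x_t)| \le \norm{D}_\infty$. Combined with the dynamics $x_{t+1} = a^* x_t + b^* C_t(x_t) + w_t$, I write $x_{t+1} = (a_t x_t + b_t C_t(x_t)) + (a^* - a_t) x_t + (b^* - b_t) C_t(x_t) + w_t$. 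The first term is bounded by $\norm{D}_\infty$; the noise term by $W^*$; the remaining two ``perturbation'' terms must be controlled using $\norm{\theta_t - \theta^*}_\infty \le \frac{1}{\log(T)}$.

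The key step is to bound $|C_t(x_t)|$ in terms of $|x_t|$: from safety, $|b_t C_t(x_t)| \le \norm{D}_\infty + |a_t||x_t| \le \norm{D}_\infty + \bar a |x_t|$, so $|C_t(x_t)| \le \frac{\norm{D}_\infty + \bar a |x_t|}{\underline b} = O_T(\norm{D}_\infty + |x_t|)$. Substituting into the perturbation terms, $|(a^* - a_t)x_t + (b^* - b_t)C_t(x_t)| \le \frac{1}{\log(T)}\bigl(|x_t| + O_T(\norm{D}_\infty + |x_t|)\bigr) = \frac{O_T(1)}{\log(T)}(|x_t| + \norm{D}_\infty)$. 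Thus there is a constant $c = O_T(1)$ (independent of $T$) with
\[
  |x_{t+1}| \le \norm{D}_\infty + W^* + \frac{c}{\log(T)}\bigl(|x_t| + \norm{D}_\infty\bigr).
\]
For $T$ large enough that $\frac{c}{\log(T)} \le \frac{1}{2}$, unrolling this recursion with $|x_0| = |x|$ gives a geometric sum: $|x_t| \le 2^{-t}|x| + \sum_{j=0}^{t-1} 2^{-j}\bigl(\norm{D}_\infty + W^* + \tfrac12\norm{D}_\infty\bigr) \le |x| + 3\bigl(\norm{D}_\infty + W^*\bigr) = O_T(|x| + \norm{D}_\infty + W^*)$, which is the first claim. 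The second claim then follows immediately by plugging this into $|C_t(x_t)| \le \frac{\norm{D}_\infty + \bar a|x_t|}{\underline b}$.

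The main obstacle is purely bookkeeping: making sure the $O_T(1)$ constants in the perturbation bound genuinely do not depend on $T$ (they depend only on $\bar a, \underline b$ via Assumption \ref{assum_init}) so that the threshold ``$T$ sufficiently large'' needed for $\frac{c}{\log(T)} \le \frac12$ is itself $T$-independent, and so that the accumulated geometric series stays bounded uniformly in $t \le T$. There is no real analytic difficulty — the contraction is built into the $\frac{1}{\log(T)}$ slack on $\norm{\theta_t - \theta^*}$ — but one must be careful that the bound is claimed for \emph{all} $t \le T$ with a single constant, which the geometric-series argument delivers. I would also note explicitly that the same computation handles $t = T$ for the state bound (no control needed at the final step) and $t \le T-1$ for the control bound.
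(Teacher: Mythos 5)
Your proof is correct and is the standard argument this lemma is intended to follow from (the paper itself defers the proof to \cite{schiffer2024stronger}): use safety to bound $a_t x_t + b_t C_t(x_t)$ by $\norm{D}_\infty$ and $|C_t(x_t)|$ by $O_T(\norm{D}_\infty + |x_t|)$, absorb the $(\theta^*-\theta_t)$ mismatch into a $\frac{c}{\log(T)}$-contraction, and unroll the resulting geometric recursion uniformly in $t \le T$. The only nitpick is that $\theta_t$ is not assumed to lie in $\Theta$, so you should bound $a_t \le \bar{a} + \frac{1}{\log(T)}$ and $b_t \ge \underline{b} - \frac{1}{\log(T)}$ rather than $\bar{a}$ and $\underline{b}$ directly; for sufficiently large $T$ this changes nothing.
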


\begin{lemma}[Lemma \refgen{bounded_approx}]\label{bounded_approx}
    Let $|x_0| \le 4\log^2(T)$. Suppose for all $t < T$, the control used by controller $C_t$ at time $t$ is safe for fixed dynamics $\theta_t$ and for all $t \le T$,
    \begin{equation}\label{eq:bounded_approx1}
        \norm{\theta^* - \theta_t}_{\infty} \le \frac{1}{\log(T)}.
    \end{equation}
    Then under Assumptions \ref{assum:constraints}--\ref{assum:initial},  for sufficiently large $T$ and conditioned on event $E_1$, using this controller $C_t$ with dynamics $\theta^*$ for $T$ steps starting at $x_0$ will give states ($x_0,...,x_{T}$) and controls ($u_0,...,u_{T-1}$) satisfying the following equations.
    \begin{equation}\label{eq:bounded_approx2a}
        |x_t|\stackrel{\text{a.s.}}{\le} 4\log^2(T) < \log^3(T) := B_x
    \end{equation}
    \begin{equation}\label{eq:bounded_approx2b}
        |u_t| \stackrel{\text{a.s.}}{\le} O_T(\log^2(T)) < \log^3(T) := B_x.
    \end{equation}
    Furthermore, if $x_0$ and the controller $C_t$ are deterministic, then the  states ($x_0,...,x_{T}$) and controls ($u_0,...,u_{T-1}$) satisfy
\begin{equation}\label{eq:bounded_approx2c}
        \E[|x_t|] \le 4\log^2(T) < \log^3(T) := B_x
    \end{equation}
    \begin{equation}\label{eq:bounded_approx2d}
        \E[|u_t|] \le O_T(\log^2(T)) < \log^3(T) := B_x.
    \end{equation}

\end{lemma}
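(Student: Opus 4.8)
The plan is to prove the almost-sure bounds \eqref{eq:bounded_approx2a} and \eqref{eq:bounded_approx2b} by induction on $t$, conditional on the event $E_1 = \{\forall t < T : |w_t| \le \log^2(T)\}$, and then to deduce the expectation bounds \eqref{eq:bounded_approx2c} and \eqref{eq:bounded_approx2d} by combining this with a tail estimate for the (rare) complementary event. First I would set up the induction: the base case $|x_0| \le 4\log^2(T)$ holds by hypothesis. For the inductive step, suppose $|x_t| \le 4\log^2(T)$. Since $C_t(x_t)$ is safe for the dynamics $\theta_t$, we have by Definition \ref{def:safe} that $D_{\mathrm L} \le a_t x_t + b_t C_t(x_t) \le D_{\mathrm U}$, so in particular $|a_t x_t + b_t C_t(x_t)| \le \norm{D}_\infty \le \log^2(T)$. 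The true next state is $x_{t+1} = a^* x_t + b^* C_t(x_t) + w_t$, so I would write
\[
|x_{t+1}| \le |a_t x_t + b_t C_t(x_t)| + |(a^*-a_t)x_t + (b^*-b_t)C_t(x_t)| + |w_t|.
\]
The first term is at most $\log^2(T)$; the third term is at most $\log^2(T)$ on $E_1$; the middle term is at most $\norm{\theta^*-\theta_t}_\infty(|x_t| + |C_t(x_t)|) \le \frac{1}{\log(T)}(|x_t| + |C_t(x_t)|)$ by \eqref{eq:bounded_approx1}. To control $|C_t(x_t)|$ I would use that safety for $\theta_t$ gives $|C_t(x_t)| \le \frac{\norm{D}_\infty + |a_t||x_t|}{|b_t|}$, which together with $\underline b > 0$, $a_t,b_t = O_T(1)$ (Assumption \ref{assum_init} applied to $\theta^* \in \Theta$ and the $1/\log(T)$ bound), and the inductive hypothesis $|x_t| \le 4\log^2(T)$ yields $|C_t(x_t)| = O_T(\log^2(T))$. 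Hence the middle term is $O_T(1/\log(T))\cdot O_T(\log^2(T)) = O_T(\log(T)) = o_T(\log^2(T))$. Summing, $|x_{t+1}| \le 2\log^2(T) + o_T(\log^2(T)) \le 4\log^2(T)$ for sufficiently large $T$, closing the induction and giving \eqref{eq:bounded_approx2a}; and the bound on $|C_t(x_t)| = O_T(\log^2(T)) < \log^3(T)$ just derived gives \eqref{eq:bounded_approx2b} as $|u_t| = |C_t(x_t)|$. (This is essentially the reasoning already invoked in the proof of Lemma \ref{diff_in_theta} and in Equation \eqref{eq:random_walk}, so I would either cite Lemma \ref{bounded_pos_cont} directly or reproduce this short argument.)

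For the expectation bounds, on $E_1$ everything is deterministic-bounded by $4\log^2(T)$ (resp.\ $O_T(\log^2(T))$), so the issue is only the contribution from $\neg E_1$. When $x_0$ and the $C_t$ are deterministic, I would bound $|x_t|$ pathwise by a quantity growing polynomially in $\max_{i<t}|w_i|$ via Lemma \ref{bounded_pos_cont}: $|x_t| = O_T(|x_0| + \norm{D}_\infty + \max_{i\le t-1}|w_i|)$, so $\E[|x_t| \mid \neg E_1]\P(\neg E_1) = O_T(\log^2(T))\P(\neg E_1) + O_T(\E[\max_{i}|w_i|\,\mathbf 1_{\neg E_1}])$, and the latter is $o_T(1)$ by the sub-Gaussian tail control in Lemma \ref{lemma:subgaussian_tail} (since $\P(E_1) = 1 - o_T(1/T^{11})$ by Equation \eqref{eq:e1bound}). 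Therefore $\E[|x_t|] \le \E[|x_t| \mid E_1]\P(E_1) + \E[|x_t| \mid \neg E_1]\P(\neg E_1) \le 4\log^2(T) + o_T(1) \le 4\log^2(T)$ absorbing the $o_T(1)$ into the constant (or noting $4\log^2(T) < \log^3(T)$ leaves slack), giving \eqref{eq:bounded_approx2c}, and identically \eqref{eq:bounded_approx2d}.

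The main obstacle is the middle (perturbation) term in the inductive step: one must verify that the accumulated effect of using the \emph{wrong} dynamics $\theta_t$ (safe for $\theta_t$ but evolved under $\theta^*$) does not compound across the $T$ steps. The key point making this work is that at each step the controller re-centers based on the current state $x_t$, so the error is a one-step error of size $O_T(\norm{\theta^*-\theta_t}_\infty \cdot \log^2(T)) = o_T(\log^2(T))$ rather than a geometrically-growing quantity; the $\frac{1}{\log(T)}$ accuracy in \eqref{eq:bounded_approx1} is exactly what is needed to keep this below the $\log^2(T)$ scale of the safe region plus noise. Given that this is an essentially known estimate (it is Lemma \refgen{bounded_approx}), the cleanest route is to cite \cite{schiffer2024stronger} directly, but I have sketched the self-contained argument above in case an explicit proof is wanted.
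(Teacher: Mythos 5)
Your proof is correct. Note that the paper itself does not prove this lemma: it is imported verbatim from the companion paper (it is stated as Lemma~\refgen{bounded_approx} and listed under ``lemmas that follow directly from results in \cite{schiffer2024stronger}''), so there is no in-paper argument to compare against. Your induction is the natural one and is consistent with the machinery the paper does expose: the one-step decomposition into the safe expected state (bounded by $\norm{D}_\infty$), the model-mismatch term (bounded by $\tfrac{1}{\log T}(|x_t|+|u_t|) = O_T(\log T)$ using $b_t \ge \underline{b}/2$), and the noise (bounded by $\log^2 T$ on $E_1$) is exactly the computation underlying Lemma~\ref{bounded_pos_cont}, and your observation that the mismatch does not compound because the controller re-centers at each step is the right key point. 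Two cosmetic remarks: for \eqref{eq:bounded_approx2c} your bound is really $4\log^2(T)\P(E_1) + o_T(1)$, which need not be literally $\le 4\log^2(T)$ but is comfortably $< \log^3(T) = B_x$, which is all that is ever used; and passing from the second-moment control of Lemma~\ref{lemma:subgaussian_tail} to $\E[\max_i|w_i|\,\mathbf{1}_{\neg E_1}] = o_T(1)$ needs a one-line Cauchy--Schwarz, which you should make explicit.
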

\begin{equation}\label{eq:e1bound}
    \P(E_1) \ge 1 - \sum_{t=0}^{T-1} 2\exp\left(-\log^4(T)/\alpha\right) = 1 - o_T\left(\frac{1}{T^{\log(T)}}\right).
\end{equation}

\end{document}